\documentclass[11pt]{article}
\usepackage[letterpaper, margin=1in]{geometry}
\usepackage[parfill]{parskip}
\usepackage{amsmath,amsthm,amssymb,bbm}
\usepackage{mathtools}
\usepackage{cases}
\usepackage{dsfont}
\usepackage{microtype}
\usepackage{tablefootnote}

\allowdisplaybreaks

\usepackage{subfigure}
\usepackage{algorithm,algorithmic}
\usepackage{color}
\usepackage{appendix}

\usepackage{bm}
\usepackage{color}
\usepackage{booktabs}       
\usepackage{appendix}
\usepackage{authblk}
\usepackage{comment}

\usepackage{url}
\usepackage[authoryear]{natbib}
\usepackage{hyperref}
\pdfstringdefDisableCommands{\def\Cref#1{#1}}

\usepackage{xcolor}
\hypersetup{
    colorlinks,
    linkcolor={blue!50!black},
    citecolor={blue!50!black},
}
\colorlet{linkequation}{blue}

\usepackage{cleveref}
\crefformat{equation}{(#2#1#3)}
\crefrangeformat{equation}{(#3#1#4) to~(#5#2#6)}
\crefname{equation}{}{}
\Crefname{equation}{}{}

\crefname{definition}{\textbf{definition}}{definitions}
\Crefname{definition}{Definition}{Definitions}
\crefname{assumption}{\textbf{assumption}}{assumptions}
\Crefname{assumption}{Assumption}{Assumptions}


\definecolor{maroon}{RGB}{192,80,77}
\definecolor{mypink3}{cmyk}{0, 0.7808, 0.4429, 0.1412}

\newcommand{\explain}[2]{\underset{\mathclap{\overset{\uparrow}{#2}}}{#1}}

\newtheorem{theorem}{Theorem}[section]
\newtheorem{lemma}[theorem]{Lemma}
\newtheorem{proposition}[theorem]{Proposition}

\newtheorem{corollary}[theorem]{Corollary}

\newtheorem{remark}[theorem]{Remark}
\newtheorem{assumption}[theorem]{Assumption}

\usepackage{amsmath}

\newcommand\norm[1]{\left\lVert#1\right\rVert}
\newcommand{\mypink}{\textcolor{mypink3}}

\newcommand{\argmax}{\mathop{\mathrm{argmax}}}

\newcommand{\defeq}{\mathrel{\mathop:}=}


\def\tin{\text{in}}
\def\PP{\textit{\textbf{P}}}
\def\OPDVR{\emph{\textbf{OPDVR}}}
\def\hpi{\widehat{\pi}}
\def\E{\mathbb{E}}
\def\P{\mathbb{P}}

\def\Var{\mathrm{Var}}

\def\R{\mathbb{R}}
\def\cA{\mathcal{A}}

\def\cD{\mathcal{D}}

\def\cS{\mathcal{S}}


\def\d{\textit{\textbf{d}}}
\def\PP{\textit{\textbf{P}}}

\begin{document}


\title{Near-Optimal Offline Reinforcement Learning via \\ Double Variance Reduction}
	

\author[1,3]{Ming Yin}
\author[2]{Yu Bai}
\author[3]{Yu-Xiang Wang}
\affil[1]{Department of Statistics and Applied Probability, UC Santa Barbara}
\affil[2]{Salesforce Research}
\affil[3]{Department of Computer Science, UC Santa Barbara}
\affil[ ]{\texttt{ming\_yin@ucsb.edu}  \quad \texttt{yu.bai@salesforce.com} \quad
	\texttt{yuxiangw@cs.ucsb.edu}}

\date{}
\maketitle


\begin{abstract}


We consider the problem of offline reinforcement learning (RL)  --- a well-motivated setting of RL that aims at policy optimization using only historical data. Despite its wide applicability, theoretical understandings of offline RL, such as its optimal sample complexity, remain largely open even in basic settings such as \emph{tabular} Markov Decision Processes (MDPs). 
 In this paper, we propose \emph{Off-Policy Double Variance Reduction} (OPDVR), a new variance reduction based algorithm for offline RL. Our main result shows that OPDVR provably identifies an $\epsilon$-optimal policy with $\widetilde{O}(H^2/d_m\epsilon^2)$ episodes of offline data in the finite-horizon \emph{stationary transition} setting, where $H$ is the horizon length and $d_m$ is the minimal marginal state-action distribution induced by the behavior policy. This improves over the best known upper bound by a factor of $H$. Moreover, we establish an information-theoretic lower bound of $\Omega(H^2/d_m\epsilon^2)$ which certifies that OPDVR is optimal up to logarithmic factors.  
Lastly, we show that OPDVR also achieves rate-optimal sample complexity under alternative settings such as the finite-horizon MDPs with non-stationary transitions and the infinite horizon MDPs with discounted rewards.

\end{abstract}
      

\section{Introduction}
\label{sec:introduction}




Offline reinforcement learning (offline RL, also known as batch RL) aims at learning the near-optimal policy by using a static offline dataset that is collected by a certain behavior policy $\mu$~\citep{lange2012batch}. As offline RL agent works without needing to interact with the environment, it is more widely applicable to problems where online interaction is infeasible, \emph{e.g.} when trials-and-errors are expensive (robotics, education), risky (autonomous driving) or even unethical (healthcare) \citep[see,e.g., a recent survey ][]{levine2020offline}. 

Despite its practical significance, a precise theoretical understanding of offline RL has been lacking. Previous sample complexity bounds for RL has primarily focused on the online setting~\citep{azar2017minimax,jin2018q,bai2019provably,zanette2019tighter,simchowitz2019non,efroni2019tight,dann2015sample,cai2019provably} or the generative model (simulator) setting~\citep{azar2013minimax,sidford2018near,sidford2018variance,yang2019sample,agarwal2019optimality,wainwright2019variance,lattimore2019learning}, both of which assuming interactive access to the environment and not applicable to offline RL. On the other hand, the sample complexity of offline RL remains unsettled even for environments with finitely many state and actions, a.k.a, the tabular MDPs  (Markov Decision Processes).
 One major line of work is concerned with the off-policy evaluation (OPE) problem~\citep{li2015toward,jiang2016doubly,liu2018breaking,kallus2019double,kallus2019efficiently,uehara2019minimax,xie2019towards,yin2020asymptotically,duan2020minimax}. These works provide sample complexity bounds for evaluating the performance of a fixed policy, and do not imply guarantees for policy optimization. Another line of work studies the sample complexity of offline policy optimization in conjunction with function approximation~\citep{chen2019information,xie2020q,xie2020batch,jin2020pessimism}. These results apply to offline RL with general function classes, but when specialized to the tabular setting, they give rather loose sample complexity bounds with suboptimal dependencies on various parameters \footnote{See Table~\ref{table} for a clear comparison.}. 

The recent work of~\citet{yin2020near} showed that the optimal sample complexity for finding an $\epsilon$-optimal policy in offline RL is $\widetilde{O}(H^3/d_m\epsilon^2)$ in the finite-horizon \emph{non-stationary}\footnote{This is also known as the finite-horizon \emph{time-inhomogeneous} or \emph{time-varying} setting, where the transition dynamics and rewards could differ by time steps. 
The stationary case is expected to be easier in the information-theoretical sense, but is more challenging to analyze due to the more complex dependence structure in the observed data.
} setting (with matching upper and lower bounds), where $H$ is the horizon length and $d_m$ is a constant related to the data coverage of the behavior policy in the given MDP. However, the optimal sample complexity in alternative settings such as stationary transition or infinite-horizon settings remains unknown. 
Further, the $\widetilde{O}(H^3/d_m\epsilon^2)$ sample complexity is achieved by an off-policy evaluation + uniform convergence type algorithm; other more practical algorithms including (stochastic) optimal planning algorithms such as Q-Learning are not well understood in offline RL. This motivates us to ask:
\begin{quote}
  \emph{What algorithm achieves the optimal sample complexity for offline RL in tabular MDPs?}
\end{quote}
\vspace{-1em}




\paragraph{Our Contributions}
In this paper, we propose an algorithm \OPDVR{} (Off-Policy Doubled Variance Reduction) for offline reinforcement learning based on an extension of the variance reduction technique initiated in~\citep{sidford2018near,yang2019sample}. \OPDVR{} performs stochastic (minibatch style) value iterations using the available offline data, and can be seen as a version of stochastic optimal planning that interpolates value iteration and Q-learning. Our main contributions are summarized as follows.

\begin{itemize}
	
	\item 
	We show that \OPDVR{} finds an $\epsilon$-optimal policy with high probability using $\widetilde{O}(H^2/d_m\epsilon^2)$ episodes of offline data (Section~\ref{sec:stationary}).
	This improves upon the best known sample complexity by an $H$ factor and to the best of our knowledge is the first that achieves an $O(H^2)$ horizon dependence  offlinely, thus formally separating the stationary case with the non-stationary case for offline RL. 
	
	\item We establish a sample (episode) complexity lower bound $\Omega(H^2/d_m\epsilon^2)$ for offline RL in the finite-horizon stationary setting (Theorem~\ref{thm:learn_low_bound_s}), showing that the sample complexity of \OPDVR{} is optimal up to logarithmic factors.
	
	\item 	In the finite-horizon non-stationary setting,  and infinite horizon $\gamma$-discounted setting, we show that \OPDVR{} achieves $\widetilde{O}(H^3/d_m\epsilon^2)$ sample (episode) complexity  (Section~\ref{sec:finite_horizon1}) and $\widetilde{O}((1-\gamma)^{-3}/d_m\epsilon^2)$ sample (step) complexity (Section~\ref{sec:infinite}) respectively.  They are both optimal up to logarithmic factors and our infinite-horizon result improves over the best known results, e.g., those derived for the fitted Q-iteration style algorithms \citep{xie2020q}.
	
	
	
	
	
	\item On the technical end, our algorithm presents a sharp analysis of offline RL with stationary transitions, and uses a doubling technique
	to resolve the initialization dependence in the original variance reduction algorithm (e.g. of~\citep{sidford2018near}), both of which could be of broader interest.
\end{itemize}

\begin{table*}
	\caption{Comparison of sample complexities for tabular offline RL interpretation. }
	\label{table}
	\centering
	\resizebox{\linewidth}{!}{%
		\begin{tabular}{llp{1.7in}l}
			\toprule
			\cmidrule(r){1-4}
			Method/Analysis     &Setting& Assumptions    & Sample complexity$^a$ \\
			\midrule
			BFVT \citep{xie2020batch} & $\infty$-horizon& only realizability $+$ MDP concentrability$^b$ & $\tilde{O}((1-\gamma)^{-8}C^2/\epsilon^4)$ \\
			MBS-PI/QI \citep{liu2020provably} &$\infty$-horizon &completeness$+$bounded density estimation error&$\tilde{O}((1-\gamma)^{-8}C^2/\epsilon^2)$\\
			\cite{le2019batch}     & $\infty$-horizon &Full Concentrability & $\tilde{O}((1-\gamma)^{-6}\beta_\mu/\epsilon^2)$\\

			FQI \citep{chen2019information}   & $\infty$-horizon &Full Concentrability & $\tilde{O}((1-\gamma)^{-6}C/\epsilon^2)$ \\
			MSBO/MABO \citep{xie2020q} & $\infty$-horizon &Full Concentrability  & $\widetilde{O}((1-\gamma)^{-4}C_\mu/ \epsilon^2)$   \\
			OPEMA \citep{yin2020near} &$H$-horizon non-stationary &Full Concentrability &$\widetilde{O}(H^3/d_m \epsilon^2)$   \\
			OPDVR (Section~\ref{sec:finite_horizon1}) &$H$-horizon non-stationary &Weak Coverage &$\widetilde{O}(H^3/d_m \epsilon^2)$   \\
			OPDVR (Section~\ref{sec:finite_horizon2}) &$H$-horizon stationary &Weak Coverage &$\widetilde{O}(H^2/d_m \epsilon^2)$   \\
			OPDVR (Section~\ref{sec:infinite})  &$\infty$-horizon &Weak Coverage &$\widetilde{O}((1-\gamma)^{-3}/d_m \epsilon^2)$   \\
			\bottomrule
		\end{tabular}
	}
\begin{flushleft}
\footnotesize{$^a$ Number of episodes in the finite horizon setting and number of steps in the infinite horizon.}\\
\footnotesize{$^b$ To compare concentrability parameters, use $\beta_\mu=C \geq  C_\mu \geq  1/d_m$. See Assumption~\ref{assu2} and also Section~\ref{sec:comparison_dm} for discussion.} 
\end{flushleft}
\end{table*}

\paragraph{Related work.} There is a large and growing body of work on the theory of offline RL and RL in general.  We could not hope to provide a comprehensive survey, thus will instead highlight the few prior work that we depend upon on the technical level.  The variance reduction techniques that we use in this paper builds upon  the work of \citep{sidford2018near} in the generative model setting, though it is nontrivial in adapting their techniques to the offline setting; and our two-stage variance reduction appears essential for obtaining optimal rate for $\epsilon>1$ (see Section~\ref{sec:discussion} and Appendix~\ref{sec:sidford_proof} for more detailed discussions). We also used a fictitious estimator technique that originates from the OPE literature\citep{xie2019towards,yin2020asymptotically}, but extended it to the stationary-transition case, and to the policy optimization problem.  As we mentioned earlier, the optimal sample complexity in offline RL in the tabular MDPs with stationary transitions was not settled. The result of \citep{yin2020near} is optimal in the non-stationary case, but is suboptimal by a factor of $H$ in the stationary case.  Our lower bound is a variant of the construction of \citep{yin2020near} that applies to the stationary case.  Other existing work on offline RL has even weaker parameters (sometimes due to their setting being more general, see details in Table~\ref{table}).  We defer more detailed discussion  related to  the OPE literature and online RL / generative model literature  to Appendix~\ref{sec:related} due to space constraint.

\paragraph{Additional paper organization}
We present the problem setup in Section~\ref{sec:preliminaries}, present some discussions related to our algorithm in Section~\ref{sec:discussion}, and conclude in Section~\ref{sec:conclusion}. Proofs and some additional technical materials are deferred to the Appendix.

\section{Preliminaries}
\label{sec:preliminaries}


We consider reinforcement learning problems modeled by finite Markov Decision Processes (MDPs) (we focus on the finite-horizon episodic setting, and defer the infinite-horizon discounted setting to Section~\ref{sec:infinite}.) An MDP is denoted by a tuple $M=(\mathcal{S},\mathcal{A},r,T,d_1,H)$, where $\mathcal{S}$ and $\mathcal{A}$ are the state and action spaces with finite cardinality $|\mathcal{S}|=S$ and $|\mathcal{A}|=A$. $P_t : \mathcal{S}\times \mathcal{A}\times \mathcal{S} \rightarrow [0,1]$ is the transition kernel with $P_t (s^\prime |s, a)$ be the probability of entering state $s'$ after taking action $a$ at state $s$. We consider both the stationary and non-stationary transition setting: The stationary transition setting (\emph{e.g.}~\citet{dann2017unifying}) assumes $P_t\equiv P$ is identical at different time steps, and the non-stationary transition setting~\citep{jiang2017contextual,xie2019towards} allows $P_t$ to be different for different $t$. $r_t : \mathcal{S} \times \mathcal{A} \rightarrow [0,1]$ is the reward function which we assume to be deterministic\footnote{This is commonly assumed in the RL literature. The randomness in the reward will only cause a lower order error (than the randomness in the transition) for learning.}. $d_1$ is the initial state distribution, and $H$ is the time horizon. A (non-stationary) policy $\pi : \mathcal{S}  \rightarrow \P_\mathcal{A}^H$ assigns to each state $s_t \in \mathcal{S}$ a distribution over actions at each time $t$, \emph{i.e.} $\pi_t(\cdot|s_t)$ is a probability distribution with dimension $S$.\footnote{Note even for stationary transition setting, the policy itself can be non-stationary.} We use $d^\pi_t(s,a)$ or $d^\pi_t(s)$ to denote the marginal state-action/state distribution induced by policy $\pi$ at time $t$, i.e.
$
  d^\pi_t(s) \defeq \P^\pi(s_t = s)~~~{\rm and}~~~d^\pi_t(s,a) \defeq \P^\pi(s_t = s, a_t=a).
$
\paragraph{$Q$-value and Bellman operator.} For any policy $\pi$ and any fixed time $t$, the value function $V_t^\pi(\cdot)\in\R^S$ and $Q$-value function $Q_t^\pi(\cdot,\cdot)\in\R^{S\times A}$, $\forall s,a$ is defined as:
{\scriptsize
\begin{align*}
V_t^\pi(s)=\E\left[\sum_{i=t}^H r_i\middle|s_t=s\right],
Q_t^\pi(s,a)=\E\left[\sum_{i=t}^H r_i\middle|s_t,a_t=s,a\right]
\end{align*} }\vspace{-0.5cm}

For the ease of exposition, we always enumerate $Q^\pi$ as a column vector and similarly for $P_t(\cdot|s,a)$. Moreover, for any vector $Q\in\R^{S\times A}$, the induced value vector and policy is defined in the greedy way:
$
\forall s_t\in\mathcal{S}, V_Q(s_t)=\max_{a_t\in\mathcal{A}}Q(s_t,a_t),$  $\pi_Q(s_t)=\argmax_{a_t\in\mathcal{A}}Q(s_t,a_t).
$
Given an MDP, for any vector $V\in\mathbb{R}^\mathcal{S}$ and any deterministic policy $\pi$, $\forall t\in[H]$ the Bellman operator $\mathcal{T}_t^\pi:\mathbb{R}^\mathcal{S}\rightarrow\mathbb{R}^\mathcal{S}$ is defined as: $[\mathcal{T}_t^\pi(V)](s):=r(s,\pi_t(s))+P_t^\top(\cdot|s,\pi_t(s))V$, and the corresponding Bellman optimality operator $\mathcal{T}_t:\mathbb{R}^\mathcal{S}\rightarrow\mathbb{R}^\mathcal{S}$, $[\mathcal{T}_t(V)](s):=\max_{a\in\mathcal{A}}[r(s,a)+P_t^\top(\cdot|s,a)V]$. Lastly, for a given value function $V_{t}$, we define backup function $z_t(s_t,a_t) := P_t^\top(\cdot|s_t,a_t)V_{t+1}$ and the one-step variance as $\sigma_{V_{t+1}}(s_{t},a_{t}):=\Var_{s_{t+1}}[V_{t+1}(s_{t+1})|s_{t},a_{t}]$.

\subsection{Offline learning problem}
In this paper we investigate the offline learning problem, where we do not have interactive access to the MDP, and can only observe a static dataset $\mathcal{D}=\left\{(s^{(i)}_t,a^{(i)}_t,r^{(i)}_t,s^{(i)}_{t+1})\right\}_{i\in[n]}^{t\in[H]}$.We assume that $\mathcal{D}$ is obtained by executing a pre-specified \emph{behavior policy} $\mu$ (also known as the \emph{logging policy}) for $n$ episodes and collecting the trajectories $\tau^{(i)}=(s_1^{(i)},a_1^{(i)}, r_1^{(i)},\dots,s_H^{(i)},a_H^{(i)},r_H^{(i)},s_{H+1}^{(i)})$, where each episode is rendered in the form: $s_1^{(i)}\sim d_1$, $a_t^{(i)}\sim \mu_t(\cdot|s_t^{(i)})$, $r_t^{(i)}=r(s_t^{(i)}, a_t^{(i)})$, and $s_{t+1}^{(i)}\sim P_t(\cdot|s_t^{(i)}, a_t^{(i)})$.
Given the dataset $\mathcal{D}$, our goal is to find an {\bf $\epsilon$-optimal policy} $\pi_\text{out}$, in the sense that $||V_1^{\pi^\star}-V_1^{\pi_\text{out}}||_\infty<\epsilon$.


\paragraph{Assumption on data coverage}
Due to the curse of distributional shift, efficient offline RL is only possible under certain data coverage properties for the behavior policy $\mu$. Throughout this paper we assume the following:
\begin{assumption}[Weak coverage]
  \label{assu2}
  The behavior policy $\mu$ satisfies the following: There exists \emph{some} optimal policy $\pi^\star$ such that $d_{t'}^\mu(s_{t'},a_{t'})>0$ if there exists $t<t'$ such that $d^{\pi^\star}_{t:t'}(s_{t'},a_{t'}|s_t,a_t)>0$, where $d^{\pi^\star}_{t:t'}(s_{t'},a_{t'}|s_t,a_t)$ is the conditional multi-step transition probability from step $t$ to $t'$.
\end{assumption}
Intuitively, Assumption~\ref{assu2} requires $\mu$ to ``cover'' certain optimal policy $\pi^\star$, in the sense that any $s_{t'},a_{t'}$ is reachable by $\mu$ if it is attainable from a previous state-action pair by $\pi^\star$.  It is similar to \citep[Assumption 1]{liu2019off}. Note that this is weaker than the standard ``concentrability'' assumption~\citep{munos2003error,le2019batch,chen2019information}: Concentrability defines $\beta_\mu:=\sup_{\pi\in\Pi}||d^\pi(s_t,a_t)/d^\mu(s_t,a_t)||_\infty<\infty$ (cf.~\citep[Assumption 1 \& Example 4.1]{le2019batch}), which requires the sufficient exploration for tabular case\footnote{Note \citet{xie2020q} has a tighter concentration coefficient with $C_\mu:=\max_{\pi\in\Pi}\norm{w_{d_\pi/\mu}}^2_{2,\mu}$ but it still requires full exploration when $\Pi$ contains all policies.} since we optimize over all policies (see Section~\ref{sec:comparison_dm} for a discussion). In contrast, our assumption only requires $\mu$ to ``trace'' one single optimal policy.\footnote{Nevertheless, we point out that function approximation$+$concentrability assumption is powerful for handling realizability/agnostic case and related concepts (\emph{e.g.} inherent Bellman error) and easier to scale up to general settings.} 


With Assumption~\ref{assu2}, we define 
{\small
\begin{align}
  \label{eqn:definition-dm}
  d_m:=\min_{t,s_t,a_t}\{d^\mu_t(s_t,a_t):d^\mu_t(s_t,a_t)>0\},
\end{align}
}which is decided by the behavior policy $\mu$ and is an intrinsic quantity required by offline learning (see Theorem~G.2 in \citet{yin2020near}). Our sample complexity bounds will depend on $1/d_m$ and in general $d_m$ is unknown. Yet, we assume $d_m$ is \emph{known} for the moment and will utilize the knowledge of $d_m$ in our algorithms. Indeed, in Lemma~\ref{thm:est_d_m}, we show that estimating $d_m$ (using on-policy Monte Carlo estimator) up to a multiplicative factor only requires $\widetilde{O}(1/d_m)$ episodes of offline data; replacing the exact $d_m$ with this estimator suffices for our purpose and, importantly, will not affect our downstream sample complexities.

\section{Variance reduction for offline RL}
\label{sec:finite_horizon1}


In this section, we introduce our main algorithm Off-Policy Double Variance Reduction (\OPDVR{}), and present its theoretical guarantee in the finite-horizion non-stationary setting.

\subsection{Review: variance reduction for RL}
We begin by briefly reviewing the variance reduction algorithm for online reinforcement learning, where we have the interactive access to the environment.

Variance reduction (VR) initially emerged as a technique for obtaining fast convergence in large scale optimization problems, for example in the Stochastic Variance Reduction Gradient method (SVRG,~\citep{johnson2013accelerating,zhang2013linear}). This technique is later brought into reinforcement learning for handling policy evaluation~\citep{du2017stochastic} and policy optimization problems~\citep{sidford2018variance,sidford2018near,yang2019sample,wainwright2019variance,sidford2020solving,li2020sample,zhang2020almost}.

In the case of policy optimization, VR is an algorithm that approximately iterating the Bellman optimality equation, using an inner loop that performs an approximate value (or Q-value) iteration using fresh interactive data to estimate $V^\star$, and an outer loop that performs multiple steps of such iterations to refine the estimates. Concretely, to obtain an reliable $Q_t(s,a)$ for some step $t\in[H]$, by the Bellman equation $Q_t(s,a)=r(s,a)+P_t^\top(\cdot|s,a)V_{t+1}$, we need to estimate $P_t^\top(\cdot|s,a)V_{t+1}$ with sufficient accuracy. VR handles this by decomposing:
{\small\begin{equation}\label{eq:VR_decomposition}
P_t^\top(\cdot|s,a)V_{t+1}=P_t^\top(\cdot|s,a)(V_{t+1}-V_{t+1}^{\text{in}})+P_t^\top(\cdot|s,a)V_{t+1}^{\text{in}},
\end{equation}
}where $V_{t+1}^{\text{in}}$ is a \emph{reference} value function obtained from previous calculation (See line~4,13 in the inner loop of Algorithm~\ref{alg:OPDVRT}) and $P_t^\top(\cdot|s,a)(V_{t+1}-V_{t+1}^{\text{in}})$, $P_t^\top(\cdot|s,a)V_{t+1}^{\text{in}}$ are estimated separately at different stages. This technique can help in reducing the ``effective variance'' along the learning process (see~\citet{wainwright2019variance} Section~2 for a discussion).

In addition, in order to translate the guarantees from learning values to learning policies\footnote{Note in general, direct translation of learning a $\epsilon$-optimal value to $\epsilon$-optimal policy will cause additional suboptimal complexity dependency of $H$. }, we build on the following ``monotonicity property'': For any policy $\pi$ that satisfies the monotonicity condition $V_t\leq \mathcal{T}_{\pi_t}V_{t+1}$ for all $t\in[H]$, the performance of $\pi$ is sandwiched as $V_t\leq V^\pi_t\leq V^\star_t$, i.e. $\pi$ is guaranteed to perform the same or better than $V_t$. This property is first captured by~\citep{sidford2018near} (for completeness we provide a proof in Lemma~\ref{lem:mono}), and later reused by \citet{yang2019sample,sidford2020solving} under different settings. We rely on this property in our offline setting as well for providing policy optimization guarantees.


\subsection{OPDVR: variance reduction for offline RL}
We now explain how we design the VR algorithm in the offline setting. Even though our primary novel contribution is for the stationary case (Theorem~\ref{thm:main_stationary}), we begin with non-stationary setting for the ease of explaining algorithmic design. We let $\iota:=\log(HSA/\delta)$ as a short hand.

\begin{algorithm}[tb]
	\caption{OPVRT: A Prototypical Off-Policy Variance Reduction Template}
	\label{alg:OPDVRT}
	\small{
		\begin{algorithmic}[1]
			\STATE {\bfseries Functional input:}  Integer valued function $\mathbf{m}:  \R_+ \rightarrow \mathbb{N}$.   Off-policy estimator $\mathbf{z}_t,\mathbf{g}_t$ in function forms that provides lower confidence bounds (LCB) of the two terms in the bootstrapped value function \eqref{eq:VR_decomposition}.
			
			\STATE {\bfseries Static input:} Initial value function $V_t^{(0)}$ and $\pi_t^{(0)}$  (which satisfy $V^{(0)}_t\leq \mathcal{T}_{\pi_t^{(0)}}V_{t+1}^{(0)}$ and $V_{H+1}^{(0)}\equiv 0$).
			A scalar $u^{(0)}$ satisfies $u^{(0)}\geq \sup_t||V_t^\star-V_t^{(0)}||_\infty$.   Outer loop iterations $K$. 
			Offline dataset $\mathcal D = \{\{s_t^{(i)},a_t^{(i)},r_t^{(i)}\}_{t = 1}^{H }\}_{i = 1}^{n}$ from the behavior policy $\mu$ as a data-stream where
			$n \geq \sum_{i=1}^K 2 \cdot \mathbf{m}(u^{(0)} \cdot 2^{-(i-1)}). $
			
			\STATE 			------------------\textsc{Inner loop} ---------------------\\
			\FUNCTION{\textsc{QVI-VR} ($\cD_1,\cD_2$, $V_t^{\text{in}}, \pi^{\text{in}}, \mathbf{z}_t,\mathbf{g}_t, u^\text{in}$)}
			\STATE  \mypink{$\diamond$ Computing reference with $\cD_1$:}
			\STATE Initialize $Q_t\leftarrow \mathbf{0}\in\R^{\mathcal{S}\times\mathcal{A}}$ for $t\in[H+1]$.\label{asdad}
			\FOR{$t\in[H]$ and each pair $(s_t,a_t)\in\mathcal{S}\times\mathcal{A}$}
			\STATE \mypink{$\diamond$ Compute an LCB of $P_t^\top(\cdot|s_t,a_t) V_{t+1}^{in}$:}
			\STATE $z_t\leftarrow  \mathbf{z}_t(\cD_1, V^{\text{in}}_{t+1}, u^\text{in} )$
			\ENDFOR
			\STATE \mypink{$\diamond$ Value Iterations with $\cD_2$:}
			\FOR{$t=H+1,H,...,1$}
			\STATE \mypink{$\diamond$ Update value function:}  $V_t=\max(V_{Q_{t}},V^{\text{in}}_t)$, 
			\STATE \mypink{$\diamond$ Update policy according to value function:} 
			\STATE $\forall s_t$, if $V_t(s_t)=V^{\text{in}}_t(s_t)$ set $\pi_t(s_t)=\pi_t^{\text{in}}(s_t)$; else set $\pi_t(s_t)=\pi_{Q_{t}}(s_t)$. 
			\IF{$t\geq 1$}
			\STATE \mypink{$\diamond$  LCB of $P^\top(\cdot|s_{t-1},a_{t-1})[V_{t}-V^{\text{in}}_{t}]$:} 
			\STATE $g_{t-1} \leftarrow  \mathbf{g}_{t-1}(\cD_2,V_{t}, V^{\text{in}}_{t},u^\text{in})$. 
			\STATE \mypink{$\diamond$ Update $Q$ function:}  $Q_{t-1}\leftarrow r+z_{t-1}+g_{t-1}$  
			\ENDIF
			\ENDFOR 
			\STATE {\bfseries Return:} $V_1,...,V_H$ and $\pi$
			\ENDFUNCTION
			\STATE 			------------------\textsc{outer loop} ---------------------\\
			\FOR{$i=1,...,K$}
			\STATE$m^{(i)} \rightarrow \mathbf{m}(u^{(i-1)})$
			\STATE Get $\cD_1$ and $\cD_2$ both of size $m^{(i)}$ from the stream $\cD$.
			\STATE $V^{(i)},\pi^{(i)}\leftarrow$\textsc{QVI-VR}($\cD_1,\cD_2,V_t^{(i-1)},\pi^{(i-1)},\mathbf{z}_t,\mathbf{g}_t,u^{(i-1)}$).
			\STATE $u^{(i)}\leftarrow u^{(i-1)}/2$.
			\ENDFOR		
			\STATE {\bfseries Output: $V^{(K)}$, $\pi^{(K)}$ } 
			
		\end{algorithmic}
	}
\end{algorithm}
\normalsize{}

\paragraph{Prototypical offline VR} 
We first describe a prototypical version of our offline VR algorithm in Algorithm~\ref{alg:OPDVRT}, which we will instantiate with different parameters  
\emph{twice} (hence the name``Double'') in each of the three settings of interest.

Algorithm~\ref{alg:OPDVRT} takes estimators $\mathbf{z}_t$ and $\mathbf{g}_t$ that produce lower confidence bounds (LCB) of the two terms in \eqref{eq:VR_decomposition} using offline data. 
Specifically, we assume $\mathbf{z}_t,\mathbf{g}_t$ are both available in \emph{function forms} in that they take an offline dataset (with an arbitrary size), fixed value function $V_{t+1}, V^{\text{in}}_{t+1}$ and an external scalar input $u$ then return $z_t, g_t\in \R^{\cS\times \cA}$. 
 $z_t,g_t$ satisfies that 
 \begin{align*}
 z_t(s_t,a_t) &\leq P^\top(\cdot|s_t,a_t)V^{\text{in}}_{t+1},\\
 g_t(s_t,a_t) &\leq P^\top(\cdot|s_{t},a_{t})[V_{t+1}-V^{\text{in}}_{t+1}],
 \end{align*}
 uniformly for all $s_t,a_t$  with high probability. 
 
Algorithm~\ref{alg:OPDVRT} then proceeds by taking the input offline dataset as a stream of iid sampled trajectories and use an exponentially increasing-sized batches of independent data to pass in $\mathbf{z}_t$ and $\mathbf{g}_t$ while updating the estimated $Q$ value function by applying the Bellman backup operator except that the update is based on a conservative and variance reduced estimated values. Each inner loop iteration backs up from the last time-step and update all $Q_t$ for $t=H,...,1$; and each outer loop iteration passes a new batch of data into the inner loop while ensuring reducing the suboptimality gap from the optimal policy by a factor of 2 in each outer loop iteration. 

Now let us introduce our estimators $\mathbf{z}_t$ and $\mathbf{g}_t$  in the finite-horizon non-stationary case  (the choices for the stationary case and the infinite-horizon case will be introduced later).   

Given an offline dataset $\mathcal{D}$, we define LCB $z_t(s_t,a_t) = \tilde{z_t}(s_t,a_t) - e(s_t,a_t)$ where $\tilde{z_t}(s_t,a_t)$ is an unbiased estimator and $e(s_t,a_t) = O(\sqrt{\tilde{\sigma}_{V^{\text{in}}_{t+1}}(s_t,a_t)})$ is an ``error bar'' that depends on $\tilde{\sigma}_{V^{\text{in}}_{t+1}}(s_t,a_t)$ ---  an estimator of the variance of $\tilde{z_t}(s_t,a_t)$.   $\tilde{z_t}(s_t,a_t)$ and $\tilde{\sigma}_{V^{\text{in}}_{t+1}}(s_t,a_t)$ are plug-in estimators at $(s_t, a_t)$ that use the available offline data $(r_t, s_{t+1}')$  to estimate the transition and rewards \emph{only if} the number of visitations to $(s_t, a_t)$ (denoted by $n_{s_t,a_t}$) is greater than a statistical threshold. Let $m$ be the episode budget, we write:
\begin{equation}\label{eqn:off_z}
\begin{aligned}
&\tilde{z}_t(s_t,a_t)=P^\top_t(\cdot|s_t,a_t) V^{\text{in}}_{t+1}\cdot \mathbf{1}(E^c_{m,t})
+ \frac{1}{n_{s_t,a_t}}\sum_{i=1}^m V^{\text{in}}_{t+1}(s^{(i)}_{t+1})\cdot\mathbf{1}_{[s^{(i)}_t,a^{(i)}_t=s_t,a_t]}\cdot \mathbf{1}(E_{m,t}),\\
&\tilde{\sigma}_{V^{\text{in}}_{t+1}}(s_t,a_t)={\sigma}_{V^{\text{in}}_{t+1}}(s_t,a_t)\mathbf{1}(E^c_{m,t})
+\Big[\frac{1}{n_{s_t,a_t}}\sum_{i=1}^m[V^{\text{in}}_{t+1}(s^{(i)}_{t+1})]^2\cdot\mathbf{1}_{[s^{(i)}_t,a^{(i)}_t=s_t,a_t]}-\tilde{z}_t^2(s_t,a_t)\Big]\mathbf{1}(E_{m,t}),\\
&e(s_t,a_t) = \sqrt{\frac{4\tilde{\sigma}_{V^{\tin}_{t+1}} \iota}{m d^\mu_t(s_t, a_t)}}  + 2\sqrt{6}V_{\max}\big(\frac{\iota}{m {d^\mu_t(s_t, a_t)}}\big)^{3/4} + 16 \frac{V_{\max} \iota}{m {d^\mu_t(s_t, a_t)}}\\
\end{aligned}
\end{equation}
where $E_{m,t}=\{n_{s_t,a_t}> \frac{1}{2} m\cdot d^\mu_t(s_t,a_t)\}$ and $n_{s_t,a_t}$ is the number of episodes visited $(s_t,a_t)$ at time $t$.
We also note that we only aggregate the data at the same time step $t$, so that the observations are from different episodes and thus independent\footnote{This is natural for the non-stationary transition setting; for the stationary transition setting we have an improved way for defining this estimators. See Section~\ref{sec:finite_horizon2}.}.

Similarly,  our estimator $g_t(s_t, a_t) = \tilde{g}_t(s_t,a_t) - f(s_t,a_t)$ where  $\tilde{g}_t(s_t,a_t) $ estimates $P_t^\top(\cdot|s_{t},a_{t})[V_{t+1}-V^{\text{in}}_{t+1}]$ using $l$ independent episodes (let $n'_{s_t,a_t}$ denote the visitation count from these $l$ episodes) and $f(s_t,a_t)$ is an error bar:
{
  \begin{equation}\label{eqn:off_g}
  \begin{aligned}
    \tilde{g}_{t}(s_{t},a_{t})=&P^\top_t(\cdot|s_{t},a_{t})[V_{t+1}-V^{\text{in}}_{t+1}]\cdot\mathbf{1}(E^c_{l,t})\\
      &+\frac{1}{n'_{s_{t},a_{t}}}\sum_{j=1}^l[V_{t+1}(s'^{(j)}_{t+1})-V_{t+1}^{\text{in}}(s'^{(j)}_{t+1})]\cdot\mathbf{1}_{[s'^{(j)}_{t},a'^{(j)}_{t}=s_{t},a_{t}]}\mathbf{1}(E_{l,t})
      \end{aligned}
  \end{equation}
}
Here, $E_{l,t} =\{n'_{s_{t},a_{t}}> \frac{1}{2} l\cdot d^\mu_{t}(s_{t},a_{t})\}$ and $f(s_t, a_t,u) \defeq 4u \sqrt{{\iota}/{l  d^\mu_t(s_t, a_t)}}$. Notice that  $f(s_t,a_t,u)$ depends on the additional input $u^{\text{in}}$ which measures the certified suboptimality of the input.  

\paragraph{Fictitious vs. actual estimators.} Careful readers must have noticed that that the above estimators $\mathbf{z}_t$ and $\mathbf{g}_t$ are \emph{infeasible} to implement as they require the unobserved (population level-quantities) in some cases. We call them \emph{fictitious} estimators as a result. Readers should rest assured since by the following proposition we can show their practical implementations (summarized in Figure~\ref{fig:practical_estimators}) are \emph{identical} to these \emph{fictitious} estimators with high probability:
\begin{proposition}[Summary of Section~\ref{sec:app_practical}]\label{prop:practical}
	Under the condition of Theorem~\ref{thm:main_nonstationary}, we have 
	\[
	\P\left[\bigcup_{i\in[K_1],t\in[H]}\left(E^{(i)c}_{l,t}\cup E^{(i)c}_{m,t}\right)\bigcup_{j\in[K_2],t\in[H]}\left(E^{(j)c}_{l,t}\cup E^{(j)c}_{m,t}\right)\right]\leq \delta/2,
	\]
	this means with high probability $1-\delta/2$, fictitious estimators $\tilde{z}_t,\tilde{g}_t,\tilde{\sigma}$ are all identical to their practical versions (summarized in Figure~\ref{fig:practical_estimators}).  Moreover, under the same high probability events,  the empirical version of the ``error bars'' $e_t(s_t,a_t)$ and $f(s_t,a_t,u)$ are at most twice as large than their fictitious versions that depends on the unknown $d^\mu_t(s_t, a_t)$.
\end{proposition}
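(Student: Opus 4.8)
The plan is to prove Proposition~\ref{prop:practical} in two parts, corresponding to the two claims in the statement: (i) the high-probability bound on the ``bad events'' $E^{c}_{l,t}$ and $E^{c}_{m,t}$, and (ii) the consequence that the fictitious estimators coincide with their practical versions and that the empirical error bars are within a factor of $2$ of the fictitious ones.

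\textbf{Step 1: Controlling the bad events via a multiplicative Chernoff bound.} Fix an outer-loop index $i$ and a time step $t$, and consider the event $E_{m,t}=\{n_{s_t,a_t}> \frac12 m\, d^\mu_t(s_t,a_t)\}$. For each $(s_t,a_t)$ the visitation count $n_{s_t,a_t}$ is a sum of $m$ i.i.d.\ Bernoulli indicators $\mathbf{1}_{[s^{(i)}_t,a^{(i)}_t=s_t,a_t]}$ across the $m$ episodes in the relevant batch, with mean $m\, d^\mu_t(s_t,a_t)$; crucially these are independent because we only aggregate observations from distinct episodes at the \emph{same} time step. The only nontrivial point is that $(s_t,a_t)$ for which $d^\mu_t(s_t,a_t)=0$ are never visited, so $E^c_{m,t}$ is vacuous there; for all other $(s_t,a_t)$ we have $d^\mu_t(s_t,a_t)\ge d_m$. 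A multiplicative Chernoff bound gives $\P[n_{s_t,a_t}\le \frac12 m\, d^\mu_t(s_t,a_t)]\le \exp(-m\, d^\mu_t(s_t,a_t)/8)\le \exp(-m\, d_m/8)$, and similarly for $E^c_{l,t}$ with $l$ in place of $m$. The same argument applies verbatim to both the stage-1 (reference) and stage-2 batches, indexed by $K_1$ and $K_2$ outer iterations respectively. I would then take a union bound over $t\in[H]$, over $(s_t,a_t)\in\mathcal S\times\mathcal A$ (so a factor of $SA$), and over all outer iterations in $[K_1]$ and $[K_2]$; since the sizes $m=m^{(i)},\, l=l^{(i)}$ grow geometrically, the smallest batch size dominates the union bound, and the condition $n\ge \sum_i 2\mathbf{m}(u^{(0)}2^{-(i-1)})$ together with the choice of $\mathbf m$ in Theorem~\ref{thm:main_nonstationary} ensures every batch is large enough that $\exp(-m^{(i)} d_m/8)$ summed against the $HSA K$ union-bound factor is at most $\delta/2$. (Here I use $\iota=\log(HSA/\delta)$, which is exactly calibrated for this union bound.)

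\textbf{Step 2: Fictitious $=$ practical on the good event.} On the complement of the bad event of Step~1, we have $\mathbf{1}(E_{m,t})=1$ and $\mathbf{1}(E^c_{m,t})=0$ for every relevant $(s_t,a_t,t,i)$, and likewise $\mathbf{1}(E_{l,t})=1$. Inspecting the definitions \eqref{eqn:off_z} and \eqref{eqn:off_g}: the fictitious $\tilde z_t$ reduces to the empirical plug-in average $\frac{1}{n_{s_t,a_t}}\sum_i V^{\text{in}}_{t+1}(s^{(i)}_{t+1})\mathbf{1}_{[\cdot]}$, which is exactly the practical estimator; similarly $\tilde\sigma_{V^{\text{in}}_{t+1}}$ reduces to the empirical second-moment-minus-square, and $\tilde g_t$ reduces to the empirical difference average — all of which are the implementable quantities in Figure~\ref{fig:practical_estimators}. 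Thus on this event the fictitious and practical estimators are pointwise identical, which is the first assertion.

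\textbf{Step 3: Error bars within a factor of two.} The fictitious error bars $e(s_t,a_t)$ and $f(s_t,a_t,u)$ are written in terms of the \emph{unknown} population quantity $m\, d^\mu_t(s_t,a_t)$ (and $l\,d^\mu_t(s_t,a_t)$), whereas the practical versions must substitute the observed count $n_{s_t,a_t}$ (resp.\ $n'_{s_t,a_t}$). On the good event $E_{m,t}$ we have $n_{s_t,a_t}> \frac12 m\, d^\mu_t(s_t,a_t)$, hence $\frac{1}{n_{s_t,a_t}}< \frac{2}{m\, d^\mu_t(s_t,a_t)}$; since each of the three terms in $e(s_t,a_t)$ is a nonnegative increasing function of $\frac{1}{m d^\mu_t(s_t,a_t)}$ (with exponents $1/2$, $3/4$ and $1$ respectively, all at most $1$), replacing $\frac{1}{m d^\mu_t}$ by $\frac{1}{n_{s_t,a_t}}<\frac{2}{m d^\mu_t}$ inflates each term by at most a factor $2^{1}=2$, so the empirical $e$ is at most twice the fictitious $e$. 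The identical one-line argument with the Bernoulli count $n'_{s_t,a_t}$ handles $f(s_t,a_t,u)=4u\sqrt{\iota/(l d^\mu_t(s_t,a_t))}$, which picks up a factor $\sqrt2\le 2$. (One should also note $\tilde\sigma$ appearing inside $e$ is unchanged by Step~2, so no further loss there.) Combining Steps~1--3 yields the proposition.

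\textbf{Main obstacle.} The conceptually delicate part is Step~1's bookkeeping: making sure the union bound over all $(i,t,s_t,a_t)$ across \emph{both} invocations of the template (the $K_1$- and $K_2$-fold outer loops) closes against $\delta/2$, which requires that the smallest batch size $m^{(i)}$ (equivalently, the most refined $u^{(i-1)}$) is still $\widetilde\Omega(1/d_m)$ up to log factors — this is exactly what the functional input $\mathbf m$ and the data-budget condition $n\ge\sum_i 2\mathbf m(u^{(0)}2^{-(i-1)})$ are designed to guarantee, and tracing that dependence through is the crux. The independence claim (that counts at a fixed $t$ are sums of i.i.d.\ Bernoullis across episodes) is immediate in the non-stationary setting but worth stating explicitly since it is precisely what fails — and must be handled differently — in the stationary-transition analysis of Section~\ref{sec:finite_horizon2}.
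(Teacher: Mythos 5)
Your proposal is correct and follows essentially the same route as the paper's Section~\ref{sec:app_practical}: a multiplicative Chernoff bound on each binomial count, a union bound over all outer iterations, time steps and state-action pairs (restricted to $d^\mu_t(s_t,a_t)>0$), closed by the observation that $d_m\min_i m^{(i)}\geq d_m m'\log(16KHSA/\delta)/H^2\geq 8\log(16KHSA/\delta)$ since $u^{(i-1)}\leq H$ and $m'\gtrsim H^2/d_m$; your Steps 2--3 supply the (straightforward) identity of estimators and the factor-$2$ inflation of the error bars, which the paper asserts without detail. One trivial slip: the smallest batch corresponds to the \emph{coarsest} (largest) $u^{(i-1)}=u^{(0)}$, not the most refined one, but this does not affect your argument.
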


These  \emph{fictitious} estimators, however, are easier to analyze and they are central to our extension of the Variance Reduction framework  previously used in the generative model setting \citep{sidford2018near} to the offline setting.  The idea is that it \emph{replaces} the low-probability, but pathological cases due to random $n_{s_t,a_t}$ with ground truths. Another challenge of the offline setting is due to the dependence of data points within a single episode. Note that the estimators are only aggregating the data at the same time steps. Since the data pair at the same time must come from different episodes, then conditional independence (given data up to current time steps, the states transition to the next step are independent of each other) can be recovered by this design \eqref{eqn:off_z}, \eqref{eqn:off_g}.

\paragraph{The doubling procedure}
It turns out that Algorithm~\ref{alg:OPDVRT} alone does not yield a tight sample complexity guarantee, due to its \emph{suboptimal dependence on the initial optimality gap} $u^{(0)}\ge \sup_t \|V^\star_t - V^{(0)}_t\|_\infty$ (recall $u^{(0)}$ is the initial parameter in the outer loop of Algorithm~\ref{alg:OPDVRT}). This is captured in the following:
\begin{proposition}[Informal version of Lemma~\ref{lem:complexity_OPVRT}]
	\label{prop:suboptimal-h}
	Suppose $\epsilon\in(0,1]$ is the final target accuracy. Algorithm~\ref{alg:OPDVRT} outputs the $\epsilon$-optimal policy with episode complexity:
	\begin{align*}
	&\bullet\;\tilde{O}(H^4/d_m\epsilon^2),\quad  \text{If}\; u^{(0)}> \sqrt{H};  \qquad\qquad \\
	&\bullet \;\tilde{O}(H^3/d_m\epsilon^2)  ,\quad \text{If}\; u^{(0)}\leq \sqrt{H}.
	\end{align*}
\end{proposition}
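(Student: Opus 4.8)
The plan is to prove the formal version, Lemma~\ref{lem:complexity_OPVRT}, by analyzing a single outer iteration of Algorithm~\ref{alg:OPDVRT} and then composing the guarantees across the $K$ iterations. I would work throughout on the high-probability event of Proposition~\ref{prop:practical}, augmented with the Bernstein/Hoeffding inequalities underlying the error bars of \eqref{eqn:off_z}--\eqref{eqn:off_g}; a union bound over the $O(KHSA)$ query points makes this hold with probability $\ge 1-\delta$. On this event the fictitious estimators equal their implementable versions, $z_t$ and $g_t$ are genuine lower confidence bounds on the two pieces of \eqref{eq:VR_decomposition}, the two-sided bounds $|\tilde z_t-P_t^\top(\cdot|s,a)V^{\text{in}}_{t+1}|\le e$ and $|\tilde g_t-P_t^\top(\cdot|s,a)(V_{t+1}-V^{\text{in}}_{t+1})|\le f$ hold, and $\tilde\sigma_{V^{\text{in}}_{t+1}}\le\sigma_{V^{\text{in}}_{t+1}}+(\text{lower order})$, all uniformly in $(s,a,t)$ and over all outer iterations.

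I would first establish a \emph{monotonicity invariant} by induction on the outer index $i$: each call to \textsc{QVI-VR} is fed, and returns, a legal triple $(V^{(i)},\pi^{(i)})$ with $V^{(i)}_{H+1}\equiv0$, $V^{(i)}_t\le\mathcal{T}_{\pi^{(i)}_t}V^{(i)}_{t+1}$ for all $t$, and $V^{(i)}_t\le V^\star_t$. The inductive step runs backward in $t$ inside the inner loop: since $z_t+g_t$ underestimates $P_t^\top(\cdot|s,a)V_{t+1}$ and (inductively) $V_{t+1}\le V^\star_{t+1}$, we get $Q_t\le\mathcal{T}_tV_{t+1}\le Q^\star_t$, hence $V_{Q_t}\le V^\star_t$ and $V_t=\max(V_{Q_t},V^{\text{in}}_t)\le V^\star_t$; the relation $V_t\le\mathcal{T}_{\pi_t}V_{t+1}$ follows from the two-case choice of $\pi_t$ together with $V_{t+1}\ge V^{\text{in}}_{t+1}$, exactly as in Lemma~\ref{lem:mono}. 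Lemma~\ref{lem:mono} then yields $V^{(i)}_t\le V^{\pi^{(i)}}_t\le V^\star_t$ for all $i,t$ --- the bridge from value error to policy error.

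The core is a \emph{per-iteration contraction}: if the $i$-th call is fed $V^{\text{in}}=V^{(i-1)}$ with $\sup_t\|V^\star_t-V^{(i-1)}_t\|_\infty\le u:=u^{(i-1)}$ and uses batch size $m:=m^{(i)}=\mathbf{m}(u)$, then $\sup_t\|V^\star_t-V^{(i)}_t\|_\infty\le u/2$. Writing $\Delta_t:=V^\star_t-V^{(i)}_t\ge0$ (by the invariant) and using $V^{(i)}_t(s)\ge Q_t(s,\pi^\star_t(s))$ with $Q_t(s,a)\ge r(s,a)+P_t^\top(\cdot|s,a)V^{(i)}_{t+1}-2e(s,a)-2f(s,a,u)$, one obtains $\Delta_t(s)\le P_t^\top(\cdot|s,\pi^\star_t(s))\Delta_{t+1}+2e(s,\pi^\star_t(s))+2f(s,\pi^\star_t(s),u)$, which unrolls (all $\Delta$'s nonnegative) to $\|V^\star_t-V^{(i)}_t\|_\infty\le\sup_{s_t}\sum_{t'=t}^H\E^{\pi^\star}[2e(s_{t'},a_{t'})+2f(s_{t'},a_{t'},u)\mid s_t]$. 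On every $(s_{t'},a_{t'})$ reachable from $s_t$ under $\pi^\star$, Assumption~\ref{assu2} gives $d^\mu_{t'}\ge d_m$, so the $f$-part is $\le 8uH\sqrt{\iota/(md_m)}$; for the $e$-part, Jensen ($\E[1/d^\mu_{t'}]\le1/d_m$ on the $\pi^\star$-support), two applications of Cauchy--Schwarz, the estimate $\tilde\sigma_{V^{\text{in}}_{t'+1}}\le 2\sigma_{V^\star_{t'+1}}+2u^2+(\text{l.o.t.})$, and the law of total variance $\sum_{t'=t}^H\E^{\pi^\star}[\sigma_{V^\star_{t'+1}}\mid s_t]=\Var^{\pi^\star}[\sum_{t'\ge t}r_{t'}\mid s_t]\le H^2$ give an $e$-part of order $\sqrt{\iota/(md_m)}\,(H^{3/2}+Hu)$ up to genuinely lower-order terms. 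Asking the total to be $\le u/2$ gives $\mathbf{m}(u)=\tilde O(H^3/(d_m u^2))$ when $u\le\sqrt H$ (the $H^{3/2}$ piece dominates $Hu$), while once $u>\sqrt H$ the trajectory law of total variance no longer improves on the trivial bound $\sigma_{V^{\text{in}}_{t'+1}}\le V_{\max}^2=H^2$ and the requirement degrades by an $H$ factor to $\mathbf{m}(u)=\tilde O(H^4/d_m)$ --- this is precisely the ``suboptimal dependence on $u^{(0)}$'' that the doubling of Section~\ref{sec:discussion} later removes.

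Finally, composing: an induction on $i$ (legal inputs from the invariant, halving from the contraction) gives $\sup_t\|V^\star_t-V^{(i)}_t\|_\infty\le u^{(0)}2^{-i}$, so $K=\lceil\log_2(u^{(0)}/\epsilon)\rceil$ iterations reach accuracy $\epsilon$, and with the invariant $V^{(K)}_1\le V^{\pi^{(K)}}_1\le V^\star_1$ this certifies $\|V^{\pi^\star}_1-V^{\pi^{(K)}}_1\|_\infty\le\epsilon$. The episode count is $n=\sum_{i=1}^K 2\mathbf{m}(u^{(0)}2^{-(i-1)})$: if $u^{(0)}\le\sqrt H$ the geometric sum of $\tilde O(H^3/(d_m (u^{(0)}2^{-(i-1)})^2))$ is dominated by its last term, $\tilde O(H^3/(d_m\epsilon^2))$; if $u^{(0)}>\sqrt H$ (with $u^{(0)}\le H$ for the canonical $V^{(0)}\equiv0$) the $O(\log H)$ warm-up iterations with $u^{(i-1)}>\sqrt H$ contribute $\tilde O(H^4/d_m)$ and the remaining ``$u\le\sqrt H$'' iterations again contribute $\tilde O(H^3/(d_m\epsilon^2))$, so since $\epsilon\le1$ the total is $\tilde O(H^4/(d_m\epsilon^2))$. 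The main obstacle is the contraction step, and within it the need to thread three coupled ingredients through one uniform union bound: the variance-reduced split \eqref{eq:VR_decomposition} (which makes the error bar scale with $\sqrt{\sigma_{V^{\text{in}}_{t'+1}}}$ rather than $V_{\max}$), the law of total variance along $\pi^\star$-trajectories (which saves the extra $\sqrt H$ exactly when $u\le\sqrt H$), and the fictitious-estimator device of Proposition~\ref{prop:practical} (which both tames the random visitation counts and restores the conditional independence of same-timestep observations required by the concentration bounds), all while simultaneously preserving the one-sided LCB property needed for monotonicity and the two-sided concentration needed for accuracy.
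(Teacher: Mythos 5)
Your proposal is correct and follows essentially the same route as the paper's own proof (Lemmas~\ref{lem:mono}, \ref{lem:VR_incremental}, \ref{lem:complexity} and \ref{lem:complexity_OPVRT}): the monotone invariant $V_t\le\mathcal{T}_{\pi_t}V_{t+1}\le V^\star_t$, the recursion $Q^\star_t-Q_t\le \PP^{\pi^\star}_t[Q^\star_{t+1}-Q_{t+1}]+\xi_t$ unrolled along $\pi^\star$, Bernstein error bars combined with Cauchy--Schwarz and the law-of-total-variance bound $\sum_{t'}\E^{\pi^\star}[\sigma_{V^\star_{t'+1}}]\le H^2$ (Lemma~\ref{lem:H3}), and the geometric composition over the $K$ halving rounds. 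One small narrative slip worth fixing: the $H^4$ degradation for $u^{(0)}>\sqrt H$ is not because the law of total variance ``no longer improves on the trivial bound'' (that bound is independent of $u$), but — exactly as your own arithmetic with the $H^{3/2}+Hu$ split shows — because the terms scaling linearly in $u^{\tin}$ (the error bar $f\propto u^{\tin}$ and the $u^{\tin}$ contribution to $\sqrt{\tilde\sigma_{V^{\tin}_{t+1}}}$) accumulate to $Hu^{\tin}\sqrt{\iota/(m d_m)}$ over the horizon and dominate the $H^{3/2}$ variance piece once $u^{\tin}>\sqrt H$.
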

Proposition~\ref{prop:suboptimal-h} suggests that Algorithm~\ref{alg:OPDVRT} may have a suboptimal sample complexity when the initial optimality gap $u^{(0)}>\sqrt{H}$. Unfortunately, this is precisely the case for standard initializations such as $V_t^{(0)}\defeq \mathbf{0}$, for which we must take $u^{(0)}=H$.
We overcome this issue by designing a two-stage \emph{doubling} procedure: At stage $1$, we use Algorithm~\ref{alg:OPDVRT} to obtain 
$V_t^{\text{intermediate}}$, $\pi^{\text{intermediate}}$ that are $\epsilon'=\sqrt{H}\epsilon$ accurate; At stage $2$, we then use Algorithm~\ref{alg:OPDVRT} again with $V_t^{\text{intermediate}}$, $\pi^{\text{intermediate}}$ as the input and further reduce the error from $\epsilon'$ to $\epsilon$. The main take-away of this doubling procedure is that the episode complexity of both stage is only $\tilde{O}(H^3/d_m\epsilon^2)$, therefore the total sample complexity optimality is preserved. 

\paragraph{Full algorithm description}
We describe our full algorithm \OPDVR{} in Algorithm~\ref{alg:OPDVR}.
\begin{algorithm}[H]
	\caption{(OPDVR) Off-Policy Doubled Variance Reduction}
	\label{alg:OPDVR}
	\small{
		%
		\begin{algorithmic}[1]
			\INPUT Offline Dataset $\cD$ of size $n$ as a stream.  Target accuracy $\epsilon,\delta$ such that the algorithm does not use up $\cD$.
			\INPUT Estimators $\mathbf{z}_t, \mathbf{g}_t $ in function forms, $m'_1, m'_2, K_1, K_2$. 
			\STATE  \mypink{$\diamond$ Stage $1$. coarse learning: a ``warm-up'' procedure } 
			\STATE Set initial values $V_t^{(0)}:=\mathbf{0}$ and any policy $\pi^{(0)}$. 
			\STATE Set initial $u^{(0)}:=H$.
			\STATE Set $\mathbf{m}(u)=m_1'\log(16HSAK_1)/u^2$. 
			\STATE Run  Algorithm~\ref{alg:OPDVRT} with   $\mathbf{m}, \mathbf{z}_t, \mathbf{g}_t,  V_t^{(0)}, \pi^{(0)},u^{(0)}, K_1,\cD$ and return
			$V_t^{\text{intermediate}},\pi^{\text{intermediate}}$. 
			\STATE \mypink{$\diamond$ Stage $2$. fine learning: reduce error to given accuracy }			
			\STATE Reset initial values $V_t^{(0)}:=V_t^{\text{intermediate}}$ and policy $\pi^{(0)}:=\pi^{\text{intermediate}}$. Set $u^{(0)}:=\sqrt{H}$.
			\STATE Reset $\mathbf{m}(u)$ by replacing $m'_1$ with $m'_2$, $K_1$ with $K_2$. 
			\STATE Run  Algorithm~\ref{alg:OPDVRT} with   $\mathbf{m}, \mathbf{z}_t, \mathbf{g}_t,  V_t^{(0)}, \pi^{(0)},u^{(0)}, K_2,\cD$ and return $V_t^{\text{final}},\pi^{\text{final}}$.
			\OUTPUT  $V_t^{\text{final}},\pi^{\text{final}}$
		\end{algorithmic}
	}
%
%
\end{algorithm}
\normalsize{}

\begin{figure*}[t]\label{fig:practical_estimators}
	\caption{The \emph{implementable} ``plug-in'' lower confidence bound estimators $\mathbf{z}_t$ and $\mathbf{g}_t$.} 
	\resizebox{\linewidth}{!}{%
		\begin{tabular}{c|c|c}
			\hline
			Setting &  $\mathbf{z}_t(\cD_1, V_{t+1}^{\text{in}}, u)$
			& $\mathbf{g}_t(\cD_2, V_{t+1}, V_{t+1}^{\text{in}}, u)$\\
			\hline
			Non-stationary & $\frac{1}{n_{s_t,a_t}}\sum_{i=1}^m V^{\text{in}}_{t+1}(s^{(i)}_{t+1})\cdot\mathbf{1}_{[s^{(i)}_t,a^{(i)}_t=s_t,a_t]} - e_t(s_t,a_t)$ & $\frac{1}{n'_{s_{t},a_{t}}}\sum_{j=1}^l[V_{t+1}(s'^{(j)}_{t+1})-V_{t+1}^{\text{in}}(s'^{(j)}_{t+1})]\cdot\mathbf{1}_{[s'^{(j)}_{t},a'^{(j)}_{t}=s_{t},a_{t}]}- f_t(s_t,a_t,u)$
			\\
			Stationary &  $\frac{1}{n_{s,a}}\sum_{i=1}^m \sum_{u=1}^H V^{\text{in}}_{t+1}(s^{(i)}_{u+1})\cdot\mathbf{1}_{[s^{(i)}_u=s,a^{(i)}_u=a]} - e_t(s,a)$ &$\frac{1}{n'_{s,a}}\sum_{j=1}^l\sum_{u=1}^H[V_{t+1}(s'^{(j)}_{u+1})-V_{t+1}^{\text{in}}(s'^{(j)}_{u+1})]\cdot\mathbf{1}_{[s'^{(j)}_{u},a'^{(j)}_{u}=s,a]}- f_t(s,a,u)$\\
			$\infty$-Horizon &$\frac{1}{n_{s,a}}\sum_{i=1}^m V^{\text{in}}(s^{\prime(i)})\cdot\mathbf{1}_{[s^{(i)}=s,a^{(i)}=a]} -e(s,a)$& $\frac{1}{n'_{s_{t},a_{t}}}\sum_{j=1}^l[V^{(i)}(s'^{(j)}_{t+1})-V^{\text{in}}(s'^{(j)})]\cdot\mathbf{1}_{[s'^{(j)},a'^{(j)}=s,a]}- f(s,a,u)$\\
			\hline
		\end{tabular}
	}
	
	\resizebox{\linewidth}{!}{
		\begin{tabular}{c|c|c|c}
			\hline
			Setting & $\tilde{\sigma}(s_t,a_t)$ &  $e_t(s_t, a_t)$   & $f_t(s_t,a_t,u)$\\
			\hline
			Non-stationary &  $\frac{1}{n_{s_t,a_t}}\sum_{i=1}^m[V^{\text{in}}_{t+1}(s^{(i)}_{t+1})]^2\cdot\mathbf{1}_{[s^{(i)}_t,a^{(i)}_t=s_t,a_t]}-\tilde{z}_t^2(s_t,a_t)$  &
			$\sqrt{\frac{4\tilde{\sigma}_{V^{\tin}_{t+1}} \iota}{n_{s_t, a_t}}}  + 2\sqrt{6}V_{\max}\big(\frac{\iota}{n_{s_t, a_t}}\big)^{3/4} + 16 V_{\max}\frac{\iota}{n_{s_t, a_t}}$
			&  $4u \sqrt{\frac{\iota}{n'_{s_t, a_t}}}$
			\\
			Stationary & $\frac{1}{n_{s,a}}\sum_{i=1}^m\sum_{u=1}^H[V^{\text{in}}_{t+1}(s^{(i)}_{u+1})]^2\cdot\mathbf{1}_{[s^{(i)}_u=s,a^{(i)}_u=a]}-\tilde{z}_t^2(s,a)$& $\sqrt{\frac{4\tilde{\sigma}_{V^{\tin}_{t+1}} \iota}{n_{s, a}}}  + 2\sqrt{6}V_{\max}\big(\frac{\iota}{n_{s,a}}\big)^{3/4} + 16 V_{\max}\frac{\iota}{n_{s, a}}$
			&  $4u \sqrt{\frac{\iota}{n'_{s, a}}}$
			\\
			$\infty$-Horizon &$\frac{1}{n_{s,a}}\sum_{i=1}^m[V^{\text{in}}(s'^{(i)})]^2\cdot\mathbf{1}_{[s^{(i)}=s,a^{(i)}=a]}-\tilde{z}^2(s,a)$&$\sqrt{\frac{4\cdot\tilde{\sigma}_{V^{\tin}}\cdot\iota}{n_{s,a}}}+2\sqrt{6}\cdot V_{\max}\cdot\left(\frac{\iota}{n_{s,a}}\right)^{3/4}+\frac{16V_{\max}\iota}{3n_{s,a}}$& $4u\sqrt{\frac{\log(2RSA/\delta)}{n'_{s,a}}}$\\
			\hline
		\end{tabular}
	}
	\footnotesize{$^*$  $m,l$ are the number of episodes in $\cD_1,\cD_2$ respectively.  $\iota$ is a logarithmic factor in $HSA/\delta$ in the finite horizon case and  $SA/\delta$ in the infinite horizon cases.  $n_{s_t,a_t}$ is the number of times $s_t,a_t$ appears at time $t$ in $\cD_1$; and $n'(s_t,a_t)$ is the that for $\cD_2$.  In the case when $n_{s_t,a_t} = 0$, we simply output $0$ for all quantities above.}
\end{figure*}

\subsection{\emph{OPDVR} for non-stationary transition settings}
We now state our main theoretical guarantee for the \OPDVR{} algorithm in the finite-horizon non-stationary transition setting.

\begin{theorem}[Sample complexity of \OPDVR{} in finite-horizon non-stataionary setting]
	\label{thm:main_nonstationary}
	For the $H$-horizon non-stationary setting, there exist universal constants $c_1, c_2, c_3>0$ such that if we set {\small$m'_1=c_1H^4/d_m$} for Stage $1$,  {\small$m'_2=c_2H^3/d_m$} for Stage $2$, set $K_1=K_2=\log_2(\sqrt{H}/\epsilon)$, take $\mathbf{g}_t$ and $\mathbf{z}_t$ according to Figure~\ref{fig:practical_estimators}, then \OPDVR{} (Algorithm~\ref{alg:OPDVR}) with probability $1-\delta$ outputs an $\epsilon$-optimal policy $\hat{\pi}$ provided that the number of episodes in the offline data $\cD$ exceeds: 
		\[
		\frac{c_3 \max[\frac{m'_1}{H},m'_2]}{\epsilon^2}\log(32HSA\log_2(\sqrt{H}/\epsilon)/\delta)\log_2(\sqrt{H}/\epsilon) = \widetilde{O}\left(\frac{H^3}{d_m\epsilon^2}\right).
		\]
%
\end{theorem}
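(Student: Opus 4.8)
The bound is obtained by running the prototypical template Algorithm~\ref{alg:OPDVRT} twice inside the doubling wrapper Algorithm~\ref{alg:OPDVR}: first prove a \emph{single-stage} guarantee for Algorithm~\ref{alg:OPDVRT} (each outer iteration halves the certified optimality gap $u$ once its data batch exceeds a regime-dependent threshold, which is exactly Lemma~\ref{lem:complexity_OPVRT}, of which Proposition~\ref{prop:suboptimal-h} is the informal form), then compose the coarse Stage~1 ($u^{(0)}=H$, target accuracy $\epsilon'=\sqrt H\epsilon$) with the fine Stage~2 ($u^{(0)}=\sqrt H$, target accuracy $\epsilon$). The entire analysis is carried out with the \emph{fictitious} estimators of \eqref{eqn:off_z}--\eqref{eqn:off_g}; these are exchanged for the implementable ones of Figure~\ref{fig:practical_estimators} at the very end via Proposition~\ref{prop:practical}, at the cost of $\delta/2$ failure probability and a factor $2$ in the error bars (absorbed into $c_1,c_2,c_3$). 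The substance is the single-stage analysis, which I sketch in the next paragraph; the theorem itself is then a composition plus a union bound.

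\textbf{Single-stage analysis of Algorithm~\ref{alg:OPDVRT}.} I would first establish the \emph{fictitious LCB lemma}: on an event $\mathcal{G}$ of probability $\ge 1-\delta'$ (union bound over all $(t,s_t,a_t)$), the outputs of $\mathbf{z}_t,\mathbf{g}_t$ are valid lower confidence bounds for the two bootstrapped terms of \eqref{eq:VR_decomposition} and not too loose, namely $0\le P_t^\top(\cdot|s_t,a_t)V^{\tin}_{t+1}-z_t(s_t,a_t)\le 2e(s_t,a_t)$ and $0\le P_t^\top(\cdot|s_t,a_t)[V_{t+1}-V^{\tin}_{t+1}]-g_t(s_t,a_t)\le 2f(s_t,a_t,u^{\tin})$. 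On $E^c_{m,t}$ (resp.\ $E^c_{l,t}$) the fictitious estimator equals the population value so the claim is immediate; on $E_{m,t}$ one has $n_{s_t,a_t}\ge\tfrac12 m\,d^\mu_t(s_t,a_t)$ and Bernstein's inequality --- plus a second Bernstein step controlling $\widetilde\sigma$ against $\sigma$ --- yields the two-sided bound, where two structural devices are used: $\mathbf{z}_t,\mathbf{g}_t$ aggregate only observations at the \emph{same} step $t$ (so distinct episodes contribute conditionally independent transitions), and the reference and difference terms use disjoint batches $\cD_1,\cD_2$. Conditioned on $\mathcal{G}$, a backward induction on $t$ shows that if the input satisfies $V^{\tin}_t\le\mathcal{T}_{\pi^{\tin}_t}V^{\tin}_{t+1}$ and $u^{\tin}\ge\sup_t\|V^\star_t-V^{\tin}_t\|_\infty$, then the output obeys $V^{\tin}_t\le V_t\le V^\star_t$ and the monotonicity condition $V_t\le\mathcal{T}_{\pi_t}V_{t+1}$ (using $Q_{t-1}=r+z_{t-1}+g_{t-1}\le r+P^\top_{t-1}V_t$ from the LCBs together with $V_t\ge V^{\tin}_t$ from the $\max$), hence $V_t\le V^{\pi}_t\le V^\star_t$ by Lemma~\ref{lem:mono}. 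Finally, comparing $V_{Q_t}(s)\ge Q_t(s,\pi^\star_t(s))$ against $Q^\star_t(s,\pi^\star_t(s))$ and unrolling the recursion $\delta_t\le P_t^\top(\cdot|\cdot,\pi^\star_t)\delta_{t+1}+2e+2f$ (with $\delta_t:=V^\star_t-V_t\ge0$) along $\pi^\star$'s trajectory gives
\[
\sup_{s_1}[V^\star_1-V_1](s_1)\ \lesssim\ \sum_{t=1}^H\E_{\pi^\star}\!\big[e(s_t,a_t)+f(s_t,a_t,u^{\tin})\big].
\]
Assumption~\ref{assu2} gives $d^\mu_t(s_t,a_t)\ge d_m$ along every $\pi^\star$-trajectory, so $1/d^\mu_t\le 1/d_m$ there; Jensen and Cauchy--Schwarz bound the leading $\sqrt{\sigma_{V^{\tin}_{t+1}}/(md^\mu_t)}$ part of the $e$-sum by $\lesssim\sqrt{H\iota/(md_m)}\,(\sum_t\E_{\pi^\star}[\sigma_{V^{\tin}_{t+1}}])^{1/2}$, and the law of total variance $\sum_t\E_{\pi^\star}[\sigma_{V^\star_{t+1}}]\le H^2$ with $\sigma_{V^{\tin}_{t+1}}\le 2\sigma_{V^\star_{t+1}}+2(u^{\tin})^2$ turns this into $\lesssim\sqrt{H\iota/(md_m)}(H+\sqrt H\,u^{\tin})$ (the lower-order summands of $e$ and the $f$-sum $\lesssim Hu^{\tin}\sqrt{\iota/(ld_m)}$ being routine). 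Demanding the total be $\le u^{\tin}/2$ forces $\mathbf{m}(u^{\tin})$ to exceed $\asymp H^2\iota/d_m$ (a \emph{constant} in $u^{\tin}$) when $u^{\tin}\in[\sqrt H,H]$, and $\asymp H^3\iota/(d_m(u^{\tin})^2)$ when $u^{\tin}\le\sqrt H$; since $\mathbf{m}(u)=m'\log(16HSAK)/u^2$ decreases in $u$, covering the range $u\in[\sqrt H,H]$ of Stage~1 forces $m'_1=c_1H^4/d_m$, whereas in Stage~2 where $u\le u^{(0)}=\sqrt H$ already $m'_2=c_2H^3/d_m$ suffices --- this is exactly the ``suboptimal dependence on the initial optimality gap'' that the doubling circumvents.

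\textbf{Outer loop, doubling, and the episode budget.} Iterating the single-stage contraction with $u^{(i)}=u^{(i-1)}/2$, after $K$ outer steps the output is $(u^{(0)}2^{-K})$-optimal and still monotone, consuming $\sum_{i=1}^{K}2\mathbf{m}(u^{(i-1)})$ episodes --- a geometric series of ratio $4$, hence $\asymp\mathbf{m}(u^{(K-1)})\asymp m'\cdot 4^{K}\iota/(u^{(0)})^2$ (the displayed $\widetilde O(\cdot)$ form in the theorem uses instead the crude $K\cdot\max_i\mathbf{m}(u^{(i-1)})$, which is why the extra $\log_2(\sqrt H/\epsilon)$ appears). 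Plugging in: Stage~1 with $u^{(0)}=H$, $K_1=\log_2(\sqrt H/\epsilon)$ yields an $\epsilon'=\sqrt H\epsilon$-optimal, monotone $(V^{\text{intermediate}},\pi^{\text{intermediate}})$ at cost $\asymp(m'_1/H)\iota/\epsilon^2=\widetilde O(H^3/d_m\epsilon^2)$ --- the $H^4$ per-unit price is paid down only to the cheap accuracy $\epsilon'$; and Stage~2, initialized with this monotone pair, $u^{(0)}=\sqrt H\,(\ge\epsilon')$, $K_2=\log_2(\sqrt H/\epsilon)$, lies entirely in the cheap regime and yields an $\epsilon$-optimal $\pi^{\text{final}}$ at cost $\asymp m'_2\iota/\epsilon^2=\widetilde O(H^3/d_m\epsilon^2)$. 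The two stages sum to $\tfrac{c_3\max[m'_1/H,\,m'_2]}{\epsilon^2}\log(32HSA\log_2(\sqrt H/\epsilon)/\delta)\log_2(\sqrt H/\epsilon)=\widetilde O(H^3/d_m\epsilon^2)$, and a union bound over $\mathcal{G}$ for each of the $O((K_1+K_2)HSA)$ inner estimates (each at level $\delta/\mathrm{poly}$, absorbed by the $\log$ inside $\mathbf{m}$) together with Proposition~\ref{prop:practical}'s event gives overall success probability $\ge1-\delta$.

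\textbf{Main obstacle.} I expect essentially all the difficulty to be in the single-stage analysis: (a) making the law-of-total-variance bound survive a merely \emph{approximately} optimal reference $V^{\tin}$ --- the $2\sigma_{V^\star}+2(u^{\tin})^2$ slack is precisely what creates the expensive $H^4$ regime and hence forces the two-stage doubling --- and (b) obtaining the LCB guarantee \emph{uniformly} over $(t,s_t,a_t)$ and over all inner calls in the face of three coupled randomness sources: within-episode dependence (killed by aggregating only same-step observations), the data-dependence of the data-built $V_{t+1}$ on the batch $\cD_2$ used to estimate $P_t^\top[V_{t+1}-V^{\tin}_{t+1}]$ (handled by the $\cD_1/\cD_2$ split and the backward value-iteration recursion), and the random visitation counts $n_{s_t,a_t}$ (handled by passing to the population-based fictitious estimators of \eqref{eqn:off_z}--\eqref{eqn:off_g} and then returning to the implementable estimators via Proposition~\ref{prop:practical}).
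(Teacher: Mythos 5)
Your proposal is correct and follows essentially the same route as the paper's proof: fictitious-estimator concentration with Bernstein bounds and union bounds over $(t,s_t,a_t)$ (Lemmas~\ref{lem:con_z}--\ref{lem:con_g}), monotone backward induction and the error recursion along $\pi^\star$ (Lemmas~\ref{lem:VR_incremental}--\ref{lem:complexity}), the law-of-total-variance bound via Lemma~\ref{lem:H3} giving the two-regime thresholds $m'\asymp H^4/d_m$ vs.\ $H^3/d_m$, the two-stage doubling composition (Lemma~\ref{lem:complexity_OPVRT} and Section~\ref{sec:iterative_non}), and the multiplicative-Chernoff reduction to the implementable estimators (Section~\ref{sec:app_practical}). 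Your identification of where the $H^4$ regime comes from and why the doubling removes it matches the paper's reasoning exactly.
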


\paragraph{Optimality of sample complexity}
Theorem~\ref{thm:main_nonstationary} shows that our \OPDVR{} algorithm can find an $\epsilon$-optimal policy with $\widetilde{O}(H^3/d_m\epsilon^2)$ episodes of offline data. Compared with the sample complexity lower bound $\Omega(H^3/d_m\epsilon^2)$ for offline learning~(Theorem~G.2. in \citet{yin2020near}), we see that our \OPDVR{} algorithm matches the lower bound up to logarithmic factors. The same rate was achieved previously by the local uniform convergence argument of~\citet{yin2020near} under a stronger assumption of full data coverage.
\begin{proof}[Proof sketch of Theorem~\ref{thm:main_nonstationary}.]
By Proposition~\ref{prop:practical}, it suffices to analyze the performance of \OPDVR{} instantiated with fictitious estimators \eqref{eqn:off_g} and \eqref{eqn:off_z}.
Theorem~\ref{thm:main_nonstationary} relies on first analyzing the 
the prototypical \OPDVR{} (Algorithm~\ref{alg:OPDVRT}) and then connecting the result to the practical version using Multiplicative Chernoff bound (Section~\ref{sec:app_practical}). 
In particular, both off-policy estimators $z_t$ and $g_t$ use lower confidence update
to avoid over-optimism and the $\max$ operator in $V_t=\max(V_{Q_{t}},V^{\text{in}}_t)$ helps prevent pessimism. By doing so the update $V_t$ in Algorithm~\ref{alg:OPDVRT} always satisfies $0\leq V_t\leq V^\star_t$, which is always within valid range. The doubling procedure of Algorithm~\ref{alg:OPDVR} then first decreases the accuracy to a coarse level $\epsilon'=\sqrt{H}\epsilon$, and further lowers it to the given accuracy $\epsilon$. The key technical lemma for achieving optimal dependence in $H$ is Lemma~\ref{lem:H3}, which bounds the term {\small$\sum_{u=t}^{H}\E^{\pi^\star}_{s_u,a_u}\left[{\Var}[{V^{\star}_{u+1}}(s_{u+1})\middle|s_u,a_u]\right]$} by $O(H^2)$ instead of the naive $O(H^3)$. 
The full proof of Theorem~\ref{thm:main_nonstationary} can be found in Appendix~\ref{sec:proof_n}.
\end{proof}

\section{\emph{OPDVR} for stationary transition settings}
\label{sec:finite_horizon2}

In this section, we switch gears to the \emph{stationary} transition setting, in which the transition probabilities are identical at all time steps: $P_t(s'|s,a):\equiv P(s'|s,a)$. We will consider both the (a) finite-horizon case where each episode is consist of $H$ steps; and (b) the infinite-horizon case where the reward at the $t$-th step is discounted by $\gamma^t$, where $\gamma \in (0,1)$ is a discount factor. 
%

These settings encompass additional challenges compared with the non-stationary case, as in theory the transition probabilities can now be estimated more accurately due to the shared information across time steps, and we would like our sample complexity to reflect such an improvement.

\subsection{Finite-horizon stationary setting}
\label{sec:stationary}
We begin by considering the finite-horizon stationary setting. As this is a special case of the non-stationary setting, Theorem~\ref{thm:main_nonstationary} implies that \OPDVR{} achieves $\widetilde{O}(H^3/d_m\epsilon^2)$ sample complexity. However, similar as in online RL~\citep{azar2017minimax}, this result may be potentially loose by an $O(H)$ factor, as the algorithm does not take into account the stationarity of the transitions. This motivates us to design an algorithm that better leverages the stationarity by aggregating state-action pairs across different time steps. Indeed, we modify the fictitious estimators ~\eqref{eqn:off_z} and~\eqref{eqn:off_g} 
into the following:
  \begin{equation}\label{eqn:off_z_s}
    \begin{aligned}
      &\tilde{z}_t(s,a)=P^\top(\cdot|s,a) V^{\text{in}}_{t+1}\cdot\mathbf{1}(E_m^c) + \frac{1}{n_{s,a}}\sum_{i=1}^m \sum_{u=1}^H V^{\text{in}}_{t+1}(s^{(i)}_{u+1})\cdot\mathbf{1}_{[s^{(i)}_u=s,a^{(i)}_u=a]}\mathbf{1}(E_m),\\
      &\tilde{\sigma}_{V^{\text{in}}_{t+1}}(s,a)={\sigma}_{V^{\text{in}}_{t+1}}(s,a)\mathbf{1}(E_m^c)+ [\frac{1}{n_{s,a}}\sum_{i=1}^m\sum_{u=1}^H[V^{\text{in}}_{t+1}(s^{(i)}_{u+1})]^2\cdot\mathbf{1}_{[s^{(i)}_u=s,a^{(i)}_u=a]}-\tilde{z}_t^2(s,a)]\mathbf{1}(E_m),\\
      &e_t(s,a) = \sqrt{\frac{4\tilde{\sigma}_{V^{\tin}_{t+1}} \iota}{m \sum_{t=1}^H d^\mu_t(s, a)}}  + 2\sqrt{6}V_{\max}\big(\frac{\iota}{m {\sum_{t=1}^H d^\mu_t(s, a)}}\big)^{3/4} + 16 V_{\max}\frac{\iota}{m \sum_{t=1}^H{d^\mu_t(s, a)}},
    \end{aligned}
  \end{equation}
where $E_m = \{n_{s,a}> \frac{1}{2} m\cdot \sum_{t=1}^H d^\mu_t(s,a)\}$
and $n_{s,a}=\sum_{i=1}^m\sum_{t=1}^H\mathbf{1}{[s^{(i)}_t=s,a^{(i)}_t=a]}$ 
is the number of data pieces visited $(s,a)$ over all $m$ episodes. Moreover, $f_t(s,a,u) = 4u \sqrt{\frac{\iota}{l} \sum_{t=1}^H{d^\mu_t(s, a)}}$ and
	\begin{equation}\label{eqn:off_g_s}
	\begin{aligned}
	&\tilde{g}_{t}(s,a)=P^\top(\cdot|s,a)[V_{t+1}-V^{\text{in}}_{t+1}]\mathbf{1}(E^c_l)
	+
	\frac{1}{n'_{s,a}}\sum_{j=1}^l\sum_{u=1}^H[V_{t+1}(s'^{(j)}_{u+1})-V_{t+1}^{\text{in}}(s'^{(j)}_{u+1})]\cdot\mathbf{1}_{[s'^{(j)}_{u},a'^{(j)}_{u}=s,a]}\mathbf{1}(E^c_l).
	\end{aligned}
    \end{equation}

\begin{theorem}[Sample complexity of~\OPDVR{} in finite-horizon stationary setting]
  \label{thm:main_stationary}
  In the $H$-horizon stationary transition setting, 
  there exists universal constants $c'_1,c'_2, c'_3$ such that if we set $m'_1=c'_1H^3/d_m$, $m'_2=c'_2 H^2/d_m$ for Stage $1$ and $2$, set $K_1=K_2=\log_2(\sqrt{H}/\epsilon)$, and take $\mathbf{z}_t$ and $\mathbf{g}_t$ according to Figure~\ref{fig:practical_estimators}, 
  then with probability $1-\delta$, Practical \OPDVR{} finds an $\epsilon$-optimal policy provided that the number of episodes in the offline data $\cD$ exceeds:
  	\[	\frac{c'_3 \max[\frac{m'_1}{H},m'_2]}{\epsilon^2}\log(32HSA\log_2(\sqrt{H}/\epsilon)/\delta)\log_2(\sqrt{H}/\epsilon)  = \widetilde{O}\left(\frac{H^2}{d_m\epsilon^2}\right).
    \]
\end{theorem}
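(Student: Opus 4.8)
The plan is to mirror the proof of Theorem~\ref{thm:main_nonstationary}, but with the stationary estimators \eqref{eqn:off_z_s}, \eqref{eqn:off_g_s} that pool visitation counts across all $H$ time steps. First I would invoke the analogue of Proposition~\ref{prop:practical} for the stationary estimators: a multiplicative Chernoff argument shows that the events $E_m$ (now defined via the pooled count $n_{s,a}$ against its mean $m\sum_{t=1}^H d^\mu_t(s,a)$) hold with probability $1-\delta/2$ across all outer/inner loops, so it suffices to analyze the fictitious version. The key quantitative gain is that the effective sample size at $(s,a)$ is now $m\sum_{t=1}^H d^\mu_t(s,a)$ rather than $m\,d^\mu_t(s,a)$; since $\sum_t d^\mu_t(s,a)\ge d_m$ whenever $(s,a)$ is ever visited, the error bars $e_t$ and $f_t$ shrink by roughly a factor $\sqrt{H}$ in the relevant regime, which is exactly what converts $H^3$ into $H^2$.

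Next I would run the same inner-loop monotonicity/validity analysis as in the non-stationary case: the LCB construction of $z_t,g_t$ plus the $\max(V_{Q_t},V^{\text{in}}_t)$ update guarantees $0\le V_t\le V^\star_t$ throughout (Lemma~\ref{lem:mono}), so the output policy's suboptimality is controlled by the accumulated estimation error. The crucial step is re-deriving the per-outer-loop error contraction. Propagating the bound along $t=H,\dots,1$ one picks up (i) a "variance" term $\sum_{u=t}^H \E^{\pi^\star}[\Var(V^\star_{u+1}(s_{u+1})\mid s_u,a_u)]$, which by the law-of-total-variance argument of Lemma~\ref{lem:H3} is $O(H^2)$ not $O(H^3)$, and (ii) the $g$-type error terms, each of order $u^{\text{in}}\sqrt{\iota/(l\sum_t d^\mu_t)}$, which telescope over $H$ steps. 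Combining these with the stationary error bars and the fact that the effective count carries an extra $H$ in the denominator yields that one inner-loop pass with $m$ episodes reduces the optimality gap from $u^{\text{in}}$ to $u^{\text{in}}/2$ provided $m \gtrsim H^2/(d_m (u^{\text{in}})^2)$ in Stage 2 (and $m\gtrsim H^3/(d_m(u^{\text{in}})^2)$ in Stage 1, where $u^{\text{in}}\in[\sqrt H, H]$ so the per-stage budget is still $\widetilde O(H^2/d_m\epsilon^2)$). Summing the geometric series over $K_1=K_2=\log_2(\sqrt H/\epsilon)$ outer iterations and applying the doubling bookkeeping of Algorithm~\ref{alg:OPDVR} then gives the stated episode complexity $\widetilde O(H^2/(d_m\epsilon^2))$.

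I expect the main obstacle to be the dependence structure introduced by pooling across time steps: unlike the non-stationary estimator, where summands at a fixed $t$ come from distinct episodes and are i.i.d., the stationary estimator sums $V^{\text{in}}_{t+1}(s^{(i)}_{u+1})\mathbf{1}[s^{(i)}_u=s,a^{(i)}_u=a]$ over both $i$ and $u$, so within one episode the contributions at different $u$ are dependent. Establishing that $\tilde z_t(s,a)$ is (conditionally) unbiased for $P^\top(\cdot|s,a)V^{\text{in}}_{t+1}$ and that the associated Bernstein-type concentration still holds requires conditioning on the right filtration and a careful martingale argument — but since $V^{\text{in}}_{t+1}$ is fixed (computed from the independent dataset $\cD_1$) and the transition $P$ is shared, the conditional expectation of each indicator-weighted term is still $P^\top(\cdot|s,a)V^{\text{in}}_{t+1}$, and a Freedman/Bernstein inequality for martingale differences closes the gap. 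The variance-proxy $\tilde\sigma$ must likewise be shown to concentrate around the true one-step variance $\sigma_{V^{\text{in}}_{t+1}}(s,a)$, which is the same quantity at every $u$ by stationarity — this is where stationarity is genuinely exploited. The rest (the $H^2$ variance bound, the doubling, the Chernoff count control) is essentially as in the non-stationary proof, so I would present those in abbreviated form and concentrate the detailed work on the pooled-estimator concentration.
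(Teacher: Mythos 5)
Your overall strategy is the same as the paper's: reduce to the fictitious pooled estimators via a Chernoff argument, prove Bernstein-type concentration for the pooled $\tilde z_t,\tilde\sigma,\tilde g_t$ by a martingale/Freedman argument under the filtration $\mathcal{F}_k=\{s_u^{(i)},a_u^{(i)}\}_{i\in[m]}^{u\in[k]}$, observe that stationarity makes each conditional variance increment equal to the \emph{same} $\sigma_{V^{\text{in}}_{t+1}}(s,a)$ so the total conditional variance is exactly $n_{s,a}\sigma_{V^{\text{in}}_{t+1}}(s,a)$, run the same monotonicity/error-propagation recursion, invoke Lemma~\ref{lem:H3} to bound $\sum_{t'}\E^{\pi^\star}[\sigma_{V^\star_{t'+1}}]$ by $O(H^2)$, and finish with the doubling bookkeeping. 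You also correctly flag the within-episode dependence of the pooled sums as the main obstacle and propose the right tool; this matches Lemmas~\ref{lem:con_z_s}--\ref{lem:con_g_s}.

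There is, however, one genuine soft spot, and it sits exactly at the claim the theorem is about. You assert that ``since $\sum_t d^\mu_t(s,a)\ge d_m$ whenever $(s,a)$ is ever visited, the error bars $e_t$ and $f_t$ shrink by roughly a factor $\sqrt{H}$.'' That inference does not follow: $\sum_{t}d^\mu_t(s,a)\ge d_m$ is precisely the bound one already has for a \emph{single} time step in the non-stationary analysis, so by itself it reproduces $m'\gtrsim H^3/d_m$, not $H^2/d_m$. The factor-$H$ gain has to come from the interplay between the pooled visitation mass $\sum_u d^\mu_u(s,a)$ in the denominator and the accumulated variance $\sum_{t'}\E^{\pi^\star}[\sigma_{V^\star_{t'+1}}]\le H^2$ in the numerator when you propagate $\sum_{t'}\d^{\pi^\star}_{t:t'}\xi_{t'}$ through Cauchy--Schwarz; if you only use the crude lower bound $\sum_u d^\mu_u\ge d_m$ at that point, the extra $\sqrt{H}$ from Cauchy--Schwarz over $t'$ is not cancelled and you land back at $H^3$. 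A complete write-up must make explicit how the summation over $u$ in $\sum_u d^\mu_u(s,a)$ is played off against the summation over $t'$ in the error recursion (this is where the analogue of the $A_2$ and $A_{1/2}$ bounds in Lemma~\ref{lem:complexity_OPVRT} must genuinely improve), rather than deferring it to ``roughly a factor $\sqrt H$.'' Separately, a smaller but real gap: in the stationary setting $n_{s,a}=\sum_{i,t}\mathbf{1}[s_t^{(i)}=s,a_t^{(i)}=a]$ is \emph{not} binomial, so the multiplicative Chernoff bound cannot be applied to it directly; you need the decomposition $\{n_{s,a}\le\tfrac12 m\sum_t d^\mu_t(s,a)\}\subset\bigcup_t\{n_{t,s,a}\le\tfrac12 m d^\mu_t(s,a)\}$ with a union bound over $t$, which your proposal elides.
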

Theorem~\ref{thm:main_stationary} encompasses our main technical contribution, as
the compact data aggregation among different time steps make analyzing the estimators~\eqref{eqn:off_z_s} and~\eqref{eqn:off_g_s} knotty due to data-dependence (unlike the non-stationary transition setting where estimators are designed using data at specific time so the conditional independence remains). In particular, we need to fully exploit the property that transition $P$ is identical across different times in a pinpoint way to obtain the $H^2$ dependence in the sample complexity bound. 

\begin{proof}[Proof sketch]We design the martingale {\small$
X_k=\sum_{i=1}^m\sum_{u=1}^{k-1} \left(V^{\text{in}}_{t+1}(s^{(i)}_{u+1})-P^\top(\cdot|s,a)V^{\text{in}}_{t+1}\right)\cdot\mathbf{1}[s^{(i)}_u=s,a^{(i)}_u=a].
$} under the filtration $\mathcal{F}_k:=\{s_u^{(i)},a_u^{(i)}\}_{i\in[m]}^{u\in[k]}$ for bounding $ z_t(s_t,a_t) \leq P^\top(\cdot|s_t,a_t)V^{\text{in}}_{t+1}$. The conditional variance sum {\scriptsize$\sum_{k=1}^H\operatorname{Var}\left[X_{k+1} \mid \mathcal{F}_{k}\right]=\sum_{k=1}^H\sum_{i=1}^{m} \mathbf{1}\left[s_{k}^{(i)}, a_{k}^{(i)}=s,a\right] \operatorname{Var}\left[V_{t+1}^{\mathrm{in}}\left(s_{k+1}^{(i)}\right) \mid s_{k}^{(i)},a_{k}^{(i)}=s,a\right]$}. For stationary case, {\scriptsize$s_{k+1}^{(i)}\sim P(\cdot|s_{k}^{(i)},a_{k}^{(i)}=s,a)$} is irrelevant to time $k$ so above equals {\scriptsize $\sum_{k=1}^H\sum_{i=1}^{m} \mathbf{1}\left[s_{k}^{(i)}, a_{k}^{(i)}=s,a\right] \newline\sigma_{V_{t+1}^{\mathrm{in}}}(s, a)=n_{s,a}\cdot\sigma_{V_{t+1}^{\mathrm{in}}}(s, a)$}, where $V_{t+1}^{\mathrm{in}}$ is later replaced by $V_{t+1}^{\star}$ and $\sum_{t=1}^H\E^{\pi^\star}_{s,a}[\sigma_{V_{t}^{\star}}(s,a)]$ can be bounded by $H^2$ which is tight. In contrast, in non-stationary regime $P_t$ is varying across time so we can only obtain {\scriptsize$\sum_{k=1}^H\operatorname{Var}\left[X_{k+1} \mid \mathcal{F}_{k}\right]\leq n_{s,a} \max_t\sigma_{V_{t}^{\mathrm{in}}}(s, a)$}, which is later translated into $\sum_{t=1}^H\E^{\pi^\star}_{s,a}[\max_t\sigma_{V_{t}^{\star}}(s,a)]$, which in general has order $H^3$. To sum, the fact that $P$ is identical is carefully leveraged multiple times for obtaining $\widetilde{O}(H^2/d_m\epsilon^2)$ rate. The detailed proof of Theorem~\ref{thm:main_stationary} can be found in Appendix~\ref{sec:proof_stationary}.
\end{proof}




\paragraph{Improved dependence on $H$}
Theorem~\ref{thm:main_stationary} shows that \OPDVR{} achieves a sample complexity upper bound $\widetilde{O}(H^2/d_m\epsilon^2)$ in the stationary setting. To the best of our knowledge, this is the first result that achieves an $H^2$ dependence for offline RL with stationary transitions, and improves over the vanilla $H^3$ dependence in either the vanilla (non-stationary) \OPDVR{} (Theorem~\ref{thm:main_nonstationary}) or the ``off-policy evaluation + uniform convergence'' algorithm of~\citet{yin2020near}. 
We emphasize that we exploit specific properties of \OPDVR{} in our techniques for knocking off a factor of $H$ and 
there seems to be no direct ways in applying the same techniques in improving the uniform convergence-style results for the stationary-transition setting.
 



\paragraph{Optimality of $\widetilde{O}(H^2/d_m\epsilon^2)$}
We accompany Theorem~\ref{thm:main_stationary} by a establishing a sample complexity lower bound for this setting, showing that our algorithm achieves the optimal dependence of all parameters up to logarithmic factors.
\begin{theorem}
  \label{thm:learn_low_bound_s}
  For all $0<d_m\leq\frac{1}{SA}$, let the family of problem be $\mathcal{M}_{d_m} :=\big\{(\mu,M) \; \big| \;\min_{t,s_t,a_t} d_t^\mu(s_t,a_t) \newline \geq d_m\big\}$. There exists universal constants $c_1, c_2, c, p$ (with $H, S, A \geq c_1$ and $0 <\epsilon <c_2 $) such that when $n\leq cH^2/d_m\epsilon^2$, we always have 
  \[
    \inf_{{v}^{\pi_{alg}}}\sup_{(\mu,M)\in\mathcal{M}_{d_m}}\P_{\mu,M}\left(v^*-v^{\pi_{alg}}\geq \epsilon\right)\geq p.
  \]
\end{theorem}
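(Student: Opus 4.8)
The plan is to prove the lower bound by the two-point (Le Cam) method, exhibiting a pair of \emph{stationary} MDPs $M_+,M_-$ that differ only in a single, rarely-sampled transition, together with one common behavior policy $\mu\in\cM_{d_m}$. The entire reason stationarity helps here is that a state at which the two instances disagree can be made to carry $\Theta(H)$ of value while being fed very little data, which forces the separation between the instances to be of size $\Delta=\Theta(\epsilon/H)$ and hence demands $\Omega(1/\Delta^2)=\Omega(H^2/\epsilon^2)$ samples \emph{of that one transition}; since $\mu$ supplies only $\Theta(d_m)$ such samples per episode, $\Omega(H^2/(d_m\epsilon^2))$ episodes are needed. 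This mirrors but strictly improves the construction behind the non-stationary $\Omega(H^3/d_m\epsilon^2)$ bound (Theorem~G.2 of~\citet{yin2020near}), whose extra $H$ comes from perturbing a separate transition at each of the $H$ time steps.

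Concretely, I would build $M_\pm$ on three reachable states --- a gate state $x$ and two absorbing states $g$ (per-step reward $1$) and $b$ (per-step reward $0$, with $r(x,\cdot)=0$) --- leaving the other $S-3$ states unreachable, with $d_1=\delta_x$. At $x$, a distinguished action $a^\star$ moves to $g$ with probability $\tfrac12+\theta\Delta$ (else to $b$), while a ``safe'' action $a_s$ moves to $g$ with probability exactly $\tfrac12$; here $\theta\in\{+,-\}$ indexes $M_+,M_-$ and $\Delta\defeq 3\epsilon/(H-1)=\Theta(\epsilon/H)$, which can be kept below $\tfrac14$ by choosing the constants $c_1,c_2$ appropriately. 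All transitions are time-homogeneous, so $M_\pm$ are stationary and differ \emph{only} in the next-state law at $(x,a^\star)$. Let $\mu(a^\star\mid x)=d_m$ and $\mu(a_s\mid x)=1-d_m$ at $x$, and let $\mu$ act deterministically at $g$ and $b$. Then the reachable state--action pairs are $(x,a^\star)$ with marginal $d_m$, $(x,a_s)$ with marginal $1-d_m$, and one pair each at $g$ and $b$ with marginal $\approx\tfrac12$, so $\min_{t,s_t,a_t}d^\mu_t(s_t,a_t)=d_m$ (using $d_m\le 1/(SA)\le\tfrac12$), giving $(M_\pm,\mu)\in\cM_{d_m}$; moreover $\mu$ covers an optimal policy in each instance, so Assumption~\ref{assu2} holds as well. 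The key features are: the gate $x$ is reached with probability $1$, so the action chosen there is worth $\Theta(H)$ in value; yet $\mu$ exercises the informative action $a^\star$ there only with probability $d_m$, so each episode yields a single relevant Bernoulli observation, and only with probability $d_m$.

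Next I would record the two facts that drive the argument. \emph{Separation:} for any policy with $q\defeq\pi(a^\star\mid x)$, a one-line value computation gives $v^*-v^\pi=(H-1)\Delta(1-q)=3\epsilon(1-q)$ on $M_+$ and $v^*-v^\pi=(H-1)\Delta\, q=3\epsilon\, q$ on $M_-$; hence an $\epsilon$-optimal policy must have $q>\tfrac23$ on $M_+$ and $q<\tfrac13$ on $M_-$, so the test $\psi\defeq\sign(\pi_{alg}(a^\star\mid x)-\tfrac12)$ satisfies $\P_{M_\theta,\mu}(v^*-v^{\pi_{alg}}\ge\epsilon)\ge\P_{M_\theta,\mu}(\psi\ne\theta)$. \emph{Information:} within any episode $(x,a^\star)$ is visited at most once --- exactly when $\mu$ picks $a^\star$ at $x$, probability $d_m$ --- and conditioned on everything else the two episode laws coincide, so by the chain rule for KL, over $n$ i.i.d.\ episodes
\[
\mathrm{KL}\big(\P^{\otimes n}_{M_+,\mu}\,\|\,\P^{\otimes n}_{M_-,\mu}\big) = n\, d_m\,\mathrm{KL}\big(\mathrm{Bern}(\tfrac{1}{2}+\Delta)\,\|\,\mathrm{Bern}(\tfrac{1}{2}-\Delta)\big) = O(n\, d_m\,\Delta^2) = O\big(n\, d_m\,\epsilon^2/H^2\big).
\]
For $n\le cH^2/(d_m\epsilon^2)$ with $c$ a small enough absolute constant this is at most some $\kappa=O(1)$, so the Bretagnolle--Huber (two-point) inequality gives $\max_\theta\P_{M_\theta,\mu}(\psi\ne\theta)\ge\tfrac14 e^{-\kappa}=:p>0$ for every test $\psi$, hence for every algorithm; combining with the separation step yields $\inf_{v^{\pi_{alg}}}\sup_{(\mu,M)\in\cM_{d_m}}\P_{\mu,M}(v^*-v^{\pi_{alg}}\ge\epsilon)\ge p$.

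I expect the real difficulty to lie in the construction rather than in the estimates: one must arrange \emph{simultaneously} that (i) the unique point of disagreement between the two instances is reached with $\Theta(1)$ probability, so the decision there is worth $\Theta(H)$ of value --- this is exactly what pins $\Delta=\Theta(\epsilon/H)$ and hence the $\Omega(H^2/\epsilon^2)$ per-transition sample requirement; (ii) $\mu$ takes the informative action there with probability only $\Theta(d_m)$, so each episode contributes only $\Theta(d_m)$ informative samples (this is the step that turns the per-transition requirement into the claimed $\Omega(H^2/(d_m\epsilon^2))$ episode bound, and is where stationarity is essential, since a non-stationary construction would instead stack $H$ such perturbations); and (iii) \emph{every} reachable $(s_t,a_t)$ still has marginal $\ge d_m$, with the informative pair sitting exactly at this floor, which is where the hypothesis $d_m\le 1/(SA)$ is used. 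Once the instance is set up so that these three constraints are mutually consistent, the KL tensorization, the value-gap algebra, and the Le Cam bound are all routine.
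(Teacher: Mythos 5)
Your construction and argument are sound, and they reach the stated bound by a genuinely different route than the paper. The paper's proof is a two-stage reduction: it first proves an $\Omega(H^2SA/\epsilon^2)$ minimax bound over \emph{all} MDP instances by embedding $S-2$ independent best-arm-identification problems (each bandit state self-loops with probability $1-1/H$ and the reward is paid only in the second half of the horizon, so the effective per-decision gap is $\tau/H$ and each bandit needs $\Omega(AH^2/\tau^2)$ samples), then uses a pigeonhole-plus-Markov argument over the $\Theta(S)$ bandits, and finally gates the whole construction behind an initial state from which the behavior policy enters the informative branch only with probability $\tfrac12 d_m SA$, diluting the count to $\Omega(H^2/d_m\epsilon^2)$. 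You instead collapse everything into a single two-point Le Cam comparison: one gate state reached with probability one (so the perturbed transition is worth $\Theta(H)$ of value and the separation can be taken as $\Delta=\Theta(\epsilon/H)$), one informative action sampled by $\mu$ with probability exactly $d_m$, and a KL chain rule giving $n\,d_m\,O(\Delta^2)$ followed by Bretagnolle--Huber. Both yield the theorem as stated; the checks you flag all go through (the minimum over \emph{reachable} pairs is exactly $d_m$, which matches the paper's own convention in \cref{eqn:definition-dm} and its own construction, where unreachable actions at $s_0$ also have zero marginal). What the paper's route buys that yours does not is the intermediate $\Omega(H^2SA/\epsilon^2)$ statement (Theorem~\ref{thm:learn_low_bound_all}), which is of independent interest and spreads the hardness over $\Theta(SA)$ pairs rather than a single transition; what your route buys is a shorter, self-contained argument with no black-box bandit lemma and no lossy constants in verifying membership in $\mathcal{M}_{d_m}$. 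The one presentational caveat is that your ``Information'' step should note explicitly that conditioned on $a_1=a_s$ the two episode laws coincide and conditioned on $a_1=a^\star$ the remainder of the episode is a deterministic function of $s_2$, so the per-episode KL is exactly $d_m\,\mathrm{KL}(\mathrm{Bern}(\tfrac12+\Delta)\,\|\,\mathrm{Bern}(\tfrac12-\Delta))$ --- but that is exactly what your construction guarantees, so this is a matter of writing, not of substance.
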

The proof of Theorem~\ref{thm:learn_low_bound_s} builds on modifying the $\widetilde{O}(H^3/d_m\epsilon^2)$ sample complexity lower bound in the non-stationary case~\citep{yin2020near}, and can be found in Appendix~\ref{sec:lower}.

\subsection{Infinite-horizon discounted setting}
\label{sec:infinite}

We now consider the infinite-horizon discounted setting. 

\paragraph{Setup} An infinite-horizon discounted MDP is denoted by $(\mathcal{S},\mathcal{A},P,r,\gamma,d_0)$, where $\gamma$ is discount factor  and $d_0$ is initial state distribution. Given a policy $\pi$, the induced trajectory $s_0,a_0,r_0$, $s_1,a_1,r_1,...$ follows: $s_0\sim d_0$, $a_t\sim\pi(\cdot|s_t)$, $r_t=r(s_t,a_t)$, $s_{t+1}\sim P(\cdot|s_t,a_t)$. The corresponding value function (or state-action value) function is defined as: $V^\pi(s)=\E_\pi[\sum_{t=0}^\infty \gamma^t r_t|s_0=s]$, $Q^\pi(s)=\E_\pi[\sum_{t=0}^\infty \gamma^t r_t|s_0=s,a_0=a]$. Moreover, define the normalized marginal state distribution as $d^\pi(s):=(1-\gamma)\sum_{t=0}^\infty \gamma^t \P[s_t=s|s_0\sim d_0,\pi]$ and the state-action counterpart follows $d^\pi(s,a):=d^\pi(s)\pi(s|a)$. For the offline/batch learning problem, we adopt the same protocol of \cite{chen2019information,xie2020q} that data $\mathcal{D}=\{s^{(i)},a^{(i)},r^{(i)},s^{\prime(i)}\}_{i\in[n]}$ are i.i.d off-policy pieces with $(s,a)\sim d^\mu$ and $s'\sim P(\cdot|s,a)$.

Algorithm~\ref{alg:OPDVRT} and \ref{alg:OPDVR} are slighted modified slightly modified to cater to the infinite horizon setting (detailed pseudo-code in Algorithm~\ref{alg:OPDVRT_in} and \ref{alg:OPDVR_in} in the appendix).
Our result is stated as follows. The proof can be found in Appendix~\ref{sec:proof_i}.

\begin{theorem}[Sampe complexity of~\OPDVR{} in infinite-horizon discounted setting]
  \label{thm:main_infinite}
  Consider Algorithm~\ref{alg:OPDVR_in}. There are constants $c'_1, c'_2,c'_3$, such that if we set $m'_1=O((1-\gamma)^{-4}/d_m),m'_2 = O((1-\gamma)^{-3}/d_m)$ (see more precise expressions in Lemma~\ref{lem:complexity_OPVRT_in}), $K_1=\log_2({(1-\gamma)^{-1}}/\epsilon), K_2=\log_2(\sqrt{(1-\gamma)^{-1}}/\epsilon)$ , $R=\log (4/\epsilon (1-\gamma))$, and choose LCB estimators $\mathbf{z}$ and $\mathbf{g}$ as in Figure~\ref{fig:practical_estimators}, then 
  with probability $1-\delta$,  the infinite horizon version of \textbf{OPDVR} (Algorithm~\ref{alg:OPDVR_in}) outputs an $\epsilon$-optimal policy provided that in offline data $\cD$ has number of samples exceeding
      \begin{align*}
 \frac{  c'_3\max[\frac{m'_1}{(1-\gamma)^{-1}},m'_2]}{\epsilon^2} \cdot \iota'= \widetilde{O}\left[(1-\gamma)^{-3}/d_m\epsilon^2\right].
  \end{align*}
  where $\iota' := R\cdot(\log(32(1-\gamma)^{-1}RSA/\delta) + \log\log_2(\sqrt{(1-\gamma)^{-1}}/\epsilon))\cdot\log_2(\sqrt{(1-\gamma)^{-1}}/\epsilon)$.
\end{theorem}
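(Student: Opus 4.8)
## Proof Proposal for Theorem~\ref{thm:main_infinite}

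The plan is to mirror the proof of the finite-horizon stationary case (Theorem~\ref{thm:main_stationary}), substituting the effective horizon $(1-\gamma)^{-1}$ for $H$ throughout, but with two new ingredients: a \emph{truncation argument} that reduces the infinite-horizon discounted problem to an effective horizon of $R = \Theta(\log(1/\epsilon(1-\gamma)))$ steps, and a fixed-point / contraction analysis replacing the backward induction of the finite-horizon algorithm. First I would invoke Proposition~\ref{prop:practical} (its infinite-horizon analogue) so that it suffices to analyze the \emph{fictitious} estimators $\mathbf{z},\mathbf{g}$ from Figure~\ref{fig:practical_estimators}, for which the ``good events'' $E_m, E_l$ hold with high probability and the observations at a single state-action pair, collected across the i.i.d.\ off-policy tuples $(s^{(i)},a^{(i)},r^{(i)},s'^{(i)})$, are genuinely independent (this is cleaner than the finite-horizon case since each tuple contributes one sample). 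On these events, $z(s,a) \le P^\top(\cdot|s,a)V^{\mathrm{in}}$ and $g(s,a)\le P^\top(\cdot|s,a)[V - V^{\mathrm{in}}]$ uniformly, and the error bars are at most twice their population counterparts.

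Next I would establish the per-inner-loop guarantee: one call to the inner loop (\textsc{QVI-VR}, infinite-horizon version in Algorithm~\ref{alg:OPDVRT_in}), run for $R$ Bellman backups with an input reference $(V^{\mathrm{in}},\pi^{\mathrm{in}})$ satisfying the monotonicity condition $V^{\mathrm{in}} \le \mathcal{T}_{\pi^{\mathrm{in}}} V^{\mathrm{in}}$ and $\|V^\star - V^{\mathrm{in}}\|_\infty \le u^{\mathrm{in}}$, produces an output $(V^{\mathrm{out}},\pi^{\mathrm{out}})$ that again satisfies monotonicity (hence $V^{\mathrm{out}} \le V^{\pi^{\mathrm{out}}} \le V^\star$ by Lemma~\ref{lem:mono}) and with $\|V^\star - V^{\mathrm{out}}\|_\infty \le u^{\mathrm{in}}/2$ with high probability. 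The $\gamma^R$ truncation error from running only $R$ backups is $\le \gamma^R V_{\max} \lesssim \epsilon$ by the choice of $R$, and contributes only a lower-order term. The key estimate powering the variance-reduced error contraction is the analogue of Lemma~\ref{lem:H3}: the total-variance bound $\sum_{k\ge 0}\gamma^{2k}\,\E^{\pi^\star}\!\big[\Var[V^\star(s_{k+1})\mid s_k,a_k]\big] = O((1-\gamma)^{-2})$ — the discounted ``law of total variance'' — which replaces the naive $(1-\gamma)^{-3}$. Crucially, the stationary-transition aggregation used in~\eqref{eqn:off_z_s}–\eqref{eqn:off_g_s} carries over verbatim: because $P$ is time-independent, the martingale argument in the proof sketch of Theorem~\ref{thm:main_stationary} gives conditional variance sums equal to $n_{s,a}\cdot\sigma_{V^\star}(s,a)$ exactly (not a loose $\max_t$ bound), and summing $\sum_{s,a} d^{\pi^\star}(s,a)\sigma_{V^\star}(s,a)$ against the total-variance identity yields the $(1-\gamma)^{-2}$ saving, i.e.\ one factor of the effective horizon removed relative to the vanilla bound.

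I would then run the outer loop / doubling exactly as in Algorithm~\ref{alg:OPDVR_in}: Stage~1 is a warm-up starting from $V^{(0)}=\mathbf 0$, $u^{(0)}=(1-\gamma)^{-1}$, halving $u$ over $K_1 = \log_2((1-\gamma)^{-1}/\epsilon)$ outer iterations to reach intermediate accuracy $\sqrt{(1-\gamma)^{-1}}\,\epsilon$; Stage~2 restarts from that intermediate solution with $u^{(0)}=\sqrt{(1-\gamma)^{-1}}$ and $K_2 = \log_2(\sqrt{(1-\gamma)^{-1}}/\epsilon)$ outer iterations to reach $\epsilon$. The doubling is what removes the bad dependence on the initial optimality gap: by the infinite-horizon analogue of Proposition~\ref{prop:suboptimal-h}, Stage~1 needs $m'_1 = O((1-\gamma)^{-4}/d_m)$ episodes per phase at the coarse target while Stage~2 needs only $m'_2 = O((1-\gamma)^{-3}/d_m)$, and summing the geometric series $\mathbf m(u) \propto 1/u^2$ over outer iterations in each stage gives total sample budget $\widetilde O(\max[m'_1/(1-\gamma)^{-1},\, m'_2]/\epsilon^2) = \widetilde O((1-\gamma)^{-3}/d_m\epsilon^2)$, with the logarithmic factor $\iota'$ absorbing the union bound over $K_1+K_2$ outer iterations, $R$ backups, and $SA$ state-action pairs. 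A union bound over all these events and over both stages gives overall failure probability $\le \delta$.

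The main obstacle I expect is the data-dependence issue in the stationary aggregation combined with the fixed-point structure: unlike the finite-horizon case where the backward pass over $t=H,\dots,1$ gives a clean recursion, here the same dataset $\mathcal D$ is reused across all $R$ backups \emph{and} the reference $V^{\mathrm{in}}$ feeding the error bars is itself a random function of earlier stages, so one must be careful that the martingale filtration $\mathcal F_k$ is set up so that $V^{\mathrm{in}}$ and $V_{t+1}$ (resp.\ $V^{(i)}$) are measurable with respect to the conditioning used when applying Freedman/Bernstein, and that the ``fictitious'' substitution genuinely decouples the pathological small-$n_{s,a}$ cases. Establishing the total-variance bound $O((1-\gamma)^{-2})$ under the \emph{weak coverage} assumption (Assumption~\ref{assu2}) — rather than full concentrability — also requires care, since one only controls $\sigma_{V^\star}$ along trajectories of the single covered optimal policy $\pi^\star$; this is exactly where the monotonicity property and the LCB design pay off, by ensuring all intermediate $V$ iterates stay in $[0,V^\star]$ so that errors never propagate through uncovered state-action pairs.
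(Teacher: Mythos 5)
Your proposal matches the paper's proof in all essentials: reduction to the fictitious estimators, Bernstein/Hoeffding concentration on the i.i.d.\ tuples, the monotonicity-preserving inner loop with the contraction recursion $Q^\star - Q^{(i)} \le \gamma \PP^{\pi^\star}[Q^\star - Q^{(i-1)}] + \xi$ and truncation error $\gamma^R V_{\max}\le u^{\tin}/4$, the discounted total-variance bound $O((1-\gamma)^{-2})$ for the $\sqrt{\sigma_{V^\star}}$ term, and the two-stage doubling with the geometric sum over $\mathbf{m}(u)\propto 1/u^2$. The one obstacle you flag --- reuse of the same dataset across all $R$ inner backups --- does not actually arise, since Algorithm~\ref{alg:OPDVRT_in} draws a fresh batch $\cD_2^{(i)}$ for each of the $R$ iterations (hence the $(1+R)\cdot\mathbf{m}$ budget), and the stationary-aggregation martingale machinery from Theorem~\ref{thm:main_stationary} is likewise unnecessary here because the offline data are already i.i.d.\ transition tuples drawn from $d^\mu$.
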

The proof, deferred to Appendix~\ref{sec:proof_i},  again rely on the analyzing fictitious versions of Algorithm~\ref{alg:OPDVRT_in} and Algorithm~\ref{alg:OPDVR_in} with similar techniques described in our proof sketch of Theorem~\ref{thm:main_stationary}.

We note again that for the infinite horizon case, the sample-complexity measures the number of steps, while in the finite horizon case our sample complexity measures the number of episodes (each episode is $H$ steps) thus $(1-\gamma)^{-3}$ is comparable to the $H^2$ dependence.
To the best of our knowledge, Theorem~\ref{thm:main_stationary} and Theorem~\ref{thm:main_infinite} are the first results that achieve $H^2$, $(1-\gamma)^{-3}$ dependence in the offline regime respectively for stationary transition and infinite horizon setting, see Table~\ref{table}. Although we note that our result relies on the tabular structure whereas these prior algorithms work for general function classes, their bounds do not improve when reduced to tabular case. In particular, \citet{chen2019information,xie2020q,xie2020batch} consider using function approximation for exactly tabular problem. Lastly, in the tabular regime, our Assumption~\ref{assu2} is much weaker than the $\beta_\mu,C$ considered in these prior work; see Appendix~\ref{sec:comparison_dm} for a discussion.

		



\section{Discussions}
\label{sec:discussion}

\paragraph{Estimating $d_m$.}

It is worth mentioning that the input of \OPDVR{} depends on unknown system quantity $d_m$. Nevertheless, $d_m$ is only one-dimensional scalar and thus it is plausible (from a statistical perspective) to leverage standard parameter-tuning tools (\emph{e.g.} cross validation \citep{varma2006bias}) for obtaining a reliable estimate in practice. On the theoretical side, we provide the following result to show plug-in on-policy estimator $\widehat{d}^\mu_t(s_t,a_t)=n_{s_t,a_t}/n$ and $\widehat{d}_m:=\min_{t,s_t,a_t}\{n_{s_t,a_t}/n:n_{s_t,a_t}>0\},$ is sufficient for accurately estimating $d^\mu_t, d_m$ simultaneously.
\begin{lemma}\label{thm:est_d_m}
	For the finite-horizon setting (either stationary or non-stationary), there exists universal constant $c$,  s.t. when $n\geq c \cdot 1/d_m\cdot \log(HSA/\delta)$, then w.p. $1-\delta$, we have $\forall t, s_t,a_t$, 
	$
	\frac{1}{2}d^\mu_t(s_t,a_t)\leq \widehat{d}^\mu_t(s_t,a_t)\leq \frac{3}{2}d^\mu_t(s_t,a_t)
	$
	and, in particular, $
	\frac{1}{2}d_m\leq \widehat{d}_m\leq \frac{3}{2}d_m.
	$
\end{lemma}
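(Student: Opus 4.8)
The plan is a direct concentration-plus-union-bound argument, essentially identical to the kind of calculation already used to establish the high-probability events $E_{m,t}$, $E_{l,t}$ elsewhere in the paper. Fix a time step $t$ and a pair $(s_t,a_t)$. Since the $n$ episodes in $\cD$ are i.i.d., the indicators $\mathbf{1}[s^{(i)}_t=s_t,\,a^{(i)}_t=a_t]$, $i\in[n]$, are i.i.d.\ Bernoulli with mean $\P^\mu(s_t=s_t,a_t=a_t)=d^\mu_t(s_t,a_t)$, so $n_{s_t,a_t}$ has mean $n\,d^\mu_t(s_t,a_t)$. First I would apply the multiplicative Chernoff bound: for any triple $(t,s_t,a_t)$ with $d^\mu_t(s_t,a_t)>0$ (hence $d^\mu_t(s_t,a_t)\ge d_m$ by \eqref{eqn:definition-dm}),
\[
\P\!\left(\left|\widehat d^\mu_t(s_t,a_t)-d^\mu_t(s_t,a_t)\right| > \tfrac12\, d^\mu_t(s_t,a_t)\right) \;\le\; 2\exp\!\big(-n\,d^\mu_t(s_t,a_t)/12\big) \;\le\; 2\exp\!\big(-n\,d_m/12\big).
\]

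Next I would union-bound over the at most $HSA$ triples with $d^\mu_t(s_t,a_t)>0$; the total failure probability is at most $2HSA\exp(-n d_m/12)$, which is $\le\delta$ once $n\ge (c/d_m)\log(HSA/\delta)$ for a suitable universal constant $c$. Call the complementary event $\cE$. On $\cE$, every triple with $d^\mu_t(s_t,a_t)>0$ obeys $\tfrac12 d^\mu_t(s_t,a_t)\le \widehat d^\mu_t(s_t,a_t)\le \tfrac32 d^\mu_t(s_t,a_t)$, which is the first claim for those triples; for the remaining triples, $d^\mu_t(s_t,a_t)=0$ forces $n_{s_t,a_t}=0$ deterministically, so $\widehat d^\mu_t(s_t,a_t)=0=d^\mu_t(s_t,a_t)$ and the two-sided bound is trivial.

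Finally I would transfer the bound to $\widehat d_m$, where the only point requiring a moment's care — the closest thing to an obstacle in an otherwise routine proof — is that the minimum defining $\widehat d_m$ ranges over $\{(t,s_t,a_t):n_{s_t,a_t}>0\}$ rather than over the support $\{(t,s_t,a_t):d^\mu_t(s_t,a_t)>0\}$. But on $\cE$ these two index sets coincide: a triple with $d^\mu_t(s_t,a_t)>0$ has $n_{s_t,a_t}\ge \tfrac12 n d^\mu_t(s_t,a_t)\ge \tfrac12 n d_m>0$, and a triple with $d^\mu_t(s_t,a_t)=0$ has $n_{s_t,a_t}=0$. Hence on $\cE$, $\widehat d_m=\min\{\widehat d^\mu_t(s_t,a_t):d^\mu_t(s_t,a_t)>0\}$. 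The lower bound is immediate since each such $\widehat d^\mu_t(s_t,a_t)\ge \tfrac12 d^\mu_t(s_t,a_t)\ge\tfrac12 d_m$; for the upper bound, picking the triple $(t^\star,s_{t^\star},a_{t^\star})$ that attains $d_m$ (it lies in the index set, as $d^\mu_{t^\star}(s_{t^\star},a_{t^\star})=d_m>0$) gives $\widehat d_m\le \widehat d^\mu_{t^\star}(s_{t^\star},a_{t^\star})\le \tfrac32 d^\mu_{t^\star}(s_{t^\star},a_{t^\star})=\tfrac32 d_m$. Thus $\tfrac12 d_m\le \widehat d_m\le \tfrac32 d_m$ on $\cE$, which occurs with probability at least $1-\delta$, completing the proof.
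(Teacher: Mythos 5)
Your proposal is correct and follows essentially the same route as the paper's own proof: a multiplicative Chernoff bound on the binomial counts $n_{s_t,a_t}$, a union bound over the at most $HSA$ triples in the support, and then passing to the minimum to control $\widehat d_m$. If anything, your argument is slightly more careful than the paper's in verifying that, on the good event, the index set $\{n_{s_t,a_t}>0\}$ defining $\widehat d_m$ coincides with the support $\{d^\mu_t(s_t,a_t)>0\}$ — a point the paper elides.
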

Lemma~\ref{thm:est_d_m} ensures one can replace $d^\mu_t$ by $\widehat{d}^\mu_t$ ($d_m$ by $\widehat{d}_m$) in \OPDVR{} and we obtain a fully data-adaptive algorithm. Note that the requirement on $n$ does not affect our near-minimax complexity bound in either Theorem~\ref{thm:main_nonstationary} and~\ref{thm:main_stationary}---we only require $n\approx \tilde{\Theta}(1/d_m)$ additional episodes to estimate $d_m$ and it is of lower order compared to our upper bound $\widetilde{O}(H^3/d_m)$ or $\widetilde{O}(H^2/d_m)$). See Appendix~\ref{sec:data_adaptive} for the proof of Lemma~\ref{thm:est_d_m}.

%

\paragraph{Computational and memory cost.}
\OPDVR{} can be implemented as a streaming algorithm that uses only one pass of the dataset. Its computational cost is $\widetilde{O}(H^4/d_m\epsilon^2)$
— the same as its sample complexity in steps ($H$ steps is an episode), and the memory cost is $O(HSA)$ for the episodic
case and $O(SA)$ for the stationary or infinite horizon case. In particular, the double variance reduction technique does
not introduce additional overhead beyond constant factors.


\paragraph{Improvement over variance reduction under generative models.}
This work may be considered as an extension of the variance reduction framework for RL in the generative model setting (e.g.~\citep{sidford2018near,yang2019sample}), as some of proving techniques such as VR and monotone preserving $V_t\leq \mathcal{T}_{\pi_t}V_{t+1}$ are inherited from previous works. However, two improvements are made. First, the data pieces collected in offline case are highly dependent (in contrast for generative model setting each simulator call is independent) therefore how to disentangle the dependent structure and analyze tight results makes the offline setting inherently more challenging. Second, our doubling mechanism always guarantee the minimax rate with any initialization and the single VR procedure does not have this property (see Appendix~\ref{sec:sidford_proof} for a more detailed discussion). 
Lastly, on the other hand it is not very surprising technique (like VR) from generative model setting can be leveraged for offline RL. While generative model assumes access to the strong simulator $P(s'|s,a)$, for offline RL $\mu$ serves as a surrogate simulator where we can simulate episodes. If we can treat the distributional shift appropriately and decouple the data dependence in a pinpoint manner, it is hopeful that generic ideas still work.



\section{Conclusion}
\label{sec:conclusion}

This paper proposes OPDVR (off-policy double variance reduction), a new variance reduction algorithm for offline reinforcement learning.
 We show that OPDVR achieves tight sample complexity for offline RL in tabular MDPs; in particular, ODPVR is the first algorithm that acheives the optimal sample complexity for offline RL in the stationary transition setting. On the technical end, we present a sharp analysis under stationary transitions, and use the doubling technique to resolve the initialization dependence in variance reduction, both of which could be of broader interest. We believe this paper leads to some interesting next steps. For example, can our understandings about the variance reduction algorithm shed light on other commonly used algorithms (such as Q-Learning) for offline RL? How can we better deal with insufficient data coverage? We would like to leave these as future work.


\section*{Acknowledgment}

The authors would like to thank Lin F. Yang for the discussions about~\citep{sidford2018near}. 

\bibliographystyle{apa-good.bst}
\bibliography{sections/stat_rl}

\appendix

\clearpage
\begin{center}
	{\LARGE Appendix}
\end{center}

\section{More Discussion on Related Work}
\label{sec:related}

\paragraph{Offline reinforcement learning}
There is a growing body of work on offline RL recently~\citep{levine2020offline}  in both offline policy evaluation (OPE) and offline policy optimization. OPE (also known as Off-Policy Evaluation \citep{li2015toward}) requires estimating the value of a \emph{target policy} $\pi$ from an offline dataset that is often generated using another behavior policy $\mu$. A variety of algorithms and theoretical guarantees have been established for offline policy evaluation~\citep{li2015toward,jiang2016doubly,liu2018breaking,kallus2019double,kallus2019efficiently,uehara2019minimax,xie2019towards,yin2020asymptotically,duan2020minimax,liu2020understanding,liu2020provably,feng2020accountable}. The majority of these work uses (vanilla or more advanced versions of) importance sampling to correct for the distribution shift, or uses minimax formulation to approximate the task and solves the questions through convex/non-convex optimization.


Meanwhile, the offline policy optimization problem needs to find a near-optimal policy given the offline dataset. The study of offline policy optimization can be dated back to the classical Fitted Q-Iteration algorithm~\citep{antos2008fitted,antos2008learning}. The sample complexity for offline policy optimization is studied in a line of recent work~\citep{chen2019information,le2019batch,xie2020q,xie2020batch,liu2020provably}. The focus on these work is on the combination of offline RL and function approximation; when specialized to the tabular setting, the sample complexities have a rather suboptimal dependence on the horizon $H$ (or $(1-\gamma)^{-1}$ in the discounted setting). In particular, \citet{chen2019information,le2019batch} first established finite sample guarantees with complexity $\widetilde{O}((1-\gamma)^{-6}\beta_\mu/\epsilon^2)$ (where $\beta_\mu$ is the concentration coefficient) and it is later improved to $\widetilde{O}((1-\gamma)^{-4}\beta_\mu/\epsilon^2)$ by \citet{xie2020q} with a finer analysis. Later, \citet{xie2020batch} considers offline RL under weak \emph{realizability} assumption and \citet{liu2020provably} considers offline RL \emph{without} good exploration. Those are challenging offline settings but their dependence on horizon $(1-\gamma)^{-1}$ (or $H$) is very suboptimal. The recent work of~\citet{yin2020near} (OPE + uniform convergence) first achieves the sample complexity $\widetilde{O}(H^3/d_m\epsilon^2)$ in the finite-horizon non-stationary transition setting for tabular offline RL, and establishes a lower bound $\Omega(H^3/d_m\epsilon^2)$ (where $d_m$ is a constant related to the data coverage of the behavior policy in the given MDP that is similar to the concentration coefficient $\beta_\mu$) that matches the upper bound up to logarithmic factors. Compared with these work, we analyze a new variance reduction algorithm for offline policy learning, which is a more generic approach as it adapts to all typical settings (finite horizon stationary/non-stationary transition, infinite horizon setting) with optimal sample complexity while the technique in \citet{yin2020near} only works for non-stationary setting and cannot directly reduce to $\widetilde{O}(H^2/d_m\epsilon^2)$ when the transition becomes stationary. Concurrent to this work,~\citet{jin2020pessimism} study pessimism-based algorithms for offline policy optimization under insufficient coverage of the data and \cite{wang2020statistical,zanette2020exponential} provide some negative results (exponential lower bound) for offline RL with linear MDP structure.





\paragraph{Reinforcement learning in online settings}
In online RL (where one has interactive access to the environment), the model-based UCBVI algorithm achieves the minimax regret of $\tilde{O}(\sqrt{HSAT})$~\citep{azar2017minimax} and is later improved by~\citep{dann2018policy}. Later this minimax rate is also achieved by EULER with stronger problem-dependent expressions \citep{zanette2019tighter,simchowitz2019non}. 
Model-free algorithms such a \textit{Q-learning} is able to achieve a $\sqrt{H}$-suboptimal regret comparing to lower bound~\citep{jin2018q} and this gap is recently closed by an improved model-free algorithm in~\citep{zhang2020almost}.

In the generative model setting (where one has a simulator that samples $(r_t, s_{t+1})$ from any $(s_t, a_t)$,~\citep{azar2013minimax,wainwright2019variance} prove sample complexity $\widetilde{O}((1-\gamma)^{-3}SA/\epsilon^2)$ is sufficient for the output $Q$-function to be $\epsilon$-optimal, \emph{i.e.} $||Q^\star-Q^{\text{out}}||_\infty<\epsilon$, however this does not imply $\epsilon$-optimal policy with the same sample complexity. The most related to our work among this line is~\citep{sidford2018near}, which designs an \emph{variance reduction} algorithm that overcomes the above issue and obtains $\widetilde{O}((1-\gamma)^{-3}SA/\epsilon^2)$ sample complexity or finding the optimal policy as well.  Later~\citep{yang2019sample} again uses VR to obtain the sample optimality under the linear transition models. The design of our algorithm builds upon the variance reduction technique; our doubling technique and analysis in the offline setting can be seen as a generalization of~\citep{sidford2018near}; see Section~\ref{sec:discussion} and Appendix~\ref{sec:sidford_proof} for more detailed discussions.
\section{Proofs for finite-horizon non-stationary setting }\label{sec:proof_n}

The roadmap of our analysis in this section consists of first doing concentration analysis, then iteratively reasoning using induction, analyzing the doubling procedure and proving from prototypical version to the practical version. At a high level, we arrange the proving pipeline to be similar to that of \cite{sidford2018near} and let exquisite readers feel the difference between generative model setting and offline setting in tabular case (and why VR works under weak offline Assumption~\ref{assu2}). We also address one defect in \cite{sidford2018near} later (see Section~\ref{sec:sidford_proof}) to contrast that our doubling VR procedure is necessary. 

 Even before that, let us start with the simple \emph{monotone preservation} lemma.

\begin{lemma}\label{lem:mono}
	Suppose $V$ and $\pi$ is any value and policy satisfy $V_t\leq \mathcal{T}_{\pi_t}V_{t+1}$ for all $t\in[H]$. Then it holds $V_t\leq V^\pi_t\leq V^\star_t$, for all $t\in[H]$.
	
\end{lemma}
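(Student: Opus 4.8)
The statement splits into two inequalities, $V_t \le V_t^\pi$ and $V_t^\pi \le V_t^\star$, and I would handle them separately. The second is immediate from the definition of the optimal value function: in the finite-horizon episodic setting $V_t^\star(s) = \sup_{\pi'} V_t^{\pi'}(s)$, so $V_t^\pi \le V_t^\star$ pointwise for every policy $\pi$ and every $t$. No induction is needed here.

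The content is the first inequality, $V_t \le V_t^\pi$, which I would establish by \emph{backward induction on} $t$, from $t = H+1$ down to $t = 1$. The base case is the boundary convention $V_{H+1} \equiv 0 \equiv V_{H+1}^\pi$, so $V_{H+1} \le V_{H+1}^\pi$ trivially. For the inductive step, assume $V_{t+1} \le V_{t+1}^\pi$ (pointwise on $\mathcal S$). Since $\pi$ is deterministic, the operator $\mathcal T_{\pi_t}$ acts by $[\mathcal T_{\pi_t} W](s) = r(s,\pi_t(s)) + P_t^\top(\cdot\mid s,\pi_t(s)) W$, and because $P_t(\cdot\mid s,\pi_t(s))$ is a probability vector with nonnegative entries, $\mathcal T_{\pi_t}$ is monotone: $W \le W'$ implies $\mathcal T_{\pi_t} W \le \mathcal T_{\pi_t} W'$. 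Combining the hypothesis of the lemma, the monotonicity of $\mathcal T_{\pi_t}$, the inductive hypothesis, and the Bellman consistency identity $V_t^\pi = \mathcal T_{\pi_t} V_{t+1}^\pi$, I get
\[
V_t \;\le\; \mathcal T_{\pi_t} V_{t+1} \;\le\; \mathcal T_{\pi_t} V_{t+1}^\pi \;=\; V_t^\pi,
\]
which closes the induction.

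There is no genuine obstacle here; the only points requiring a moment's care are (i) invoking the correct boundary convention $V_{H+1}\equiv 0$ (which is part of the standing setup for the value iterates), and (ii) justifying that the single-step Bellman operator $\mathcal T_{\pi_t}$ is order-preserving — this is exactly where the nonnegativity of the transition kernel is used, and it is what makes the induction go through. I would state these two facts explicitly and then present the three-line chain above.
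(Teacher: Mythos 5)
Your proof is correct and is essentially the same argument as the paper's: the paper unrolls $V_t\leq \mathcal{T}_{\pi_t}V_{t+1}\leq \mathcal{T}_{\pi_t}\circ\cdots\circ\mathcal{T}_{\pi_H}V_{H+1}$ and then collapses the right-hand side to $V_t^\pi$ via the boundary condition $V_{H+1}=V_{H+1}^\pi=\mathbf{0}$ and Bellman consistency, which is exactly your backward induction written as a single chain (with the monotonicity of $\mathcal{T}_{\pi_t}$ used implicitly where you make it explicit). The treatment of $V_t^\pi\leq V_t^\star$ as definitional also matches the paper.
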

\begin{proof}
	We only need to show $V_t\leq V^\pi_t$. Since $V_t\leq \mathcal{T}_{\pi_t}V_{t+1}$, we can use it repeatedly to obtain

	\begin{equation}\label{eqn:iter_v}
	V_t\leq \mathcal{T}_{\pi_t} V_{t+1}\leq \mathcal{T}_{\pi_t}(\mathcal{T}_{\pi_{t+1}}V_{t+2})\leq...\leq \mathcal{T}_{\pi_t} \circ \mathcal{T}_{\pi_{t+1}}\circ...\circ\mathcal{T}_{\pi_H} V_{H+1}
	\end{equation}
	where ``$\circ$'' denotes operator composition. Note by default $V_{H+1}=V_{H+1}^\pi = V_{H+1}^\star=\mathbf{0}$, therefore 
	\begin{equation}\label{eqn:iter_v_2}
	\mathcal{T}_{\pi_{t+1}}\circ...\circ\mathcal{T}_{\pi_H} V_{H+1}=\mathcal{T}_{\pi_{t+1}}\circ...\circ\mathcal{T}_{\pi_H} V_{H+1}^{\pi}=\mathcal{T}_{\pi_{t+1}}\circ...\circ\mathcal{T}_{\pi_{H-1}} V_{H}^{\pi}=...=\mathcal{T}_{\pi_t} V_{t+1}^{\pi}=V_{t}^{\pi}
	\end{equation}
	where we use the definition of Bellman equation that $V_{t}^{\pi}=\mathcal{T}_{\pi_t} V_{t+1}^{\pi}$ for all $t$. Combining \eqref{eqn:iter_v} and \eqref{eqn:iter_v_2} gives the stated result. 
\end{proof}

\subsection{Concentration analysis for non-stationary transition setting}  
Recall $\tilde{z}_t$, $\tilde{\sigma}_{V^{\text{in}}_{t+1}}$ \eqref{eqn:off_z} and $g_t$ \eqref{eqn:off_g} are three quantities deployed in Algorithm~\ref{alg:OPDVRT} that use off-policy data $\mathcal{D}$. We restate their definition as follows:

{\small
	\begin{align*}
	\tilde{z}_t(s_t,a_t)&=\begin{cases}P^\top_t(\cdot|s_t,a_t) V^{\text{in}}_{t+1}, &{if} \;n_{s_t,a_t}< \frac{1}{2} m\cdot d^\mu_t(s_t,a_t),\\ 
	\frac{1}{n_{s_t,a_t}}\sum_{i=1}^m V^{\text{in}}_{t+1}(s^{(i)}_{t+1})\cdot\mathbf{1}_{[s^{(i)}_t=s_t,a^{(i)}_t=a_t]},&{if} \;n_{s_t,a_t}\geq \frac{1}{2} m\cdot d^\mu_t(s_t,a_t).\end{cases}\\
	\tilde{\sigma}_{V^{\text{in}}_{t+1}}(s_t,a_t)&=\begin{cases}{\sigma}_{V^{\text{in}}_{t+1}}(s_t,a_t),\qquad\qquad\qquad\qquad\qquad\,\qquad\quad {if} \;n_{s_t,a_t}< \frac{1}{2} m\cdot d^\mu_t(s_t,a_t),\\ \frac{1}{n_{s_t,a_t}}\sum_{i=1}^m[V^{\text{in}}_{t+1}(s^{(i)}_{t+1})]^2\cdot\mathbf{1}_{[s^{(i)}_t=s_t,a^{(i)}_t=a_t]}-\tilde{z}_t^2(s_t,a_t), \;\;{otherwise}.
	\end{cases}
	\end{align*}
}

{\small\[
	g_{t}(s_{t},a_{t})=\begin{cases}P^\top(\cdot|s_{t},a_{t})[V_{t+1}-V^{\text{in}}_{t+1}]-f(s_{t},a_{t}),\hfill {if}\quad n_{s_{t},a_{t}}< \frac{1}{2} l\cdot d^\mu_{t}(s_{t},a_{t}),
	\\\frac{1}{n'_{s_{t},a_{t}}}\sum_{j=1}^l[V_{t+1}(s'^{(j)}_{t+1})-V_{t+1}^{\text{in}}(s'^{(j)}_{t+1})]\cdot\mathbf{1}_{[s'^{(j)}_{t},a'^{(j)}_{t}=s_{t},a_{t}]}-f(s_{t},a_{t}),\hfill o.w.\end{cases}
	\]}and recall $f(s_t,a_t)=4u^{\text{in}}\sqrt{{\log(2HSA/\delta)}/{l d^\mu_t(s_t,a_t)}}$. Also, we use bold letters to represent matrices, \emph{e.g.} $\PP_t\in\R^{SA\times S}$ satisfies $\PP_t[(s_t,a_t),s_{t+1}]=P_t(s_{t+1}|s_t,a_t)$. The following Lemmas~\ref{lem:con_z},\ref{lem:con_sig},\ref{lem:con_g} provide their concentration properties.

\begin{lemma}\label{lem:con_z}
Let $\tilde{z_t}$ be defined as \eqref{eqn:off_z} in Algorithm~\ref{alg:OPDVRT}, where $\tilde{z_t}$ is the off-policy estimator of $P_t^\top(\cdot|s_t,a_t)V^{\text{in}}_{t+1}$ using $m$ episodic data. Then with probability $1-\delta$, we have 

\begin{equation}\label{eqn:tilde_z}
\left|\tilde{z_t}-\PP_t V^{\text{in}}_{t+1}\right|\leq \sqrt{\frac{4\cdot\sigma_{V^{\text{in}}_{t+1}}\cdot\log(HSA/\delta)}{m\cdot d_t^\mu}}+\frac{4V_{\max}}{3m\cdot d^\mu_t}\log(HSA/\delta), \quad \forall t\in[H]
\end{equation}
here $\tilde{z_t},\PP_t V^{\text{in}}_{t+1},\sigma_{V^{\text{in}}_{t+1}},d^\mu_t\in\mathbb{R}^{S\times A}$ are $S\times A$ column vectors and $\sqrt{\cdot}$ is elementwise operation.
\end{lemma}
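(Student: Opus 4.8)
The plan is to reduce \eqref{eqn:tilde_z} to a conditional Bernstein inequality after isolating the two branches of the estimator \eqref{eqn:off_z}. First I would dispose of the trivial branch: on the event $E^c_{m,t}=\{n_{s_t,a_t}\le \tfrac12 m\,d^\mu_t(s_t,a_t)\}$ the definition sets $\tilde z_t(s_t,a_t)=P_t^\top(\cdot|s_t,a_t)V^{\text{in}}_{t+1}=[\PP_t V^{\text{in}}_{t+1}](s_t,a_t)$, so the left-hand side of \eqref{eqn:tilde_z} is exactly $0$ and the bound holds vacuously. In particular this also covers all pairs with $d^\mu_t(s_t,a_t)=0$, since then $n_{s_t,a_t}=0$ almost surely and $E^c_{m,t}$ holds. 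It therefore remains to control the deviation on $E_{m,t}$.

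Next I would set up the conditioning. Let $\mathcal G_t$ be the $\sigma$-algebra generated by $\{(s^{(i)}_1,a^{(i)}_1,\dots,s^{(i)}_t,a^{(i)}_t)\}_{i=1}^m$. The count $n_{s_t,a_t}$, and hence the event $E_{m,t}$, are $\mathcal G_t$-measurable; and conditionally on $\mathcal G_t$ the next states $\{s^{(i)}_{t+1}: s^{(i)}_t=s_t,\ a^{(i)}_t=a_t\}$ are i.i.d. draws from $P_t(\cdot|s_t,a_t)$ --- this is precisely why the estimator aggregates data only at a fixed time step $t$, so that the within-episode dependence is broken. Consequently, on $E_{m,t}$ the quantity $\tilde z_t(s_t,a_t)-[\PP_t V^{\text{in}}_{t+1}](s_t,a_t)$ is the sample mean of $n_{s_t,a_t}$ i.i.d.\ mean-zero random variables $V^{\text{in}}_{t+1}(s^{(i)}_{t+1})-[\PP_t V^{\text{in}}_{t+1}](s_t,a_t)$, each bounded by $V_{\max}$ in absolute value and with variance $\sigma_{V^{\text{in}}_{t+1}}(s_t,a_t)$.

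Then I would apply Bernstein's inequality conditionally on $\mathcal G_t$: with probability at least $1-\delta'$,
\[
\bigl|\tilde z_t(s_t,a_t)-[\PP_t V^{\text{in}}_{t+1}](s_t,a_t)\bigr|\le \sqrt{\frac{2\,\sigma_{V^{\text{in}}_{t+1}}(s_t,a_t)\log(2/\delta')}{n_{s_t,a_t}}}+\frac{2V_{\max}\log(2/\delta')}{3\,n_{s_t,a_t}}.
\]
Since this holds conditionally on every realization of $\mathcal G_t$ lying in $E_{m,t}$, it holds unconditionally on $E_{m,t}$. On $E_{m,t}$ we have $n_{s_t,a_t}>\tfrac12 m\,d^\mu_t(s_t,a_t)$, hence $1/n_{s_t,a_t}<2/(m\,d^\mu_t(s_t,a_t))$; substituting this turns the constant $2$ into $4$ and $2/3$ into $4/3$, which matches \eqref{eqn:tilde_z} up to the precise value of the logarithmic factor. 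A union bound over the $HSA$ triples $(t,s_t,a_t)$, taking $\delta'$ of order $\delta/(HSA)$, then upgrades the estimate to a simultaneous bound over all $t\in[H]$, absorbing the constant discrepancy into $\log(HSA/\delta)$.

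The part requiring the most care is the conditional-independence bookkeeping: verifying that the relevant next-states are genuinely i.i.d.\ given $\mathcal G_t$ even though the offline data arrive as full trajectories, and that the Bernstein estimate can be ``de-conditioned'' cleanly because both $E_{m,t}$ and $n_{s_t,a_t}$ are $\mathcal G_t$-measurable. Everything else --- the choice of Bernstein constants, the replacement $1/n_{s_t,a_t}\to 2/(m\,d^\mu_t)$, and the union bound --- is routine.
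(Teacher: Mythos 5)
Your proposal is correct and follows essentially the same route as the paper's proof: handle the truncation branch trivially, apply Bernstein's inequality conditionally, use the event $E_{m,t}$ to replace $1/n_{s_t,a_t}$ by $2/(m\,d^\mu_t)$, de-condition via the law of total expectation, and union bound over $(t,s_t,a_t)$. Your choice to condition on the full $\sigma$-algebra $\mathcal G_t$ rather than on $n_{s_t,a_t}$ alone is a slightly cleaner way to justify the conditional i.i.d.\ structure, but it is the same argument in substance.
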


\begin{proof}
	First fix $s_t,a_t$. Let $E_t :=\{n_{s_t,a_t}\geq \frac{1}{2}m\cdot d^\mu_t(s_t,a_t)\}$, then by definition, 
	\[
	\tilde{z_t}(s_t,a_t)-P^\top_t(\cdot|s_t,a_t)V^{\text{in}}_{t+1} = \left(\frac{1}{n_{s_t,a_t}}\sum_{i=1}^m V^{\text{in}}_{t+1}(s^{(i)}_{t+1})\cdot\mathbf{1}[s^{(i)}_t=s_t,a^{(i)}_t=a_t]-P^\top_t(\cdot|s_t,a_t)V^{\text{in}}_{t+1}\right)\cdot\mathbf{1}(E_t).
	\]
	Next we conditional on $n_{s_t,a_t}$. Then from above expression and Bernstein inequality \ref{lem:bernstein_ineq} we have with probability at least $1-\delta$
	\begin{align*}
	&\left|\tilde{z_t}(s_t,a_t)-P^\top_t(\cdot|s_t,a_t)V^{\text{in}}_{t+1}\right| \\
	= &\left|\frac{1}{n_{s_t,a_t}}\sum_{i=1}^{n_{s_t,a_t}}V^{\text{in}}_{t+1}(s^{(i)}_{t+1}|s_t,a_t)-P^\top_t(\cdot|s_t,a_t)V^{\text{in}}_{t+1}\right|\cdot\mathbf{1}(E_t)\\
	\leq & \left(\sqrt{\frac{2\cdot\sigma_{V^{\text{in}}_{t+1}}(s_t,a_t)\cdot\log(1/\delta)}{n_{s_t,a_t}}}+\frac{2V_{\max}}{3n_{s_t,a_t}}\log(1/\delta)\right)\cdot\mathbf{1}(E_t)\\
	\leq & \sqrt{\frac{4\cdot\sigma_{V^{\text{in}}_{t+1}}(s_t,a_t)\cdot\log(1/\delta)}{m\cdot d_t^\mu(s_t,a_t)}}+\frac{4V_{\max}}{3m\cdot d^\mu_t(s_t,a_t)}\log(1/\delta)
	\end{align*}
	where we use shorthand notation $V^{\text{in}}_{t+1}(s^{(i)}_{t+1}|s_t,a_t)$ to denote the value of $V^{\text{in}}_{t+1}(s^{(i)}_{t+1})$ given $s^{(i)}_t=s_t$ and $a^{(i)}=a_t$. The condition $V^\text{in}_t\leq V_{\max}$ is guaranteed by Lemma~\ref{lem:mono}.
	Now we get rid of the conditional on $n_{s_t,a_t}$. Denote $A=\{\tilde{z_t}(s_t,a_t)-P^\top(\cdot|s_t,a_t)V^{\text{in}}_{t+1}\leq \sqrt{4\cdot\sigma_{V^{\text{in}}_{t+1}}(s_t,a_t)\cdot\log(1/\delta)/m\cdot d_t^\mu(s_t,a_t)}+\frac{4V_{\max}}{3m\cdot d^\mu_t(s_t,a_t)}\log(1/\delta)\}$, then equivalently we can rewrite above result as $\P(A|n_{s_t,a_t})\geq 1-\delta$. Note this is the same as $\E[\mathbf{1}(A)|n_{s_t,a_t}]\geq 1-\delta$, therefore by law of total expectation we have
	\[
	\P(A)=\E[\mathbf{1}(A)]=\E[\E[\mathbf{1}(A)|n_{s_t,a_t}]]\geq \E[1-\delta]=1-\delta,
	\]
	\textit{i.e.} for fixed $(s_t,a_t)$ we have with probability at least $1-\delta$, 
	\[
	\left|\tilde{z_t}(s_t,a_t)-P^\top(\cdot|s_t,a_t)V^{\text{in}}_{t+1}\right|\leq \sqrt{\frac{4\cdot\sigma_{V^{\text{in}}_{t+1}}(s_t,a_t)\cdot\log(1/\delta)}{m\cdot d_t^\mu(s_t,a_t)}}+\frac{4V_{\max}}{3m\cdot d^\mu_t(s_t,a_t)}\log(1/\delta)
	\]
	 Apply the union bound over all $t,s_t,a_t$, we obtain
	 \[
	 \left|\tilde{z_t}-\PP_t V^{\text{in}}_{t+1}\right|\leq \sqrt{\frac{4\cdot\sigma_{V^{\text{in}}_{t+1}}\cdot\log(HSA/\delta)}{m\cdot d_t^\mu}}+\frac{4V_{\max}}{3m\cdot d^\mu_t}\log(HSA/\delta),
	 \]
	 where the inequality is  element-wise and this is \eqref{eqn:tilde_z}.
\end{proof}

\begin{remark}
	Exquisite reader might notice under the Assumption~\ref{assu2} it is likely for some $(s_t,a_t)$ the corresponding $d^\mu_t(s_t,a_t)=0$, then the result \eqref{eqn:tilde_z} may fail to be meaningful (since less than infinity is trivial). However, in fact for those entries it is legitimate to set the right hand side of \eqref{eqn:tilde_z} equal to $0$. The reason comes from our construction in \eqref{eqn:off_z} that when $d^\mu_t(s_t,a_t)=0$, it must holds $n_{s_t,a_t}=0$, so in this case $\tilde{z_t}(s_t,a_t)=P^\top(\cdot|s_t,a_t)V^{\text{in}}_{t+1}$. Therefore, we keep writing in this fashion only for the ease of illustration.
\end{remark}

\begin{lemma}\label{lem:con_sig}
	Let $\tilde{\sigma}_{V^{\text{in}}_{t+1}}$ be defined as \eqref{eqn:off_z} in Algorithm~\ref{alg:OPDVRT}, i.e. the off-policy estimator of $\sigma_{V^{\text{in}}_{t+1}}(s_t,a_t)$ using $m$ episodic data. Then with probability $1-\delta$, we have 
	
	\begin{equation}\label{eqn:tilde_sig}
	\left|\tilde{\sigma}_{V^{\text{in}}_{t+1}}-\sigma_{V_{t+1}^{\text{in}}} \right|\leq 6 V_{\max}^2\sqrt{\frac{\log(4HSA/\delta)}{m\cdot d^\mu_t}}+\frac{4V_{\max}^2\log(4HSA/\delta)}{m\cdot d^\mu_t},\quad \forall t=1,...,H.
	\end{equation}
	
\end{lemma}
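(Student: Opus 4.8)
The plan is to mimic the template of the proof of Lemma~\ref{lem:con_z}: fix a triple $(t,s_t,a_t)$, argue conditionally on the visitation count $n_{s_t,a_t}$, remove the conditioning at the end via the law of total expectation, and only then union-bound over all $t,s_t,a_t$. On the event $E^c_{m,t}=\{n_{s_t,a_t}<\tfrac12 m\, d^\mu_t(s_t,a_t)\}$ the estimator is defined to equal exactly $\sigma_{V^{\text{in}}_{t+1}}(s_t,a_t)$, so the left-hand side of \eqref{eqn:tilde_sig} is $0$ and there is nothing to prove; likewise, as in the remark after Lemma~\ref{lem:con_z}, when $d^\mu_t(s_t,a_t)=0$ we necessarily have $n_{s_t,a_t}=0$ and the bound is vacuously valid (both sides $0$). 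So assume we are on $E_{m,t}$ with $d^\mu_t(s_t,a_t)>0$, abbreviate $\bar W := \frac{1}{n_{s_t,a_t}}\sum_{i=1}^m [V^{\text{in}}_{t+1}(s^{(i)}_{t+1})]^2\mathbf{1}[s^{(i)}_t=s_t,a^{(i)}_t=a_t]$, and recall $\sigma_{V^{\text{in}}_{t+1}}(s_t,a_t)=\E[(V^{\text{in}}_{t+1}(s_{t+1}))^2\mid s_t,a_t]-(P_t^\top(\cdot|s_t,a_t)V^{\text{in}}_{t+1})^2$. Then on $E_{m,t}$,
\[
\tilde{\sigma}_{V^{\text{in}}_{t+1}}-\sigma_{V^{\text{in}}_{t+1}}=\Big(\bar W-\E[(V^{\text{in}}_{t+1})^2\mid s_t,a_t]\Big)-\Big(\tilde z_t^2-(P_t^\top(\cdot|s_t,a_t)V^{\text{in}}_{t+1})^2\Big),
\]
and I would bound the two parenthesized terms separately.

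For the first term, by Lemma~\ref{lem:mono} we have $0\le V^{\text{in}}_{t+1}\le V_{\max}$, so conditionally on $n_{s_t,a_t}$ the summands $[V^{\text{in}}_{t+1}(s^{(i)}_{t+1})]^2$ are i.i.d.\ and lie in $[0,V_{\max}^2]$; Hoeffding's inequality gives $|\bar W-\E[(V^{\text{in}}_{t+1})^2\mid s_t,a_t]|\le V_{\max}^2\sqrt{\log(2/\delta)/(2 n_{s_t,a_t})}$ with probability $1-\delta$, and on $E_{m,t}$ (where $n_{s_t,a_t}\ge\tfrac12 m\,d^\mu_t$) this is at most $V_{\max}^2\sqrt{\log(2/\delta)/(m\,d^\mu_t(s_t,a_t))}$. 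For the second term I would use the factorization $\tilde z_t^2-(P_t^\top V^{\text{in}}_{t+1})^2=(\tilde z_t-P_t^\top V^{\text{in}}_{t+1})(\tilde z_t+P_t^\top V^{\text{in}}_{t+1})$: the second factor is at most $2V_{\max}$, and the first factor is controlled by Lemma~\ref{lem:con_z}, so using $\sigma_{V^{\text{in}}_{t+1}}\le V_{\max}^2$ one gets $|\tilde z_t^2-(P_t^\top V^{\text{in}}_{t+1})^2|\le 4V_{\max}^2\sqrt{\log(HSA/\delta)/(m\,d^\mu_t)}+\tfrac{8}{3}V_{\max}^2\log(HSA/\delta)/(m\,d^\mu_t)$.

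Finally I would combine the two bounds on the intersection of the corresponding sub-events (each allocated failure probability $\delta/2$), remove the conditioning on $n_{s_t,a_t}$ exactly as in Lemma~\ref{lem:con_z}, and union-bound over the $HSA$ triples $(t,s_t,a_t)$ — this is what upgrades $\log(1/\delta)$ to $\log(4HSA/\delta)$. Bounding the square-root coefficient $1+4\le 6$ and the linear coefficient $\tfrac{8}{3}\le 4$ (and using $\log(2/\delta),\log(HSA/\delta)\le\log(4HSA/\delta)$) yields $|\tilde{\sigma}_{V^{\text{in}}_{t+1}}-\sigma_{V^{\text{in}}_{t+1}}|\le 6V_{\max}^2\sqrt{\log(4HSA/\delta)/(m\,d^\mu_t)}+4V_{\max}^2\log(4HSA/\delta)/(m\,d^\mu_t)$, which is \eqref{eqn:tilde_sig}. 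There is no serious analytic obstacle here: the only care needed is the bookkeeping of constants and failure probabilities across the Hoeffding sub-event and the Lemma~\ref{lem:con_z} sub-event, plus the handling of the $E^c_{m,t}$ and $d^\mu_t=0$ degenerate cases that were already dealt with for $\tilde z_t$.
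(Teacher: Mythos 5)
Your proof is correct and follows essentially the same route as the paper's: the same two-term decomposition of $\tilde{\sigma}_{V^{\text{in}}_{t+1}}-\sigma_{V^{\text{in}}_{t+1}}$, Hoeffding conditionally on $n_{s_t,a_t}$ for the second-moment term, de-conditioning via the law of total expectation, and a final union bound over the $HSA$ triples. The only (harmless) difference is in the cross term $\tilde z_t^2-(P_t^\top(\cdot|s_t,a_t) V^{\text{in}}_{t+1})^2$: you recycle the Bernstein bound of Lemma~\ref{lem:con_z} together with the factorization $(\tilde z_t-P_t^\top V^{\text{in}}_{t+1})(\tilde z_t+P_t^\top V^{\text{in}}_{t+1})$ and $\sigma_{V^{\text{in}}_{t+1}}\le V_{\max}^2$, whereas the paper re-derives a fresh Hoeffding bound on the first-moment estimator and applies $|a^2-b^2|\le 2bc+c^2$; both land within the stated constants.
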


\begin{proof}
	From the definition we have for fixed $(s_t,a_t)$
	\begin{align*}
	\tilde{\sigma}_{V^{\text{in}}_{t+1}}(s_t,a_t)-\sigma_{V_{t+1}^{\text{in}}}(s_t,a_t) =& \left(\frac{1}{n_{s_t,a_t}}\sum_{i=1}^{n_{s_t,a_t}}V^{\text{in}}_{t+1}(s^{(i)}_{t+1}|s_t,a_t)^2-P^\top(\cdot|s_t,a_t)(V^{\text{in}}_{t+1})^2\right)\mathbf{1}(E_t)\\
	+&\left(\left[\frac{1}{n_{s_t,a_t}}\sum_{i=1}^{n_{s_t,a_t}}V^{\text{in}}_{t+1}(s^{(i)}_{t+1}|s_t,a_t)\right]^2-\left[P^\top(\cdot|s_t,a_t)V^{\text{in}}_{t+1}\right]^2\right)\mathbf{1}(E_t)\\
	\end{align*}
	By using the same conditional on $n_{s_t, a_t}$ as in Lemma~\ref{lem:con_z}, applying Hoeffding's inequality and law of total expectation, we obtain with probability $1-\delta/2$, the first term in above is bounded by
	\begin{equation}\label{eqn:first_diff}
	\begin{aligned}
	&\left(\frac{1}{n_{s_t,a_t}}\sum_{i=1}^{n_{s_t,a_t}}V^{\text{in}}_{t+1}(s^{(i)}_{t+1}|s_t,a_t)^2-P^\top(\cdot|s_t,a_t)(V^{\text{in}}_{t+1})^2\right)\mathbf{1}(E_t)\\
	&\leq V_{\max}^2\sqrt{\frac{2\log(4/\delta)}{n_{s_t,a_t}}}\cdot\mathbf{1}(E_t)\leq 2V_{\max}^2\sqrt{\frac{\log(4/\delta)}{m\cdot d^\mu_t(s_t,a_t)}},
	\end{aligned}
	\end{equation}
	and similarly with probability $1-\delta/2$, 
	\begin{equation}\label{eqn:sig_diff}
	\left(\frac{1}{n_{s_t,a_t}}\sum_{i=1}^{n_{s_t,a_t}}V^{\text{in}}_{t+1}(s^{(i)}_{t+1}|s_t,a_t)-P^\top(\cdot|s_t,a_t)V^{\text{in}}_{t+1}\right)\mathbf{1}(E_t)\leq 2V_{\max}\sqrt{\frac{\log(4/\delta)}{m\cdot d^\mu_t(s_t,a_t)}}.
	\end{equation}
	Note for $a,b,c>0$, if $|a-b|\leq c$, then $|a^2-b^2|=|a-b|\cdot|a+b|\leq |a-b|\cdot(|a|+|b|)\leq|a-b|\cdot(2|b|+c)\leq c\cdot(2|b|+c)=2bc+c^2 $, therefore by \eqref{eqn:sig_diff} we have 
	\begin{equation}\label{eqn:second_diff}
	\begin{aligned}
	&\left(\left[\frac{1}{n_{s_t,a_t}}\sum_{i=1}^{n_{s_t,a_t}}V^{\text{in}}_{t+1}(s^{(i)}_{t+1}|s_t,a_t)\right]^2-\left[P^\top(\cdot|s_t,a_t)V^{\text{in}}_{t+1}\right]^2\right)\mathbf{1}(E_t)\\
	\leq &4P^\top(\cdot|s_t,a_t)V^{\text{in}}_{t+1}\cdot V_{\max}\sqrt{\frac{\log(4/\delta)}{m\cdot d^\mu_t(s_t,a_t)}}+\frac{4V_{\max}^2\log(4/\delta)}{m\cdot d^\mu_t(s_t,a_t)}\\
	\leq &4 V_{\max}^2\sqrt{\frac{\log(4/\delta)}{m\cdot d^\mu_t(s_t,a_t)}}+\frac{4V_{\max}^2\log(4/\delta)}{m\cdot d^\mu_t(s_t,a_t)}
	\end{aligned}
	\end{equation}
	where the last inequality comes from $|P^\top(\cdot|s_t,a_t)V^{\text{in}}_{t+1}|\leq ||P(\cdot|s_t,a_t)||_1 ||V^{\text{in}}_{t+1}||_\infty\leq V_{\max}$. Combining \eqref{eqn:first_diff}, \eqref{eqn:second_diff} and a union bound, we have with probability $1-\delta$, 
	\[
	\left|\tilde{\sigma}_{V^{\text{in}}_{t+1}}(s_t,a_t)-\sigma_{V_{t+1}^{\text{in}}}(s_t,a_t) \right|\leq 6 V_{\max}^2\sqrt{\frac{\log(4/\delta)}{m\cdot d^\mu_t(s_t,a_t)}}+\frac{4V_{\max}^2\log(4/\delta)}{m\cdot d^\mu_t(s_t,a_t)},
	\]
	apply again the union bound over $t,s_t,a_t$ gives the desired result.
	
\end{proof}

\begin{lemma}\label{lem:con_g}
	Fix time $t\in[H]$. Let $g_t$ be the estimator in \eqref{eqn:off_g} in Algorithm~\ref{alg:OPDVRT}. Then if $||V_{t+1}-V^{\text{in}}_{t+1}||_\infty\leq 2u^{\text{in}}$, then with probability $1-\delta/H$, 
	\[
	 \mathbf{0}\leq \PP_t[V_{t+1}-V^{\text{in}}_{t+1}]-g_t\leq 8u^{\text{in}}\sqrt{\frac{\log(2HSA/\delta)}{l d^\mu_t}}
	\]
\end{lemma}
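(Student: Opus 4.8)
The plan is to adapt the argument of Lemma~\ref{lem:con_z}, replacing Bernstein's inequality by Hoeffding's: for the correction term $V_{t+1}-V^{\text{in}}_{t+1}$ we only control the sup-norm ($\le 2u^{\text{in}}$) rather than a small variance, so a range-based bound is the natural tool. Fix $t$ and a pair $(s_t,a_t)$, and recall $g_t=\tilde g_t-f$ with $\tilde g_t$ as in \eqref{eqn:off_g} and $f(s_t,a_t)=4u^{\text{in}}\sqrt{\log(2HSA/\delta)/(l\,d^\mu_t(s_t,a_t))}$.

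\emph{The off-event.} On $E^c_{l,t}=\{n'_{s_t,a_t}\le\tfrac12 l\,d^\mu_t(s_t,a_t)\}$, the definition \eqref{eqn:off_g} sets $\tilde g_t(s_t,a_t)=\PP_t[V_{t+1}-V^{\text{in}}_{t+1}](s_t,a_t)$ exactly, so $0\le\PP_t[V_{t+1}-V^{\text{in}}_{t+1}](s_t,a_t)-g_t(s_t,a_t)=f(s_t,a_t)\le 2f(s_t,a_t)$ holds deterministically; the desired two-sided bound is immediate there (the degenerate case $d^\mu_t(s_t,a_t)=0$ being immaterial, as in the remark after Lemma~\ref{lem:con_z}).

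\emph{Concentration on $E_{l,t}$.} I would condition on $n'_{s_t,a_t}$. Since the estimator aggregates only step-$t$ transitions, conditionally on which episodes visit $(s_t,a_t)$ the summands $V_{t+1}(s'^{(j)}_{t+1})-V^{\text{in}}_{t+1}(s'^{(j)}_{t+1})$ are i.i.d.\ with mean $\PP_t[V_{t+1}-V^{\text{in}}_{t+1}](s_t,a_t)$ and values in an interval of width at most $4u^{\text{in}}$ (by the hypothesis). Hoeffding's inequality then gives, with conditional probability $\ge 1-\delta/(HSA)$,
\[
\bigl|\tilde g_t(s_t,a_t)-\PP_t[V_{t+1}-V^{\text{in}}_{t+1}](s_t,a_t)\bigr|\le 4u^{\text{in}}\sqrt{\frac{\log(2HSA/\delta)}{2\,n'_{s_t,a_t}}}\le 4u^{\text{in}}\sqrt{\frac{\log(2HSA/\delta)}{l\,d^\mu_t(s_t,a_t)}}=f(s_t,a_t),
\]
the second step using $n'_{s_t,a_t}>\tfrac12 l\,d^\mu_t(s_t,a_t)$ on $E_{l,t}$. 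I would then drop the conditioning on $n'_{s_t,a_t}$ via the law of total expectation, exactly as in Lemma~\ref{lem:con_z}. On this event $g_t=\tilde g_t-f$ yields $g_t(s_t,a_t)\le\PP_t[V_{t+1}-V^{\text{in}}_{t+1}](s_t,a_t)$ (the lower bound) and $\PP_t[V_{t+1}-V^{\text{in}}_{t+1}](s_t,a_t)-g_t(s_t,a_t)\le 2f(s_t,a_t)=8u^{\text{in}}\sqrt{\log(2HSA/\delta)/(l\,d^\mu_t(s_t,a_t))}$ (the upper bound). Combining the two cases and taking a union bound over the $SA$ pairs — only a single time step is involved — gives the claimed $1-\delta/H$ statement.

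\emph{Expected obstacle.} The computations are routine; the one point needing care is the conditional-independence claim invoked before Hoeffding, namely that given the fixed $V_{t+1},V^{\text{in}}_{t+1}$ and the visiting episodes, the next states $s'^{(j)}_{t+1}\sim P_t(\cdot|s_t,a_t)$ are genuinely i.i.d.\ across $j$. This is precisely what the single-time-step aggregation in \eqref{eqn:off_g} buys us (the samples come from distinct episodes), and it is the non-stationary counterpart of the martingale argument needed for Theorem~\ref{thm:main_stationary}. When the lemma is applied inside the main proof, one further has to justify treating the backward-pass output $V_{t+1}$ as fixed and independent of the step-$t$ data of $\cD_2$; this is handled by the data ordering in the backward induction together with the fictitious-estimator device.
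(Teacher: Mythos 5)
Your proof is correct and follows essentially the same route as the paper's: a Hoeffding bound on the centered sum conditional on $n'_{s_t,a_t}$, using $n'_{s_t,a_t}>\tfrac12 l\,d^\mu_t(s_t,a_t)$ on $E_{l,t}$ and the range bound $\|V_{t+1}-V^{\text{in}}_{t+1}\|_\infty\le 2u^{\text{in}}$, followed by a union bound over the $SA$ pairs to obtain $|\tilde g_t - \PP_t[V_{t+1}-V^{\text{in}}_{t+1}]|\le f$ and hence the two-sided conclusion. The only cosmetic differences are that you handle the off-event $E^c_{l,t}$ and the de-conditioning on $n'_{s_t,a_t}$ explicitly, whereas the paper leaves these implicit via the indicator and a reference to the argument of Lemma~\ref{lem:con_z}.
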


\begin{proof}
	Recall $g_t,d^\mu_t$ are vectors. By definition of $g_t(s_t,a_t)$, applying Hoeffding's inequality we obtain with probability $1-\delta/H$
	\begin{align*}
	&g_t(s_t,a_t)+f(s_t,a_t)-P^\top(\cdot|s_t,a_t)[V_{t+1}-V^{\text{in}}_{t+1}]\\
	=&\left(\frac{1}{n'_{s_t,a_t}}\sum_{j=1}^l\left[V_{t+1}(s'^{(j)}_{t+1}|s_t,a_t)-V_{t+1}^{\text{in}}(s'^{(j)}_{t+1}|s_t,a_t)\right]-P^\top(\cdot|s_t,a_t)[V_{t+1}-V^{\text{in}}_{t+1}]\right)\cdot\mathbf{1}(E_t)\\
	\leq & \left(||V_{t+1}-V_{t+1}^{\text{in}}||_\infty\sqrt{\frac{2\log(2H/\delta)}{n'_{s_t,a_t}}}\right)\cdot\mathbf{1}(E_t)\\
	\leq & ||V_{t+1}-V_{t+1}^{\text{in}}||_\infty\sqrt{\frac{4\log(2H/\delta)}{l\cdot d^\mu_t(s_t,a_t)}}
	\end{align*}
	Now use assumption $||V_{t+1}-V_{t+1}^{\text{in}}||_\infty\leq 2u^{\text{in}}$ and a union bound over $s_t,a_t$, we have with probability $1-\delta/H$,
	\begin{equation}
	\left|g_t+f-\PP[V_{t+1}-V^{\tin}_{t+1}]\right|\leq 4u^{\text{in}}\sqrt{\frac{\log(2HSA/\delta)}{l d^\mu_t}}
	\end{equation}
	use $f=4u^{\text{in}}\sqrt{{\log(2HSA/\delta)}/{l d^\mu_t}}$, we obtain the stated result.
\end{proof}

\begin{remark}
	The marginal state-action distribution $d^\mu_t$ entails the hardness in off-policy setting. If the current logging policy $\mu$ satisfies there exists some $s_t,a_t$ such that $d^\mu_t(s_t,a_t)$ is very small, then learning the MDP using this off-policy data will be generically hard, unless $d^{\pi^*}_t(s_t,a_t)$ is also relatively small for this $s_t,a_t$, see analysis in the following sections.
\end{remark}

\subsection{Iterative update analysis}\label{sec:iterative_non}

The goal of iterative update is to obtain the recursive relation: $Q^\star_t-Q_t\leq \PP^{\pi^\star}_t[Q^\star_{t+1}-Q_{t+1}]+{\xi}_t$, where $\PP^{\pi^\star}_t\in\R^{SA\times SA}$ is a matrix. We control the error propagation term $\xi_t$  to be small enough.

\begin{lemma}\label{lem:VR_incremental}
	Let $Q^\star$ be the optimal $Q$-value satisfying $Q^\star_t=r+\PP_t V^\star_{t+1}$ and $\pi^\star$ is one optimal policy satisfying Assumption~\ref{assu2}. Let $\pi$ and $V_t$ be the \textbf{Return} of inner loop in Algorithm~\ref{alg:OPDVRT}, and recall $V_{H+1}=\mathbf{0}\in\R^{S}$, $Q_{H+1}=\mathbf{0}\in\R^{S\times A}$. We have with probability $1-\delta$, for all $t\in[H]$,
	\begin{align*}
		V^{\text{in}}_t\leq V_t\leq\mathcal{T}_{\pi_t}V_{t+1}\leq V^\star_t,\quad Q_t\leq r+\PP_t V_{t+1},\quad \text{and}\quad Q^\star_t-Q_t\leq \PP^{\pi^\star}_t[Q^\star_{t+1}-Q_{t+1}]+{\xi}_t,
	\end{align*} 
	where \begin{align*}
	{\xi}_t\leq&8u^{\tin}\sqrt{\frac{\log(2HSA/\delta)}{l d^\mu_t}}+\sqrt{\frac{16\cdot{\sigma}_{V^{\star}_{t+1}}\cdot\log(4HSA/\delta)}{m\cdot d_t^\mu}}+\sqrt{\frac{16\cdot\log(4HSA/\delta)}{m\cdot d_t^\mu}}\cdot u^{\tin}\\
	+& V_{\max}\left[8\sqrt{6}\cdot\left(\frac{\log(16HSA/\delta)}{m\cdot d^\mu_t}\right)^{3/4}+\frac{56\log(16HSA/\delta)}{3m\cdot d^\mu_t}\right].
	\end{align*} Here $\PP^{\pi^\star}\in\R^{S\cdot A\times S\cdot A}$ with $\PP^{\pi^\star}_{(s_t,a_t),(s_{t+1},a_{t+1})}=d^{\pi^\star}(s_{t+1},a_{t+1}|s_t,a_t)$.
\end{lemma}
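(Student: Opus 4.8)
The plan is to establish all four conclusions simultaneously by a backward induction on $t$, running from $t=H$ down to $t=1$, the degenerate case $t=H+1$ being supplied by $V_{H+1}=\mathbf{0}$, $Q_{H+1}=\mathbf{0}$. First I would fix a $1-\delta$ event $\mathcal{E}$ on which the displayed deviation bounds of Lemmas~\ref{lem:con_z}, \ref{lem:con_sig}, and~\ref{lem:con_g} all hold for every $t$ (apply each at level $1-\delta/3$, with a union over $t$ for Lemma~\ref{lem:con_g}); everything below is argued on $\mathcal{E}$, and also uses the inner-loop input invariants $V^{\text{in}}_t\le\mathcal{T}_{\pi^{\text{in}}_t}V^{\text{in}}_{t+1}$ for all $t$ (hence $V^{\text{in}}_t\le V^\star_t$ by Lemma~\ref{lem:mono}) and $\|V^\star_t-V^{\text{in}}_t\|_\infty\le u^{\text{in}}$. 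The preliminary step is to calibrate the error bar $e(s_t,a_t)$: starting from Lemma~\ref{lem:con_z} and converting the population variance $\sigma_{V^{\text{in}}_{t+1}}$ into the empirical $\tilde{\sigma}_{V^{\text{in}}_{t+1}}$ via Lemma~\ref{lem:con_sig} and the elementary bound $\sqrt{a+b+c}\le\sqrt{a}+\sqrt{b}+\sqrt{c}$, one reproduces precisely the $(\iota/(m d^\mu_t))^{3/4}$ and $\iota/(m d^\mu_t)$ terms built into $e$, so that $|\tilde{z}_t-\PP_tV^{\text{in}}_{t+1}|\le e$ and hence $z_t=\tilde{z}_t-e\le\PP_tV^{\text{in}}_{t+1}$ for all $t$. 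Running the same inequality in reverse, $\PP_tV^{\text{in}}_{t+1}-z_t\le e+|\tilde{z}_t-\PP_tV^{\text{in}}_{t+1}|$, and converting $\tilde{\sigma}_{V^{\text{in}}_{t+1}}$ to $\sigma_{V^\star_{t+1}}$ via the seminorm bound $\sqrt{\sigma_V(s,a)}\le\sqrt{\sigma_{V'}(s,a)}+\|V-V'\|_\infty$ together with $\|V^{\text{in}}_{t+1}-V^\star_{t+1}\|_\infty\le u^{\text{in}}$, one obtains $\PP_tV^{\text{in}}_{t+1}-z_t\le\sqrt{16\sigma_{V^\star_{t+1}}\iota/(m d^\mu_t)}+\sqrt{16\iota/(m d^\mu_t)}\,u^{\text{in}}+V_{\max}[\,8\sqrt{6}(\iota/(m d^\mu_t))^{3/4}+O(\iota/(m d^\mu_t))\,]$, which is exactly the part of $\xi_t$ not coming from $g_t$.

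\emph{The induction.} Assume the four conclusions hold at every step $u>t$. Applying Lemma~\ref{lem:mono} to the steps $\{t+1,\dots,H\}$ gives $V_{t+1}\le V^\pi_{t+1}\le V^\star_{t+1}$, so $\|V_{t+1}-V^{\text{in}}_{t+1}\|_\infty\le u^{\text{in}}\le 2u^{\text{in}}$ (using $V^{\text{in}}_{t+1}\le V_{t+1}$ and $\|V^\star_{t+1}-V^{\text{in}}_{t+1}\|_\infty\le u^{\text{in}}$), and Lemma~\ref{lem:con_g} then yields $\mathbf{0}\le\PP_t[V_{t+1}-V^{\text{in}}_{t+1}]-g_t\le 8u^{\text{in}}\sqrt{\iota/(l d^\mu_t)}$. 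Now $V^{\text{in}}_t\le V_t$ is immediate from $V_t=\max(V_{Q_t},V^{\text{in}}_t)$; $Q_t=r+z_t+g_t\le r+\PP_tV^{\text{in}}_{t+1}+\PP_t[V_{t+1}-V^{\text{in}}_{t+1}]=r+\PP_tV_{t+1}$ from the preliminary step and the $g_t$ upper bound; and $V_t\le\mathcal{T}_{\pi_t}V_{t+1}$ splits into two cases — if $V_t(s)=V_{Q_t}(s)=Q_t(s,\pi_t(s))$, use $Q_t\le r+\PP_tV_{t+1}$; if $V_t(s)=V^{\text{in}}_t(s)$, then $\pi_t(s)=\pi^{\text{in}}_t(s)$ and the claim follows from $V_{t+1}\ge V^{\text{in}}_{t+1}$ and the input invariant $V^{\text{in}}_t\le\mathcal{T}_{\pi^{\text{in}}_t}V^{\text{in}}_{t+1}$. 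Once this is established for all $t$, a final application of Lemma~\ref{lem:mono} upgrades it to $V_t\le V^\pi_t\le V^\star_t$.

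\emph{Error propagation.} Write $Q^\star_t-Q_t=(\PP_tV^{\text{in}}_{t+1}-z_t)+(\PP_t[V^\star_{t+1}-V^{\text{in}}_{t+1}]-g_t)$. The first bracket was bounded in the preliminary step. Splitting the second as $\PP_t[V^\star_{t+1}-V_{t+1}]+(\PP_t[V_{t+1}-V^{\text{in}}_{t+1}]-g_t)$ and using the $g_t$ bound gives $\le\PP_t[V^\star_{t+1}-V_{t+1}]+8u^{\text{in}}\sqrt{\iota/(l d^\mu_t)}$. For each state $s$, $V^\star_{t+1}(s)-V_{t+1}(s)\le Q^\star_{t+1}(s,\pi^\star_{t+1}(s))-Q_{t+1}(s,\pi^\star_{t+1}(s))$ since $V_{t+1}(s)\ge\max_aQ_{t+1}(s,a)\ge Q_{t+1}(s,\pi^\star_{t+1}(s))$; collecting over $s$ and left-multiplying by the nonnegative matrix $\PP_t$ gives $\PP_t[V^\star_{t+1}-V_{t+1}]\le\PP^{\pi^\star}_t[Q^\star_{t+1}-Q_{t+1}]$, where $\PP^{\pi^\star}_t$ is the one-step $\pi^\star$-transition matrix on state-action pairs appearing in the statement (the composition of $\PP_t$ with greedy selection of $\pi^\star_{t+1}$). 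Summing the three contributions gives $Q^\star_t-Q_t\le\PP^{\pi^\star}_t[Q^\star_{t+1}-Q_{t+1}]+\xi_t$; the base case $t=H$ is the same computation with $Q^\star_{H+1}-Q_{H+1}=\mathbf{0}$.

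\emph{Main obstacle.} The delicate part is the square-root bookkeeping in the preliminary step: threading $\sigma_{V^{\text{in}}_{t+1}}\to\tilde{\sigma}_{V^{\text{in}}_{t+1}}\to\sigma_{V^\star_{t+1}}$ so that the \emph{empirical} error bar $e$ — which only has access to $\tilde{\sigma}$ — provably dominates the deviation of $\tilde{z}_t$, while after being reinjected into $\PP_tV^{\text{in}}_{t+1}-z_t$ it collapses to a bound whose leading term is the population variance $\sigma_{V^\star_{t+1}}$ of the \emph{optimal} value function; this is what later lets the $O(H^2)$ variance summation of Lemma~\ref{lem:H3} be invoked in the downstream analysis. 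A secondary subtlety is that the precondition $\|V_{t+1}-V^{\text{in}}_{t+1}\|_\infty\le 2u^{\text{in}}$ of Lemma~\ref{lem:con_g} is not available a priori but must be produced inside the induction, by invoking Lemma~\ref{lem:mono} on the already-processed later time steps, so the order in which the four conclusions are proved at each step is not free.
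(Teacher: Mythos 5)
Your proposal is correct and follows essentially the same route as the paper's proof: the same calibration of $e$ via Lemmas~\ref{lem:con_z} and~\ref{lem:con_sig} with the seminorm conversion $\tilde{\sigma}_{V^{\tin}_{t+1}}\to\sigma_{V^\star_{t+1}}$, the same backward induction establishing the precondition of Lemma~\ref{lem:con_g} and the two-case argument for $V_t\le\mathcal{T}_{\pi_t}V_{t+1}$, and the same decomposition of $Q^\star_t-Q_t$ with the key step $\PP_t V_{t+1}\ge \PP_t V_{Q_{t+1}}\ge\PP^{\pi^\star}_tQ_{t+1}$. The only deviations are cosmetic bookkeeping (a single upfront union bound versus the paper's chain-rule accumulation of $\delta/H$ failures inside the induction, and deriving $V_t\le V^\star_t$ from Lemma~\ref{lem:mono} at the end rather than from $Q_t\le Q^\star_t$ within the induction), neither of which affects correctness.
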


\begin{proof}
	\textbf{Step1:} For any $a,b\geq0$, we have the basic inequality $\sqrt{a+b}\leq\sqrt{a}+\sqrt{b}$, and apply to Lemma~\ref{lem:con_sig} we have with probability $1-\delta/4$,
	\begin{equation}\label{eqn:diff_sig}
	\sqrt{\left|\tilde{\sigma}_{V^{\text{in}}_{t+1}}-\sigma_{V_{t+1}^{\text{in}}} \right|}\leq  V_{\max}\cdot\left(\frac{36\log(16HSA/\delta)}{m\cdot d^\mu_t}\right)^{1/4}+2V_{\max}\cdot\sqrt{\frac{\log(16HSA/\delta)}{m\cdot d^\mu_t}},\quad \forall t=1,...,H.
	\end{equation}
	Next, similarly for any $a,b\geq 0$, we have $\sqrt{a}\leq\sqrt{|a-b|}+\sqrt{b}$,  conditional on above then apply to Lemma~\ref{lem:con_z} (with probability $1-\delta/4$) and we obtain with probability $1-\delta/2$,
	\begin{align*}
	&\left|\tilde{z_t}-\PP_t V^{\text{in}}_{t+1}\right|\\
	\leq &\sqrt{\frac{4\cdot\sigma_{V^{\text{in}}_{t+1}}\cdot\log(4HSA/\delta)}{m\cdot d_t^\mu}}+\frac{4V_{\max}}{3m\cdot d^\mu_t}\log(4HSA/\delta)\\
	\leq&\left(\sqrt{\tilde{\sigma}_{V^{\text{in}}_{t+1}}}+\sqrt{\left|\tilde{\sigma}_{V^{\text{in}}_{t+1}}-\sigma_{V^{\text{in}}_{t+1}}\right|}\right)\sqrt{\frac{4\cdot\log(4HSA/\delta)}{m\cdot d_t^\mu}}+\frac{4V_{\max}}{3m\cdot d^\mu_t}\log(4HSA/\delta)\\
	= &\sqrt{\frac{4\cdot\tilde{\sigma}_{V^{\text{in}}_{t+1}}\cdot\log(4HSA/\delta)}{m\cdot d_t^\mu}}+\left(\sqrt{\left|\tilde{\sigma}_{V^{\text{in}}_{t+1}}-\sigma_{V^{\text{in}}_{t+1}}\right|}\right)\sqrt{\frac{4\cdot\log(4HSA/\delta)}{m\cdot d_t^\mu}}+\frac{4V_{\max}}{3m\cdot d^\mu_t}\log(4HSA/\delta)\\
	\leq &\sqrt{\frac{4\cdot\tilde{\sigma}_{V^{\text{in}}_{t+1}}\cdot\log(4HSA/\delta)}{m\cdot d_t^\mu}}+2\sqrt{6}\cdot V_{\max}\cdot\left(\frac{\log(16HSA/\delta)}{m\cdot d^\mu_t}\right)^{3/4}+\frac{16V_{\max}}{3m\cdot d^\mu_t}\log(16HSA/\delta).
	\end{align*}
	Since $e = \sqrt{4\cdot\tilde{\sigma}_{V^{\tin}_{t+1}}\cdot\log(4HSA/\delta)/(m\cdot d_t^\mu)}+2\sqrt{6}\cdot V_{\max}\cdot\left(\log(16HSA/\delta)/(m\cdot d^\mu_t)\right)^{3/4}+16V_{\max}\log(16HSA/\delta)/(3m\cdot d^\mu_t)$, from above we have 
	\begin{equation}\label{eqn:upper_z}
	z_t=\tilde{z}_t-e\leq \PP_t V^{\tin}_{t+1},
	\end{equation}
	and 
	\begin{equation}\label{eqn:lower_z}
	z_t\geq \PP_t V^{\tin}_{t+1}-2e.
	\end{equation}
	
	Next note $\sqrt{\sigma_{(\cdot)}}$ is a norm, so by norm triangle inequality (for the second inequality) and $\sqrt{a}\leq\sqrt{b}+\sqrt{|b-a|}$ with \eqref{eqn:diff_sig} (for the first inequality) we have 
	\begin{align*}
	\sqrt{\tilde{\sigma}_{V^{\tin}_{t+1}}}\leq&\sqrt{\sigma_{V_{t+1}^{\tin}}} +  V_{\max}\left[\left(\frac{36\log(16HSA/\delta)}{m\cdot d^\mu_t}\right)^{1/4}+\sqrt{\frac{4\log(16HSA/\delta)}{m\cdot d^\mu_t}}\right]\\
	\leq&\sqrt{\sigma_{V_{t+1}^{\star}}} +\sqrt{\sigma_{V_{t+1}^{\star}-V_{t+1}^{\tin}}} +  V_{\max}\left[\left(\frac{36\log(16HSA/\delta)}{m\cdot d^\mu_t}\right)^{1/4}+\sqrt{\frac{4\log(16HSA/\delta)}{m\cdot d^\mu_t}}\right]\\
	\leq&\sqrt{\sigma_{V_{t+1}^{\star}}} +\sqrt{\PP_t(V_{t+1}^{\star}-V_{t+1}^{\tin})^2} +  V_{\max}\left[\left(\frac{36\log(16HSA/\delta)}{m\cdot d^\mu_t}\right)^{1/4}+\sqrt{\frac{4\log(16HSA/\delta)}{m\cdot d^\mu_t}}\right]\\
	\leq&\sqrt{\sigma_{V_{t+1}^{\star}}} +||V_{t+1}^{\star}-V_{t+1}^{\tin}||_\infty\cdot\mathbf{1} +  V_{\max}\left[\left(\frac{36\log(16HSA/\delta)}{m\cdot d^\mu_t}\right)^{1/4}+\sqrt{\frac{4\log(16HSA/\delta)}{m\cdot d^\mu_t}}\right]\\
	\leq&\sqrt{\sigma_{V_{t+1}^{\star}}} +u^{\tin}\cdot\mathbf{1} +  V_{\max}\left[\left(\frac{36\log(16HSA/\delta)}{m\cdot d^\mu_t}\right)^{1/4}+\sqrt{\frac{4\log(16HSA/\delta)}{m\cdot d^\mu_t}}\right]\\
	\end{align*}
	Plug this back to \eqref{eqn:lower_z} we get
	\begin{equation}\label{eqn:lower_zt}
	\begin{aligned}
	z_t\geq& \PP_t V^{\tin}_{t+1}-\sqrt{\frac{16\cdot{\sigma}_{V^{\star}_{t+1}}\cdot\log(4HSA/\delta)}{m\cdot d_t^\mu}}-\sqrt{\frac{16\cdot\log(4HSA/\delta)}{m\cdot d_t^\mu}}\cdot u^{\tin}\\
	-& V_{\max}\left[8\sqrt{6}\cdot\left(\frac{\log(16HSA/\delta)}{m\cdot d^\mu_t}\right)^{3/4}+\frac{56\log(16HSA/\delta)}{3m\cdot d^\mu_t}\right].
	\end{aligned}
	\end{equation}
	To sum up, so far we have shown that \eqref{eqn:upper_z}, \eqref{eqn:lower_zt} hold with probability $1-\delta/2$ and we condition on that. 
	
	\textbf{Step2:} Next we prove 
	\begin{equation}\label{eqn:inter}
	 Q_t\leq r+\PP_t V_{t+1},\quad V^{\tin}_t\leq V_t\leq V^\star_t,\quad\forall t\in[H]
	\end{equation}
	using backward induction. 
	
	First of all, $V^\star_{H+1}=V_{H+1}=V_{H+1}^{\tin}=0$ implies $V^\star_{H+1}\leq V_{H+1}\leq V_{H+1}^{\tin}$ and 
	\[
	Q_H=r+z_H+g_H=r+(\mathbf{0}-e)+(\mathbf{0}-f)\leq r=r+\PP_H^\top\mathbf{0}=r+\PP_H^\top V_{H+1},
	\]  
	so the results hold for the base case. 
	
	Now for certain $t$, using induction assumption we can assume with probability at least $1-(H-t-1)\delta/H$, for all $t'=t+1,...,H$,  
	\begin{equation}\label{eqn:intermediate}
	V^{\tin}_{t'}\leq V_{t'}\leq V^\star_{t'},\quad Q_{t'}\leq r+\PP_{t'} V_{{t'}+1}.
	\end{equation}
	In particular, since $V^{\tin}_{t+1}\leq V_{t+1}^\star\leq V^{\tin}_{t+1}+u^{\tin}\mathbf{1}$, so combine this and \eqref{eqn:intermediate} for $t'=t+1$ we get
	\[
	V^\star_{t+1}-V_{t+1}\leq V^\star_{t+1}- V^{\tin}_{t+1}\leq u^{\tin}\mathbf{1}.
	\]
	By Lemma~\ref{lem:con_g}, with probability $1-\delta/H$,
	\begin{equation}\label{eqn:diff_g_t}
	\PP_t[V_{t+1}-V^{\tin}_{t+1}]- 8u^{\tin}\sqrt{\frac{\log(2HSA/\delta)}{l d^\mu_t}}\leq g_t\leq \PP_t[V_{t+1}-V^{\tin}_{t+1}].
	\end{equation}
	By the right hand side of above and \eqref{eqn:upper_z} we acquire with probability $1-(H-t)\delta/H$,
	\[
	Q_t=r+z_t+g_t\leq  r+ \PP_t V^{\tin}_{t+1}+ \PP_t[V_{t+1}-V^{\tin}_{t+1}] =  r+\PP_t V_{t+1}\leq r+\PP_t V_{t+1}^\star= Q_t^\star
	\]
	where the second equality already gives the proof of the first part of claim~\eqref{eqn:inter} and the second inequality is by induction assumption. Moreover, above $Q_t\leq Q_t^\star$ also implies $V_{Q_t}\leq V_{Q_t^\star}=V_t^\star$, so together with Lemma~\ref{lem:mono} (note $V_t^{\tin}\leq\mathcal{T}_{\pi_t^{\tin}}V_{t+1}^{\tin}$) we have 
	\[
	V_t=\max(V_{Q_t},V^{\tin}_t)\leq V^\star_t,
	\]	
	this completes the proof of the second part of claim~\eqref{eqn:inter}. 
	
 \textbf{Step3:} Next we prove $V_t\leq\mathcal{T}_{\pi_t}V_{t+1}$. 
 
 For a particular $s_t$, on one hand, if $\pi_t(s_t)=\argmax_{a_t} Q_t(s_t,a_t)$, by $Q_t\leq r+\PP_t V_{t+1}$ we have in this case:  
	\[
	V_t(s_t)=\max_{a_t} Q_t(s_t,a_t)=Q_t(s_t,\pi_t(s_t))\leq r(s_t,\pi_t(s_t))+P^\top(\cdot|s_t,\pi_t(s_t))V_{t+1}=(\mathcal{T}_{\pi_t}V_{t+1})(s_t),
	\]
	where the first equal sign comes from the definition of $V_t$ when $V_{Q_t}(s_t)\geq V^{\tin}_t(s_t)$ and the first inequality is from Step2.
	
	On the other hand, if $\pi_t(s_t)=\pi^{\tin}(s_t)$, then
	\[
	V_t(s_t)=V^{\tin}_t(s_t)\leq (\mathcal{T}_{\pi^{\tin}_t}V_{t+1}^{\tin})(s_t)\leq(\mathcal{T}_{\pi^{\tin}_t}V_{t+1})(s_t)=(\mathcal{T}_{\pi_t}V_{t+1})(s_t)
	\]
	where the first inequality is the property of input $V^{\tin}$, $\pi^{\tin}$ and the second inequality is from Step2.

\textbf{Step4:}	It remains to prove $Q^\star_t-Q_t\leq \PP^{\pi^\star}_t[Q^\star_{t+1}-Q_{t+1}]+{\xi}_t$. Indeed, using the construction of $Q_t$, we have
	\begin{equation}\label{eqn:diff_q_t}
	\begin{aligned}
	&Q^\star_t-Q_t=Q^\star_t-r-z_t-g_t=\PP_t V_{t+1}^\star-z_t-g_t\\
	=&\PP_t V_{t+1}^\star-\PP_t(V_{t+1}-V^{\tin}_{t+1})-\PP_t V_{t+1}^{\tin}+{\xi}_t=\PP_t V_{t+1}^\star-\PP_t V_{t+1}+{\xi_t},
	\end{aligned}
	\end{equation}
	where the second equation uses Bellman optimality equation and the third equation uses the definition of ${\xi}_t=\PP_t(V_{t+1}-V^{\tin}_{t+1})-g_t+\PP_t V_{t+1}^{\tin}-z_t$. By \eqref{eqn:lower_zt} and \eqref{eqn:diff_g_t},
	\begin{align*}
	{\xi}_t\leq&8u^{\tin}\sqrt{\frac{\log(2HSA/\delta)}{l d^\mu_t}}+\sqrt{\frac{16\cdot{\sigma}_{V^{\star}_{t+1}}\cdot\log(4HSA/\delta)}{m\cdot d_t^\mu}}+\sqrt{\frac{16\cdot\log(4HSA/\delta)}{m\cdot d_t^\mu}}\cdot u^{\tin}\\
	+& V_{\max}\left[8\sqrt{6}\cdot\left(\frac{\log(16HSA/\delta)}{m\cdot d^\mu_t}\right)^{3/4}+\frac{56\log(16HSA/\delta)}{3m\cdot d^\mu_t}\right].
	\end{align*}
	Lastly, note 
	$
	\PP_t V_{t+1}^\star=\PP^{\pi^\star}_t Q^\star_{t+1}
	$
	and by definition $V_{t+1}\geq V_{Q_{t+1}}$, so we have $\PP_t V_{t+1}\geq\PP_t V_{Q_{t+1}}=\PP^{\pi_{Q_{t+1}}}_t Q_{t+1}\geq \PP^{\pi^\star}_t Q_{t+1}$, the last inequality holds true since $\pi_{Q_{t+1}}$ is the greedy policy over $Q_{t+1}$. Threfore \eqref{eqn:diff_q_t} becomes $Q^\star_t-Q_t=\PP_t V_{t+1}^\star-\PP_t V_{t+1}+{\xi_t}\leq\PP^{\pi^\star}_t Q^\star_{t+1}-\PP^{\pi^\star}_t Q_{t+1}+{\xi_t}$. This completes the proof.

\end{proof}

\begin{lemma}\label{lem:complexity}
	Suppose the input $V^{\tin}_t$, $t\in[H]$ of Algorithm~\ref{alg:OPDVRT} satisfies $V^{\tin}_t\leq\mathcal{T}_{\pi^{\tin}_t}V_{t+1}^{\tin}$ and $V^{\tin}_t\leq V^\star_t\leq V^{\tin}_t+u^{\tin}\mathbf{1}$. Let $V_t$, $\pi$ be the return of inner loop of Algorithm~\ref{alg:OPDVRT} and choose $m=l:=m' \cdot \log(16HSA)/(u^{\tin})^2$, where $m'$ is a parameter will be decided later. Then in addition to the results of Lemma~\ref{lem:VR_incremental}, we have with probability $1-\delta$, 
	\begin{itemize}
		\item if $u^{\tin}\in[\sqrt{H},H]$, then: 
		\begin{align*}
		&\mathbf{0}\leq V^\star_t-V_t\leq\\
		\leq &\bigg(\frac{12H^2}{\sqrt{m' }}\norm{\d^{\pi^\star}_{t:t'}\sqrt{\frac{1}{d^\mu_{t'}}}}_{\infty,H}+\frac{4}{\sqrt{m'}}\norm{\sum_{t'=t}^{H}\d^{\pi^\star}_{t:t'}\sqrt{\frac{{\sigma}_{V^{\star}_{t'+1}}}{d_{t'}^\mu}}}_{\infty} + \frac{8\sqrt{6}H^{\frac{10}{4}}}{(m')^{3/4}}\norm{\d^{\pi^\star}_{t:t'}\sqrt{\frac{1}{d^\mu_{t'}}}}_{\infty,H}\\
		+& \frac{56 H^{3}}{3m'}   \norm{\d^{\pi^\star}_{t:t'}\frac{1}{d^\mu_{t'}}}_{\infty,H}\bigg)u^{\tin}\cdot\mathbf{1}.
		\end{align*}
		\item if $u^{\tin}\leq \sqrt{H}$, then 
		\begin{align*}
		&\mathbf{0}\leq V^\star_t-V_t\leq\\
		\leq &\bigg(\frac{12\sqrt{H^3}}{\sqrt{m' }}\norm{\d^{\pi^\star}_{t:t'}\sqrt{\frac{1}{d^\mu_{t'}}}}_{\infty,H}+\frac{4}{\sqrt{m'}}\norm{\sum_{t'=t}^{H}\d^{\pi^\star}_{t:t'}\sqrt{\frac{{\sigma}_{V^{\star}_{t'+1}}}{d_{t'}^\mu}}}_{\infty} + \frac{8\sqrt{6}H^{\frac{9}{4}}}{(m')^{3/4}}\norm{\d^{\pi^\star}_{t:t'}\sqrt{\frac{1}{d^\mu_{t'}}}}_{\infty,H}\\
		+& \frac{56 H^{\frac{5}{2}}}{3m'}   \norm{\d^{\pi^\star}_{t:t'}\frac{1}{d^\mu_{t'}}}_{\infty,H}\bigg)u^{\tin}\cdot\mathbf{1}.
		\end{align*}
	\end{itemize}

where  $\d^{\pi^\star}_{t:t'}\in\R^{S\cdot A\times S\cdot A}$ is a matrix represents the multi-step transition from time $t$ to $t'$, \emph{i.e.} 
$\d^{\pi^\star}_{(s_t,a_t),(s_{t'},a_{t'})}=d^{\pi^\star}_{t:t'}(s_{t'},a_{t'}|s_t,a_t)$ and recall $1/d^\mu_{t'}$ is a vector. $\d^{\pi^\star}_{t:t'}\frac{1}{d^\mu_{t'}}$ is a matrix-vector multiplication. For a vector $d_t\in \R^{S\times A}$, norm $||\cdot||_{\infty,H}$ is defined as $||d_t||_{\infty,H}=\max_{t,s_t,a_t}d_t(s_t,a_t)$. 
\end{lemma}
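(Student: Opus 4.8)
The plan is to combine the one-step recursion proved in Lemma~\ref{lem:VR_incremental} with a routine unrolling, and then push the accumulated error bars through the multi-step transition matrices. First I would invoke Lemma~\ref{lem:VR_incremental}: its hypotheses on $V^{\tin}$ are exactly those assumed here, so with probability $1-\delta$ we simultaneously get $V^{\tin}_t\le V_t\le V^\star_t$ for all $t$ (which already yields $\mathbf{0}\le V^\star_t-V_t$) together with $Q^\star_t-Q_t\le \PP^{\pi^\star}_t[Q^\star_{t+1}-Q_{t+1}]+\xi_t$, where $\xi_t$ is the explicit error vector displayed there. Iterating this inequality from $t$ up to $H$, using $Q^\star_{H+1}=Q_{H+1}=\mathbf{0}$ and the semigroup identity $\PP^{\pi^\star}_t\PP^{\pi^\star}_{t+1}\cdots\PP^{\pi^\star}_{t'-1}=\d^{\pi^\star}_{t:t'}$, gives $Q^\star_t-Q_t\le \sum_{t'=t}^{H}\d^{\pi^\star}_{t:t'}\xi_{t'}$ entrywise. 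Assumption~\ref{assu2} is what keeps the right-hand side finite: the rows of $\d^{\pi^\star}_{t:t'}$ are supported on pairs with $d^\mu_{t'}>0$, so the factors $1/d^\mu_{t'}$ inside $\xi_{t'}$ are only ever hit on their support. Finally, since $V_t\ge V_{Q_t}$ while $V^\star_t=V_{Q^\star_t}$, a standard max-difference argument gives $V^\star_t-V_t\le \max_a(Q^\star_t-Q_t)(\cdot,a)\le \max_a\sum_{t'=t}^{H}[\d^{\pi^\star}_{t:t'}\xi_{t'}](\cdot,a)$.

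Next I would substitute the choice $m=l=m'\log(16HSA)/(u^{\tin})^2$ into the four groups of terms comprising $\xi_{t'}$. After this substitution the logarithmic factors cancel up to constants and $\iota/(m d^\mu_{t'})$ becomes $\Theta((u^{\tin})^2/(m' d^\mu_{t'}))$, so: the two terms $u^{\tin}\sqrt{\iota/(l d^\mu_{t'})}$ and $\sqrt{\iota/(m d^\mu_{t'})}\,u^{\tin}$ together contribute $O\big((u^{\tin})^2/\sqrt{m' d^\mu_{t'}}\big)$; the variance term becomes $4u^{\tin}\sqrt{\sigma_{V^\star_{t'+1}}/(m' d^\mu_{t'})}$; the term $V_{\max}(\iota/(m d^\mu_{t'}))^{3/4}$ becomes $8\sqrt{6}\,V_{\max}(u^{\tin})^{3/2}/((m')^{3/4}(d^\mu_{t'})^{3/4})$; and $V_{\max}\iota/(m d^\mu_{t'})$ becomes $\tfrac{56}{3}V_{\max}(u^{\tin})^2/(m' d^\mu_{t'})$. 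Plugging these into $\sum_{t'=t}^{H}\d^{\pi^\star}_{t:t'}\xi_{t'}$ and bounding each summand by the corresponding $\norm{\cdot}_{\infty,H}$-norm — keeping the variance term in its summed form $\sum_{t'=t}^{H}\d^{\pi^\star}_{t:t'}\sqrt{\sigma_{V^\star_{t'+1}}/d^\mu_{t'}}$, and letting the other three pick up an extra factor $H$ from the $H$ summands — together with $V_{\max}=H$, yields a bound of exactly the claimed shape except that a surplus factor $u^{\tin}$ (from the first and last groups) and $(u^{\tin})^{1/2}$ (from the third group) are still present.

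The case distinction then just disposes of this surplus. In the regime $u^{\tin}\in[\sqrt{H},H]$ I use $u^{\tin}\le H$, so $(u^{\tin})^2\le H u^{\tin}$ and $(u^{\tin})^{3/2}\le H^{1/2}u^{\tin}$, which upgrades the three prefactors to $H^2$, $H^{10/4}$ and $H^3$ respectively; in the regime $u^{\tin}\le\sqrt{H}$ I instead use $u^{\tin}\le\sqrt{H}$, so $(u^{\tin})^2\le \sqrt{H}\,u^{\tin}$ and $(u^{\tin})^{3/2}\le H^{1/4}u^{\tin}$, giving the prefactors $\sqrt{H^3}$, $H^{9/4}$ and $H^{5/2}$. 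Collecting numerical constants ($\le 12$ for the first group, $4$ for the variance group, $8\sqrt{6}$ and $56/3$ for the last two) produces the two displayed bounds, each of the form $(\cdots)\,u^{\tin}\cdot\mathbf{1}$.

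The only genuinely delicate points in carrying this out are the bookkeeping of the logarithmic factors under the substitution for $m$ and $l$ (one must check the $\log(16HSA)$ in the sample size and the $\iota=\log(HSA/\delta)$-type factors inside $\xi_{t'}$ combine cleanly into a constant), and making sure the max-over-actions step in the value-function comparison is carried through correctly so that the entrywise bound on $Q^\star_t-Q_t$ descends to $V^\star_t-V_t$. Everything else is a routine substitution and triangle-inequality estimate; I expect the main obstacle to be purely the exponent/constant accounting in the two regimes rather than anything conceptual.
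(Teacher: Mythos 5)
Your proposal is correct and follows essentially the same route as the paper's proof: invoke Lemma~\ref{lem:VR_incremental}, unroll $Q^\star_t-Q_t\le \PP^{\pi^\star}_t[Q^\star_{t+1}-Q_{t+1}]+\xi_t$ into $\sum_{t'=t}^{H}\d^{\pi^\star}_{t:t'}\xi_{t'}$, substitute $m=l=m'\log(16HSA/\delta)/(u^{\tin})^2$, and split on whether $u^{\tin}\le\sqrt{H}$, with the same constants and exponents. Your explicit max-over-actions step descending from the $Q$-gap to the $V$-gap is a detail the paper leaves implicit, but it is the intended argument.
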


\begin{remark}
	Note if $u^{\tin}\geq\sqrt{H}$, the first term in $V_t-V_t^\star$ requires sample $m'$ of order $O(H^4)$, which is suboptimal. This is the main reason why we need the doubling procedure in Algorithm~\ref{alg:OPDVR} to keep the whole algorithm optimal.
\end{remark}

\begin{proof}
	By Lemma~\ref{lem:VR_incremental}, we have with probability $1-\delta$, for all $t\in[H]$,
	\begin{align*}
	V^{\text{in}}_t\leq V_t\leq\mathcal{T}_{\pi_t}V_{t+1}\leq V^\star_t,\quad Q_t\leq r+\PP_t V_{t+1},\quad \text{and}\quad Q^\star_t-Q_t\leq \PP^{\pi^\star}_t[Q^\star_{t+1}-Q_{t+1}]+{\xi}_t,
	\end{align*} 
	where \begin{align*}
	{\xi}_t\leq&8u^{\tin}\sqrt{\frac{\log(2HSA/\delta)}{l d^\mu_t}}+\sqrt{\frac{16\cdot{\sigma}_{V^{\star}_{t+1}}\cdot\log(4HSA/\delta)}{m\cdot d_t^\mu}}+\sqrt{\frac{16\cdot\log(4HSA/\delta)}{m\cdot d_t^\mu}}\cdot u^{\tin}\\
	+& V_{\max}\left[8\sqrt{6}\cdot\left(\frac{\log(16HSA/\delta)}{m\cdot d^\mu_t}\right)^{3/4}+\frac{56\log(16HSA/\delta)}{3m\cdot d^\mu_t}\right].
	\end{align*} 
	
	Applying the recursion repeatedly, we obtain
	\[
	Q^\star_t-Q_t\leq\sum_{t'=t}^{H}\left[\prod_{i=t}^{t'-1}\PP^{\pi^\star}_i\right]{\xi}_{t'}
	\]
	Note $\prod_{i=t}^{t'-1}\PP^{\pi^\star}_i\in\R^{S\cdot A\times S\cdot A}$ represents the multi-step transition from time $t$ to $t'$, \emph{i.e.} 
	$(\prod_{i=t}^{t'-1}\PP^{\pi^\star}_i)_{(s_t,a_t),(s_{t'},a_{t'})}=d^{\pi^\star}_{t:t'}(s_{t'},a_{t'}|s_t,a_t)$. Therefore 
	{
		\begin{equation}\label{eqn:q_diff_decomp}
		\begin{aligned}
		&Q^\star_t-Q_t\leq\sum_{t'=t}^{H}\left[\prod_{i=t}^{t'-1}\PP^{\pi^\star}_i\right]{\xi}_{t'}=\sum_{t'=t}^{H}\d^{\pi^\star}_{t:t'}{\xi}_{t'}\\
		\leq&\sum_{t'=t}^{H}\d^{\pi^\star}_{t:t'}\bigg(8u^{\tin}\sqrt{\frac{\log(2HSA/\delta)}{l d^\mu_{t'}}}+\sqrt{\frac{16\cdot{\sigma}_{V^{\star}_{t'+1}}\cdot\log(4HSA/\delta)}{m\cdot d_{t'}^\mu}}+\sqrt{\frac{16\cdot\log(4HSA/\delta)}{m\cdot d_{t'}^\mu}}\cdot u^{\tin}\\
		+& V_{\max}\left[8\sqrt{6}\cdot\left(\frac{\log(16HSA/\delta)}{m\cdot d^\mu_{t'}}\right)^{3/4}+\frac{56\log(16HSA/\delta)}{3m\cdot d^\mu_{t'}}\right]\bigg)
		\end{aligned}
		\end{equation}
	}
	\normalsize{}
	Now by our choice of $m=l:=m' \cdot \log(16HSA/\delta)/(u^{\tin})^2$, then \eqref{eqn:q_diff_decomp} further less than

	\begin{equation}\label{eqn:decomp_genereal}
	\begin{aligned}
	\leq &\sum_{t'=t}^{H}\d^{\pi^\star}_{t:t'}\left(\frac{12u^{\tin}}{\sqrt{m' d^\mu_{t'}}}u^{\tin}+\sqrt{\frac{16\cdot{\sigma}_{V^{\star}_{t'+1}}}{m'\cdot d_{t'}^\mu}}u^{\tin}
	+ V_{\max}\left[8\sqrt{6}\cdot\left(\frac{(u^{\tin})^{ 2/3}}{m'\cdot d^\mu_{t'}}\right)^{3/4}+\frac{56u^{\tin}}{3m'\cdot d^\mu_{t'}}\right]\cdot u^{\tin}\right)\\
	\end{aligned}
	\end{equation}
	
	\textbf{Case1.} If $u^{\tin}\leq \sqrt{H}$, then \eqref{eqn:decomp_genereal} is less than 
	\begin{equation}\label{eqn:decomp_genereal_case1}
	\begin{aligned}
	\leq &\sum_{t'=t}^{H}\d^{\pi^\star}_{t:t'}\left(\frac{12\sqrt{H}}{\sqrt{m' d^\mu_{t'}}}+\sqrt{\frac{16\cdot{\sigma}_{V^{\star}_{t'+1}}}{m'\cdot d_{t'}^\mu}}
	+ V_{\max}\left[8\sqrt{6}\cdot\left(\frac{H^{ 1/3}}{m'\cdot d^\mu_{t'}}\right)^{3/4}+\frac{56H^{1/2}}{3m'\cdot d^\mu_{t'}}\right] \right)u^{\tin}\\
	\leq &\bigg(\frac{12\sqrt{H^3}}{\sqrt{m' }}\norm{\d^{\pi^\star}_{t:t'}\sqrt{\frac{1}{d^\mu_{t'}}}}_{\infty,H}+\frac{4}{\sqrt{m'}}\norm{\sum_{t'=t}^{H}\d^{\pi^\star}_{t:t'}\sqrt{\frac{{\sigma}_{V^{\star}_{t'+1}}}{d_{t'}^\mu}}}_{\infty} + \frac{8\sqrt{6}H^{\frac{9}{4}}}{(m')^{3/4}}\norm{\d^{\pi^\star}_{t:t'}\left[\frac{1}{d^\mu_{t'}}\right]^{\frac{3}{4}}}_{\infty,H}\\
	+& \frac{56 H^{\frac{5}{2}}}{3m'}   \norm{\d^{\pi^\star}_{t:t'}\frac{1}{d^\mu_{t'}}}_{\infty,H}\bigg)u^{\tin}\cdot\mathbf{1}.
	\end{aligned}
	\end{equation}
	
	\textbf{Case2.} If $u^{\tin}\geq \sqrt{H}$, then \eqref{eqn:decomp_genereal} is less than 
	\begin{equation}\label{eqn:decomp_genereal_case2}
	\begin{aligned}
	\leq &\sum_{t'=t}^{H}\d^{\pi^\star}_{t:t'}\left(\frac{12{H}}{\sqrt{m' d^\mu_{t'}}}+\sqrt{\frac{16\cdot{\sigma}_{V^{\star}_{t'+1}}}{m'\cdot d_{t'}^\mu}}
	+ V_{\max}\left[8\sqrt{6}\cdot\left(\frac{H^{ 2/3}}{m'\cdot d^\mu_{t'}}\right)^{3/4}+\frac{56H}{3m'\cdot d^\mu_{t'}}\right] \right)u^{\tin}\\
	\leq &\bigg(\frac{12H^2}{\sqrt{m' }}\norm{\d^{\pi^\star}_{t:t'}\sqrt{\frac{1}{d^\mu_{t'}}}}_{\infty,H}+\frac{4}{\sqrt{m'}}\norm{\sum_{t'=t}^{H}\d^{\pi^\star}_{t:t'}\sqrt{\frac{{\sigma}_{V^{\star}_{t'+1}}}{d_{t'}^\mu}}}_{\infty} + \frac{8\sqrt{6}H^{\frac{10}{4}}}{(m')^{3/4}}\norm{\d^{\pi^\star}_{t:t'}\left[\frac{1}{d^\mu_{t'}}\right]^{\frac{3}{4}}}_{\infty,H}\\
	+& \frac{56 H^{3}}{3m'}   \norm{\d^{\pi^\star}_{t:t'}\frac{1}{d^\mu_{t'}}}_{\infty,H}\bigg)u^{\tin}\cdot\mathbf{1}.
	\end{aligned}
	\end{equation}

\end{proof}

\normalsize{}
\subsection{The doubling procedure}

Before we explain the doubling procedure, let us first finish the proof the Algorithm~\ref{alg:OPDVRT}.
	
	\begin{lemma}\label{lem:complexity_OPVRT}
		For convenience, define:
		\[
		A_{\frac{1}{2}}= \norm{\d^{\pi^\star}_{t:t'}\sqrt{\frac{1}{d^\mu_{t'}}}}_{\infty,H}, \;A_2=\norm{\sum_{t'=t}^{H}\d^{\pi^\star}_{t:t'}\sqrt{\frac{{\sigma}_{V^{\star}_{t'+1}}}{d_{t'}^\mu}}}_{\infty},\;A_{\frac{3}{4}}=\norm{\d^{\pi^\star}_{t:t'}\left[\frac{1}{d^\mu_{t'}}\right]^{\frac{3}{4}}}_{\infty,H},\;\;A_1=\norm{\d^{\pi^\star}_{t:t'}\frac{1}{d^\mu_{t'}}}_{\infty,H}.
		\] 
		Recall $\epsilon$ is the target accuracy in the outer loop of Algorithm~\ref{alg:OPDVRT}. Then:
		\begin{itemize}
			\item If $u^{(0)}\leq \sqrt{H}$, then choose $m^{(i)}=l^{(i)}=B\log(16HSAK/\delta)/(u^{(i-1)})^2$, where 
			\[
			B=\max\left[ 96^2H^3A_{\frac{1}{2}}^2,32^2A_2^2,\left(64\sqrt{6}A_{\frac{3}{4}}\right)^{\frac{4}{3}}H^3,\frac{448}{3}H^{5/2}A_1\right], \quad K=\log_2(\sqrt{H}/\epsilon),
			\]
			
			\item If $u^{(0)}> \sqrt{H}$, then choose $m^{(i)}=l^{(i)}=B\log(16HSAK/\delta)/(u^{(i-1)})^2$, where 
			\[
			B=\max\left[ 96^2H^4 A_{\frac{1}{2}}^2,32^2A_2^2,\left(64\sqrt{6}A_{\frac{3}{4}}\right)^{\frac{4}{3}}H^{\frac{10}{3}},\frac{448}{3}H^{3}A_1\right], \quad K=\log_2({H}/\epsilon),
			\]
		\end{itemize}
	Then Algorithm~\ref{alg:OPDVRT} guarantees with probability $1-\delta$, the output $\pi^{(K)}$ is a $\epsilon$-optimal policy, i.e. $||V_1^\star-V_1^{\pi^{(K)}}||_\infty <\epsilon$ with total episode complexity:
	\[
	\frac{2B\log(16HSAK/\delta)}{\epsilon^2}K
	\]
	for both cases. Moreover, $B$ can be simplified as:
	\begin{itemize}
		\item If $u^{(0)}\leq \sqrt{H}$,then $B\leq cH^3/d_m$;
		\item If $u^{(0)}> \sqrt{H}$, then $B\leq c H^4/d_m$.
	\end{itemize}
	\end{lemma}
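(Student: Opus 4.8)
\textbf{Proof plan for Lemma~\ref{lem:complexity_OPVRT}.}

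The plan is to analyze the outer loop of Algorithm~\ref{alg:OPDVRT} by tracking how the certified suboptimality gap $u^{(i)}$ contracts from one outer iteration to the next, and then summing the resulting geometric series of per-iteration sample costs. First I would invoke Lemma~\ref{lem:complexity} at each outer iteration $i$: the inner loop is called with $V_t^{(i-1)}, \pi^{(i-1)}, u^{(i-1)}$ satisfying the required monotonicity and $u^{(i-1)}$-accuracy hypotheses (which hold at $i=1$ by the static input assumptions, and are propagated by Lemma~\ref{lem:VR_incremental} which guarantees $V_t^{(i)}\le V_t^{\pi^{(i)}}\le V_t^\star$ and $V_t^{(i-1)}\le V_t^{(i)}$). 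With the choice $m^{(i)}=l^{(i)}=B\log(16HSAK/\delta)/(u^{(i-1)})^2$, the four error terms in the bound of Lemma~\ref{lem:complexity} each become at most $\tfrac14 u^{(i-1)}$: this is exactly why $B$ is taken to be the stated max of the four quantities $96^2H^3A_{1/2}^2$ (or $H^4$ in the large-$u^{(0)}$ case), $32^2A_2^2$, $(64\sqrt6 A_{3/4})^{4/3}H^3$ (resp.\ $H^{10/3}$), and $\tfrac{448}{3}H^{5/2}A_1$ (resp.\ $H^3 A_1$) --- one constant per term, calibrated so that substituting $m'=B\log(\cdot)/(u^{(i-1)})^2$ into the $1/\sqrt{m'}$, $1/(m')^{3/4}$, $1/m'$ factors in \eqref{eqn:decomp_genereal_case1}/\eqref{eqn:decomp_genereal_case2} and absorbing the extra $u^{(i-1)}$ factors yields at most $u^{(i-1)}/4$ each. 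Summing the four gives $\|V_1^\star-V_1^{(i)}\|_\infty\le u^{(i-1)}=2u^{(i)}$, which re-establishes the hypothesis of Lemma~\ref{lem:complexity} for the next round (since $u^{(i)}=u^{(i-1)}/2$). A small subtlety: Lemma~\ref{lem:complexity} has two regimes ($u^{\text{in}}\le\sqrt H$ vs.\ $u^{\text{in}}\in[\sqrt H,H]$), so in the case $u^{(0)}>\sqrt H$ I would first use the large-$u$ branch while $u^{(i-1)}>\sqrt H$ and switch to the small-$u$ branch once $u^{(i-1)}\le\sqrt H$; taking $B$ as the max over the large-$u$ constants dominates both branches, so a single $B$ works throughout.

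Next I would run the induction over $i=1,\dots,K$: after $K$ outer iterations $u^{(K)}=u^{(0)}2^{-K}$, and with $K=\log_2(\sqrt H/\epsilon)$ when $u^{(0)}\le\sqrt H$ (resp.\ $K=\log_2(H/\epsilon)$ when $u^{(0)}>\sqrt H$) we get $u^{(K)}\le\epsilon$, hence $\|V_1^\star-V_1^{\pi^{(K)}}\|_\infty\le\|V_1^\star-V_1^{(K)}\|_\infty\le u^{(K-1)}\le$... actually $\le 2u^{(K)}\le\epsilon$ after adjusting $K$ by one, or simply absorbing the factor into the $\log_2$; I would be slightly careful with this off-by-one but it only changes $K$ additively by a constant. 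For the union bound over failure events: each inner-loop call fails with probability $\le\delta$ from Lemma~\ref{lem:VR_incremental}/\ref{lem:complexity}, but since we run $K$ calls I would instead run each call at confidence $1-\delta/K$, which is exactly why the logarithmic factor is $\log(16HSAK/\delta)$ rather than $\log(16HSA/\delta)$ --- this is already baked into the definition of $m^{(i)}$.

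For the total episode complexity, the data consumed is $\sum_{i=1}^K 2m^{(i)} = 2B\log(16HSAK/\delta)\sum_{i=1}^K (u^{(i-1)})^{-2} = 2B\log(16HSAK/\delta)\sum_{i=1}^K (u^{(0)})^{-2}4^{i-1}$. Since this is a geometric sum with ratio $4$, it is dominated (up to a constant factor $4/3$) by its last term $(u^{(0)})^{-2}4^{K-1}=(u^{(0)}2^{-(K-1)})^{-2}=(2u^{(K)})^{-2}$, which by the choice of $K$ is $\Theta(1/\epsilon^2)$; so the total is $O(B\log(16HSAK/\delta)/\epsilon^2)$, and stating it as $\tfrac{2B\log(16HSAK/\delta)}{\epsilon^2}K$ is a valid (slightly loose) upper bound. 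Finally, for the simplification of $B$: I would bound each of $A_{1/2},A_1,A_{3/4},A_2$ using Assumption~\ref{assu2} and the definition \eqref{eqn:definition-dm} of $d_m$ --- the key observations are that $\d^{\pi^\star}_{t:t'}$ is row-stochastic (so $\d^{\pi^\star}_{t:t'}\mathbf v\le\|\mathbf v\|_\infty\mathbf1$ entrywise for nonnegative $\mathbf v$), that the coverage assumption forces $d^\mu_{t'}(s_{t'},a_{t'})\ge d_m$ on the support reachable by $\pi^\star$ (so $1/d^\mu_{t'}\le 1/d_m$ there, and entries off that support are multiplied by zero rows of $\d^{\pi^\star}$), giving $A_{1/2}\le\sqrt{1/d_m}$, $A_1\le 1/d_m$, $A_{3/4}\le(1/d_m)^{3/4}$, and for $A_2$ that $\sigma_{V^\star_{t'+1}}\le V_{\max}^2$ combined with the $H$-fold sum over $t'$ gives $A_2\le\sqrt{H^2\cdot V_{\max}^2/d_m}=O(H^{3/2}/\sqrt{d_m})$ --- though here I would instead invoke the sharper total-variance bound (Lemma~\ref{lem:H3}, $\sum_{u=t}^H\E^{\pi^\star}[\sigma_{V^\star_{u+1}}]\le O(H^2)$) which is anyway subsumed since the dominating term in $B$ is the first one, $H^3/d_m$ (resp.\ $H^4/d_m$). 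Plugging these into the max and keeping only the dominant term yields $B\le cH^3/d_m$ when $u^{(0)}\le\sqrt H$ and $B\le cH^4/d_m$ when $u^{(0)}>\sqrt H$.

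The main obstacle I anticipate is the careful bookkeeping in matching each of the four terms of Lemma~\ref{lem:complexity} to its corresponding constant inside the max defining $B$, with the correct powers of $H$ in each of the two $u^{(0)}$-regimes, and making sure the per-iteration contraction is genuinely by a factor of two (all four terms summing to $\le u^{(i-1)}/1$ rather than, say, $3u^{(i-1)}/4$ --- hence the factor $4$ baked into each constant) so that the induction closes cleanly; the rest is a routine geometric-series summation and an application of the coverage assumption.
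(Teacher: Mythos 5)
Your plan follows the paper's proof essentially step for step: invoke Lemma~\ref{lem:complexity} at each outer iteration with confidence $1-\delta/K$ (hence the $\log(16HSAK/\delta)$ factor), propagate the halving of $u^{(i)}$ by induction, sum the geometric series of per-iteration costs (the paper simply bounds each term by the last, giving the factor $K$; your $4/3$-of-the-last-term observation is a slightly sharper version of the same thing), and finally bound $A_{1/2},A_{3/4},A_1$ by powers of $1/d_m$ via row-stochasticity of $\d^{\pi^\star}_{t:t'}$ and $A_2$ via Lemma~\ref{lem:H3}. Two quantitative points need fixing before the argument closes. First, your calibration "each term at most $\tfrac14 u^{(i-1)}$, summing to $u^{(i-1)}=2u^{(i)}$" does not re-establish the induction hypothesis: the next round needs $\|V^\star_t-V^{(i)}_t\|_\infty\le u^{(i)}=u^{(i-1)}/2$, so the four terms must each be at most $\tfrac18 u^{(i-1)}$ --- this is exactly what the stated constants encode (e.g.\ $96^2=(8\cdot 12)^2$ makes $12\sqrt{H^3}A_{1/2}/\sqrt{m'}\le 1/8$). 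As written, a bound of $u^{(i-1)}$ gives no contraction at all. Second, for $A_2$ the crude bound $\sigma_{V^\star}\le V_{\max}^2$ gives $A_2\le H^2/\sqrt{d_m}$ (not $H^{3/2}/\sqrt{d_m}$), hence $A_2^2\le H^4/d_m$, which is \emph{not} subsumed by the $H^3/d_m$ term in the $u^{(0)}\le\sqrt H$ regime and would wreck the claimed $B\le cH^3/d_m$. You must use the route you mention in passing: two applications of Cauchy--Schwarz to move the $\pi^\star$-weights inside the square root, followed by Lemma~\ref{lem:H3} to bound $\sum_{t'}\E^{\pi^\star}[\sigma_{V^\star_{t'+1}}]$ by $H^2$, yielding $A_2\le\sqrt{H^3/d_m}$; Assumption~\ref{assu2} is what guarantees $d^\mu_{t'}>0$ wherever $d^{\pi^\star}_{t:t'}>0$ so that this bound is finite. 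With those two repairs your argument coincides with the paper's.
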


\begin{proof}[Proof of Lemma~\ref{lem:complexity_OPVRT}]
\textbf{Step1: proof in general.}	First, using induction it is easy to show for all $0<a_1,...,a_n<1$, it follows
	\[
	(1-a_1)\cdot(1-a_2)\cdot ...\cdot(1-a_n)\geq1-(a_1+...+a_n).
	\]
	and this directly implies $(1-\frac{\delta}{K})^K\geq 1-\delta$. By the choice of $m'$ and $K$, for both situation by Lemma~\ref{lem:complexity} we always have $||V_t^\star-V_t^{\pi^{(i)}}||_\infty<u^{(i-1)}/2=u^{(i)}$ with probability $1-\delta/K$ (this is because we choose $m^{(i)}=l^{(i)}=B\log(16HSAK/\delta)/(u^{(i-1)})^2$ instead of $B\log(16HSA/\delta)/(u^{(i-1)})^2$).

	Therefore by chain rule of probability, 
	\begin{align*}
	&\P\left(\forall i\in[K],t\in[H],\;V^\star_t-V^{\pi^{(i)}}_t\leq u^{(i)}\right)\\
	=&\prod_{j=2}^K\P\left(\forall t\in[H],\;V^\star_t-V^{\pi^{(j)}}_t\leq u^{(j)}\middle|\forall i\in[j-1],t\in[H],\;V^\star_t-V^{\pi^{(i)}}_t\leq u^{(i)}\right)\\
	\times&\P\left(\forall t\in[H],\;V^\star_1-V^{\pi^{(1)}}_1\leq u^{(1)}\right)\\
	\geq& (1-\frac{\delta}{K})^K\geq 1-\delta.
	\end{align*}
	In particular, in both situation\footnote{The last equal sign holds since if $u^{(0)}\leq\sqrt{H}$ (or $u^{(0)}\leq {H}$), you can always reset $u^{(0)}=\sqrt{H}$ (or $u^{(0)}= {H}$).}
	\[
	\forall t\in[H],\;V^\star_t-V^{\pi^{(K)}}_t\leq u^{(K)}=u^{(0)}\cdot 2^{-K} = \epsilon,
	\]
	with total number of budget to be 
	\[
	\sum_{i=1}^K (m^{(i)}+l^{(i)})=2\sum_{i=1}^K \frac{B\log(16HSAK/\delta)}{(u^{(i-1)})^2}\leq 2\sum_{i=1}^K \frac{B\log(16HSAK/\delta)}{(u^{(0)}\cdot 2^{-K} )^2}=\frac{2B\log(16HSAK/\delta)}{\epsilon^2}K
	\] 
	
	\textbf{Step2: simplified expression for $m'$.} Indeed,
	\begin{align*}
		\d^{\pi^\star}_{t:t'}\sqrt{\frac{1}{d^\mu_{t'}}}\leq \d^{\pi^\star}_{t:t'}\sqrt{\frac{1}{d_m}\cdot\mathbf{1}}\leq \sqrt{\frac{1}{d_m}}||\d^{\pi^\star}_{t:t'}||_1 \cdot ||\mathbf{1}||_\infty\cdot\mathbf{1}=\sqrt{\frac{1}{d_m}}\cdot\mathbf{1}&\Rightarrow A_{\frac{1}{2}}\leq\sqrt{\frac{1}{d_m}};\\
		\d^{\pi^\star}_{t:t'}\left[\frac{1}{d^\mu_{t'}}\right]^\frac{3}{4}\leq \d^{\pi^\star}_{t:t'}\left[\frac{1}{d_m}\right]^\frac{3}{4}\cdot\mathbf{1}\leq \left[\frac{1}{d_m}\right]^\frac{3}{4}||\d^{\pi^\star}_{t:t'}||_1 \cdot ||\mathbf{1}||_\infty\cdot\mathbf{1}=\left[\frac{1}{d_m}\right]^\frac{3}{4}\cdot\mathbf{1}&\Rightarrow A_{\frac{3}{4}}\leq\left[\frac{1}{d_m}\right]^\frac{3}{4};\\
		\d^{\pi^\star}_{t:t'}\frac{1}{d^\mu_{t'}}\leq \d^{\pi^\star}_{t:t'}\frac{1}{d_m}\cdot\mathbf{1}\leq \frac{1}{d_m}||\d^{\pi^\star}_{t:t'}||_1 \cdot ||\mathbf{1}||_\infty\cdot\mathbf{1}=\frac{1}{d_m}\cdot\mathbf{1}&\Rightarrow A_{{1}}\leq\frac{1}{d_m};\\
	\end{align*}
	and 
	\begin{align*}
&\sum_{t'=t}^{H}\sum_{s_{t'},a_{t'}}d^{\pi^\star}_{t:t'}(s_{t'},a_{t'}|s_t,a_t)\sqrt{\frac{{\sigma}_{V^{\star}_{t'+1}}(s_{t'},a_{t'})}{d_{t'}^\mu(s_{t'},a_{t'})}}\\
=&\sum_{t'=t}^{H}\sum_{s_{t'},a_{t'}}\sqrt{d^{\pi^\star}_{t:t'}(s_{t'},a_{t'}|s_t,a_t)}\sqrt{\frac{{\sigma}_{V^{\star}_{t'+1}}(s_{t'},a_{t'})d^{\pi^\star}_{t:t'}(s_{t'},a_{t'}|s_t,a_t)}{d_{t'}^\mu(s_{t'},a_{t'})}}\\
\leq&\sqrt{\frac{1}{d_m}}\sum_{t'=t}^{H}\sum_{s_{t'},a_{t'}}\sqrt{d^{\pi^\star}_{t:t'}(s_{t'},a_{t'}|s_t,a_t)}\sqrt{{{\sigma}_{V^{\star}_{t'+1}}(s_{t'},a_{t'})d^{\pi^\star}_{t:t'}(s_{t'},a_{t'}|s_t,a_t)}}\\
\explain{\leq}{CS\;Ineq}&\sqrt{\frac{1}{d_m}}\sum_{t'=t}^{H}\sqrt{\sum_{s_{t'},a_{t'}}d^{\pi^\star}_{t:t'}(s_{t'},a_{t'}|s_t,a_t)\sum_{s_{t'},a_{t'}}{{\sigma}_{V^{\star}_{t'+1}}(s_{t'},a_{t'})d^{\pi^\star}_{t:t'}(s_{t'},a_{t'}|s_t,a_t)}}\\
=&\sqrt{\frac{1}{d_m}}\sum_{t'=t}^{H}\sqrt{1\cdot\E^{\pi^\star}_{s_{t'},a_{t'}}\left[{\sigma}_{V^{\star}_{t'+1}}(s_{t'},a_{t'})\middle|s_t,a_t\right]}\\
\explain{\leq}{CS\;Ineq}&\sqrt{\frac{1}{d_m}}\sqrt{\sum_{t'=t}^{H}1\cdot \sum_{t'=t}^{H}\E^{\pi^\star}_{s_{t'},a_{t'}}\left[{\sigma}_{V^{\star}_{t'+1}}(s_{t'},a_{t'})\middle|s_t,a_t\right]}\\
\explain{\leq}{lem~\ref{lem:H3}}&\sqrt{\frac{1}{d_m}}\sqrt{\sum_{t'=t}^{H}1\cdot \Var_{\pi^\star}\left[\sum_{t'=t}^{H}r_{t'}\middle|s_t,a_t\right]}\leq \sqrt{\frac{H^3}{d_m}}\Rightarrow A_2\leq\sqrt{\frac{H^3}{d_m}},
	\end{align*}
	Plug all these numbers back, we have the simplified bound for $B$. 
\end{proof}
\begin{remark}
	The Assumption~\ref{assu2} comes into picture for the validity of the bound for $A_2$ since when $d^{\pi^\star}_{t:t'}(s'_t,a'_t|s_t,a_t)>0$, by Assumption~\ref{assu2} we always have $d_{t'}^\mu(s'_t,a'_t)>0$ so the bound will never be the trivial $+\infty$.
\end{remark}

\begin{corollary}
	Note choose any $m'>B$ (in Lemma~\ref{lem:complexity_OPVRT}) yields the similar complexity bound of 
	\[
	\frac{2m'\log(16HSAK/\delta)}{\epsilon^2}K,
	\]
	therefore by the simplified bound of $B$, we choose $m'=O(H^4/d_m)$ for stage1 and $m'=O(H^3/d_m)$ for stage2.
\end{corollary}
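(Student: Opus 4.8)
The plan is to deduce this Corollary directly from Lemma~\ref{lem:complexity_OPVRT}, using only the fact that the per-iteration error bound is monotone decreasing in the sample-size parameter. First I would record that every term in the upper bound on $V^\star_t-V_t$ supplied by Lemma~\ref{lem:complexity} has the form (a quantity depending only on $H$, $d_m$, and $A_{\frac{1}{2}},A_2,A_{\frac{3}{4}},A_1$) $\times\,(m')^{-\alpha}\times u^{\tin}$ with $\alpha\in\{\frac{1}{2},\frac{3}{4},1\}$, hence is decreasing in $m'$. In Lemma~\ref{lem:complexity_OPVRT} the threshold $B$ is chosen precisely so that, with $m^{(i)}=l^{(i)}=B\log(16HSAK/\delta)/(u^{(i-1)})^2$, this bound is at most $\frac{1}{2}u^{(i-1)}\cdot\mathbf{1}$ in every outer-loop iteration on an event of probability $1-\delta/K$; combined with the sandwich $V_t\le V^{\pi}_t\le V^\star_t$ from Lemma~\ref{lem:VR_incremental}, this gives $\|V^\star_t-V^{\pi^{(i)}}_t\|_\infty\le u^{(i-1)}/2=u^{(i)}$. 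By monotonicity, replacing $B$ by any $m'\ge B$ only shrinks the error bound, so the same inequality persists for all $i\in[K]$, and the identical chain-rule / union-bound computation as in the proof of Lemma~\ref{lem:complexity_OPVRT} yields, with probability $1-\delta$, $\|V^\star_1-V^{\pi^{(K)}}_1\|_\infty\le u^{(K)}=u^{(0)}2^{-K}=\epsilon$, i.e.\ $\pi^{(K)}$ is $\epsilon$-optimal.

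Next I would recompute the episode budget with $m'$ in place of $B$. Since iteration $i$ draws $m^{(i)}=l^{(i)}=m'\log(16HSAK/\delta)/(u^{(i-1)})^2$ episodes into each of $\cD_1,\cD_2$, and $u^{(i-1)}\ge u^{(K)}=u^{(0)}2^{-K}=\epsilon$ for every $i\in[K]$, the total number of episodes consumed is
\[
\sum_{i=1}^K\big(m^{(i)}+l^{(i)}\big)=2\sum_{i=1}^K\frac{m'\log(16HSAK/\delta)}{(u^{(i-1)})^2}\le \frac{2m'\log(16HSAK/\delta)}{\epsilon^2}\,K ,
\]
which is exactly the claimed bound. (A sharper geometric-series estimate $\sum_i 4^{i-1}$ would shave the factor $K$, but this crude bound already suffices.)

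Finally I would specialize to the two stages of Algorithm~\ref{alg:OPDVR}. Stage~$1$ is run with $V^{(0)}_t\equiv\mathbf{0}$ and $u^{(0)}=H>\sqrt{H}$ (for $H\ge 2$), so the ``$u^{(0)}>\sqrt{H}$'' branch of Lemma~\ref{lem:complexity_OPVRT} applies and its simplified estimate gives $B\le cH^4/d_m$; hence any $m'_1=O(H^4/d_m)$ with a sufficiently large absolute constant is admissible. Stage~$2$ is run with $u^{(0)}=\sqrt{H}$, which falls in the ``$u^{(0)}\le\sqrt{H}$'' branch where $B\le cH^3/d_m$, so $m'_2=O(H^3/d_m)$ is admissible. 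There is no real obstacle here; the only facts that need a line of checking are that the bound of Lemma~\ref{lem:complexity} is indeed decreasing in $m'$ (immediate from the exponents), and that the prescribed $K$ ($\log_2(H/\epsilon)$ in the first branch, $\log_2(\sqrt{H}/\epsilon)$ in the second) is exactly what makes $u^{(0)}2^{-K}$ equal the target accuracy so that the telescoping in the first paragraph closes.
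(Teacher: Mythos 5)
Your proposal is correct and follows essentially the same route as the paper: the corollary is immediate from Lemma~\ref{lem:complexity_OPVRT} because the error bound of Lemma~\ref{lem:complexity} is monotone decreasing in $m'$, the budget computation $\sum_{i=1}^K(m^{(i)}+l^{(i)})\le 2m'\log(16HSAK/\delta)K/\epsilon^2$ is exactly the one used in Step 1 of that lemma's proof, and the stage-wise choices $m'_1=O(H^4/d_m)$, $m'_2=O(H^3/d_m)$ come from the simplified bounds on $B$ in the two branches. No gaps.
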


\paragraph{The doubling procedure.}
As we can see in Lemma~\ref{lem:complexity_OPVRT}, if the initial input $V^{(0)}_t$ in Algorithm~\ref{alg:OPDVR} has $\sup_t||V^{(0)}_t-V^\star_t||_\infty\geq \sqrt{H}$ (\emph{i.e.} $u^{(0)}>\sqrt{H}$), then it requires total of $\tilde{O}(H^4/d_m\epsilon^2)$ episodes to obtain $\epsilon$ accuracy, which is suboptimal. The doubling procedure helps resolve the problem. Concretely, for any final accuracy $0<\epsilon\leq1$:
\begin{itemize}
	\item \textbf{Stage1.} Denote $\epsilon'=\sqrt{H}\epsilon$ and $u^{(0)}=H$, then by the choice of $K$ and $m'_H=cH^4/d_m$ for the case of $u^{(0)}\geq\sqrt{H}$ in Lemma~\ref{lem:complexity_OPVRT}, it outputs $V_t^{\text{intermediate}}$, $\pi^{\text{intermediate}}$ which is $\epsilon'$ optimal with complexity:
	\[
	\frac{2m'_{H}\log(16HSAK_{\epsilon'}/\delta)}{\epsilon'^2}K_{\epsilon'}
	\] 
	where $K_{\epsilon'}=\log_2(H/\epsilon')$;
	\item \textbf{Stage2.} Use $V_t^{\text{intermediate}}$, $\pi^{\text{intermediate}}$ as input, since $\epsilon'=\sqrt{H}\epsilon\leq \sqrt{H}$, we can set $u^{(0)}=\sqrt{H}$. Now by Lemma~\ref{lem:complexity_OPVRT} again (with $m'_{\sqrt{H}}=cH^3/d_m$), Algorithm~\ref{alg:OPDVR} has the final output $V_t^{\text{final}}$, $\pi^{\text{final}}$ that is $\epsilon$ optimal with complexity 
	\[
	\frac{2m'_{\sqrt{H}}\log(16HSAK_{\epsilon}/\delta)}{\epsilon^2}K_{\epsilon}.
	\] 
	where $K_{\epsilon}=\log_2(\sqrt{H}/\epsilon)$.
\end{itemize}
Plug back $\epsilon'=\sqrt{H}\epsilon$, Algorithm~\ref{alg:OPDVR} guarantees $\epsilon$-optimal policy with probability $1-\delta$ using total complexity
\begin{equation}\label{eqn:analyze_complexity}
\begin{aligned}
&\frac{2m'_{H}\log(16HSAK_{\epsilon'}/\delta)}{\epsilon'^2}K_{\epsilon'}+\frac{2m'_{\sqrt{H}}\log(16HSAK_{\epsilon}/\delta)}{\epsilon^2}K_{\epsilon}\\
\leq&\frac{4\max[\frac{m'_{H}}{H},m'_{\sqrt{H}}]\log(16HSA\log_2(\sqrt{H}/\epsilon)/\delta)}{\epsilon^2}\log_2(\sqrt{H}/\epsilon)\\
\leq&O\left(\frac{H^3\log(16HSA\log_2(\sqrt{H}/\epsilon)/\delta)}{d_m\epsilon^2}\log_2(\sqrt{H}/\epsilon)\right)\\
\end{aligned}
\end{equation}
where the last inequality uses $m'_{\sqrt{H}}\leq cH^3/d_m$ and $m'_H\leq cH^4/d_m$ in Lemma~\ref{lem:complexity_OPVRT} and above holds with probability $1-\delta$ .

This provides the minimax optimality of $\tilde{O}(H^3/d_m\epsilon^2)$ for non-stationary setting.

\subsection{Practical OPDVR}\label{sec:app_practical} 
To go from non-implementable version to the practical version, the idea is to bound the event $\{n_{s_{t},a_{t}}\leq \frac{1}{2} m\cdot d^\mu_{t}(s_{t},a_{t})\}$ and $\{n'_{s_{t},a_{t}}\leq \frac{1}{2} l\cdot d^\mu_{t}(s_{t},a_{t})\}$ so that with high probability,  the non-implementable version is identical to the practical OPDVR in Algorithm~\ref{alg:OPDVR}. Specifically, when $m'\geq 8 H^2/d_m$ (this is satisfied since for each stage we set $m'$ to be at least $O(H^3/d_m)$), then 
\[
d_m\min_i m^{(i)}=d_m\min_i \frac{m'\log(16KHSA/\delta)}{(u^{(i-1)})^2} \geq d_m\frac{m'\log(16KHSA/\delta)}{H^2} \geq 8\log(16KHSA/\delta),
\]
so by Lemma~\ref{lem:chernoff_multiplicative} and a union bound
\begin{equation}\label{eqn:practical}
\begin{aligned}
&\P\left(\bigcup_{i\in[K]}\bigcup_{\{t,s_t,a_t\;:\;d^\mu_t(s_t,a_t)>0\}}\{n_{s_{t},a_{t}}^{(i)}\leq \frac{1}{2} m^{(i)}\cdot d^\mu_{t}(s_{t},a_{t})\}\cup\{n_{s_{t},a_{t}}^{\prime(i)}\leq \frac{1}{2} l^{(i)}\cdot d^\mu_{t}(s_{t},a_{t})\}\right)\\
\leq &2\P\left(\bigcup_{i\in[K]}\bigcup_{\{t,s_t,a_t\;:\;d^\mu_t(s_t,a_t)>0\}}\{n_{s_{t},a_{t}}^{(i)}\leq \frac{1}{2} m^{(i)}\cdot d^\mu_{t}(s_{t},a_{t})\}\right)\\
\leq &2KHSA\cdot\max_{{\{i,t,s_t,a_t\;:\;d^\mu_t(s_t,a_t)>0\}}} \P\left(n_{s_{t},a_{t}}^{(i)}\leq \frac{1}{2} m^{(i)}\cdot d^\mu_{t}(s_{t},a_{t})\right)\\
\leq &2KHSA \cdot e^{-d_m\min_i m^{(i)}/8}\leq\frac{2KHSA}{16KHSA/\delta}=\delta/8<\delta/4,
\end{aligned}
\end{equation}
and repeat this analysis for both stages, we have with probability $1-\delta/2$, Practical OPDVR is identical to the non-implementable version. 

\subsection{Proof of Theorem~\ref{thm:main_nonstationary}}

\begin{proof}
	The proof consists of two parts. The first part is to use \eqref{eqn:analyze_complexity} to show OPDVR in Algorithm~\ref{alg:OPDVR} outputs $\epsilon$-optimal policy using episode complexity 
	\[
	\frac{2\max[\frac{m'_{H}}{H},m'_{\sqrt{H}}]\log(32HSA\log_2(\sqrt{H}/\epsilon)/\delta)}{\epsilon^2}\log_2(\sqrt{H}/\epsilon)
	\]
	with probability $1-\delta/2$, and the second part is to use \eqref{eqn:practical} to let Practical OPDVR is identical to the non-implementable version with probability $1-\delta/2$. Apply a union bound of these two gives the stated results in Theorem~\ref{thm:main_nonstationary} with probability $1-\delta$.
\end{proof}

\section{Proofs for finite-horizon stationary setting}
\label{sec:proof_stationary}

Again, recall $\tilde{z}_t(s,a)$, $\tilde{\sigma}_{V^{\text{in}}_{t+1}}(s,a)$ \eqref{eqn:off_z_s} and $g_t$ \eqref{eqn:off_g_s} are three quantities deployed in Algorithm~\ref{alg:OPDVRT} that use off-policy data $\mathcal{D}$. We restate their definition as follows:

	\begin{align*}
	\tilde{z}_t(s,a)&=\begin{cases}P^\top(\cdot|s,a) V^{\text{in}}_{t+1}, &{if} \;n_{s,a}\leq \frac{1}{2} m\cdot \sum_{t=1}^H d^\mu_t(s,a),\\ 
	\frac{1}{n_{s,a}}\sum_{i=1}^m \sum_{u=1}^H V^{\text{in}}_{t+1}(s^{(i)}_{u+1})\cdot\mathbf{1}_{[s^{(i)}_u=s,a^{(i)}_u=a]},&{if} \;n_{s,a}> \frac{1}{2} m\cdot \sum_{t=1}^H d^\mu_t(s,a).\end{cases}\\
	\tilde{\sigma}_{V^{\text{in}}_{t+1}}(s,a)&=\begin{cases}{\sigma}_{V^{\text{in}}_{t+1}}(s,a),\qquad\qquad\qquad\qquad\qquad\qquad\,\qquad\quad {if} \;n_{s,a}\leq \frac{1}{2} m\cdot \sum_{t=1}^H d^\mu_t(s,a),\\ \frac{1}{n_{s,a}}\sum_{i=1}^m\sum_{u=1}^H[V^{\text{in}}_{t+1}(s^{(i)}_{u+1})]^2\cdot\mathbf{1}_{[s^{(i)}_u=s,a^{(i)}_u=a]}-\tilde{z}_t^2(s,a), \;\;\quad\qquad {otherwise}.
	\end{cases}
	\end{align*}

\[
	g_{t}(s,a)=\begin{cases}P^\top(\cdot|s,a)[V_{t+1}-V^{\text{in}}_{t+1}]-f(s,a),\hfill {if}\quad n'_{s,a}\leq \frac{1}{2} l\cdot \sum_{t=1}^H d^\mu_{t}(s,a),
	\\\frac{1}{n'_{s,a}}\sum_{j=1}^l\sum_{u=1}^H[V_{t+1}(s'^{(j)}_{u+1})-V_{t}^{\text{in}}(s'^{(j)}_{u+1})]\cdot\mathbf{1}_{[s'^{(j)}_{u},a'^{(j)}_{u}=s,a]}-f(s,a),\hfill o.w.\end{cases}
\]
where 
\begin{equation}\label{eqn:def_nsa}
n_{s,a}=\sum_{i=1}^m\sum_{t=1}^H\mathbf{1}{[s^{(i)}_t=s,a^{(i)}_t=a]},
\end{equation}
and recall $f(s,a)=4u^{\text{in}}\sqrt{{\log(2HSA/\delta)}/{l \sum_{t=1}^H d^\mu_t(s,a)}}$.

\begin{lemma}\label{lem:con_z_s}
	Let $\tilde{z_t}$ be defined as \eqref{eqn:off_z_s} in Algorithm~\ref{alg:OPDVRT}, where $\tilde{z_t}$ is the off-policy estimator of $P^\top(\cdot|s,a)V^{\text{in}}_{t+1}$ using $m$ episodic data. Then with probability $1-\delta$, we have 
	
	\begin{equation}\label{eqn:tilde_z_s}
\left|\tilde{z_t}-\PP_t V^{\text{in}}_{t+1}\right|\leq \left(\sqrt{\frac{16\cdot\sigma_{V^{\text{in}}_{t+1}}\cdot\log(HSA/\delta)}{m\sum_{t=1}^Hd^\mu_t}}+\sqrt{\frac{16 V_{\max}\cdot\log(2HSA/\delta)}{9m\sum_{t=1}^H d^\mu_t}}\cdot\log(HSA/\delta)\right), \quad \forall t\in[H]
	\end{equation}
	here $\tilde{z_t},\PP_t V^{\text{in}}_{t+1},\sigma_{V^{\text{in}}_{t+1}},d^\mu_t\in\mathbb{R}^{S\times A}$ are $S\times A$ column vectors and $\sqrt{\cdot}$ is elementwise operation.
\end{lemma}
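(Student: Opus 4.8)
The plan is to follow the proof of Lemma~\ref{lem:con_z} (the non-stationary analogue), but to replace the direct use of Bernstein's inequality by a martingale concentration argument. The reason is that in the stationary construction \eqref{eqn:off_z_s} the estimator $\tilde z_t(s,a)$ pools the observations $V^{\text{in}}_{t+1}(s^{(i)}_{u+1})$ over \emph{all} time steps $u\in[H]$ inside each episode, and within one episode these observations are dependent (successive states of a Markov chain), so the i.i.d.\ reasoning of Lemma~\ref{lem:con_z} no longer applies. On the event $E_m^c$ the fictitious estimator equals the population quantity $P^\top(\cdot|s,a)V^{\text{in}}_{t+1}$, so \eqref{eqn:tilde_z_s} holds trivially there; all the work is on $E_m$.

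Fix $t,s,a$. Enumerate the $N:=mH$ pairs $(i,u)$ by processing episode $1$ fully, then episode $2$, etc., and let $\mathcal{F}_j$ be the $\sigma$-field generated by all of episodes $1,\dots,i-1$ together with $(s^{(i)}_1,a^{(i)}_1,\dots,s^{(i)}_u,a^{(i)}_u)$ when $(i,u)$ is the $j$-th pair. This is the finer version of the martingale $X_k$ from the proof sketch of Theorem~\ref{thm:main_stationary}, refined so that single increments are bounded. Set $Y_j:=\big(V^{\text{in}}_{t+1}(s^{(i)}_{u+1})-P^\top(\cdot|s,a)V^{\text{in}}_{t+1}\big)\mathbf 1[s^{(i)}_u=s,a^{(i)}_u=a]$ and $M_k:=\sum_{j\le k}Y_j$, so that on $E_m$ one has $\tilde z_t(s,a)-P^\top(\cdot|s,a)V^{\text{in}}_{t+1}=M_N/n_{s,a}$. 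Since the indicator is $\mathcal F_j$-measurable and, by the Markov property together with \emph{stationarity} of $P$, the conditional law of $s^{(i)}_{u+1}$ given $\mathcal F_j$ on $\{s^{(i)}_u=s,a^{(i)}_u=a\}$ is exactly $P(\cdot|s,a)$, each $Y_j$ is a martingale difference with $|Y_j|\le V_{\max}$ (using $0\le V^{\text{in}}_{t+1}\le V_{\max}$ from Lemma~\ref{lem:mono}) and predictable second moment $\mathbf 1[s^{(i)}_u=s,a^{(i)}_u=a]\,\sigma_{V^{\text{in}}_{t+1}}(s,a)$. Summing, the predictable quadratic variation is exactly $n_{s,a}\,\sigma_{V^{\text{in}}_{t+1}}(s,a)$, with \emph{no} maximum over time steps — this is the single place where stationarity of the transition kernel is genuinely used, since in the non-stationary case the same computation only yields $n_{s,a}\max_u \operatorname{Var}_{P_u}[V^{\text{in}}_{t+1}\mid s,a]$.

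Next I would apply a Freedman-type martingale Bernstein inequality to $M_N$: with probability $1-\delta'$, $|M_N|\le \sqrt{2\,n_{s,a}\,\sigma_{V^{\text{in}}_{t+1}}(s,a)\,\log(2/\delta')}+\tfrac{2}{3}V_{\max}\log(2/\delta')$. Dividing by $n_{s,a}$ and using $n_{s,a}\ge\tfrac12 m\sum_{u=1}^H d^\mu_u(s,a)$ on $E_m$ converts this into a bound of the claimed shape $\sqrt{\tfrac{16\,\sigma_{V^{\text{in}}_{t+1}}\iota}{m\sum_u d^\mu_u}}$ plus a lower-order term; a union bound over $t\in[H]$ and $(s,a)$ with $\delta'=\delta/(HSA)$ yields \eqref{eqn:tilde_z_s}. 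Entries with $\sum_u d^\mu_u(s,a)=0$ force $n_{s,a}=0$, hence $\tilde z_t(s,a)=P^\top(\cdot|s,a)V^{\text{in}}_{t+1}$ and the bound is vacuous, exactly as in the remark following Lemma~\ref{lem:con_z}.

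The main obstacle is a subtlety inside the last paragraph: both the number of summands $n_{s,a}$ and the predictable quadratic variation $n_{s,a}\,\sigma_{V^{\text{in}}_{t+1}}(s,a)$ are data-dependent, so one cannot simply condition on $n_{s,a}$ the way the non-stationary proof conditions on $n_{s_t,a_t}$ (conditioning on $n_{s,a}$ destroys the filtration structure). I would resolve this either with a variance-adaptive ("uniform-in-time") Freedman inequality, or by a dyadic peeling over the range $n_{s,a}\in\big[\tfrac12 m\sum_u d^\mu_u(s,a),\,mH\big]$, which costs only an extra logarithmic factor absorbed into $\iota$ and accounts for the slightly larger constants (and the second, lower-order term) in \eqref{eqn:tilde_z_s}. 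Keeping the leading term proportional to $\sqrt{\sigma_{V^{\text{in}}_{t+1}}/(m\sum_u d^\mu_u)}$ throughout this bookkeeping — it is exactly this variance-sensitive leading term, later instantiated at $V^\star$, that produces the improved $H^2$ rate — is the crux; the rest is a routine repackaging of the non-stationary argument.
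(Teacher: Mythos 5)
Your proposal follows essentially the same route as the paper's proof of Lemma~\ref{lem:con_z_s}: fix $(s,a)$, reduce to the event $E_m$ (on $E_m^c$ the fictitious estimator equals $P^\top(\cdot|s,a)V^{\text{in}}_{t+1}$ and there is nothing to prove), form a martingale from the pooled increments $\big(V^{\text{in}}_{t+1}(s^{(i)}_{u+1})-P^\top(\cdot|s,a)V^{\text{in}}_{t+1}\big)\mathbf{1}[s^{(i)}_u=s,a^{(i)}_u=a]$, use stationarity of $P$ to identify the predictable quadratic variation as exactly $n_{s,a}\,\sigma_{V^{\text{in}}_{t+1}}(s,a)$ (the one genuinely new ingredient relative to Lemma~\ref{lem:con_z}), and finish with Freedman's inequality (Lemma~\ref{lem:freedman}), division by $n_{s,a}\ge\tfrac12 m\sum_u d^\mu_u(s,a)$, and a union bound. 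Your two deviations are refinements in execution rather than a different idea, and both make the argument tighter than the paper's. First, you order the observations pair-by-pair so each increment is a single term bounded deterministically by $V_{\max}$; the paper instead uses the coarser time-indexed Doob martingale $X_k=\E[X\mid\mathcal F_k]$, whose step-$k$ increment aggregates all $m$ episodes and is only bounded by $\sqrt{2n_{k,s,a}V_{\max}\log(2/\delta)}$ with high probability, which forces the paper to invoke a weaker ``Freedman with w.h.p.\ bounded differences'' variant (acknowledged in its own footnote) and is the source of the peculiar $\sqrt{V_{\max}/n_{s,a}}\cdot\log$ second-order term in \eqref{eqn:tilde_z_s}; your version yields the cleaner $V_{\max}\log/n_{s,a}$ correction, and either form is lower order and suffices downstream. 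Second, you explicitly flag that $n_{s,a}$ and the quadratic variation are data-dependent and cannot simply be conditioned on without disturbing the filtration, proposing peeling or a variance-adaptive Freedman bound at the cost of a log factor; the paper instead ``conditions on $n_{s,a}$'' exactly as in the non-stationary Lemma~\ref{lem:con_z}, which is more delicate here since $n_{s,a}$ is a functional of the entire trajectory. There is no gap in your argument; if anything it patches the two softest spots of the paper's own proof.
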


\begin{proof}
	Consdier fixed $s,a$. Let $E_{s,a} :=\{n_{s,a}\geq \frac{1}{2}m\cdot \sum_{t=1}^H d^\mu_t(s,a)\}$, then by definition, 
	{\small
	\[
	\tilde{z_t}(s,a)-P^\top(\cdot|s,a)V^{\text{in}}_{t+1} = \left(\frac{1}{n_{s,a}}\sum_{i=1}^m\sum_{u=1}^H V^{\text{in}}_{t+1}(s^{(i)}_{u+1})\cdot\mathbf{1}[s^{(i)}_u=s,a^{(i)}_u=a]-P^\top(\cdot|s,a)V^{\text{in}}_{t+1}\right)\cdot\mathbf{1}(E_{s,a}).
	\]
} First note by \eqref{eqn:def_nsa}
\[
\E[n_{s,a}]=\sum_{i=1}^m\sum_{t=1}^H\E\left[\mathbf{1}[s^{(i)}_t=s,a^{(i)}_t=a]\right]=\sum_{i=1}^m\sum_{t=1}^H d^\mu_t(s,a)=m\sum_{t=1}^Hd^\mu_t(s,a).
\]
Next we conditional on $n_{s,a}$. Define $\mathcal{F}_k:=\{s_u^{(i)},a_u^{(i)}\}_{i\in[m]}^{u\in[k]}$ is an increasing filtration and denote
\[
X:=\sum_{i=1}^m\sum_{u=1}^H \left(V^{\text{in}}_{t+1}(s^{(i)}_{u+1})-P^\top(\cdot|s,a)V^{\text{in}}_{t+1}\right)\cdot\mathbf{1}[s^{(i)}_u=s,a^{(i)}_u=a],
\]
then by tower property $\E[X|\mathcal{F}(Y)]=\E[\E[X|\mathcal{F}(Y,Z)]|\mathcal{F}(Y)]$ (the fourth equal sign in below)
\begin{align*}
X_k:=&\E[X|\mathcal{F}_k]=\E\left[\sum_{i=1}^m\sum_{u=1}^H \left(V^{\text{in}}_{t+1}(s^{(i)}_{u+1})-P^\top(\cdot|s,a)V^{\text{in}}_{t+1}\right)\cdot\mathbf{1}[s^{(i)}_u=s,a^{(i)}_u=a]\middle|\mathcal{F}_k\right]\\
=&\sum_{i=1}^m\E\left[\sum_{u=1}^H \left(V^{\text{in}}_{t+1}(s^{(i)}_{u+1})-P^\top(\cdot|s,a)V^{\text{in}}_{t+1}\right)\cdot\mathbf{1}[s^{(i)}_u=s,a^{(i)}_u=a]\middle|\mathcal{F}_k\right]\\
=&\sum_{i=1}^m\E\left[\sum_{u=k}^H \left(V^{\text{in}}_{t+1}(s^{(i)}_{u+1})-P^\top(\cdot|s,a)V^{\text{in}}_{t+1}\right)\cdot\mathbf{1}[s^{(i)}_u=s,a^{(i)}_u=a]\middle|\mathcal{F}_k\right]\\
+&\sum_{i=1}^m\sum_{u=1}^{k-1} \left(V^{\text{in}}_{t+1}(s^{(i)}_{u+1})-P^\top(\cdot|s,a)V^{\text{in}}_{t+1}\right)\cdot\mathbf{1}[s^{(i)}_u=s,a^{(i)}_u=a]\\
=&\sum_{i=1}^m\sum_{u=k}^H\E\left[\E\big[ \left(V^{\text{in}}_{t+1}(s^{(i)}_{u+1})-P^\top(\cdot|s,a)V^{\text{in}}_{t+1}\right)\cdot\mathbf{1}[s^{(i)}_u=s,a^{(i)}_u=a]\big|\mathcal{F}_u\big]\middle|\mathcal{F}_k\right]\\
+&\sum_{i=1}^m\sum_{u=1}^{k-1} \left(V^{\text{in}}_{t+1}(s^{(i)}_{u+1})-P^\top(\cdot|s,a)V^{\text{in}}_{t+1}\right)\cdot\mathbf{1}[s^{(i)}_u=s,a^{(i)}_u=a]\\
=&\sum_{i=1}^m\sum_{u=k}^H\E\left[\mathbf{1}[s^{(i)}_u=s,a^{(i)}_u=a]\E\big[ \left(V^{\text{in}}_{t+1}(s^{(i)}_{u+1})-P^\top(\cdot|s,a)V^{\text{in}}_{t+1}\right)\big|s^{(i)}_u,a^{(i)}_u\big]\middle|\mathcal{F}_u\right]\\
+&\sum_{i=1}^m\sum_{u=1}^{k-1} \left(V^{\text{in}}_{t+1}(s^{(i)}_{u+1})-P^\top(\cdot|s,a)V^{\text{in}}_{t+1}\right)\cdot\mathbf{1}[s^{(i)}_u=s,a^{(i)}_u=a]\\
\end{align*}
Note if $\mathbf{1}[s^{(i)}_u=s,a^{(i)}_u=a]=1$, then
\begin{align*}
&\mathbf{1}[s^{(i)}_u=s,a^{(i)}_u=a]\E\big[ \left(V^{\text{in}}_{t+1}(s^{(i)}_{u+1})-P^\top(\cdot|s,a)V^{\text{in}}_{t+1}\right)\big|s^{(i)}_u,a^{(i)}_u\big]\\
=&1\cdot \E\big[ \left(V^{\text{in}}_{t+1}(s^{(i)}_{u+1})-P^\top(\cdot|s,a)V^{\text{in}}_{t+1}\right)\big|s^{(i)}_u=s,a^{(i)}_u=a\big]\\
=&\E\big[ V^{\text{in}}_{t+1}(s^{(i)}_{u+1})\big|s^{(i)}_u=s,a^{(i)}_u=a\big]-P^\top(\cdot|s,a)V^{\text{in}}_{t+1}\\
=&P^\top(\cdot|s,a)V^{\text{in}}_{t+1}-P^\top(\cdot|s,a)V^{\text{in}}_{t+1}=0
\end{align*}
if $\mathbf{1}[s^{(i)}_u=s,a^{(i)}_u=a]=0$, then still 
\[
\mathbf{1}[s^{(i)}_u=s,a^{(i)}_u=a]\E\big[ \left(V^{\text{in}}_{t+1}(s^{(i)}_{u+1})-P^\top(\cdot|s,a)V^{\text{in}}_{t+1}\right)\big|s^{(i)}_u,a^{(i)}_u\big]=0
\]
So plug back to obtain
\[
X_k=\sum_{i=1}^m\sum_{u=1}^{k-1} \left(V^{\text{in}}_{t+1}(s^{(i)}_{u+1})-P^\top(\cdot|s,a)V^{\text{in}}_{t+1}\right)\cdot\mathbf{1}[s^{(i)}_u=s,a^{(i)}_u=a].
\]
is a martingale.

First of all by Hoeffding's inequality, we have the martingale difference satisfies with probability $1-\delta/2$,
\begin{equation}\label{eqn:mart_diff}
\begin{aligned}
|X_{k+1}-X_{k}|=&\left|\sum_{i=1}^m \left(V^{\text{in}}_{t+1}(s^{(i)}_{k+1})-P^\top(\cdot|s,a)V^{\text{in}}_{t+1}\right)\cdot\mathbf{1}[s^{(i)}_k=s,a^{(i)}_k=a]\right|\\
=& \left|\sum_{i=1}^{n_{k,s,a}} \left(V^{\text{in}}_{t+1}(s^{(i)}_{k+1}|s,a)-P^\top(\cdot|s,a)V^{\text{in}}_{t+1}\right)\right|\\
\leq&\sqrt{2n_{k,s,a}\cdot V_{\max}\log(2/\delta)}\leq\sqrt{2n_{s,a}\cdot V_{\max}\log(2/\delta)}
\end{aligned}
\end{equation}
where we use shorthand notation $V^{\text{in}}_{t+1}(s^{(i)}_{k+1}|s,a)$ to denote the value of $V^{\text{in}}_{k+1}(s^{(i)}_{k+1})$ given $s^{(i)}_k=s$ and $a^{(i)}_k=a$ and $n_{k,s,a}=\sum_{i=1}^m\mathbf{1}[s^{(i)}_k=s,a^{(i)}_k=a]\leq n_{s,a}$.

Second, 
\begin{align*}
	\Var\left[X_{k+1}\middle|\mathcal{F}_k\right]&=\Var\left[\sum_{i=1}^m \left(V^{\text{in}}_{t+1}(s^{(i)}_{k+1})-P^\top(\cdot|s,a)V^{\text{in}}_{t+1}\right)\cdot\mathbf{1}[s^{(i)}_k=s,a^{(i)}_k=a]\middle|\mathcal{F}_k\right]\\
	&=\sum_{i=1}^m\Var\left[ \left(V^{\text{in}}_{t+1}(s^{(i)}_{k+1})-P^\top(\cdot|s,a)V^{\text{in}}_{t+1}\right)\cdot\mathbf{1}[s^{(i)}_k=s,a^{(i)}_k=a]\middle|\mathcal{F}_k\right]\\
	&=\sum_{i=1}^m\Var\left[ \left(V^{\text{in}}_{t+1}(s^{(i)}_{k+1})-P^\top(\cdot|s,a)V^{\text{in}}_{t+1}\right)\cdot\mathbf{1}[s^{(i)}_k=s,a^{(i)}_k=a]\middle|s^{(i)}_k,a^{(i)}_k\right]\\
	&=\sum_{i=1}^m\mathbf{1}[s^{(i)}_k=s,a^{(i)}_k=a]\Var\left[ V^{\text{in}}_{t+1}(s^{(i)}_{k+1})\middle|s^{(i)}_k,a^{(i)}_k\right]\\
	&=\sum_{i=1}^m\mathbf{1}[s^{(i)}_k=s,a^{(i)}_k=a]\Var\left[ V^{\text{in}}_{t+1}(s^{(i)}_{k+1})\middle|s^{(i)}_k=s,a^{(i)}_k=a\right]\\
	&=\sum_{i=1}^m\mathbf{1}[s^{(i)}_k=s,a^{(i)}_k=a]\cdot \sigma_{V^{\text{in}}_{t+1}}(s,a).
\end{align*}
where the second equal sign uses episodes are independent, the third equal sign uses Markov property, the fourth uses $\mathbf{1}[s^{(i)}_k=s,a^{(i)}_k=a]$ is measurable w.r.t $s^{(i)}_k,a^{(i)}_k$ and $P^\top(\cdot|s,a)V^{\text{in}}_{t+1}$ is constant, the fifth equal sign uses the identity
\[
\mathbf{1}[s^{(i)}_k=s,a^{(i)}_k=a]\Var\left[ V^{\text{in}}_{t+1}(s^{(i)}_{k+1})\middle|s^{(i)}_k,a^{(i)}_k\right]=\mathbf{1}[s^{(i)}_k=s,a^{(i)}_k=a]\Var\left[ V^{\text{in}}_{t+1}(s^{(i)}_{k+1})\middle|s^{(i)}_k=s,a^{(i)}_k=a\right]
\]
and sixth line is true since we have stationary transition, the underlying transition is always $P(\cdot|s,a)$ regardless of time step. This is the key for further reducing the dependence on $H$ and is \textbf{NOT} shared by non-stationary transition setting!

Therefore finally, 
\begin{equation}\label{eqn:var}
\sum_{k=1}^H\Var\left[X_{k+1}\middle|\mathcal{F}_k\right]=\sum_{k=1}^H\sum_{i=1}^m\mathbf{1}[s^{(i)}_k=s,a^{(i)}_k=a]\cdot \sigma_{V^{\text{in}}_{t+1}}(s,a)=n_{s,a}\cdot\sigma_{V^{\text{in}}_{t+1}}(s,a).
\end{equation}
Recall that $s,a$ is fixed and we conditional on $n_{s,a}$. Also note by tower property $\E[X]=0$. Therefore by \eqref{eqn:mart_diff}, \eqref{eqn:var} Freedman's inequality  (Lemma~\ref{lem:freedman}) with probability\footnote{To be mathematically rigorous, the difference bound is not with probability $1$ but in the high probability sense. Therefore essentially we are using a weaker version of freedman's inequality that with high probability bounded difference, \emph{e.g.} see \cite{chung2006concentration} Theorem~34,37. We do not present our result by explicitly writing in that way in order to prevent over-technicality and make the readers easier to understand. } $1-\delta$
\begin{align*}
|X|=&\left|\sum_{i=1}^m\sum_{u=1}^H \left(V^{\text{in}}_{t+1}(s^{(i)}_{u+1})-P^\top(\cdot|s,a)V^{\text{in}}_{t+1}\right)\cdot\mathbf{1}[s^{(i)}_u=s,a^{(i)}_u=a]\right|\\
\leq &\sqrt{{8n_{s,a}\cdot\sigma_{V^{\text{in}}_{t+1}}(s,a)\cdot\log(1/\delta)}}+\frac{2\sqrt{2n_{s,a}\cdot V_{\max}\log(2/\delta)}}{3}\cdot\log(1/\delta).
\end{align*}

which means with probability at least $1-\delta$
	\begin{align*}
	&\left|\tilde{z_t}(s,a)-P^\top(\cdot|s,a)V^{\text{in}}_{t+1}\right| \\
	= &\left|\frac{X}{n_{s,a}}\right|\cdot\mathbf{1}(E_{s,a})\\
	\leq & \left(\frac{\sqrt{{8n_{s,a}\cdot\sigma_{V^{\text{in}}_{t+1}}(s,a)\cdot\log(1/\delta)}}+\frac{2\sqrt{2n_{s,a}\cdot V_{\max}\log(2/\delta)}}{3}\cdot\log(1/\delta)}{n_{s,a}}\right)\cdot\mathbf{1}(E_{s,a})\\
	=& \left(\sqrt{\frac{8\cdot\sigma_{V^{\text{in}}_{t+1}}(s,a)\cdot\log(1/\delta)}{n_{s,a}}}+\sqrt{\frac{8\cdot V_{\max}}{9n_{s,a}}\cdot\log(2/\delta)}\cdot\log(1/\delta)\right)\cdot\mathbf{1}(E_{s,a})\\
	\leq& \left(\sqrt{\frac{16\cdot\sigma_{V^{\text{in}}_{t+1}}(s,a)\cdot\log(1/\delta)}{m\sum_{t=1}^Hd^\mu_t(s,a)}}+\sqrt{\frac{16 V_{\max}\cdot\log(2/\delta)}{9m\sum_{t=1}^H d^\mu_t(s,a)}}\cdot\log(1/\delta)\right)\\
	\end{align*}

	Now we get rid of the conditional on $n_{s_t,a_t}$. Denote $$A=\left\{\left|\tilde{z_t}(s,a)-P^\top(\cdot|s,a)V^{\text{in}}_{t+1}\right|\leq \left(\sqrt{\frac{16\cdot\sigma_{V^{\text{in}}_{t+1}}(s,a)\cdot\log(1/\delta)}{m\sum_{t=1}^Hd^\mu_t(s,a)}}+\sqrt{\frac{16 V_{\max}\cdot\log(2/\delta)}{9m\sum_{t=1}^H d^\mu_t(s,a)}}\cdot\log(1/\delta)\right)\right\},$$ then equivalently we can rewrite above result as $\P(A|n_{s,a})\geq 1-\delta$. Note this is the same as $\E[\mathbf{1}(A)|n_{s,a}]\geq 1-\delta$, therefore by law of total expectation we have
	\[
	\P(A)=\E[\mathbf{1}(A)]=\E[\E[\mathbf{1}(A)|n_{s,a}]]\geq \E[1-\delta]=1-\delta,
	\]
	Finally, apply the union bound over all $t,s,a$, we obtain
	\[
	\left|\tilde{z_t}-\PP_t V^{\text{in}}_{t+1}\right|\leq \left(\sqrt{\frac{16\cdot\sigma_{V^{\text{in}}_{t+1}}\cdot\log(HSA/\delta)}{m\sum_{t=1}^Hd^\mu_t}}+\sqrt{\frac{16 V_{\max}\cdot\log(2HSA/\delta)}{9m\sum_{t=1}^H d^\mu_t}}\cdot\log(HSA/\delta)\right),
	\]
	where the inequality is  element-wise and this is \eqref{eqn:tilde_z}.
\end{proof}

\begin{lemma}\label{lem:con_sig_s}
	Let $\tilde{\sigma}_{V^{\text{in}}_{t+1}}$ be defined as \eqref{eqn:off_z_s} in Algorithm~\ref{alg:OPDVRT}, the off-policy estimator of $\sigma_{V^{\text{in}}_{t+1}}(s,a)$ using $m$ episodic data. Then with probability $1-\delta$, we have 
	
	\begin{equation}\label{eqn:tilde_sig_s}
	\left|\tilde{\sigma}_{V^{\text{in}}_{t+1}}-\sigma_{V_{t+1}^{\text{in}}} \right|\leq 6 V_{\max}^2\sqrt{\frac{\log(4HSA/\delta)}{m\cdot\sum_{t=1}^H d^\mu_t}}+\frac{4V_{\max}^2\log(4HSA/\delta)}{m\cdot \sum_{t=1}^H d^\mu_t},\quad \forall t=1,...,H.
	\end{equation}
	
\end{lemma}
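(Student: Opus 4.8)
The plan is to follow the proof of Lemma~\ref{lem:con_sig} almost verbatim, with the one essential change being that the stationary estimator aggregates each episode over all time steps $u=1,\dots,H$, so that conditional on the visitation count $n_{s,a}$ the summands $[V^{\text{in}}_{t+1}(s^{(i)}_{u+1})]^2\mathbf{1}[s^{(i)}_u=s,a^{(i)}_u=a]$ are \emph{not} independent; I would control them instead with the Freedman-type martingale already set up in the proof of Lemma~\ref{lem:con_z_s}. Fix $t$ and $(s,a)$ and work on the event $E_m=\{n_{s,a}>\tfrac12 m\sum_{t=1}^H d^\mu_t(s,a)\}$ --- off $E_m$ the estimator is defined to equal $\sigma_{V^{\text{in}}_{t+1}}(s,a)$ exactly, so the inequality is trivial there, and it remains to handle $E_m$. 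Using $\sigma_{V_{t+1}^{\text{in}}}=P^\top(\cdot|s,a)(V^{\text{in}}_{t+1})^2-(P^\top(\cdot|s,a)V^{\text{in}}_{t+1})^2$, I would split
\begin{align*}
\tilde\sigma_{V^{\text{in}}_{t+1}}-\sigma_{V_{t+1}^{\text{in}}}
&=\Big(\tfrac{1}{n_{s,a}}\sum_{i=1}^m\sum_{u=1}^H [V^{\text{in}}_{t+1}(s^{(i)}_{u+1})]^2\mathbf{1}[s^{(i)}_u=s,a^{(i)}_u=a]-P^\top(\cdot|s,a)(V^{\text{in}}_{t+1})^2\Big)\\
&\quad-\Big(\tilde z_t^2(s,a)-(P^\top(\cdot|s,a)V^{\text{in}}_{t+1})^2\Big),
\end{align*}
denote the two parenthesized pieces by $(\mathrm I)$ and $(\mathrm{II})$, and bound each separately.

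For $(\mathrm I)$, I would reuse the martingale $Y_k=\sum_{i=1}^m\sum_{u=1}^{k-1}\big([V^{\text{in}}_{t+1}(s^{(i)}_{u+1})]^2-P^\top(\cdot|s,a)(V^{\text{in}}_{t+1})^2\big)\mathbf{1}[s^{(i)}_u=s,a^{(i)}_u=a]$ under the filtration $\mathcal F_k=\{s^{(i)}_u,a^{(i)}_u\}_{i\in[m]}^{u\in[k]}$ --- exactly the object $X_k$ from the proof of Lemma~\ref{lem:con_z_s} but with $(V^{\text{in}}_{t+1})^2$ in place of $V^{\text{in}}_{t+1}$. The tower-property computation there shows $Y_k$ is a zero-mean martingale, and the key stationarity fact ($s^{(i)}_{k+1}\sim P(\cdot|s,a)$ regardless of $k$) makes its predictable quadratic variation collapse to $\sum_{k=1}^H\Var[Y_{k+1}\mid\mathcal F_k]=n_{s,a}\,\sigma_{(V^{\text{in}}_{t+1})^2}(s,a)$, just as in \eqref{eqn:var}. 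Applying Freedman's inequality as in Lemma~\ref{lem:con_z_s}, with the crude bounds $\sigma_{(V^{\text{in}}_{t+1})^2}(s,a)\le V_{\max}^4/4$ and a difference bound $|Y_{k+1}-Y_k|\le\sqrt{2n_{s,a}V_{\max}^2\log(2/\delta)}$ (the analog of \eqref{eqn:mart_diff}), and then discharging the conditioning on $n_{s,a}$ by the law-of-total-expectation step of that proof, I would obtain $|(\mathrm I)|\le 2V_{\max}^2\sqrt{\log(4/\delta)/(m\sum_{t=1}^H d^\mu_t(s,a))}$ with probability $1-\delta/2$, the remaining Freedman term being of lower order (it carries an extra logarithmic factor but one fewer power of $V_{\max}$).

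For $(\mathrm{II})$, I would run the identical martingale argument on $V^{\text{in}}_{t+1}$ itself --- i.e.\ invoke Lemma~\ref{lem:con_z_s} and bound $\sigma_{V^{\text{in}}_{t+1}}\le V_{\max}^2/4$ --- to get $|\tilde z_t(s,a)-P^\top(\cdot|s,a)V^{\text{in}}_{t+1}|\,\mathbf{1}(E_m)\le 2V_{\max}\sqrt{\log(4/\delta)/(m\sum_{t=1}^H d^\mu_t(s,a))}=:c$ with probability $1-\delta/2$. The elementary bound $|a^2-b^2|\le c(2|b|+c)$, valid whenever $|a-b|\le c$, together with $|P^\top(\cdot|s,a)V^{\text{in}}_{t+1}|\le\norm{P(\cdot|s,a)}_1\norm{V^{\text{in}}_{t+1}}_\infty\le V_{\max}$, then yields $|(\mathrm{II})|\le 4V_{\max}^2\sqrt{\log(4/\delta)/(m\sum_{t=1}^H d^\mu_t(s,a))}+4V_{\max}^2\log(4/\delta)/(m\sum_{t=1}^H d^\mu_t(s,a))$. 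Adding the two estimates, union-bounding over the two $(\delta/2)$-events and then over all $t\in[H]$ and $(s,a)\in\mathcal S\times\mathcal A$ (which turns $\log(4/\delta)$ into $\log(4HSA/\delta)$), and appending the trivial bound off $E_m$, would give the claim with the stated constants $6$ and $4$.

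The step I expect to be the only genuine obstacle --- though it is more bookkeeping than difficulty --- is re-deriving the Freedman concentration for the \emph{second-moment} empirical average $(\mathrm I)$ under the within-episode dependence; but this is a line-by-line repeat of the martingale computation in Lemma~\ref{lem:con_z_s} with $(V^{\text{in}}_{t+1})^2$ substituted for $V^{\text{in}}_{t+1}$ and its one-step variance bounded by $V_{\max}^4/4$, so no new idea is needed, only care with constants and the probability budget. The one point to keep straight throughout is that the decomposition and all the concentration bounds are invoked only on $E_m$, with the complementary event handled by the definitional identity $\tilde\sigma_{V^{\text{in}}_{t+1}}=\sigma_{V^{\text{in}}_{t+1}}$.
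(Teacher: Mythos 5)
Your proposal is correct and follows essentially the same route as the paper: the same split into the second-moment deviation and the squared-first-moment deviation, the same regrouped within-episode martingale to handle the dependence, the same $|a^2-b^2|\le 2bc+c^2$ trick, and the same treatment of the event $E_m$ plus union bound over $t,s,a$. The only cosmetic difference is that the paper controls the second-moment term with Azuma--Hoeffding (bounded increments of size $V_{\max}^2$), which avoids the extra lower-order Freedman term you must absorb; with the crude variance bound $V_{\max}^4/4$, Freedman buys nothing over Azuma--Hoeffding here.
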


\begin{proof}
	From the definition we have for fixed $(s,a)$
	\begin{align*}
	&\tilde{\sigma}_{V^{\text{in}}_{t+1}}(s,a)-\sigma_{V_{t+1}^{\text{in}}}(s,a)\\ 
	=& \left(\frac{1}{n_{s,a}}\sum_{i=1}^{m}\sum_{u=1}^H\left[ V^{\text{in}}_{t+1}(s^{(i)}_{u+1})^2-P^\top(\cdot|s,a)(V^{\text{in}}_{t+1})^2\right]\mathbf{1}[s^{(i)}_u=s,a^{(i)}=a]\right)\mathbf{1}(E_{s,a})\\
	+&\left(\left[\frac{1}{n_{s,a}}\sum_{i=1}^{m}\sum_{u=1}^H V^{\text{in}}_{t+1}(s^{(i)}_{u+1})\mathbf{1}[s^{(i)}_u=s,a^{(i)}=a]\right]^2-\left[P^\top(\cdot|s,a)V^{\text{in}}_{t+1}\right]^2\right)\mathbf{1}(E_{s,a})\\
	\end{align*}
	Now we conditional on $n_{s, a}$. The key point is we can regroup $m$ episodic data into $mH$ data pieces, in order. (This is valid since within each episode data is generated by time and between different episodes are independent, so we can concatenate one episode after another and end up with $mH$ pieces that comes in sequentially.) This key reformulation allows us to apply Azuma Hoeffding's inequality and obtain with probability $1-\delta/2$,
	\begin{equation}\label{eqn:first_diff_s}
	\begin{aligned}
	&\left(\frac{1}{n_{s,a}}\sum_{i=1}^{m}\sum_{u=1}^H\left[V^{\text{in}}_{t+1}(s^{(i)}_{u+1})^2-P^\top(\cdot|s,a)(V^{\text{in}}_{t+1})^2\right]\mathbf{1}[s^{(i)}_u=s,a^{(i)}_u=a]\right)\mathbf{1}(E_{s,a})\\
	=&\left(\frac{1}{n_{s,a}}\sum_{u'=1}^{n_{s,a}}\left[V^{\text{in}}_{t+1}(s^{(i)}_{u'+1})^2-P^\top(\cdot|s,a)(V^{\text{in}}_{t+1})^2\right]\mathbf{1}[s^{(i)}_{u'}=s,a^{(i)}_{u'}=a]\right)\mathbf{1}(E_{s,a})\\
	\leq& V_{\max}^2\sqrt{\frac{2\log(4/\delta)}{n_{s,a}}}\cdot\mathbf{1}(E_{s,a})\leq 2V_{\max}^2\sqrt{\frac{\log(4/\delta)}{m\cdot \sum_{t=1}^H d^\mu_t(s,a)}},
	\end{aligned}
	\end{equation}
	where the first equal sign comes from the reformulation trick and the first inequality is by $X_k:=\sum_{u'=1}^{k}\left[V^{\text{in}}_{t+1}(s^{(i)}_{u'+1})^2-P^\top(\cdot|s_t,a_t)(V^{\text{in}}_{t+1})^2\right]\mathbf{1}[s^{(i)}_{u'}=s,a^{(i)}_{u'}=a]$ is martingale. Similarly with probability $1-\delta/2$,
	\begin{equation}\label{eqn:sig_diff_s}
	\left(\frac{1}{n_{s,a}}\sum_{u'=1}^{n_{s_t,a_t}}V^{\text{in}}_{t+1}(s^{(i)}_{u'+1}|s,a)-P^\top(\cdot|s,a)V^{\text{in}}_{t+1}\right)\mathbf{1}(E_{s,a})\leq 2V_{\max}\sqrt{\frac{\log(4/\delta)}{m\cdot\sum_{t=1}^H d^\mu_t(s,a)}}.
	\end{equation}
	Note for $a,b,c>0$, if $|a-b|\leq c$, then $|a^2-b^2|=|a-b|\cdot|a+b|\leq |a-b|\cdot(|a|+|b|)\leq|a-b|\cdot(2|b|+c)\leq c\cdot(2|b|+c)=2bc+c^2 $, therefore by \eqref{eqn:sig_diff_s} we have 
	\begin{equation}\label{eqn:second_diff_s}
	\begin{aligned}
	&\left(\left[\frac{1}{n_{s,a}}\sum_{u'=1}^{n_{s,a}}V^{\text{in}}_{t+1}(s^{(i)}_{u'+1}|s,a)\right]^2-\left[P^\top(\cdot|s,a)V^{\text{in}}_{t+1}\right]^2\right)\mathbf{1}(E_{s,a})\\
	\leq &4P^\top(\cdot|s,a)V^{\text{in}}_{t+1}\cdot V_{\max}\sqrt{\frac{\log(4/\delta)}{m\cdot \sum_{t=1}^Hd^\mu_t(s,a)}}+\frac{4V_{\max}^2\log(4/\delta)}{m\cdot \sum_{t=1}^H d^\mu_t(s,a)}\\
	\leq &4 V_{\max}^2\sqrt{\frac{\log(4/\delta)}{m\cdot \sum_{t=1}^H d^\mu_t(s,a)}}+\frac{4V_{\max}^2\log(4/\delta)}{m\cdot \sum_{t=1}^H d^\mu_t(s,a)}
	\end{aligned}
	\end{equation}
	where the last inequality comes from $|P^\top(\cdot|s,a)V^{\text{in}}_{t+1}|\leq ||P(\cdot|s,a)||_1 ||V^{\text{in}}_{t+1}||_\infty\leq V_{\max}$. Combining \eqref{eqn:first_diff_s}, \eqref{eqn:second_diff_s} and a union bound, we have with probability $1-\delta$, 
	\[
	\left|\tilde{\sigma}_{V^{\text{in}}_{t+1}}(s,a)-\sigma_{V_{t+1}^{\text{in}}}(s,a) \right|\leq 6 V_{\max}^2\sqrt{\frac{\log(4/\delta)}{m\cdot \sum_{t=1}^H d^\mu_t(s,a)}}+\frac{4V_{\max}^2\log(4/\delta)}{m\cdot \sum_{t=1}^H d^\mu_t(s,a)},
	\]
	apply again the union bound over $t,s,a$ gives the desired result.
	
\end{proof}

\begin{lemma}\label{lem:con_g_s}
	Fix time $t\in[H]$. Let $g_t$ be the estimator in \eqref{eqn:off_g} in Algorithm~\ref{alg:OPDVRT}. Then if $||V_{t+1}-V^{\text{in}}_{t+1}||_\infty\leq 2u^{\text{in}}$, then with probability $1-\delta/H$, 
	\[
	\mathbf{0}\leq \PP[V_{t+1}-V^{\text{in}}_{t+1}]-g_t\leq 8u^{\text{in}}\sqrt{\frac{\log(2HSA/\delta)}{l \sum_{t=1}^H d^\mu_t}}
	\]
\end{lemma}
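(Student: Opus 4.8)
The plan is to mimic the martingale argument from the proof of Lemma~\ref{lem:con_z_s} — which is precisely the device that disentangles intra-episode dependence in the stationary setting — with the simplification that $g_t$ only requires a crude Hoeffding-type deviation bound rather than the variance-aware Freedman bound used for $\tilde z_t$. First, fix $(s,a)$ and write $W:=V_{t+1}-V^{\text{in}}_{t+1}$, which is entrywise nonnegative (since $V^{\text{in}}_{t+1}\le V_{t+1}$ by Lemma~\ref{lem:VR_incremental}) and satisfies $\|W\|_\infty\le 2u^{\text{in}}$ by hypothesis. On the complement of the event $\{n'_{s,a}>\tfrac12 l\sum_{t=1}^H d^\mu_t(s,a)\}$ the fictitious estimator is $g_t(s,a)=P^\top(\cdot|s,a)W-f(s,a)$ by construction, so $P^\top(\cdot|s,a)W-g_t(s,a)=f(s,a)\in[0,2f(s,a)]$ and the claim holds deterministically; it therefore suffices to work on the event where $g_t(s,a)=\tilde g_t(s,a)-f(s,a)$ with $\tilde g_t(s,a)=\tfrac{1}{n'_{s,a}}\sum_{j=1}^l\sum_{u=1}^H W(s'^{(j)}_{u+1})\mathbf{1}[s'^{(j)}_u=s,a'^{(j)}_u=a]$.

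Next I would introduce the filtration $\mathcal{F}_k:=\{s'^{(j)}_u,a'^{(j)}_u\}_{j\in[l]}^{u\in[k]}$ and the variable $Y:=\sum_{j=1}^l\sum_{u=1}^H\big(W(s'^{(j)}_{u+1})-P^\top(\cdot|s,a)W\big)\mathbf{1}[s'^{(j)}_u=s,a'^{(j)}_u=a]$, so that $\tilde g_t(s,a)-P^\top(\cdot|s,a)W=Y/n'_{s,a}$ on the good event. Repeating the tower-property computation from Lemma~\ref{lem:con_z_s}, $Y_k:=\mathbb{E}[Y\mid\mathcal{F}_k]=\sum_{j}\sum_{u<k}\big(W(s'^{(j)}_{u+1})-P^\top(\cdot|s,a)W\big)\mathbf{1}[\cdot]$ is a martingale with $Y_1=0$ and $Y_{H+1}=Y$. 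The essential point, exactly as in Lemma~\ref{lem:con_z_s}, is that stationarity forces each visited next-state $s'^{(j)}_{k+1}$ (conditioned on $\mathcal{F}_k$, for the $j$ with indicator one) to be distributed as $P(\cdot|s,a)$ regardless of the time index $k$; hence the increment $D_k:=Y_{k+1}-Y_k$ is, conditionally on $\mathcal{F}_k$, a centered sum of $n'_{k,s,a}:=\sum_j\mathbf{1}[s'^{(j)}_k=s,a'^{(j)}_k=a]$ independent terms of range at most $\|W\|_\infty$, hence sub-Gaussian with proxy $O(n'_{k,s,a}\|W\|_\infty^2)$, and $\sum_{k=1}^H n'_{k,s,a}=n'_{s,a}$. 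Conditioning on $n'_{s,a}$ and applying Azuma--Hoeffding for martingales with conditionally sub-Gaussian increments (equivalently, the bounded-difference Freedman inequality with the crude variance bound $\sigma_W(s,a)\le\|W\|_\infty^2/4$), then removing the conditioning by the law of total expectation exactly as in Lemma~\ref{lem:con_z}, yields $|Y|\lesssim \|W\|_\infty\sqrt{n'_{s,a}\log(2/\delta')}$ with probability $1-\delta'$.

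Dividing by $n'_{s,a}$, using $n'_{s,a}>\tfrac12 l\sum_{t=1}^H d^\mu_t(s,a)$ on the good event and $\|W\|_\infty\le 2u^{\text{in}}$, gives $|\tilde g_t(s,a)-P^\top(\cdot|s,a)W|\le f(s,a)$ after a union bound over the $SA$ pairs $(s,a)$ with $\delta'=\delta/(HSA)$, which simultaneously produces the overall failure probability $\delta/H$ and turns the logarithm into $\log(2HSA/\delta)$ so that $f(s,a)=4u^{\text{in}}\sqrt{\log(2HSA/\delta)/(l\sum_{t=1}^H d^\mu_t(s,a))}$ absorbs the bound. Finally, combining with $g_t(s,a)=\tilde g_t(s,a)-f(s,a)$ gives $g_t(s,a)\le P^\top(\cdot|s,a)W$ and $g_t(s,a)\ge P^\top(\cdot|s,a)W-2f(s,a)$, i.e. $\mathbf{0}\le\PP[V_{t+1}-V^{\text{in}}_{t+1}]-g_t\le 8u^{\text{in}}\sqrt{\log(2HSA/\delta)/(l\sum_{t=1}^H d^\mu_t)}$, which is the claim.

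The main obstacle is exactly the dependence issue already flagged around Lemma~\ref{lem:con_z_s}: within a single trajectory the next-state observations at distinct time steps are not independent, so the naive i.i.d.\ Hoeffding argument used in the non-stationary Lemma~\ref{lem:con_g} is unavailable and one must pass through the time-indexed martingale; a secondary, routine technicality is that the visitation count $n'_{s,a}$ entering the concentration step is itself random, which is handled by conditioning on $n'_{s,a}$ and the law of total expectation as elsewhere in the paper.
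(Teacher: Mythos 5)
Your proposal is correct and follows essentially the same route as the paper: handle the low-count event where $g_t$ is set to $P^\top(\cdot|s,a)[V_{t+1}-V^{\text{in}}_{t+1}]-f$ by construction, and on the good event reindex the within-episode observations at $(s,a)$ as a time-ordered martingale (the paper's ``regrouping trick''), apply Azuma--Hoeffding with the crude range bound $\|V_{t+1}-V^{\text{in}}_{t+1}\|_\infty\le 2u^{\text{in}}$, substitute $n'_{s,a}>\tfrac12 l\sum_{t=1}^H d^\mu_t(s,a)$, and union bound over $(s,a)$ so that $f$ absorbs the deviation and the two-sided bound $[0,2f]$ follows. Your version is somewhat more explicit than the paper's (spelling out the tower-property martingale and the conditionally sub-Gaussian increments), but the underlying argument is identical.
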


\begin{proof}
	Recall $g_t,d^\mu_t$ are vectors. By definition of $g_t(s,a)$, use similar regrouping trick and apply Azuma Hoeffding's inequality we obtain with probability $1-\delta/H$
	\begin{align*}
	&g_t(s,a)+f(s,a)-P^\top(\cdot|s,a)[V_{t+1}-V^{\text{in}}_{t+1}]\\
	=&\left(\frac{1}{n'_{s,a}}\sum_{u'=1}^{n'_{s,a}}\left[V_{t+1}(s'^{(j)}_{u'+1}|s,a)-V_{t+1}^{\text{in}}(s'^{(j)}_{u'+1}|s,a)\right]-P^\top(\cdot|s,a)[V_{t+1}-V^{\text{in}}_{t+1}]\right)\cdot\mathbf{1}(E_t)\\
	\leq & \left(||V_{t+1}-V_{t+1}^{\text{in}}||_\infty\sqrt{\frac{2\log(2H/\delta)}{n'_{s,a}}}\right)\cdot\mathbf{1}(E_t)\\
	\leq & ||V_{t+1}-V_{t+1}^{\text{in}}||_\infty\sqrt{\frac{4\log(2H/\delta)}{l\cdot \sum_{t=1}^Hd^\mu_t(s,a)}}
	\end{align*}
	Now use assumption $||V_{t+1}-V_{t+1}^{\text{in}}||_\infty\leq 2u^{\text{in}}$ and a union bound over $s,a$, we have with probability $1-\delta/H$,
	\begin{equation}
	\left|g_t+f-\PP[V_{t+1}-V^{(0)}_{t+1}]\right|\leq 4u^{\text{in}}\sqrt{\frac{\log(2HSA/\delta)}{l \sum_{t=1}^H d^\mu_t}}
	\end{equation}
	use $f=4u^{\text{in}}\sqrt{{\log(2HSA/\delta)}/{l \sum_{t=1}^H d^\mu_t}}$, we obtain the stated result.
\end{proof}

\paragraph{Proof of Theorem~\ref{thm:main_stationary}} Note that Lemma~\ref{lem:con_z_s},\ref{lem:con_sig_s},\ref{lem:con_g_s} updates Lemma~\ref{lem:con_z},\ref{lem:con_sig},\ref{lem:con_g} by replacing $d^\mu_t$ with $\sum_{t=1}^H d^\mu_t$ and keeping the rest the same except the second order term $\sqrt{\frac{16 V_{\max}\cdot\log(2HSA/\delta)}{9m\sum_{t=1}^H d^\mu_t}}\cdot\log(HSA/\delta)$ in Lemma~\ref{lem:con_z_s} is different from Lemma~\ref{lem:con_z_s}. However, this is still lower order term since it is of order $\widetilde{O}(\sqrt{\frac{H}{m\sum_{t=1}^H d^\mu_t}})$. To avoid redundant reasoning, by following the identical logic as Section~\ref{sec:iterative_non} we have a similar expression of \eqref{eqn:lower_zt} as follows:

\begin{equation}\label{eqn:lower_zt_s}
\begin{aligned}
&z_t\geq \PP V^{\tin}_{t+1}-\sqrt{\frac{16\cdot{\sigma}_{V^{\star}_{t+1}}\cdot\log(4HSA/\delta)}{m\cdot\sum_{t=1}^H d_t^\mu}}-\sqrt{\frac{16\cdot\log(4HSA/\delta)}{m\cdot \sum_{t=1}^H d_t^\mu}}\cdot u^{\tin}\\
&- V_{\max}\left[8\sqrt{6}\cdot\left(\frac{\log(16HSA/\delta)}{m\cdot \sum_{t=1}^H d^\mu_t}\right)^{3/4}+\frac{56\log(16HSA/\delta)}{3m\cdot \sum_{t=1}^H d^\mu_t}\right]-c\sqrt{\frac{V_{\max}\cdot\log(16HSA/\delta)}{m\cdot \sum_{t=1}^H d^\mu_t}}\log(HSA/\delta).
\end{aligned}
\end{equation}
where the last term is additional. However, note when $u^{\tin}\leq\sqrt{H}$,  then 
\[
\sqrt{\frac{16\cdot\log(4HSA/\delta)}{m\cdot \sum_{t=1}^H d_t^\mu}}\cdot u^{\tin}\leq\tilde{O}\left(\sqrt{\frac{H}{m\sum_{t=1}^H d^\mu_t}}\right),\qquad c\sqrt{\frac{V_{\max}\cdot\log(16HSA/\delta)}{m\cdot \sum_{t=1}^H d^\mu_t}}\leq \tilde{O}\left(\sqrt{\frac{H}{m\sum_{t=1}^H d^\mu_t}}\right)
\]
so the last term $c\sqrt{\frac{V_{\max}\cdot\log(16HSA/\delta)}{m\cdot \sum_{t=1}^H d^\mu_t}}$ can be assimilated by previous one. If $u^{\tin}>\sqrt{H}$, it is of even lower order. Therefore following the same reasoning we can complete the proof for Algorithm~\ref{alg:OPDVRT}.

From non-implementable version to the practical version, we need to bound the event of $\{n_{s,a}\leq \frac{1}{2} m\cdot \sum_{t=1}^H d^\mu_t(s,a) \}$, where $n_{s,a}=\sum_{i=1}^m\sum_{t=1}^H\mathbf{1}{[s^{(i)}_t=s,a^{(i)}_t=a]}$. In this case, $n_{s,a}$ is no longer binomial random variable so Lemma~\ref{lem:chernoff_multiplicative} cannot be applied. However, the trick we use for resolving this issue is the following decomposition
\[
\left\{n_{s,a}\leq \frac{1}{2} m\cdot \sum_{t=1}^H d^\mu_t(s,a)\right\}\subset \bigcup_{t=1}^H\left\{n_{t,s,a}\leq\frac{1}{2}md^\mu_t(s,a)\right\},
\]
where $n_{s,a}=\sum_{t=1}^H n_{t,s,a}$ and $n_{t,s,a}=\sum_{i=1}^m\mathbf{1}{[s^{(i)}_t=s,a^{(i)}_t=a]}$ are binomial random variables. Lemma~\ref{lem:chernoff_multiplicative} can then be used together with union bounds to finish the proof.

\section{Proofs for infinite-horizon discounted setting}
\label{sec:proof_i}

First recall data $\mathcal{D}=\{s^{(i)},a^{(i)},r^{(i)},s^{\prime(i)}\}_{i\in[n]}$ are i.i.d off-policy pieces with $(s^{(i)},a^{(i)})\sim d^\mu$ and $s'^{(i)}\sim P(\cdot|s^{(i)},a^{(i)})$. Moreover, $d^\mu$ is defined as:
\[
d^\mu(s)=(1-\gamma)\sum_{t=0}^\infty\gamma^t\P[s_t=s|s_0\sim d_0,\mu],\quad d^\mu(s,a)=d^\mu(s)\mu(a|s).
\]

The corresponding \emph{off-policy estimators} in Algorithm~\ref{alg:OPDVRT_in} are defined as:
{
	\begin{equation}\label{eqn:off_z_app}
	\begin{aligned}
	\tilde{z}(s,a)&=\begin{cases}P^\top(\cdot|s,a) V^{\text{in}}, &{if} \;n_{s,a}\leq \frac{1}{2} m\cdot d^\mu(s,a),\\ 
	\frac{1}{n_{s,a}}\sum_{i=1}^m V^{\text{in}}(s^{\prime(i)})\cdot\mathbf{1}_{[s^{(i)}=s,a^{(i)}=a]},&{if} \;n_{s,a}> \frac{1}{2} m\cdot d^\mu(s,a).\end{cases}\\
	\tilde{\sigma}_{V^{\text{in}}}(s,a)&=\begin{cases}{\sigma}_{V^{\text{in}}}(s,a),\qquad\qquad\qquad\qquad\qquad\,\qquad\quad {if} \;n_{s,a}\leq \frac{1}{2} m\cdot d^\mu(s,a),\\ \frac{1}{n_{s,a}}\sum_{i=1}^m[V^{\text{in}}(s^{\prime(i)})]^2\cdot\mathbf{1}_{[s^{(i)}=s,a^{(i)}=a]}-\tilde{z}^2(s,a), \;\;{otherwise}.
	\end{cases}
	\end{aligned}
	\end{equation}
}where $n_{s,a}:=\sum_{i=1}^n \mathbf{1}[s^{(i)}=s,a^{(i)}=a]$ is the number of samples start at $(s,a)$. Similarly, $P^\top(\cdot|s,a)[V-V^{\text{in}}]$ is later updated using different $l$ episodes ($n'_{s,a}$ is the number count from $l$ episodes):
{\begin{equation}\label{eqn:off_g_app}
	g^{(i)}(s,a)=\begin{cases}P^\top(\cdot|s,a)[V^{(i)}-V^{\text{in}}]-f(s,a),\hfill {if}\quad n'_{s,a}\leq \frac{1}{2} l\cdot d^\mu(s,a),
	\\\frac{1}{n'_{s,a}}\sum_{j=1}^l[V^{(i)}(s'^{(j)})-V^{\text{in}}(s'^{(j)})]\cdot\mathbf{1}_{[s'^{(j)},a'^{(j)}=s,a]}-f(s,a),\hfill o.w.\end{cases}
	\end{equation}}
where $f=4u^{\text{in}}\sqrt{{\log(2RSA/\delta)}/{l d^\mu}}$ and $R=\ln (4/u^{\tin}(1-\gamma))$.


\begin{algorithm}[tb]
	\caption{OPVRT: A Prototypical Off-Policy Variance Reduction Template ($\infty$-horizon)}
	\label{alg:OPDVRT_in}
	\small{
		\begin{algorithmic}[1]
			\STATE {\bfseries Functional input:}  Integer valued function $\mathbf{m}:  \R_+ \rightarrow \mathbb{N}$.   Off-policy estimator $\mathbf{z}_t,\mathbf{g}_t$ in function forms that provides lower confidence bounds (LCB) of the two terms in the bootstrapped value function \eqref{eq:VR_decomposition}.
			
			\STATE {\bfseries Static input:} Initial value function $V^{(0)}$ and $\pi^{(0)}$  (which satisfy $V^{(0)}\leq \mathcal{T}_{\pi^{(0)}}V^{(0)}$).
			A scalar $u^{(0)}$ satisfies $u^{(0)}\geq ||V^\star-V^{(0)}||_\infty$.   Outer loop iterations $K$. 
			Offline dataset $\mathcal D = \{s^{(i)},a^{(i)},r^{(i)},s'^{(i)}\}_{i = 1}^{n}$ from the behavior policy $\mu$ as a data-stream where
			$n \geq \sum_{i=1}^K  (1+R) \cdot \mathbf{m}(u^{(0)} \cdot 2^{-(i-1)}). $
			
			\STATE 			------------------\textsc{Inner loop} ---------------------\\
			\FUNCTION{\textsc{QVI-VR-inf} ($\cD_1,[\cD_2^{(i)}]_{i=1}^K$, $V_t^{\text{in}}, \pi^{\text{in}}, \mathbf{z}_t,\mathbf{g}_t, u^\text{in}$)}
			\STATE  \mypink{$\diamond$ Computing reference with $\cD_1$:}
			\STATE Initialize $Q^{(0)}\leftarrow \mathbf{0}\in\R^{\mathcal{S}\times\mathcal{A}}$ and $V^{(0)} = V_{\text{in}}$.
			\FOR{each pair $(s,a)\in\mathcal{S}\times\mathcal{A}$}
			\STATE \mypink{$\diamond$ Compute an LCB of $P^\top(\cdot|s,a) V^{in}$:}
			\STATE $z\leftarrow  \mathbf{z}(\cD_1, V^{\text{in}}, u^\text{in} )$
			\ENDFOR
			\STATE \mypink{$\diamond$ Value Iterations with $\cD_2$:}
			\FOR{$i=1,...,R$}
			\STATE \mypink{$\diamond$ Update value function:}  $V^{(i)}=\max(V_{Q^{(i-1)}},V^{\text{in}})$, 
			\STATE \mypink{$\diamond$ Update policy according to value function:} 
			\STATE $\forall s$, if $V^{(i)}(s)=V^{(i-1)}(s)$ set $\pi(s)=\pi^{\text{in}}(s)$; else set $\pi(s)=\pi_{Q^{(i-1)}}(s)$. 
			\IF{$t\geq 1$}
			\STATE \mypink{$\diamond$  LCB of $P^\top(\cdot|s_{t-1},a_{t-1})[V_{t}-V^{\text{in}}_{t}]$:} 
			\STATE $g^{(i)} \leftarrow  \mathbf{g}(\cD_2^{(i)},V^{(i)}, V^{\text{in}},u^\text{in})$. 
			\STATE \mypink{$\diamond$ Update $Q$ function:}  $Q^{(i)}\leftarrow r+\gamma z+\gamma g^{(i)}$  
			\ENDIF
			\ENDFOR 
			\STATE {\bfseries Return:} $V^{(R)}$ and $\pi^{(R)}$.
			\ENDFUNCTION
			\STATE 			------------------\textsc{outer loop} ---------------------\\
			\FOR{$j=1,...,K$}
			\STATE$m^{(j)} \rightarrow \mathbf{m}(u^{(j-1)})$
			\STATE Get $\cD_1$ and $\cD_2^{(i)}$ for $i=1,...,K$ each with size $m^{(j)}$ from the stream $\cD$.
			\STATE $V^{(j)},\pi^{(j)}\leftarrow$\textsc{QVI-VR-inf}($\cD_1,[\cD_2^{(i)}]_{i=1}^K,V^{(i-1)},\pi^{(i-1)},\mathbf{z},\mathbf{g},u^{(i-1)}$).
			\STATE $u^{(j)}\leftarrow u^{(j-1)}/2$.
			\ENDFOR		
			\STATE {\bfseries Output: $V^{(K)}$, $\pi^{(K)}$ } 
			
		\end{algorithmic}
	}
\end{algorithm}
\normalsize{}
\begin{algorithm}[t]
	\caption{OPDVR: Off-Policy Doubled Variance Reduction ($\infty$-horizon)}
	\label{alg:OPDVR_in}
	\small{
		%
		\begin{algorithmic}[1]
			\INPUT Offline Dataset $\cD$ of size $n$ as a stream.  Target accuracy $\epsilon,\delta$ such that the algorithm does not use up $\cD$.
			\INPUT Estimators $\mathbf{z}, \mathbf{g} $ in function forms, $m'_1, m'_2, K_1, K_2$. 
			\STATE  \mypink{$\diamond$ Stage $1$. coarse learning: a ``warm-up'' procedure } 
			\STATE Set initial values $V^{(0)}:=\mathbf{0}$ and any policy $\pi^{(0)}$. 
			\STATE Set initial $u^{(0)}:=(1-\gamma)^{-1}$.
			\STATE Set $\mathbf{m}(u)=m'_1\log(16(1-\gamma)^{-1}RSA)/u^2$. 
						\STATE Run  Algorithm~\ref{alg:OPDVRT_in} with   $\mathbf{m}, \mathbf{z}, \mathbf{g},  V^{(0)}, \pi^{(0)},u^{(0)}, K_1,\cD$ and return
			$V^{\text{intermediate}},\pi^{\text{intermediate}}$. 
			\STATE \mypink{$\diamond$ Stage $2$. fine learning: reduce error to given accuracy }						
			\STATE Reset initial values $V^{(0)}:=V^{\text{intermediate}}$ and policy $\pi^{(0)}:=\pi^{\text{intermediate}}$. Set $u^{(0)}:=\sqrt{(1-\gamma)^{-1}}$.
			\STATE Reset $\mathbf{m}(u)$ by replacing $m'_1$ with $m'_2$, $K_1$ with $K_2$. 
			\STATE Run  Algorithm~\ref{alg:OPDVRT_in} with   $\mathbf{m}, \mathbf{z}, \mathbf{g},  V_t^{(0)}, \pi^{(0)},u^{(0)}, K_2,\cD$ and return $V_t^{\text{final}},\pi^{\text{final}}$.
			\OUTPUT  $V_t^{\text{final}},\pi^{\text{final}}$
		\end{algorithmic}
	}
	%
	%
\end{algorithm}
\normalsize{}

\begin{lemma}\label{lem:mono_in}
	Suppose $V$ and $\pi$ is any value and policy satisfy $V\leq \mathcal{T}_{\pi}V$. Then it holds $V\leq V^\pi\leq V^\star$.
	
\end{lemma}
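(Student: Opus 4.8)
The plan is to mirror exactly the argument of Lemma~\ref{lem:mono} (the finite-horizon monotone preservation lemma), adapting the telescoping to the infinite-horizon discounted setting by exploiting the contraction property of $\mathcal{T}_\pi$ rather than a finite unrolling that terminates at $V_{H+1}=\mathbf{0}$. As in the finite-horizon case, it suffices to prove $V\le V^\pi$, since $V^\pi\le V^\star$ is immediate from the definition of the optimal value function.

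First I would iterate the hypothesis $V\le\mathcal{T}_\pi V$. Since $\mathcal{T}_\pi$ is monotone (if $U\le W$ componentwise then $\mathcal{T}_\pi U\le\mathcal{T}_\pi W$, because $[\mathcal{T}_\pi W](s)=r(s,\pi(s))+\gamma P^\top(\cdot|s,\pi(s))W$ and $P^\top(\cdot|s,\pi(s))$ has nonnegative entries), applying $\mathcal{T}_\pi$ repeatedly to both sides of $V\le\mathcal{T}_\pi V$ gives
\[
V\le\mathcal{T}_\pi V\le\mathcal{T}_\pi^2 V\le\cdots\le\mathcal{T}_\pi^n V
\]
for every $n\ge 1$, where $\mathcal{T}_\pi^n$ denotes the $n$-fold composition. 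Then I would take $n\to\infty$: because $\gamma\in(0,1)$, $\mathcal{T}_\pi$ is a $\gamma$-contraction in $\|\cdot\|_\infty$ with unique fixed point $V^\pi$, so $\mathcal{T}_\pi^n V\to V^\pi$ for any bounded $V$. Passing to the limit in the componentwise inequality $V\le\mathcal{T}_\pi^n V$ preserves it, yielding $V\le V^\pi$. Combined with $V^\pi\le V^\star$, this is the claim.

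There is essentially no serious obstacle here; the only point requiring a sentence of care is the justification that the contraction/fixed-point facts apply — namely that $V^\pi$ is well-defined as the fixed point of $\mathcal{T}_\pi$ and that $\mathcal{T}_\pi^n V$ converges to it. Both are standard in the discounted-MDP setting and can simply be cited (Banach fixed-point / Bellman's principle). One should also note that $V$ (and hence each $\mathcal{T}_\pi^n V$) is bounded, which holds since rewards lie in $[0,1]$ and, in the algorithmic use, $V$ is clipped into the valid range $[0,(1-\gamma)^{-1}]$; this ensures the limit is taken over a sequence in a complete metric space. I would write the proof in three or four lines: state that it suffices to show $V\le V^\pi$; give the telescoped chain $V\le\mathcal{T}_\pi^n V$; invoke $\mathcal{T}_\pi^n V\to V^\pi$ by contraction; conclude.
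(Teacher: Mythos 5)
Your proposal is correct and matches the paper's (very terse) argument: the paper simply says the proof is analogous to the finite-horizon monotonicity lemma with the Bellman fixed-point equation $V^\pi=\mathcal{T}_\pi V^\pi$ replacing the terminal condition, which is exactly the iterate-and-pass-to-the-limit argument you spell out. Your version just makes explicit the contraction/limit step that the paper leaves implicit.
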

\begin{proof}
This is similar to Lemma~\ref{lem:mono} and the key is to use Bellman equation $V^\pi= \mathcal{T}_{\pi}V^\pi$.
\end{proof}


	\begin{lemma}\label{lem:con_z_in}
		Let $\tilde{z}$ be defined as \eqref{eqn:off_z_app} in Algorithm~\ref{alg:OPDVRT_in}, where $\tilde{z}$ is the off-policy estimator of $P^\top(\cdot|s,a)V^{\text{in}}$ using $m$ episodic data. Then with probability $1-\delta$, we have 
		
		\begin{equation}\label{eqn:tilde_z_in}
		\left|\tilde{z}-\PP V^{\text{in}}\right|\leq \sqrt{\frac{4\cdot\sigma_{V^{\text{in}}}\cdot\log(SA/\delta)}{m\cdot d^\mu}}+\frac{4V_{\max}}{3m\cdot d^\mu}\log(SA/\delta).
		\end{equation}
		here $\tilde{z},\PP V^{\text{in}},\sigma_{V^{\text{in}}},d^\mu\in\mathbb{R}^{S\times A}$ are $S\times A$ column vectors and $\sqrt{\cdot}$ is elementwise operation.
	\end{lemma}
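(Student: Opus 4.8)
The plan is to mirror the proof of Lemma~\ref{lem:con_z}, replacing the per-time-step data with the i.i.d.\ off-policy pieces and using $SA$ (rather than $HSA$) in the union bound. First I would fix a pair $(s,a)$ and introduce the event $E_{s,a} := \{n_{s,a}\ge \tfrac12 m\, d^\mu(s,a)\}$. By the definition of $\tilde z$ in \eqref{eqn:off_z_app}, on $E_{s,a}^c$ we have $\tilde z(s,a) = P^\top(\cdot|s,a)V^{\text{in}}$ exactly (this also covers the degenerate case $d^\mu(s,a)=0$, where necessarily $n_{s,a}=0$), so
\[
\tilde z(s,a) - P^\top(\cdot|s,a)V^{\text{in}} = \Big(\tfrac{1}{n_{s,a}}\sum_{i=1}^m V^{\text{in}}(s'^{(i)})\,\mathbf 1[s^{(i)}=s,a^{(i)}=a] - P^\top(\cdot|s,a)V^{\text{in}}\Big)\cdot\mathbf 1(E_{s,a}).
\]

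Next I would condition on the count $n_{s,a}$. Given $n_{s,a}$, the successor states drawn from pieces with $(s^{(i)},a^{(i)})=(s,a)$ are i.i.d.\ $\sim P(\cdot|s,a)$, so the variables $V^{\text{in}}(s'^{(i)})$ are i.i.d.\ with mean $P^\top(\cdot|s,a)V^{\text{in}}$, variance $\sigma_{V^{\text{in}}}(s,a)$, and bounded in $[0,V_{\max}]$ (using $V^{\text{in}}\le V_{\max}$, which follows from Lemma~\ref{lem:mono_in}). Bernstein's inequality~\ref{lem:bernstein_ineq} then gives, conditionally on $n_{s,a}$ and with probability at least $1-\delta$,
\[
\Big|\tfrac{1}{n_{s,a}}\sum_{i=1}^{n_{s,a}} V^{\text{in}}(s'^{(i)}|s,a) - P^\top(\cdot|s,a)V^{\text{in}}\Big| \le \sqrt{\tfrac{2\sigma_{V^{\text{in}}}(s,a)\log(1/\delta)}{n_{s,a}}} + \tfrac{2V_{\max}}{3 n_{s,a}}\log(1/\delta),
\]
and on $E_{s,a}$ I lower-bound $n_{s,a}\ge \tfrac12 m\,d^\mu(s,a)$ to turn the right-hand side into $\sqrt{4\sigma_{V^{\text{in}}}(s,a)\log(1/\delta)/(m\,d^\mu(s,a))} + \tfrac{4V_{\max}}{3 m\,d^\mu(s,a)}\log(1/\delta)$.

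To remove the conditioning I would write the target inequality as an event $A$, note that the Bernstein step shows $\P(A\mid n_{s,a})\ge 1-\delta$, i.e.\ $\E[\mathbf 1(A)\mid n_{s,a}]\ge 1-\delta$, and conclude $\P(A)=\E\big[\E[\mathbf 1(A)\mid n_{s,a}]\big]\ge 1-\delta$ by the tower property. A final union bound over the $SA$ pairs $(s,a)$ upgrades $\log(1/\delta)$ to $\log(SA/\delta)$, yielding the element-wise bound \eqref{eqn:tilde_z_in}. The only point requiring care---the main ``obstacle,'' though it is routine---is the handling of the random count $n_{s,a}$: one conditions to make the successor states i.i.d., checks the Bernstein bound holds uniformly in the realized value of $n_{s,a}$, and uses that the pathological small-$n_{s,a}$ regime has been engineered away by the definition of $\tilde z$ on $E_{s,a}^c$; the $d^\mu(s,a)=0$ entries are handled by the same observation (as in the remark after Lemma~\ref{lem:con_z}, one simply sets the right-hand side to $0$ there).
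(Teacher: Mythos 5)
Your proposal is correct and follows essentially the same route as the paper's proof: the same event $E_{s,a}$, the same conditioning on $n_{s,a}$ followed by Bernstein's inequality, the same removal of the conditioning via the tower property, and the same union bound over the $SA$ pairs. The handling of the degenerate $d^\mu(s,a)=0$ entries and the use of Lemma~\ref{lem:mono_in} to bound $V^{\text{in}}\le V_{\max}$ also match the paper's treatment.
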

	
	\begin{proof}
		First fix $s,a$. Let $E_{s,a} :=\{n_{s,a}\geq \frac{1}{2}m\cdot d^\mu(s,a)\}$, then by definition, 
		\[
		\tilde{z}(s,a)-P^\top(\cdot|s,a)V^{\text{in}}_{t+1} = \left(\frac{1}{n_{s,a}}\sum_{i=1}^m V^{\text{in}}(s^{\prime(i)})\cdot\mathbf{1}[s^{(i)}=s,a^{(i)}=a]-P^\top(\cdot|s,a)V^{\text{in}}\right)\cdot\mathbf{1}(E_{s,a}).
		\]
		Next we conditional on $n_{s,a}$. Then from above expression and Bernstein inequality \ref{lem:bernstein_ineq} we have with probability at least $1-\delta$
		\begin{align*}
		&\left|\tilde{z}(s,a)-P^\top(\cdot|s,a)V^{\text{in}}\right| \\
		= &\left|\frac{1}{n_{s,a}}\sum_{i=1}^{n_{s,a}}V^{\text{in}}(s^{\prime (i)}|s,a)-P^\top(\cdot|s,a)V^{\text{in}}\right|\cdot\mathbf{1}(E_{s,a})\\
		\leq & \left(\sqrt{\frac{2\cdot\sigma_{V^{\text{in}}}(s,a)\cdot\log(1/\delta)}{n_{s,a}}}+\frac{2V_{\max}}{3n_{s,a}}\log(1/\delta)\right)\cdot\mathbf{1}(E_{s,a})\\
		\leq & \sqrt{\frac{4\cdot\sigma_{V^{\text{in}}}(s,a)\cdot\log(1/\delta)}{m\cdot d^\mu(s,a)}}+\frac{4V_{\max}}{3m\cdot d^\mu(s,a)}\log(1/\delta)
		\end{align*}
		where again notation $V^{\text{in}}(s^{\prime(i)}|s,a)$ denotes the value of $V^{\text{in}}(s^{\prime(i)})$ given $s^{(i)}=s$ and $a^{(i)}=a$. The condition $V^\text{in}\leq V_{\max}$ is guaranteed by Lemma~\ref{lem:mono}.
		Now we get rid of the conditional on $n_{s,a}$. Denote $A=\{\tilde{z}(s,a)-P^\top(\cdot|s,a)V^{\text{in}}\leq \sqrt{4\cdot\sigma_{V^{\text{in}}}(s,a)\cdot\log(1/\delta)/m\cdot d^\mu(s,a)}+\frac{4V_{\max}}{3m\cdot d^\mu(s,a)}\log(1/\delta)\}$, then equivalently we can rewrite above result as $\P(A|n_{s,a})\geq 1-\delta$. Note this is the same as $\E[\mathbf{1}(A)|n_{s,a}]\geq 1-\delta$, therefore by law of total expectation we have
		\[
		\P(A)=\E[\mathbf{1}(A)]=\E[\E[\mathbf{1}(A)|n_{s,a}]]\geq \E[1-\delta]=1-\delta,
		\]
		\textit{i.e.} for fixed $(s,a)$ we have with probability at least $1-\delta$, 
		\[
		\left|\tilde{z}(s,a)-P^\top(\cdot|s,a)V^{\text{in}}\right|\leq \sqrt{\frac{4\cdot\sigma_{V^{\text{in}}}(s,a)\cdot\log(1/\delta)}{m\cdot d^\mu(s,a)}}+\frac{4V_{\max}}{3m\cdot d^\mu(s,a)}\log(1/\delta)
		\]
		Apply the union bound over all $s,a$, we obtain
		\[
		\left|\tilde{z}-\PP V^{\text{in}}\right|\leq \sqrt{\frac{4\cdot\sigma_{V^{\text{in}}}\cdot\log(SA/\delta)}{m\cdot d^\mu}}+\frac{4V_{\max}}{3m\cdot d^\mu}\log(SA/\delta),
		\]
		where the inequality is  element-wise and this is \eqref{eqn:tilde_z_in}.
	\end{proof}	
	
	\begin{lemma}\label{lem:con_sig_in}
		Let $\tilde{\sigma}_{V^{\text{in}}}$ be defined as \eqref{eqn:off_z_app} in Algorithm~\ref{alg:OPDVRT_in}, the off-policy estimator of $\sigma_{V^{\text{in}}}(s,a)$ using $m$ episodic data. Then with probability $1-\delta$, we have 
		
		\begin{equation}\label{eqn:tilde_sig_in}
		\left|\tilde{\sigma}_{V^{\text{in}}}-\sigma_{V^{\text{in}}} \right|\leq 6 V_{\max}^2\sqrt{\frac{\log(4SA/\delta)}{m\cdot d^\mu}}+\frac{4V_{\max}^2\log(4SA/\delta)}{m\cdot d^\mu}.
		\end{equation}
		
	\end{lemma}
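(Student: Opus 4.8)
The plan is to replay the proof of Lemma~\ref{lem:con_sig} essentially verbatim, exploiting the fact that in the infinite-horizon discounted setting the offline dataset $\mathcal{D}=\{s^{(i)},a^{(i)},r^{(i)},s'^{(i)}\}_{i\in[n]}$ consists of genuinely i.i.d.\ tuples with $(s^{(i)},a^{(i)})\sim d^\mu$ and $s'^{(i)}\sim P(\cdot\mid s^{(i)},a^{(i)})$. Consequently no martingale argument (as in the stationary finite-horizon case) is needed, and the only bookkeeping change relative to Lemma~\ref{lem:con_sig} is that $d^\mu_t$ is everywhere replaced by $d^\mu$ and the logarithmic factor becomes $\log(4SA/\delta)$ because the concluding union bound ranges over the $SA$ state-action pairs rather than over $HSA$ time-state-action triples. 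This parallels exactly how Lemma~\ref{lem:con_z_in} and Lemma~\ref{lem:con_sig_in} relate to Lemma~\ref{lem:con_z} and Lemma~\ref{lem:con_sig}.

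Concretely, I would first fix a pair $(s,a)$ and work on the event $E_{s,a}:=\{n_{s,a}\ge \tfrac12 m\, d^\mu(s,a)\}$, on which $\tilde\sigma_{V^{\text{in}}}(s,a)$ equals its empirical (rather than population) form in \eqref{eqn:off_z_app}, and $\tilde z(s,a)=\frac{1}{n_{s,a}}\sum_i V^{\text{in}}(s'^{(i)}|s,a)$. Conditioning throughout on $n_{s,a}$, I would use the identity
\[
\tilde\sigma_{V^{\text{in}}}(s,a)-\sigma_{V^{\text{in}}}(s,a)
=\Big(\tfrac{1}{n_{s,a}}\sum_{i}V^{\text{in}}(s'^{(i)}|s,a)^2-P^\top(\cdot|s,a)(V^{\text{in}})^2\Big)
+\Big([P^\top(\cdot|s,a)V^{\text{in}}]^2-\tilde z(s,a)^2\Big),
\]
and bound the two brackets separately in absolute value. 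For the first bracket I would apply Hoeffding's inequality to the $n_{s,a}$ i.i.d.\ summands $V^{\text{in}}(s'^{(i)}|s,a)^2\in[0,V_{\max}^2]$ (using $V^{\text{in}}\le V_{\max}$ from Lemma~\ref{lem:mono_in}), obtaining $V_{\max}^2\sqrt{2\log(4/\delta)/n_{s,a}}\le 2V_{\max}^2\sqrt{\log(4/\delta)/(m\,d^\mu(s,a))}$ with probability $1-\delta/2$ after invoking $E_{s,a}$. For the second bracket I would apply Hoeffding once more to get $|\tilde z(s,a)-P^\top(\cdot|s,a)V^{\text{in}}|\le 2V_{\max}\sqrt{\log(4/\delta)/(m\,d^\mu(s,a))}$ with probability $1-\delta/2$, then invoke the elementary inequality $|a^2-b^2|\le 2|b|c+c^2$ valid whenever $|a-b|\le c$, together with $|P^\top(\cdot|s,a)V^{\text{in}}|\le\|P(\cdot|s,a)\|_1\|V^{\text{in}}\|_\infty\le V_{\max}$, to bound it by $4V_{\max}^2\sqrt{\log(4/\delta)/(m\,d^\mu(s,a))}+4V_{\max}^2\log(4/\delta)/(m\,d^\mu(s,a))$. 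A union bound over the two $\delta/2$-events, followed by the law-of-total-expectation trick of Lemma~\ref{lem:con_z_in} to discharge the conditioning on $n_{s,a}$, gives the stated inequality at $(s,a)$ with failure probability $\delta$; a final union bound over the $SA$ pairs upgrades $\log(4/\delta)$ to $\log(4SA/\delta)$ and yields the vector inequality \eqref{eqn:tilde_sig_in}.

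I do not anticipate any genuine obstacle; this is a routine transcription of Lemma~\ref{lem:con_sig}. The only mild points to flag are (i) correctly splitting the $\delta$-budget between the two Hoeffding applications and the concluding union bound, and (ii) the degenerate case $d^\mu(s,a)=0$, for which $n_{s,a}=0$ forces $\tilde\sigma_{V^{\text{in}}}(s,a)=\sigma_{V^{\text{in}}}(s,a)$, so the claimed bound holds trivially with the convention that its right-hand side is $0$ (exactly as in the remark following Lemma~\ref{lem:con_z}).
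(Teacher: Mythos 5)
Your proposal is correct and follows essentially the same route as the paper's proof of Lemma~\ref{lem:con_sig_in}: the same decomposition into the second-moment deviation and the squared-mean deviation on the event $E_{s,a}$, two Hoeffding applications conditional on $n_{s,a}$, the elementary bound $|a^2-b^2|\le 2|b|c+c^2$, the law-of-total-expectation step to remove the conditioning, and a final union bound over the $SA$ pairs, yielding the constants $2+4=6$ and $4$ exactly as stated. Your observations about the i.i.d.\ data obviating any martingale argument and about the degenerate case $d^\mu(s,a)=0$ are also consistent with how the paper treats these points.
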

	
	\begin{proof}
		From the definition we have for fixed $(s,a)$
		\begin{align*}
		\tilde{\sigma}_{V^{\text{in}}}(s,a)-\sigma_{V^{\text{in}}}(s,a) =& \left(\frac{1}{n_{s,a}}\sum_{i=1}^{n_{s,a}}V^{\text{in}}(s^{\prime(i)}|s,a)^2-P^\top(\cdot|s,a)(V^{\text{in}})^2\right)\mathbf{1}(E_{s,a})\\
		+&\left(\left[\frac{1}{n_{s,a}}\sum_{i=1}^{n_{s,a}}V^{\text{in}}(s^{\prime(i)}|s,a)\right]^2-\left[P^\top(\cdot|s,a)V^{\text{in}}\right]^2\right)\mathbf{1}(E_{s,a})\\
		\end{align*}
		By using the same conditional on $n_{s, a}$ as in Lemma~\ref{lem:con_z_in}, applying Hoeffding's inequality and law of total expectation, we obtain with probability $1-\delta/2$,
		\begin{equation}\label{eqn:first_diff_in}
		\begin{aligned}
		&\left(\frac{1}{n_{s,a}}\sum_{i=1}^{n_{s,a}}V^{\text{in}}(s^{\prime(i)}|s,a)^2-P^\top(\cdot|s,a)(V^{\text{in}})^2\right)\mathbf{1}(E_{s,a})\\
		&\leq V_{\max}^2\sqrt{\frac{2\log(4/\delta)}{n_{s,a}}}\cdot\mathbf{1}(E_{s,a})\leq 2V_{\max}^2\sqrt{\frac{\log(4/\delta)}{m\cdot d^\mu(s,a)}},
		\end{aligned}
		\end{equation}
		and similarly with probability $1-\delta/2$,
		\begin{equation}\label{eqn:sig_diff_in}
		\left(\frac{1}{n_{s,a}}\sum_{i=1}^{n_{s,a}}V^{\text{in}}(s^{\prime(i)}|s,a)-P^\top(\cdot|s,a)V^{\text{in}}\right)\mathbf{1}(E_{s,a})\leq 2V_{\max}\sqrt{\frac{\log(4/\delta)}{m\cdot d^\mu(s,a)}}.
		\end{equation}
		Again note for $a,b,c>0$, if $|a-b|\leq c$, then $|a^2-b^2|=|a-b|\cdot|a+b|\leq |a-b|\cdot(|a|+|b|)\leq|a-b|\cdot(2|b|+c)\leq c\cdot(2|b|+c)=2bc+c^2 $, therefore by \eqref{eqn:sig_diff_in} we have 
		\begin{equation}\label{eqn:second_diff_in}
		\begin{aligned}
		&\left(\left[\frac{1}{n_{s,a}}\sum_{i=1}^{n_{s,a}}V^{\text{in}}(s^{(i)}|s,a)\right]^2-\left[P^\top(\cdot|s,a)V^{\text{in}}\right]^2\right)\mathbf{1}(E_{s,a})\\
		\leq &4P^\top(\cdot|s,a)V^{\text{in}}\cdot V_{\max}\sqrt{\frac{\log(4/\delta)}{m\cdot d^\mu(s,a)}}+\frac{4V_{\max}^2\log(4/\delta)}{m\cdot d^\mu(s,a)}\\
		\leq &4 V_{\max}^2\sqrt{\frac{\log(4/\delta)}{m\cdot d^\mu(s,a)}}+\frac{4V_{\max}^2\log(4/\delta)}{m\cdot d^\mu(s,a)}
		\end{aligned}
		\end{equation}
		where the last inequality comes from $|P^\top(\cdot|s,a)V^{\text{in}}|\leq ||P(\cdot|s,a)||_1 ||V^{\text{in}}||_\infty\leq V_{\max}$. Combining \eqref{eqn:first_diff_in}, \eqref{eqn:second_diff_in} and a union bound, we have with probability $1-\delta$, 
		\[
		\left|\tilde{\sigma}_{V^{\text{in}}}(s,a)-\sigma_{V^{\text{in}}}(s,a) \right|\leq 6 V_{\max}^2\sqrt{\frac{\log(4/\delta)}{m\cdot d^\mu(s,a)}}+\frac{4V_{\max}^2\log(4/\delta)}{m\cdot d^\mu(s,a)},
		\]
		apply again the union bound over $s,a$ gives the desired result.
		
	\end{proof}
	
	\begin{lemma}\label{lem:con_g_in}
		Fix $i\in[R]$. Let $g^{(i)}$ be the estimator in \eqref{eqn:off_g_app} in Algorithm~\ref{alg:OPDVRT_in}. Then if $||V^{(i)}-V^{\text{in}}||_\infty\leq 2u^{\text{in}}$, then with probability $1-\delta/R$, 
		\[
		\mathbf{0}\leq \PP[V^{(i)}-V^{\text{in}}]-g^{(i)}\leq 8u^{\text{in}}\sqrt{\frac{\log(2RSA/\delta)}{l d^\mu}}
		\]
	\end{lemma}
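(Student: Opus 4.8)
The plan is to mirror the proof of Lemma~\ref{lem:con_g} essentially verbatim, with the horizon $H$ replaced by the number of inner-loop rounds $R$, since the only structural change in the infinite-horizon setting is that the inner loop performs $R$ value-iteration rounds (each round $i$ fed an independent batch $\cD_2^{(i)}$ of $l$ i.i.d.\ off-policy transitions) rather than $H$ backward time-steps. First I would fix a pair $(s,a)$ and split on the event $E_{s,a}:=\{n'_{s,a}>\tfrac12 l\, d^\mu(s,a)\}$ that defines the two branches of \eqref{eqn:off_g_app}. On $E_{s,a}^c$ the fictitious estimator equals the population value minus the error bar, $g^{(i)}(s,a)=P^\top(\cdot|s,a)[V^{(i)}-V^{\text{in}}]-f(s,a)$ with $f(s,a)\ge 0$, so the claimed two-sided bound holds deterministically: $0\le P^\top(\cdot|s,a)[V^{(i)}-V^{\text{in}}]-g^{(i)}(s,a)=f(s,a)\le 8u^{\text{in}}\sqrt{\log(2RSA/\delta)/(l\,d^\mu(s,a))}$. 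The degenerate case $d^\mu(s,a)=0$ forces $n'_{s,a}=0$, and is covered by the output-zero convention exactly as in the remark following Lemma~\ref{lem:con_z}; it plays no role (its relevance is controlled later by the coverage assumption in the iterative-update analysis).

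On $E_{s,a}$ I would condition on the value of $n'_{s,a}$: the $n'_{s,a}$ samples $s'^{(j)}$ whose source pair is $(s,a)$ are then i.i.d.\ draws from $P(\cdot|s,a)$, and each summand $V^{(i)}(s'^{(j)})-V^{\text{in}}(s'^{(j)})$ lies in an interval of width at most $\|V^{(i)}-V^{\text{in}}\|_\infty\le 2u^{\text{in}}$. Hoeffding's inequality gives, with probability at least $1-\delta/(2RSA)$,
\[
\Big|\tfrac{1}{n'_{s,a}}\textstyle\sum_{j}\big[V^{(i)}(s'^{(j)})-V^{\text{in}}(s'^{(j)})\big]\mathbf{1}_{[s'^{(j)},a'^{(j)}=s,a]}-P^\top(\cdot|s,a)[V^{(i)}-V^{\text{in}}]\Big|\le 2u^{\text{in}}\sqrt{\tfrac{2\log(2RSA/\delta)}{n'_{s,a}}},
\]
and on $E_{s,a}$ the right-hand side is at most $2u^{\text{in}}\sqrt{4\log(2RSA/\delta)/(l\,d^\mu(s,a))}=f(s,a)$. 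Removing the conditioning on $n'_{s,a}$ via the law of total expectation, precisely as in Lemma~\ref{lem:con_z} and Lemma~\ref{lem:con_z_in}, this yields $\big|g^{(i)}(s,a)+f(s,a)-P^\top(\cdot|s,a)[V^{(i)}-V^{\text{in}}]\big|\le f(s,a)$, hence $g^{(i)}(s,a)\le P^\top(\cdot|s,a)[V^{(i)}-V^{\text{in}}]$ and $P^\top(\cdot|s,a)[V^{(i)}-V^{\text{in}}]-g^{(i)}(s,a)\le 2f(s,a)$.

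Finally I would take a union bound over the $SA$ pairs $(s,a)$: each fails with probability at most $\delta/(2RSA)$, so the total failure probability is at most $\delta/(2R)\le\delta/R$, and on the good event $\mathbf{0}\le \PP[V^{(i)}-V^{\text{in}}]-g^{(i)}\le 2f=8u^{\text{in}}\sqrt{\log(2RSA/\delta)/(l\,d^\mu)}$ holds elementwise, which is the claim. I expect no real obstacle here beyond bookkeeping; the one point worth flagging is that in the infinite-horizon protocol the data are single i.i.d.\ transitions rather than trajectories, so—unlike the stationary finite-horizon estimator analyzed in Lemma~\ref{lem:con_g_s}—no martingale/regrouping argument across time-steps within an episode is needed, and plain Hoeffding on i.i.d.\ samples suffices, which is also why $R$ enters only logarithmically.
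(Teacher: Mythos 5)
Your proposal is correct and follows essentially the same route as the paper's proof: fix $(s,a)$, apply Hoeffding's inequality to the i.i.d.\ transitions conditional on $n'_{s,a}$, use the event $E_{s,a}$ to replace $n'_{s,a}$ by $\tfrac12 l\,d^\mu(s,a)$, remove the conditioning by the law of total expectation, union bound over $(s,a)$, and conclude from $|\tilde g^{(i)}-\PP[V^{(i)}-V^{\text{in}}]|\le f$ that $\mathbf{0}\le \PP[V^{(i)}-V^{\text{in}}]-g^{(i)}\le 2f$. Your explicit handling of the $E_{s,a}^c$ branch and the observation that plain Hoeffding suffices here (no martingale regrouping, since the data are i.i.d.\ single transitions) are both consistent with the paper.
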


	\begin{proof}
		Recall $g^{(i)},d^\mu$ are vectors. By definition of $g^{(i)}(s,a)$, applying Hoeffding's inequality we obtain with probability $1-\delta/R$,
		\begin{align*}
		&g^{(i)}(s,a)+f(s,a)-P^\top(\cdot|s,a)[V^{(i)}-V^{\text{in}}]\\
		=&\left(\frac{1}{n'_{s,a}}\sum_{j=1}^l\left[V^{(i)}(s'^{(j)}|s,a)-V^{\text{in}}(s'^{(j)}|s,a)\right]-P^\top(\cdot|s,a)[V^{(i)}-V^{\text{in}}]\right)\cdot\mathbf{1}(E_{s,a})\\
		\leq & \left(||V^{(i)}-V^{\text{in}}||_\infty\sqrt{\frac{2\log(2R/\delta)}{n'_{s,a}}}\right)\cdot\mathbf{1}(E_{s,a})\\
		\leq & ||V^{(i)}-V^{\text{in}}||_\infty\sqrt{\frac{4\log(2R/\delta)}{l\cdot d^\mu(s,a)}}
		\end{align*}
		Now use assumption $||V^{(i)}-V^{\text{in}}||_\infty\leq 2u^{\text{in}}$ and a union bound over $s,a$, we have with probability $1-\delta/R$,
		\begin{equation}
		\left|g^{(i)}+f-\PP[V^{(i)}-V^{\tin}]\right|\leq 4u^{\text{in}}\sqrt{\frac{\log(2RSA/\delta)}{l d^\mu}}
		\end{equation}
		use $f=4u^{\text{in}}\sqrt{{\log(2RSA/\delta)}/{l d^\mu}}$, we obtain the stated result.
	\end{proof}

	\subsection{Iterative update analysis for infinite horizon discounted setting}\label{sec:iterative_in}
	
	The goal of iterative update is to obtain the recursive relation: $Q^\star-Q^{(i)}\leq \gamma\PP^{\pi^\star}[Q^\star-Q^{(i-1)}]+{\xi}$. 
	
	\begin{lemma}\label{lem:VR_incremental_in}
		Let $Q^\star$ be the optimal $Q$-value satisfying $Q^\star=r+\gamma \PP V^\star$ and $\pi^\star$ is one optimal policy satisfying Assumption~\ref{assu2}. Let $\pi$ and $V_t$ be the \textbf{Return} of inner loop in Algorithm~\ref{alg:OPDVRT_in}. We have with probability $1-\delta$, for all $i\in[R]$,
		\begin{align*}
		V^{\text{in}}\leq V^{(i)}\leq\mathcal{T}_{\pi^{(i)}}V^{(i)}\leq V^\star,\quad Q^{(i)}\leq r+\gamma \PP V^{(i)},\quad \text{and}\quad Q^\star-Q^{(i)}\leq\gamma \PP^{\pi^\star}[Q^\star-Q^{(i-1)}]+{\xi},
		\end{align*} 
		where \begin{align*}
		{\xi}\leq&8u^{\tin}\sqrt{\frac{\log(2RSA/\delta)}{l d^\mu}}+\sqrt{\frac{16\cdot{\sigma}_{V^{\star}}\cdot\log(4SA/\delta)}{m\cdot d^\mu}}+\sqrt{\frac{16\cdot\log(4SA/\delta)}{m\cdot d^\mu}}\cdot u^{\tin}\\
		+& V_{\max}\left[8\sqrt{6}\cdot\left(\frac{\log(16SA/\delta)}{m\cdot d^\mu}\right)^{3/4}+\frac{56\log(16SA/\delta)}{3m\cdot d^\mu}\right].
		\end{align*} Here $\PP^{\pi^\star}\in\R^{S\cdot A\times S\cdot A}$ with $\PP^{\pi^\star}_{(s,a),(s',a')}=d^{\pi^\star}(s',a'|s,a)$.
	\end{lemma}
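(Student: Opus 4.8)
The plan is to prove Lemma~\ref{lem:VR_incremental_in} by mimicking the structure of the finite-horizon proof of Lemma~\ref{lem:VR_incremental}, but replacing the backward-in-time induction over $t=H,\dots,1$ with a forward induction over the inner-loop iteration counter $i=1,\dots,R$, and inserting a discount factor $\gamma$ at each application of the transition operator. First I would establish the concentration bounds for $z=\tilde z - e$ exactly as in Step~1 of Lemma~\ref{lem:VR_incremental}: combining Lemma~\ref{lem:con_sig_in} with the elementary inequalities $\sqrt{a+b}\le\sqrt a+\sqrt b$ and $\sqrt a\le\sqrt{|a-b|}+\sqrt b$ to transfer from the empirical variance $\tilde\sigma_{V^{\text{in}}}$ to the true $\sigma_{V^{\star}}$, then feeding this into Lemma~\ref{lem:con_z_in}. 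With $e$ chosen as the ``$\infty$-Horizon'' error bar in Figure~\ref{fig:practical_estimators} this yields, with probability $1-\delta/2$, both the upper bound $z\le \PP V^{\text{in}}$ and the matching lower bound $z\ge \PP V^{\text{in}} - \bigl(\sqrt{16\sigma_{V^\star}\iota/(m d^\mu)} + \sqrt{16\iota/(md^\mu)}\,u^{\text{in}} + V_{\max}[8\sqrt6(\iota/(md^\mu))^{3/4}+\tfrac{56}{3}\iota/(md^\mu)]\bigr)$, the infinite-horizon analogue of \eqref{eqn:upper_z} and \eqref{eqn:lower_zt}.

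Next I would run the forward induction on $i$. The base case is $V^{(0)}=V^{\text{in}}\le V^\star$ (and $V^{(0)}\le\mathcal T_{\pi^{\text{in}}}V^{(0)}$ by the static-input hypothesis and Lemma~\ref{lem:mono_in}). For the inductive step, assume $V^{(i-1)}\le V^\star$ and $V^{\text{in}}\le V^{(i-1)}$; then $\|V^{(i)}-V^{\text{in}}\|_\infty \le \|V^\star - V^{\text{in}}\|_\infty \le u^{\text{in}}$ so Lemma~\ref{lem:con_g_in} applies with probability $1-\delta/R$ per iteration, giving $\PP[V^{(i)}-V^{\text{in}}] - 8u^{\text{in}}\sqrt{\iota/(ld^\mu)}\le g^{(i)}\le \PP[V^{(i)}-V^{\text{in}}]$. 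Combining the right sides of the $z$- and $g^{(i)}$-bounds, $Q^{(i)}=r+\gamma z+\gamma g^{(i)}\le r+\gamma\PP V^{\text{in}} + \gamma\PP[V^{(i)}-V^{\text{in}}] = r+\gamma\PP V^{(i)}\le r+\gamma\PP V^\star = Q^\star$, which also gives $V_{Q^{(i)}}\le V^\star$, hence $V^{(i)}=\max(V_{Q^{(i)}},V^{\text{in}})\le V^\star$ and, via Lemma~\ref{lem:mono_in}-type reasoning (the two cases $\pi^{(i)}(s)=\pi_{Q^{(i)}}(s)$ versus $\pi^{(i)}(s)=\pi^{\text{in}}(s)$, exactly as in Step~3 of Lemma~\ref{lem:VR_incremental}), $V^{(i)}\le\mathcal T_{\pi^{(i)}}V^{(i)}$. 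Finally, as in Step~4, expand $Q^\star - Q^{(i)} = \gamma\PP V^\star - \gamma z - \gamma g^{(i)} = \gamma\PP V^\star - \gamma\PP V^{(i)} + \xi$ where $\xi := \gamma(\PP V^{\text{in}} - z) + \gamma(\PP[V^{(i)}-V^{\text{in}}] - g^{(i)})\ge 0$ is bounded by the displayed expression (the $\gamma\le 1$ factors only help), and then use $\PP V^\star = \PP^{\pi^\star}Q^\star$ together with $\PP V^{(i)}\ge \PP V_{Q^{(i)}} = \PP^{\pi_{Q^{(i)}}}Q^{(i)}\ge \PP^{\pi^\star}Q^{(i)}$ to obtain $Q^\star - Q^{(i)}\le \gamma\PP^{\pi^\star}[Q^\star - Q^{(i-1)}] + \xi$ (noting $V^{(i)}\ge V^{(i-1)}$ so $Q^{(i)}\ge Q^{(i-1)}$ coordinatewise is not needed; rather we use $\PP V^{(i)}\ge \PP^{\pi^\star} Q^{(i-1)}$ requires care — actually I would bound $\PP V^{(i)}\ge \PP^{\pi^\star}Q^{(i)}\ge \PP^{\pi^\star}Q^{(i-1)}$ using monotonicity of $V^{(i)}$ in $i$). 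A union bound over the $R$ inner iterations and the $O(1)$ concentration events gives the claimed $1-\delta$ probability.

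The main obstacle I anticipate is the bookkeeping in Step~4 showing $Q^\star - Q^{(i)}\le \gamma\PP^{\pi^\star}[Q^\star - Q^{(i-1)}] + \xi$ rather than $\le\gamma\PP^{\pi^\star}[Q^\star-Q^{(i)}]+\xi$: this requires that the value vector fed into the Bellman backup at round $i$ (which is $V^{(i)}$, built from $Q^{(i-1)}$) dominates $V_{Q^{(i-1)}}$, and crucially that $\PP V^{(i)}\ge \PP^{\pi^\star} Q^{(i-1)}$ — which follows because $V^{(i)} = \max(V_{Q^{(i-1)}}, V^{\text{in}})\ge V_{Q^{(i-1)}}\ge \langle \pi^\star(\cdot), Q^{(i-1)}(\cdot,\cdot)\rangle$ pointwise, since $V_{Q^{(i-1)}}$ is the greedy (hence dominating) value of $Q^{(i-1)}$. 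A secondary subtlety is that, unlike the finite-horizon case where the recursion unrolls to a finite sum, here the recursion $Q^\star - Q^{(i)}\le\gamma\PP^{\pi^\star}[Q^\star-Q^{(i-1)}]+\xi$ unrolls to $Q^\star - Q^{(R)}\le\gamma^R\PP^{\pi^\star,R}[Q^\star-Q^{(0)}] + (I-\gamma\PP^{\pi^\star})^{-1}\xi$, so the choice $R=\ln(4/(u^{\text{in}}(1-\gamma)))$ is exactly what makes the initialization term $\gamma^R\|Q^\star - Q^{(0)}\|_\infty\le\gamma^R/(1-\gamma)$ negligible compared to $u^{\text{in}}$ — but that accounting belongs to the downstream complexity lemma (the analogue of Lemma~\ref{lem:complexity}), not to this lemma, so here I only need the one-step recursion, which keeps the proof of Lemma~\ref{lem:VR_incremental_in} itself essentially a transcription of Lemma~\ref{lem:VR_incremental} with $t\leftrightarrow i$ swapped and $\gamma$'s inserted.
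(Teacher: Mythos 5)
Your proposal is correct and follows essentially the same route as the paper's proof: the same four-step structure (concentration for $z$ via Lemmas~\ref{lem:con_z_in}--\ref{lem:con_sig_in}, forward induction on $i$ giving $Q^{(i)}\le r+\gamma\PP V^{(i)}$ and $V^{\tin}\le V^{(i)}\le V^\star$, the monotonicity $V^{(i)}\le\mathcal{T}_{\pi^{(i)}}V^{(i)}$, and the one-step recursion), with the crucial inequality $\PP V^{(i)}\ge\PP V_{Q^{(i-1)}}=\PP^{\pi_{Q^{(i-1)}}}Q^{(i-1)}\ge\PP^{\pi^\star}Q^{(i-1)}$ exactly as in the paper. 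The only wobble is the mid-proof indexing slip (writing $V_{Q^{(i)}}$ for $V_{Q^{(i-1)}}$ and appealing to monotonicity of $Q^{(i)}$ in $i$, which is neither needed nor immediate since each $g^{(i)}$ uses fresh data), but your final paragraph correctly retracts this in favor of the right argument.
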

	
	\begin{proof}
		\textbf{Step1:} For any $a,b\geq0$, we have the basic inequality $\sqrt{a+b}\leq\sqrt{a}+\sqrt{b}$, and apply to Lemma~\ref{lem:con_sig_in} we have with probability $1-\delta/4$,
		\begin{equation}\label{eqn:diff_sig_in}
		\sqrt{\left|\tilde{\sigma}_{V^{\text{in}}}-\sigma_{V^{\text{in}}} \right|}\leq  V_{\max}\cdot\left(\frac{36\log(16SA/\delta)}{m\cdot d^\mu}\right)^{1/4}+2V_{\max}\cdot\sqrt{\frac{\log(16SA/\delta)}{m\cdot d^\mu}}.
		\end{equation}
		Next, similarly for any $a,b\geq 0$, we have $\sqrt{a}\leq\sqrt{|a-b|}+\sqrt{b}$,  conditional on above then apply to Lemma~\ref{lem:con_z_in} (with probability $1-\delta/4$) and we obtain with probability $1-\delta/2$,
		\begin{align*}
		&\left|\tilde{z}-\PP V^{\text{in}}\right|\\
		\leq &\sqrt{\frac{4\cdot\sigma_{V^{\text{in}}}\cdot\log(4SA/\delta)}{m\cdot d^\mu}}+\frac{4V_{\max}}{3m\cdot d^\mu}\log(4SA/\delta)\\
		\leq&\left(\sqrt{\tilde{\sigma}_{V^{\text{in}}}}+\sqrt{\left|\tilde{\sigma}_{V^{\text{in}}}-\sigma_{V^{\text{in}}}\right|}\right)\sqrt{\frac{4\cdot\log(4SA/\delta)}{m\cdot d^\mu}}+\frac{4V_{\max}}{3m\cdot d^\mu}\log(4SA/\delta)\\
		= &\sqrt{\frac{4\cdot\tilde{\sigma}_{V^{\text{in}}}\cdot\log(4SA/\delta)}{m\cdot d^\mu}}+\left(\sqrt{\left|\tilde{\sigma}_{V^{\text{in}}}-\sigma_{V^{\text{in}}}\right|}\right)\sqrt{\frac{4\cdot\log(4SA/\delta)}{m\cdot d^\mu}}+\frac{4V_{\max}}{3m\cdot d^\mu}\log(4SA/\delta)\\
		\leq &\sqrt{\frac{4\cdot\tilde{\sigma}_{V^{\text{in}}}\cdot\log(4SA/\delta)}{m\cdot d^\mu}}+2\sqrt{6}\cdot V_{\max}\cdot\left(\frac{\log(16SA/\delta)}{m\cdot d^\mu}\right)^{3/4}+\frac{16V_{\max}}{3m\cdot d^\mu}\log(16SA/\delta).
		\end{align*}
		Since $e = \sqrt{4\cdot\tilde{\sigma}_{V^{\tin}}\cdot\log(4SA/\delta)/(m\cdot d^\mu)}+2\sqrt{6}\cdot V_{\max}\cdot\left(\log(16SA/\delta)/(m\cdot d^\mu)\right)^{3/4}+16V_{\max}\log(16SA/\delta)/(3m\cdot d^\mu)$,
		 from above we have 
		\begin{equation}\label{eqn:upper_z_in}
		z=\tilde{z}-e\leq \PP V^{\tin},
		\end{equation}
		and 
		\begin{equation}\label{eqn:lower_z_in}
		z\geq \PP V^{\tin}-2e.
		\end{equation}
		
		Next note $\sqrt{\sigma_{(\cdot)}}$ is a norm, so by norm triangle inequality (for the second inequality) and $\sqrt{a}\leq\sqrt{b}+\sqrt{|b-a|}$ with \eqref{eqn:diff_sig_in} (for the first inequality) we have 
		\begin{align*}
		\sqrt{\tilde{\sigma}_{V^{\tin}}}\leq&\sqrt{\sigma_{V^{\tin}}} +  V_{\max}\left[\left(\frac{36\log(16SA/\delta)}{m\cdot d^\mu}\right)^{1/4}+\sqrt{\frac{4\log(16SA/\delta)}{m\cdot d^\mu}}\right]\\
		\leq&\sqrt{\sigma_{V^{\star}}} +\sqrt{\sigma_{V^{\star}-V^{\tin}}} +  V_{\max}\left[\left(\frac{36\log(16SA/\delta)}{m\cdot d^\mu}\right)^{1/4}+\sqrt{\frac{4\log(16SA/\delta)}{m\cdot d^\mu}}\right]\\
		\leq&\sqrt{\sigma_{V^{\star}}} +\sqrt{\PP(V^{\star}-V^{\tin})^2} +  V_{\max}\left[\left(\frac{36\log(16SA/\delta)}{m\cdot d^\mu}\right)^{1/4}+\sqrt{\frac{4\log(16SA/\delta)}{m\cdot d^\mu}}\right]\\
		\leq&\sqrt{\sigma_{V^{\star}}} +||V^{\star}-V^{\tin}||_\infty\cdot\mathbf{1} +  V_{\max}\left[\left(\frac{36\log(16SA/\delta)}{m\cdot d^\mu}\right)^{1/4}+\sqrt{\frac{4\log(16SA/\delta)}{m\cdot d^\mu}}\right]\\
		\leq&\sqrt{\sigma_{V^{\star}}} +u^{\tin}\cdot\mathbf{1} +  V_{\max}\left[\left(\frac{36\log(16SA/\delta)}{m\cdot d^\mu}\right)^{1/4}+\sqrt{\frac{4\log(16SA/\delta)}{m\cdot d^\mu}}\right]\\
		\end{align*}
		Plug this back to \eqref{eqn:lower_z_in} we get
		\begin{equation}\label{eqn:lower_zt_in}
		\begin{aligned}
		z\geq& \PP V^{\tin}-\sqrt{\frac{16\cdot{\sigma}_{V^{\star}}\cdot\log(4SA/\delta)}{m\cdot d^\mu}}-\sqrt{\frac{16\cdot\log(4SA/\delta)}{m\cdot d^\mu}}\cdot u^{\tin}\\
		-& V_{\max}\left[8\sqrt{6}\cdot\left(\frac{\log(16SA/\delta)}{m\cdot d^\mu}\right)^{3/4}+\frac{56\log(16SA/\delta)}{3m\cdot d^\mu}\right].
		\end{aligned}
		\end{equation}
		To sum up, so far we have shown that \eqref{eqn:upper_z_in}, \eqref{eqn:lower_zt_in} hold with probability $1-\delta/2$ and we condition on that. 
		
		\textbf{Step2:} Next we prove 
		\begin{equation}\label{eqn:inter_in}
		Q^{(i)}\leq r+\gamma\PP V^{(i)},\quad V^{\tin}\leq V^{(i)}\leq V^\star,\quad\forall i\in[R]
		\end{equation}
		using backward induction. 
		
		First of all, $V^{(0)}=V^{\tin}$ implies $V^{\tin}\leq V^{(0)}\leq V^\star$ and 
		$Q^{(0)}:=\mathbf{0}\leq r+\gamma \PP V^{(0)}$ so the results hold for the base case. 
		
		Now for certain $i$, using induction assumption we can assume with probability at least $1-(i-1)\delta/R$, for all $i'=0,...,i-1$,  
		\begin{equation}\label{eqn:intermediate_in}
		Q^{(i')}\leq r+\gamma\PP V^{(i')}\qquad	V^{\tin}\leq V^{(i')}\leq V^\star
		\end{equation}
		In particular, since $V^{\tin}\leq V^\star\leq V^{\tin}+u^{\tin}\mathbf{1}$, so combine this and \eqref{eqn:intermediate_in} for $i'=i-1$ we get
		\[
		V^\star-V^{(i-1)}\leq V^\star- V^{\tin}\leq u\mathbf{1}.
		\]
		By Lemma~\ref{lem:con_g_in}, with probability $1-\delta/R$,
		\begin{equation}\label{eqn:diff_g_t_in}
		\PP[V^{(i)}-V^{\tin}]- 8u^{\tin}\sqrt{\frac{\log(2RSA/\delta)}{l d^\mu}}\leq g^{(i)}\leq \PP[V^{(i)}-V^{\tin}].
		\end{equation}
		By the right hand side of this and \eqref{eqn:upper_z_in} we acquire with probability $1-i\delta/R$,
		\[
		Q^{(i)}=r+\gamma z+\gamma g^{(i)}\leq  r+ \gamma\PP V^{\tin}+ \gamma \PP[V^{(i)}-V^{\tin}] =  r+\gamma\PP V^{(i)}
		\]
		where the second equality already gives the proof of the first part of claim~\eqref{eqn:inter_in}. Moreover, by induction assumption $V^{(i-1)}\leq V^\star$ we have 
		\[
		Q^{(i-1)}\leq  r+\gamma\PP V^{(i-1)}\leq r+\gamma\PP V^\star=Q^\star,
		\]
		 which implies $V_{Q^{(i-1)}}\leq V_{Q^\star}=V^\star$, therefore we have 
		\[
		V^{(i)}=\max(V_{Q^{(i-1)}},V^{(i-1)})\leq V^\star_t,
		\]	
		this completes the proof of the second part of claim~\eqref{eqn:inter_in}. 
		
		\textbf{Step3:} Next we prove $V^{(i)}\leq\mathcal{T}_{\pi^{(i)}}V^{(i)}$. 
		
		For a particular $s$, on one hand, if $\pi^{(i)}(s)=\argmax_{a} Q^{(i-1)}(s,a)$, by $Q^{(i-1)}\leq r+\gamma\PP V^{(i-1)}$ we have in this case:  
		\begin{align*}
		V^{(i)}(s)=&\max_{a} Q^{(i-1)}(s,a)=Q^{(i-1)}(s,\pi^{(i)}(s))\leq r(s,\pi^{(i)}(s))+\gamma P^\top(\cdot|s,\pi^{(i)}(s))V^{(i-1)}\\
		\leq&r(s,\pi^{(i)}(s))+\gamma P^\top(\cdot|s,\pi^{(i)}(s))V^{(i)}=(\mathcal{T}_{\pi^{(i)}}V^{(i)})(s),
		\end{align*}
		where the first equal sign comes from the definition of $V^{(i)}$ when $V_{Q^{(i-1)}}(s)\geq V^{\tin}(s)$ and the first inequality is from Step2.
		
		On the other hand, if $\pi^{(i)}(s)=\pi^{(i-1)}(s)$, then
		\[
		V^{(i)}(s)=V^{(i-1)}(s)\leq (\mathcal{T}_{\pi^{(i-1)}}V^{(i-1)})(s)\leq(\mathcal{T}_{\pi^{(i-1)}}V^{(i)})(s)=(\mathcal{T}_{\pi^{(i)}}V^{(i)})(s).
		\]

		\textbf{Step4:}	It remains to check $Q^\star-Q^{(i)}\leq \gamma\PP^{\pi^\star}[Q^\star-Q^{(i-1)}]+{\xi}$. Indeed, using the construction of $Q^{(i)}$, we have
		\begin{equation}\label{eqn:diff_q_t_in}
		\begin{aligned}
		&Q^\star-Q^{(i)}=Q^\star-r-\gamma z-\gamma g^{(i)}=\gamma\PP V^\star-\gamma z-\gamma g^{(i)}\\
		=&\gamma[\PP V^\star-\PP(V^{(i)}-V^{\tin})-\PP V^{\tin}]+{\xi}=\gamma\PP V^\star-\gamma\PP V^{(i)}+{\xi},
		\end{aligned}
		\end{equation}
		where the second equation uses Bellman optimality equation and the third equation uses the definition of ${\xi}=\gamma[\PP(V^{(i)}-V^{\tin})-g^{(i)}+\PP V^{\tin}-z]$. By \eqref{eqn:lower_zt_in} and \eqref{eqn:diff_g_t_in},
		\begin{align*}
		{\xi}\leq&8 u^{\tin}\sqrt{\frac{\log(2RSA/\delta)}{l d^\mu}}+\sqrt{\frac{16\cdot{\sigma}_{V^{\star}}\cdot\log(4SA/\delta)}{m\cdot d^\mu}}+\sqrt{\frac{16\cdot\log(4SA/\delta)}{m\cdot d^\mu}}\cdot u^{\tin}\\
		+&  V_{\max}\left[8\sqrt{6}\cdot\left(\frac{\log(16SA/\delta)}{m\cdot d^\mu}\right)^{3/4}+\frac{56\log(16SA/\delta)}{3m\cdot d^\mu}\right].
		\end{align*}
		Lastly, note 
		$
		\PP V^\star=\PP^{\pi^\star} Q^\star
		$
		and from $V^{(i)}\geq V_{Q^{(i-1)}}$, we have $\PP V^{(i)}\geq\PP V_{Q^{(i-1)}}=\PP^{\pi_{Q^{i-1}}} Q^{(i-1)}\geq \PP^{\pi^\star} Q^{(i-1)}$, the last inequality holds true since $\pi_{Q^{(i-1)}}$ is the greedy policy over $Q^{(i-1)}$. Threfore \eqref{eqn:diff_q_t_in} becomes $Q^\star-Q^{(i)}=\gamma\PP V^\star-\gamma\PP V^{(i)}+{\xi}\leq\gamma\PP^{\pi^\star} Q^\star-\gamma\PP^{\pi^\star} Q^{(i-1)}+{\xi}$. This completes the proof.

	\end{proof}

	\begin{lemma}\label{lem:complexity_in}
		Suppose the input $V^{\tin}$ of Algorithm~\ref{alg:OPDVRT_in} satisfies $V^{\tin}\leq\mathcal{T}_{\pi^{\tin}}V^{\tin}$ and $V^{\tin}\leq V^\star\leq V^{\tin}+u^{\tin}\mathbf{1}$. Let $V^\text{out}$, $\pi^\text{out}$ be the return of inner loop of Algorithm~\ref{alg:OPDVRT_in} and choose $m=l^{(i)}:=m' \cdot \log(16RSA)/(u^{\tin})^2$, where $m'$ is a parameter will be decided later. Then in addition to the results of Lemma~\ref{lem:VR_incremental_in}, we have with probability $1-\delta$, 
		\begin{itemize}
			\item if $u^{\tin}\in[\sqrt{1/(1-\gamma)},1/(1-\gamma)]$, then: 
			\begin{align*}
			&\mathbf{0}\leq V^\star-V^\text{out}\leq\\
			\leq &\bigg(\frac{12/(1-\gamma)^2}{\sqrt{m' }}\norm{\d^{\pi^\star}_{t}\sqrt{\frac{1}{d^\mu}}}_{\infty}+\frac{4}{\sqrt{m'}}\norm{\sum_{t=0}^{\infty}\gamma^t\d^{\pi^\star}_{t}\sqrt{\frac{{\sigma}_{V^{\star}}}{d^\mu}}}_{\infty} + \frac{8\sqrt{6}(1/(1-\gamma))^{\frac{10}{4}}}{(m')^{3/4}}\norm{\d^{\pi^\star}_{t}\left[\frac{1}{d^\mu}\right]^{\frac{3}{4}}}_{\infty}\\
			+& \frac{56 /(1-\gamma)^3}{3m'}   \norm{\d^{\pi^\star}_{t}\frac{1}{d^\mu}}_{\infty}\bigg)u^{\tin}\cdot\mathbf{1}+\frac{u^{\tin}}{4}\mathbf{1}.
			\end{align*}
			\item if $u^{\tin}\leq \sqrt{1/(1-\gamma)}$, then 
			\begin{align*}
			&\mathbf{0}\leq V^\star-V^\text{out}\leq\\
			\leq &\bigg(\frac{12\sqrt{(1/(1-\gamma))^3}}{\sqrt{m' }}\norm{\d^{\pi^\star}_{t}\sqrt{\frac{1}{d^\mu}}}_{\infty}+\frac{4}{\sqrt{m'}}\norm{\sum_{t=0}^{\infty}\gamma^t\d^{\pi^\star}_{t}\sqrt{\frac{{\sigma}_{V^{\star}}}{d^\mu}}}_{\infty} + \frac{8\sqrt{6}(1/(1-\gamma))^{\frac{9}{4}}}{(m')^{3/4}}\norm{\d^{\pi^\star}_{t}\left[\frac{1}{d^\mu}\right]^{\frac{3}{4}}}_{\infty}\\
			+& \frac{56 (1/(1-\gamma))^{\frac{5}{2}}}{3m'}   \norm{\d^{\pi^\star}_{t}\frac{1}{d^\mu}}_{\infty}\bigg)u^{\tin}\cdot\mathbf{1}+\frac{u^{\tin}}{4}\mathbf{1}.
			\end{align*}
		\end{itemize}
		
		where  $\d^{\pi^\star}_{t}\in\R^{S\cdot A\times S\cdot A}$ is a matrix represents the multi-step transition from time $0$ to $t$, \emph{i.e.} 
		$\d^{\pi^\star}_{(s,a),(s',a')}=d^{\pi^\star}_{0:t}(s',a'|s,a)$ and recall $1/d^\mu$ is a vector. $\d^{\pi^\star}_{t}\frac{1}{d^\mu}$ is a matrix-vector multiplication. For a vector $d_t\in \R^{S\times A}$, norm $||\cdot||_{\infty}$ is defined as $||d_t||_{\infty}=\max_{t,s,a}d_t(s,a)$. 
	\end{lemma}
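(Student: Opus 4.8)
The plan is to transcribe the proof of Lemma~\ref{lem:complexity} to the discounted recursion supplied by Lemma~\ref{lem:VR_incremental_in}, with two modifications: geometric weights $\gamma^{k}$ in place of the finite sums over time steps, and an extra truncation term produced because the inner loop of Algorithm~\ref{alg:OPDVRT_in} performs only $R$ value-iteration steps rather than a full horizon.

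First I would condition on the probability-$(1-\delta)$ event of Lemma~\ref{lem:VR_incremental_in}, which gives $V^{\tin}\le V^{(i)}\le V^\star$ for all $i\in[R]$ (so the lower bound $\mathbf 0\le V^\star-V^\text{out}$ is immediate) together with $Q^\star-Q^{(i)}\le\gamma\PP^{\pi^\star}[Q^\star-Q^{(i-1)}]+\xi$. Unrolling this recursion from $Q^{(0)}=\mathbf 0$ gives
\[
Q^\star-Q^{(R-1)}\le\gamma^{R-1}(\PP^{\pi^\star})^{R-1}Q^\star+\sum_{k=0}^{R-2}\gamma^{k}(\PP^{\pi^\star})^{k}\xi,
\]
and I would then pass to value functions via $V^{(R)}=\max(V_{Q^{(R-1)}},V^{\tin})\ge V_{Q^{(R-1)}}$ and $V^\star(s)-V^{(R)}(s)\le\max_{a}[Q^\star-Q^{(R-1)}](s,a)$. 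Because $Q^\star\le V_{\max}\mathbf 1=(1-\gamma)^{-1}\mathbf 1$ and $(\PP^{\pi^\star})^{R-1}$ is row-stochastic, the truncation term is at most $\gamma^{R-1}(1-\gamma)^{-1}\mathbf 1$, and the prescribed choice of $R$ is exactly what forces this to be $\le\tfrac{u^{\tin}}{4}\mathbf 1$; this is the one ingredient absent from the finite-horizon analysis and the origin of the $+\tfrac{u^{\tin}}{4}\mathbf 1$ term in the statement.

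Next I would bound $\sum_{k\ge0}\gamma^{k}(\PP^{\pi^\star})^{k}\xi$ termwise. Writing $(\PP^{\pi^\star})^{k}$ as the $k$-step transition matrix $\d^{\pi^\star}_{k}$ with entries $d^{\pi^\star}_{0:k}(s',a'|s,a)$, substituting $m=l^{(i)}=m'\log(16RSA/\delta)/(u^{\tin})^2$ and $V_{\max}=(1-\gamma)^{-1}$ into the four summands of $\xi$ from Lemma~\ref{lem:VR_incremental_in}, and using $\sum_{k\ge0}\gamma^{k}\le(1-\gamma)^{-1}$ together with the fact that each entry of $\d^{\pi^\star}_{k}$ applied to a nonnegative vector is dominated uniformly in $k$ by the corresponding $\norm{\cdot}_{\infty}$-norm, each summand reduces---just as in \eqref{eqn:q_diff_decomp}--\eqref{eqn:decomp_genereal}---to $u^{\tin}$ times one of $\norm{\d^{\pi^\star}_{t}\sqrt{1/d^\mu}}_{\infty}$, $\norm{\sum_{t}\gamma^{t}\d^{\pi^\star}_{t}\sqrt{\sigma_{V^\star}/d^\mu}}_{\infty}$, $\norm{\d^{\pi^\star}_{t}[1/d^\mu]^{3/4}}_{\infty}$, $\norm{\d^{\pi^\star}_{t}(1/d^\mu)}_{\infty}$, carrying a leftover power of $u^{\tin}$ in the three non-variance terms. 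Splitting into the two regimes then finishes: for $u^{\tin}\le\sqrt{(1-\gamma)^{-1}}$ the inequalities $(u^{\tin})^2\le(1-\gamma)^{-1}$, $(u^{\tin})^{1/2}\le(1-\gamma)^{-1/4}$ convert those leftover powers into the exponents $3/2$, $9/4$, $5/2$ on $(1-\gamma)^{-1}$, while for $u^{\tin}\in[\sqrt{(1-\gamma)^{-1}},(1-\gamma)^{-1}]$ the cruder $u^{\tin}\le(1-\gamma)^{-1}$ yields $2$, $10/4$, $3$, matching the two displayed inequalities.

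The argument is essentially routine once Lemma~\ref{lem:VR_incremental_in} is in hand; the only genuinely new and slightly delicate point is the truncation bookkeeping---iterating the recursion to $Q^{(R-1)}$, converting the $Q$-bound to the $V$-bound, and checking that the stated $R$ makes $\gamma^{R-1}(1-\gamma)^{-1}\le u^{\tin}/4$---after which the remaining error behaves, up to the extra $u^{\tin}/4$ slack, exactly like the discounted transcription of \eqref{eqn:decomp_genereal_case1}--\eqref{eqn:decomp_genereal_case2}.
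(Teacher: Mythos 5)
Your proposal is correct and follows essentially the same route as the paper: condition on the event of Lemma~\ref{lem:VR_incremental_in}, unroll the recursion $Q^\star-Q^{(i)}\le\gamma\PP^{\pi^\star}[Q^\star-Q^{(i-1)}]+\xi$, bound the truncation term by $\gamma^{R}V_{\max}\le u^{\tin}/4$ via the choice of $R$, dominate the finite sum by the infinite geometric series $\sum_{t\ge0}\gamma^{t}\d^{\pi^\star}_{t}\xi$, substitute the sample sizes, and case-split on $u^{\tin}$. Your indexing of the unrolling at $Q^{(R-1)}$ and the explicit passage from the $Q$-gap to the $V$-gap via $V^{(R)}\ge V_{Q^{(R-1)}}$ is in fact slightly more careful than the paper's own bookkeeping.
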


	\begin{proof}
		By Lemma~\ref{lem:VR_incremental_in}, we have with probability $1-\delta$, for all $t\in[H]$,
		\begin{align*}
		V^{\text{in}}\leq V^{(i)}\leq\mathcal{T}_{\pi^{(i)}}V^{(i)}\leq V^\star,\quad Q^{(i)}\leq r+\gamma\PP V^{(i)},\quad \text{and}\quad Q^\star-Q^{(i)}\leq \gamma\PP^{\pi^\star}[Q^\star-Q^{(i-1)}]+{\xi},
		\end{align*} 
		where \begin{align*}
		{\xi}\leq&8 u^{\tin}\sqrt{\frac{\log(2RSA/\delta)}{l d^\mu}}+\sqrt{\frac{16\cdot{\sigma}_{V^{\star}}\cdot\log(4SA/\delta)}{m\cdot d^\mu}}+\sqrt{\frac{16\cdot\log(4SA/\delta)}{m\cdot d^\mu}}\cdot u^{\tin}\\
		+&  V_{\max}\left[8\sqrt{6}\cdot\left(\frac{\log(16SA/\delta)}{m\cdot d^\mu}\right)^{3/4}+\frac{56\log(16SA/\delta)}{3m\cdot d^\mu}\right].
		\end{align*}
		
		Applying the recursion repeatedly, we obtain
		\[
		Q^\star-Q^{(R)}\leq \gamma^R\PP^{\pi^\star}[Q^\star-Q^{(0)}]+\sum_{i=0}^R \gamma^i\left(\PP^{\pi^\star}\right)^i\xi\leq \gamma^R\PP^{\pi^\star}[Q^\star-Q^{(0)}]+\sum_{i=0}^\infty \gamma^i\left(\PP^{\pi^\star}\right)^i\xi
		\]
		Note $(\PP^{\pi^\star})^i\in\R^{S\cdot A\times S\cdot A}$ represents the multi-step transition from time $0$ to $i$, \emph{i.e.} 
		$(\PP^{\pi^\star})^i_{(s,a),(s',a')}=d^{\pi^\star}_i(s',a'|s,a)$. Recall $R=\ln (4/u^{\tin}(1-\gamma))$, then
		\[
		\gamma^R\PP^{\pi^\star}[Q^\star-Q^{(0)}]\leq \gamma^R ||Q^\star-Q^{(0)}||_\infty\leq\gamma^R V_{\max}=\gamma^R/(1-\gamma)\leq u^{\tin}/4.
		\]

		Therefore 
		{
			\begin{equation}\label{eqn:q_diff_decomp_in}
			\begin{aligned}
			&Q^\star-Q^{(R)}\leq\frac{u^{\tin}}{4}+\sum_{t=0}^\infty \gamma^t\d^{\pi^\star}_t\xi\\
			\leq&\sum_{t=0}^{\infty}\gamma^t\d^{\pi^\star}_{t}\bigg(8 u^{\tin}\sqrt{\frac{\log(2RSA/\delta)}{l d^\mu}}+\sqrt{\frac{16\cdot{\sigma}_{V^{\star}}\cdot\log(4SA/\delta)}{m\cdot d^\mu}}+\sqrt{\frac{16\cdot\log(4SA/\delta)}{m\cdot d^\mu}}\cdot u^{\tin}\\
			+&  V_{\max}\left[8\sqrt{6}\cdot\left(\frac{\log(16SA/\delta)}{m\cdot d^\mu}\right)^{3/4}+\frac{56\log(16SA/\delta)}{3m\cdot d^\mu}\right]\bigg)+\frac{u^{\tin}}{4},
			\end{aligned}
			\end{equation}
		}
		\normalsize{}
		Now by our choice of $m=l^{(i)}:=m' \cdot \log(16RSA/\delta)/(u^{\tin})^2$, then the first term of \eqref{eqn:q_diff_decomp_in} is further less than

		\begin{equation}\label{eqn:decomp_genereal_in}
		\begin{aligned}
		\leq &\sum_{t=0}^{\infty}\gamma^t\d^{\pi^\star}_{t}\left(\frac{12u^{\tin}}{\sqrt{m' d^\mu}}u^{\tin}+\sqrt{\frac{16\cdot{\sigma}_{V^{\star}}}{m'\cdot d^\mu}}u^{\tin}
		+ V_{\max}\left[8\sqrt{6}\cdot\left(\frac{(u^{\tin})^{ 2/3}}{m'\cdot d^\mu}\right)^{3/4}+\frac{56u^{\tin}}{3m'\cdot d^\mu}\right]\cdot u^{\tin}\right)\\
		\end{aligned}
		\end{equation}
		
		\textbf{Case1.} If $u^{\tin}\leq \sqrt{1/(1-\gamma)}$, then \eqref{eqn:decomp_genereal_in} is less than 
		\begin{equation}\label{eqn:decomp_genereal_case1_in}
		\begin{aligned}
		\leq &\sum_{t=0}^{\infty}\gamma^t\d^{\pi^\star}_{t}\left(\frac{12\sqrt{1/(1-\gamma)}}{\sqrt{m' d^\mu}}+\sqrt{\frac{16\cdot{\sigma}_{V^{\star}}}{m'\cdot d^\mu}}
		+ V_{\max}\left[8\sqrt{6}\cdot\left(\frac{(1/(1-\gamma))^{ 1/3}}{m'\cdot d^\mu_{t'}}\right)^{3/4}+\frac{56(1/(1-\gamma))^{1/2}}{3m'\cdot d^\mu_{t'}}\right] \right)u^{\tin}\\
		\leq &\bigg(\frac{12\sqrt{(1/(1-\gamma))^3}}{\sqrt{m' }}\norm{\d^{\pi^\star}_{t}\sqrt{\frac{1}{d^\mu}}}_{\infty}+\frac{4}{\sqrt{m'}}\norm{\sum_{t=0}^{\infty}\gamma^t\d^{\pi^\star}_{t}\sqrt{\frac{{\sigma}_{V^{\star}}}{d^\mu}}}_{\infty} + \frac{8\sqrt{6}(1/(1-\gamma))^{\frac{9}{4}}}{(m')^{3/4}}\norm{\d^{\pi^\star}_{t}\left[\frac{1}{d^\mu}\right]^{\frac{3}{4}}}_{\infty}\\
		+& \frac{56 (1/(1-\gamma))^{\frac{5}{2}}}{3m'}   \norm{\d^{\pi^\star}_{t}\frac{1}{d^\mu}}_{\infty}\bigg)u^{\tin}\cdot\mathbf{1}.
		\end{aligned}
		\end{equation}
		
		\textbf{Case2.} If $u^{\tin}\geq \sqrt{1/(1-\gamma)}$, then \eqref{eqn:decomp_genereal_in} is less than 
		\begin{equation}\label{eqn:decomp_genereal_case2_in}
		\begin{aligned}
		\leq &\sum_{t=0}^{\infty}\gamma^t\d^{\pi^\star}_{t}\left(\frac{12/(1-\gamma)}{\sqrt{m' d^\mu}}+\sqrt{\frac{16\cdot{\sigma}_{V^{\star}}}{m'\cdot d^\mu}}
		+ V_{\max}\left[8\sqrt{6}\cdot\left(\frac{(1/(1-\gamma))^{ 2/3}}{m'\cdot d^\mu_{t'}}\right)^{3/4}+\frac{56/(1-\gamma)}{3m'\cdot d^\mu_{t'}}\right] \right)u^{\tin}\\
		\leq &\bigg(\frac{12/(1-\gamma)^2}{\sqrt{m' }}\norm{\d^{\pi^\star}_{t}\sqrt{\frac{1}{d^\mu}}}_{\infty}+\frac{4}{\sqrt{m'}}\norm{\sum_{t=0}^{\infty}\gamma^t\d^{\pi^\star}_{t}\sqrt{\frac{{\sigma}_{V^{\star}}}{d^\mu}}}_{\infty} + \frac{8\sqrt{6}(1/(1-\gamma))^{\frac{10}{4}}}{(m')^{3/4}}\norm{\d^{\pi^\star}_{t}\left[\frac{1}{d^\mu}\right]^{\frac{3}{4}}}_{\infty}\\
		+& \frac{56 /(1-\gamma)^3}{3m'}   \norm{\d^{\pi^\star}_{t}\frac{1}{d^\mu}}_{\infty}\bigg)u^{\tin}\cdot\mathbf{1}.
		\end{aligned}
		\end{equation}

	\end{proof}

	Next, let us first finish the proof the Algorithm~\ref{alg:OPDVRT_in}.
	
	\begin{lemma}\label{lem:complexity_OPVRT_in}
		For convenience, define:
		\[
		A_{\frac{1}{2}}=\sup_t \norm{\d^{\pi^\star}_{0:t}\sqrt{\frac{1}{d^\mu}}}_{\infty}, \;A_2=\sup_t\norm{\sum_{t=0}^{\infty}\gamma^t\d^{\pi^\star}_{0:t}\sqrt{\frac{{\sigma}_{V^{\star}}}{d^\mu}}}_{\infty},\;A_{\frac{3}{4}}=\sup_t\norm{\d^{\pi^\star}_{0:t}\left[\frac{1}{d^\mu}\right]^{\frac{3}{4}}}_{\infty},\;\;A_1=\norm{\d^{\pi^\star}_{0:t}\frac{1}{d^\mu}}_{\infty}.
		\] 
		Recall $\epsilon$ is the target accuracy in the outer loop of Algorithm~\ref{alg:OPDVRT_in} and $R=\ln (4/\epsilon(1-\gamma))$. Then:
		\begin{itemize}
						\item If $u^{(0)}> \sqrt{(1-\gamma)^{-1}}$, then let $m^{(j)}=l^{(i,j)}=m'\log(16(1-\gamma)^{-1}SARK)/(u^{(i-1)})^2$, where 
			\begin{align*}
			m_1'&=\max\left[ 96^2(1-\gamma)^{-4} A_{\frac{1}{2}}^2,32^2A_2^2,\left(64\sqrt{6}A_{\frac{3}{4}}\right)^{\frac{4}{3}}(1-\gamma)^{-\frac{10}{3}},\frac{448}{3}(1-\gamma)^{-3}A_1\right], \\ K_1&=\log_2({(1-\gamma)^{-1}}/\epsilon),
			\end{align*}
			\item If $u^{(0)}\leq \sqrt{(1-\gamma)^{-1}}$, then let $m^{(j)}=l^{(i,j)}=m'\log(16(1-\gamma)^{-1}SARK/\delta)/(u^{(j-1)})^2$, where 
			\begin{align*}
			m_2'&=\max\left[ 96^2(1-\gamma)^{-3}A_{\frac{1}{2}}^2,32^2A_2^2,\left(64\sqrt{6}A_{\frac{3}{4}}\right)^{\frac{4}{3}}(1-\gamma)^{-3},\frac{448}{3}(1-\gamma)^{-5/2}A_1\right],\\
			 \quad K_2&=\log_2(\sqrt{(1-\gamma)^{-1}}/\epsilon),
			\end{align*}
		\end{itemize}
		Algorithm~\ref{alg:OPDVRT_in} obeys that, with probability $1-\delta$,the output $\pi^{(K)}$ is an $\epsilon$-optimal policy, i.e. $||V_1^\star-V_1^{\pi^{(K)}}||_\infty <\epsilon$ with total sample complexity:
		\[
		O\left(\frac{m'\log(16(1-\gamma)^{-1}SARK/\delta)}{\epsilon^2}RK\right)
		\]
		for both cases. Moreover, $m'$ can be simplified as:
		\begin{itemize}
			\item If $u^{(0)}\leq \sqrt{(1-\gamma)^{-1}}$,then $m'\leq c(1-\gamma)^{-3}/d_m$;
			\item If $u^{(0)}> \sqrt{(1-\gamma)^{-1}}$, then $m'\leq c (1-\gamma)^{-4}/d_m$.
		\end{itemize}
	\end{lemma}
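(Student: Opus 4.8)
I would prove this exactly as its finite-horizon counterpart Lemma~\ref{lem:complexity_OPVRT}, in two parts: (i) a generic convergence argument showing that, with the stated batch-size schedule $\mathbf m$ and the stated $K$, the outer (doubling) loop of Algorithm~\ref{alg:OPDVRT_in} halves the optimality gap in every round with high probability, so the output after $K$ rounds is $\epsilon$-optimal with the claimed sample count; and (ii) the simplification of the problem-dependent constants $m_1',m_2'$ to $O((1-\gamma)^{-4}/d_m)$ and $O((1-\gamma)^{-3}/d_m)$ using Assumption~\ref{assu2}. The only genuinely new ingredient relative to the finite-horizon case is a discounted ``total-variance'' estimate for the quantity $A_2$.

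\textbf{Part (i): per-round contraction and the union bound.} Fix an outer round $j$ and take $m^{(j)}=l^{(i,j)}=m'\log(16(1-\gamma)^{-1}SARK/\delta)/(u^{(j-1)})^2$ with $m'\ge m_1'$ (resp.\ $m_2'$). By the definition of $m_1'/m_2'$, the four error terms in the bound of Lemma~\ref{lem:complexity_in} for $V^\star-V^{\mathrm{out}}$ are each at most a small constant fraction of $u^{(j-1)}$; together with the additive $u^{(j-1)}/4$ slack there — which stems from truncating the inner loop after $R$ iterations and is absorbed because the fixed choice $R=\ln(4/\epsilon(1-\gamma))$ keeps the residual $\gamma^R\|V^\star-V^{(0)}\|_\infty$ below $u^{(j-1)}/4$ in every round, since $u^{(j-1)}\ge\epsilon$ — this yields $\mathbf 0\le V^\star-V^{(j)}\le u^{(j-1)}/2=u^{(j)}$ with probability $1-\delta/K$ (the $1/K$ is the reason $\delta/K$, not $\delta$, enters the logarithm in $\mathbf m$). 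A chain-rule-of-probability argument over $j=1,\dots,K$, using $(1-\delta/K)^K\ge 1-\delta$ exactly as in Lemma~\ref{lem:complexity_OPVRT}, then gives $V^\star-V^{(K)}\le u^{(0)}2^{-K}=\epsilon$ with probability $1-\delta$ for the stated $K=\log_2(u^{(0)}/\epsilon)$. For the sample count, the inner loop of Algorithm~\ref{alg:OPDVRT_in} draws one reference batch $\cD_1$ and $R$ \emph{fresh} update batches $\cD_2^{(i)}$ (fresh because $V^{(i)}$ depends on the data already used in $g^{(1)},\dots,g^{(i-1)}$), each of size $m^{(j)}$, so the total is $\sum_{j=1}^K (1+R)m^{(j)}\le (1+R)m'\log(16(1-\gamma)^{-1}SARK/\delta)K/\epsilon^2 = O\!\big(\frac{m'}{\epsilon^2}\log(16(1-\gamma)^{-1}SARK/\delta)\,RK\big)$, where I bound each $1/(u^{(j-1)})^2$ by $1/(u^{(K)})^2=1/\epsilon^2$.

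\textbf{Part (ii): simplifying $m_1', m_2'$.} The estimates $A_{1/2}\le\sqrt{1/d_m}$, $A_{3/4}\le(1/d_m)^{3/4}$, $A_1\le 1/d_m$ follow verbatim as in Lemma~\ref{lem:complexity_OPVRT}: every row of $\d^{\pi^\star}_{0:t}$ is a probability vector, and $d^\mu\ge d_m$ on its support by Assumption~\ref{assu2} (which also makes all these quantities finite, since $d^{\pi^\star}_{0:t}(s',a'|s,a)>0$ forces $d^\mu(s',a')>0$). For $A_2$ I would apply Cauchy--Schwarz first over $(s',a')$ and then over $t$ with weights $\gamma^t$, obtaining for each fixed $(s,a)$
\begin{align*}
\sum_{t=0}^\infty \gamma^t\big(\d^{\pi^\star}_{0:t}\sqrt{\sigma_{V^\star}/d^\mu}\big)(s,a)
&\le \frac{1}{\sqrt{d_m}}\sum_{t=0}^\infty \gamma^t\sqrt{\big(\d^{\pi^\star}_{0:t}\sigma_{V^\star}\big)(s,a)}\\
&\le \frac{1}{\sqrt{d_m(1-\gamma)}}\sqrt{\Big(\textstyle\sum_{t=0}^\infty \gamma^t\,\d^{\pi^\star}_{0:t}\sigma_{V^\star}\Big)(s,a)}.
\end{align*}
Since $\pi^\star$ is deterministic, the discounted variance Bellman recursion $\Var^{\pi^\star}=\gamma^2\PP^{\pi^\star}\Var^{\pi^\star}+\gamma^2\sigma_{V^\star}$ (equivalently $\sigma_{V^\star}=\gamma^{-2}\Var^{\pi^\star}-\PP^{\pi^\star}\Var^{\pi^\star}$), combined with $(I-\gamma\PP^{\pi^\star})^{-1}\mathbf{1}=(1-\gamma)^{-1}\mathbf{1}$ and $\|\Var^{\pi^\star}\|_\infty\le\|V^\star\|_\infty^2\le(1-\gamma)^{-2}$, yields $\sum_{t\ge0}\gamma^t\d^{\pi^\star}_{0:t}\sigma_{V^\star}\le 2\gamma^{-2}(1-\gamma)^{-2}\mathbf{1}$ — the infinite-horizon analog of Lemma~\ref{lem:H3}, costing $(1-\gamma)^{-2}$ rather than the naive $(1-\gamma)^{-3}$ one would get from $\sigma_{V^\star}\le\|V^\star\|_\infty^2$. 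Hence $A_2\le O(\sqrt{(1-\gamma)^{-3}/d_m})$, so $32^2A_2^2=O((1-\gamma)^{-3}/d_m)$. Plugging the four bounds into the definitions: when $u^{(0)}>\sqrt{(1-\gamma)^{-1}}$ the term $96^2(1-\gamma)^{-4}A_{1/2}^2$ dominates, giving $m_1'=O((1-\gamma)^{-4}/d_m)$; when $u^{(0)}\le\sqrt{(1-\gamma)^{-1}}$ the terms $96^2(1-\gamma)^{-3}A_{1/2}^2$, $32^2A_2^2$ and $(64\sqrt6 A_{3/4})^{4/3}(1-\gamma)^{-3}$ are all $O((1-\gamma)^{-3}/d_m)$ and dominate $\tfrac{448}{3}(1-\gamma)^{-5/2}A_1$, giving $m_2'=O((1-\gamma)^{-3}/d_m)$.

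\textbf{Main obstacle.} The crux is the discounted total-variance estimate $\sum_{t\ge0}\gamma^t\d^{\pi^\star}_{0:t}\sigma_{V^\star}=O((1-\gamma)^{-2})$: this is where the discount structure must be exploited sharply (the $\gamma^t$-weighted versus $\gamma^{2t}$-weighted law of total variance) so as not to lose an extra $(1-\gamma)^{-1}$, and it is the infinite-horizon counterpart of the $O(H^2)$-versus-$O(H^3)$ improvement behind Theorem~\ref{thm:main_stationary}. Two secondary bookkeeping points: keeping the fixed inner-loop length $R=\ln(4/\epsilon(1-\gamma))$ consistent with the shrinking $u^{(j)}$ across all rounds (so the $\gamma^R$ truncation residual is absorbed uniformly), and — separately, exactly as in Section~\ref{sec:app_practical} — passing from this fictitious prototype to the implementable Algorithm~\ref{alg:OPDVR_in} via a multiplicative Chernoff bound on the visitation counts $n_{s,a},n'_{s,a}$.
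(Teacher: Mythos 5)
Your proposal is correct and follows essentially the same route as the paper: the paper's own proof simply says to repeat the argument of Lemma~\ref{lem:complexity_OPVRT} with the extra factor of $R$ from the inner \textsc{For} loop and with $A_2$ controlled by a discounted analogue of Lemma~\ref{lem:H3}, which is exactly your Parts (i) and (ii). The only difference is cosmetic: where the paper invokes the law-of-total-variance inequality $\sum_{t'\ge t}\gamma^{t'}\E^{\pi^\star}[\sigma_{V^\star}]\leq \Var_{\pi^\star}[\sum_{t'\ge t}\gamma^{t'}r_{t'}]$ in one line, you derive the same $O((1-\gamma)^{-2})$ bound explicitly from the variance Bellman recursion $\Sigma^{\pi^\star}=\gamma^2\PP^{\pi^\star}\Sigma^{\pi^\star}+\gamma^2\sigma_{V^\star}$ together with $(I-\gamma \PP^{\pi^\star})^{-1}\mathbf{1}=(1-\gamma)^{-1}\mathbf{1}$, which is a more careful rendering of the same step (the honest law of total variance produces $\gamma^{2t}$ weights, and your operator manipulation is precisely what converts that into the $\gamma^{t}$-weighted bound).
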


	\begin{proof}
	The proof of this lemma follows the same logic as Lemma~\ref{lem:complexity_OPVRT}. Note there is additional logarithmic factor $R$ since the Inner loop of Algorithm~\ref{alg:OPDVRT_in} has an extra \textbf{For} loop. Also, $A_2$ can be bounded by $O((1-\gamma)^{-3/2})$ due to the following counterpart result of Lemma~\ref{lem:H3}: 
		\begin{align*}
	\sum_{t'=t}^{\infty}\gamma^{t'}\E^{\pi^\star}_{s_{t'},a_{t'}}\left[{\sigma}_{V^{\star}}(s_{t'},a_{t'})\middle|s_t,a_t\right]\leq \Var_{\pi^\star}\left[\sum_{t'=t}^{\infty}\gamma^{t'} r_{t'}\middle|s_t,a_t\right]
		\end{align*}
	which reduces the dependence from $(1-\gamma)^{-3}$ to $(1-\gamma)^{-2}$.
	\end{proof}

	\subsection{Proof of Theorem~\ref{thm:main_infinite}}
	
	\begin{proof}
	Again the proof relies on the two stages of Algorithm~\ref{alg:OPDVR_in} where the first stage reduces the error to the level below $\sqrt{(1-\gamma)^{-1}}$ and the next stage decrease the error to given accuracy. Moreover, bounding the event $\{n_{s,a}\leq \frac{1}{2}m d^\mu(s,a)\}$ using Lemma~\ref{lem:chernoff_multiplicative} is valid since data $\mathcal{D}$ is i.i.d. and $n_{s,a}=\sum_{i=1}^m\mathbf{1}[s^{(i)}=s,a^{(i)}=a]$ follows binomial distribution with $\E[n_{s,a}]=m d^\mu(s,a)$.
	\end{proof}

\section{Proof of Theorem~\ref{thm:learn_low_bound_s}}
\label{sec:lower}

We prove the offline learning lower bound (best policy identification in the offline regime) of $\Omega(H^2/d_m\epsilon^2)$ for stationary transition case. Our proof consists of two steps: we will first show a minimax lower bound ({over all MDP instances}) for learning $\epsilon$-optimal policy is $\Omega(H^2SA/\epsilon^2)$; next we can further improve the lower bound ({over problem class $\mathcal{M}_{d_m}$}) for learning $\epsilon$-optimal policy to $\Omega(H^2/d_m\epsilon^2)$ by a reduction of the first result.

There are numerous literature that provide information theoretical lower bounds under different setting, \textit{e.g.} \cite{dann2015sample,jiang2017contextual,krishnamurthy2016pac,jin2018q,sidford2018near,domingues2020episodic,yin2020near,zanette2020exponential,duan2020minimax,wang2020statistical,jin2020pessimism}. However, to the best of our knowledge, \citet{yin2020near} is the only one that gives the lower bound for explicit parameter dependence in offline case. Concretely, their lower bound $\Omega(H^3/d_m\epsilon^2)$ (for non-stationary setting) includes $d_m$ which is an inherent measure of offline problems. In the stationary transition setting, by a modification of their construction (which again originated from \citet{jiang2017contextual}) we can prove the lower bound of $\Omega(H^2/d_m\epsilon^2)$.

\subsection{Information theoretical lower sample complexity bound over all MDP instances for identifying $\epsilon$-optimal policy.}  

\begin{theorem}\label{thm:learn_low_bound_all}
	Given $H\geq 2$, $A\geq2$, $0<\epsilon<\frac{1}{48\sqrt{8}}$ and $S\geq c_1$ where $c_1$ is a universal constant. Then for any algorithm and any $n\leq cH^2SA/\epsilon^2$, there exists a non-stationary $H$ horizon MDP with probability at least $p$, the algorithm outputs a policy $\hpi$ 
	with $v^\star- v^{\hpi}\geq\epsilon $. 
\end{theorem}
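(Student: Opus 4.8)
The plan is to adapt the hard‑instance construction of \citet{jiang2017contextual} (as used in \citet{yin2020near}) and to calibrate it so that a single embedded multi‑armed bandit sub‑problem contributes a value gap of order $\epsilon/H$ which is then amplified by a factor $\Theta(H)$ into the episode return. Concretely, I would fix a parameter $\epsilon' := c_0\epsilon/H$ (small universal $c_0$; one checks $\tfrac12+2\epsilon'<1$ from $\epsilon<1/(48\sqrt 8)$ and $H\ge 3$) and define a family $\{M_\theta\}$ of $H$‑horizon MDPs, indexed by $\theta$, as follows. From the start state the learner is dispatched, by transitions it cannot influence, uniformly among $S$ ``core'' states $x_1,\dots,x_S$ at time $2$; at each $x_i$ there are $A$ actions, one distinguished action $a^\star_i$ (encoded by $\theta$); playing $a^\star_i$ moves to an absorbing ``reward'' state with probability $\tfrac12+2\epsilon'$ and to an absorbing ``no‑reward'' state otherwise, while every other action moves to the reward state with probability exactly $\tfrac12$; the reward state emits reward $1$ at each of the remaining $H-2$ steps and the no‑reward state emits $0$. (These transitions are stationary except at the first two layers, which is what lets the same instance feed the reduction to the stationary bound in \Cref{thm:learn_low_bound_s}.)

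The first step is a value decomposition. Since the dispatch is uniform over the $S$ core states and the core states act independently, for any (history‑dependent, randomized) learner producing output policy $\hpi$ one gets exactly
\[
 v^\star - v^{\hpi} \;=\; \frac{2\epsilon'(H-2)}{S}\,W(\hpi),\qquad W(\hpi):=\#\{\,i : \hpi \text{ plays a suboptimal action at } x_i\,\}.
\]
With $\epsilon'=c_0\epsilon/H$ and $H\ge 3$, $W(\hpi)\ge S/8$ already forces $v^\star-v^{\hpi}\ge\epsilon$. So it suffices to exhibit a prior (take $\theta$ uniform, i.e.\ $a^\star_i$ independent and uniform over the $A$ actions) under which $\P[\,W(\hpi)\ge S/8\,]\ge p$ whenever $n\le cH^2SA/\epsilon^2$; a single worst‑case $\theta$ attaining the average then yields the ``there exists an MDP'' form.

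The second step is the information‑theoretic bound. Let $N_i$ be the number of episodes dispatched to core state $x_i$; then $\sum_i N_i=n$ deterministically and $\E[N_i]=n/S$. At $x_i$ the observations are Bernoulli with arm biases separated by $2\epsilon'$, and $\mathrm{KL}\big(\mathrm{Ber}(\tfrac12+\epsilon')\,\|\,\mathrm{Ber}(\tfrac12)\big)\le 8\epsilon'^2$. Comparing, for each arm $a$, the ``$a$ is distinguished'' world against a reference ``no arm distinguished'' world by Le Cam/Pinsker, and then summing over the $A$ arms with Cauchy–Schwarz on the per‑arm pull counts, gives the per‑state bound $\P[\hpi \text{ correct at } x_i \mid N_i=m]\le g(m):=\tfrac1A+2\epsilon'\sqrt{m/A}$, which is concave in $m$. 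Hence by Jensen and $\sum_i\E[N_i]=n$,
\[
 \E\big[\,S-W(\hpi)\,\big]\;=\;\sum_i \P[\hpi \text{ correct at } x_i]\;\le\;\sum_i g\!\big(\E[N_i]\big)\;\le\;\frac{S}{A}+2\epsilon'\sqrt{\frac{nS}{A}}.
\]
For $n\le cH^2SA/\epsilon^2$ with $c$ a small enough universal constant, the right‑hand side is at most $\tfrac58 S$ (this is where $\epsilon<1/(48\sqrt8)$ is consumed), so $\E[W(\hpi)]\ge \tfrac38 S$; since $0\le W(\hpi)\le S$, Markov's inequality applied to $S-W(\hpi)$ gives $\P[W(\hpi)\ge S/8]\ge p$ for a universal $p>0$. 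Combined with the value decomposition this proves the theorem.

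The main obstacle is making the per‑core‑state identification lower bound genuinely uniform over adaptive learners and over the learner‑independent random dispatch, and in particular producing a \emph{concave} upper bound $g(m)$ on the per‑state success probability that survives Jensen's inequality in the intermediate regime where $N_i$ is neither tiny nor large — this is where the Le Cam‑plus‑Cauchy‑Schwarz combination (rather than a crude ``$N_i$ below threshold'' dichotomy) is essential. The second delicate point is bookkeeping the information budget: each episode reveals the coin at its core state only once, yet that coin is amplified $\Theta(H)$‑fold in the return, which is exactly what turns the naive $\Omega(HSA/\epsilon^2)$ into $\Omega(H^2SA/\epsilon^2)$; one must verify this amplification is consistent with valid transition probabilities ($\tfrac12+2\epsilon'<1$) and with the target accuracy constraint. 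Everything else — the exact value identity, the Pinsker constants, the Markov anti‑concentration step, and the degenerate small‑$H$ cases — is routine.
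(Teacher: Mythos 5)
Your proposal is correct and reaches the same bound by the same high-level strategy as the paper --- embed $S$ independent $A$-armed best-arm-identification problems whose per-observation signal is $\Theta(\epsilon/H)$ while the return amplifies each misidentification by a factor $\Theta(H)$ --- but the two executions differ in both the gadget and the information-theoretic machinery. The paper's gadget keeps each bandit state self-looping with escape probability $1/H$ and collects reward only over the second half of a $2H$-horizon episode, so that the per-step advantage of the good arm is $\tau/H$ while the value gap per misidentified state is $\Theta(\tau)$; it then invokes the best-arm-identification lower bound of Krishnamurthy et al.\ (Lemma~\ref{lem:bandit}) as a black box, allocates samples by pigeonhole, and finishes with Markov's inequality on the number of correctly identified states. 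Your gadget instead puts the entire gap $2\epsilon'=\Theta(\epsilon/H)$ into a single decisive transition followed by an absorbing reward chain of length $H-2$, which makes the accounting (one Bernoulli observation of gap $\Theta(\epsilon/H)$ per episode per core state, hence $AH^2/\epsilon^2$ episodes per state) more transparent; and your Le~Cam--Pinsker--Cauchy--Schwarz bound $g(m)=\tfrac1A+2\epsilon'\sqrt{m/A}$ followed by Jensen over the random dispatch is self-contained and handles the intermediate sample regime smoothly, where the paper's pigeonhole dichotomy is cruder. Both are valid; the paper's route is shorter given the cited lemma, yours avoids the citation and the somewhat heuristic ``effective gap is $\tau/H$'' bookkeeping.

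Two small caveats. First, your parenthetical claim that the construction ``feeds the reduction to the stationary bound'' is doubtful: the stationary lower bound (Theorem~\ref{thm:learn_low_bound_s}) needs $P_t\equiv P$ at \emph{every} step, and your dispatch layer and one-shot core-state transitions are genuinely time-inhomogeneous, whereas the paper's self-looping bandit states are designed precisely so the same kernel acts at all times; this does not affect the present theorem but you should not rely on reusing your instance there. Second, the case $H=2$ (where $H-2=0$ reward steps) degenerates in your construction and needs a separate elementary $\Omega(SA/\epsilon^2)$ argument or a constant adjustment; the paper's $2H$-horizon-plus-rescaling device has the same issue, so this is cosmetic, but it should be stated rather than folded into ``routine.''
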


The proof relies on embedding $\Theta(S)$ independent multi-arm bandit problems into a family of hard-to-learn MDP instances so that any algorithm that wants to output a near-optimal policy needs to identify the best action in $\Omega(S)$ problems. By standard multi-arm bandit identification result Lemma~\ref{lem:bandit} we need $O(SA)$ episodes. To recover the $H^2$ factor, we only assign reward $1$ to ``good'' states in the latter half of the MDP and all other states have reward $0$.

\begin{proof}[Proof of Theorem~\ref{thm:learn_low_bound_all}]
	We construct a non-stationary MDP with $S$ states per level, $A$ actions per state and has horizon $2H$. States are categorized into three types with two special states $g$, $b$ and the remaining $S-2$ ``bandit'' states denoted by $s_{i}$, $i\in[S-2]$. Each bandit state has an unknown best action $a^\star_{i}$ that provides the highest expected reward comparing to other actions.

	The transition dynamics are defined as follows:
	\begin{itemize}
		
		\item for $h=1,...,2H-1$, 
		\begin{itemize}
			\item For bandit states $b_{i}$, there is probability $1-\frac{1}{H}$ to transition back to itself ($b_{i}$) regardless of the action chosen. For the rest of $\frac{1}{H}$ probability, optimal action $a^\star_{i}$ have probability $\frac{1}{2}+\tau$ or $\frac{1}{2}-\tau$ transition to $g$ or $b$ respectively and all other actions $a$ will have equal probability $\frac{1}{2}$ for either $g$ or $b$, where $\tau$ is a parameter will be decided later. Or equivalently,
			
			\[
			\P(\cdot|s_{i},a^\star_{i})=\begin{cases} 1-\frac{1}{H}\quad &\text{if}\;\cdot=s_{i}\\
			(\frac{1}{2}+\tau)\cdot\frac{1}{H}\quad &\text{if}\;\cdot=g\\
			(\frac{1}{2}-\tau)\cdot\frac{1}{H} \quad &\text{if}\;\cdot=b\end{cases}\quad 	\P(\cdot|s_{i},a)=\begin{cases} 1-\frac{1}{H}\quad &\text{if}\;\cdot=s_{i}\\
			\frac{1}{2}\cdot\frac{1}{H}\quad &\text{if}\;\cdot=g\\
			\frac{1}{2}\cdot\frac{1}{H} \quad &\text{if}\;\cdot=b\end{cases}
			\] 
			\item $g$ always transitions to $g$ and $b$ always transitions to $b$, \textit{i.e.} for all $a\in\mathcal{A}$,
			\[
			\P(g|g,a)=1,\quad \P(b|b,a)=1.
			\]
			We will determine parameter $\tau$ at the end of the proof.

		\end{itemize}

		\item Reward assignment: the instantaneous reward is $1$ if and only if state $s=g$ and the current time $t\in\{H,\ldots,2H-1\}$. In all other cases, the reward is $0$. \emph{i.e.},
		\[
		\begin{cases}
		r(s_t,a)=1 \;\;iff\;\;s_t=g\;\;and\;\;t\geq H,\\
		r(s_t,a)=0\;\;o.w.
		\end{cases}
		\]
		
		\item The initial distribution is decided by:
		\begin{equation}\label{eqn:initial_c_MDP}
		\P(s_{i})=\frac{1}{S},\; \forall i\in[S-2],\;	\P(g)=\frac{1}{S},\;\;\P(b)=\frac{1}{S}
		\end{equation}
		
	\end{itemize}
	
	By this construction the optimal policy must take $a^\star_{i}$ for each bandit state $s_{i}$ for at least the first half of the MDP (when $t\leq H$). In other words, this construction embeds $(S-2)$ independent best arm identification problems that are identical to the stochastic multi-arm bandit problem in Lemma~\ref{lem:bandit} into the MDP for the following two reasons: \textbf{1.} the transition is stationary (the optimal arm $a_i^\star$ for state $s_i$ is identical across all time $t$) so instead of $H(S-2)$ (for non-stationary case) MAB problems we only have $S-2$ of them; \textbf{2.} all $S-2$ problems are independent since each state $s_i$ can only transition to themselves or $g$, $b$.

	Notice for any time $h$ with $h\leq H$, any bandit state $s_{i}$, the difference of the expected reward between optimal action $a_{i}^\star$ and other actions is:
	\begin{equation}\label{eqn:diff_reward}
	\begin{aligned}
	&(\frac{1}{2}+\tau)\cdot\frac{1}{H}\cdot \E[r_{{(h+1)}:2H}|g]+(\frac{1}{2}-\tau)\cdot\frac{1}{H}\cdot \E[r_{{(h+1)}:2H}|b]+(1-\frac{1}{H})\cdot \E[r_{{(h+1)}:2H}|s_{i}]\\
	&-\frac{1}{2H}\cdot \E[r_{{(h+1)}:2H}|g]-\frac{1}{2H}\cdot \E[r_{{(h+1)}:2H}|b]-(1-\frac{1}{H})\cdot \E[r_{{(h+1)}:2H}|s_{i}]\\
	=&(\frac{1}{2}+\tau)\cdot\frac{1}{H}\cdot \E[r_{{(h+1)}:2H}|g]+(\frac{1}{2}-\tau)\cdot\frac{1}{H}\cdot \E[r_{{(h+1)}:2H}|b]\\
	&-\frac{1}{2H}\cdot \E[r_{{(h+1)}:2H}|g]-\frac{1}{2H}\cdot \E[r_{{(h+1)}:2H}|b]\\
	=&(\frac{1}{2}+\tau)\frac{1}{H}\cdot H+(\frac{1}{2}-\tau)\frac{1}{H}\cdot 0-\frac{1}{2H}\cdot H+\frac{1}{2H}\cdot 0=\tau 
	\end{aligned}
	\end{equation}
	so it seems by Lemma~\ref{lem:bandit} one suffices to use the least possible $\frac{A}{72(\tau)^2}$ samples to identify the best action $a_{i}^\star$. However, note observing $\sum_{t=1}^{2H}r_t=H$ is equivalent as observing $\sum_{t=1}^{H}r_t=1$ (since $\sum_{t=1}^{H}r_t=1$ is equivalent to $s_H=g$ and is equivalent to $\sum_{t=1}^{H}r_t=1$). Therefore, for the bandit states in the first half the samples that provide information for identifying the best arm is up to time $H$. Or in other words, identify best arm in stationary transition setting can be decided in each single stage after $t\geq H$. As a result, the difference of the expected reward between optimal action $a_{h,i}^\star$ and other action for identifying the best arm should be corrected as: 
	\begin{align*}
	&(\frac{1}{2}+\tau)\cdot\frac{1}{H}\cdot \E[r_{{(h+1)}:H}|g]+(\frac{1}{2}-\tau)\cdot\frac{1}{H}\cdot \E[r_{{(h+1)}:H}|b]+(1-\frac{1}{H})\cdot \E[r_{{(h+1)}:H}|s_{i}]\\
	&-\frac{1}{2H}\cdot \E[r_{{(h+1)}:H}|g]-\frac{1}{2H}\cdot \E[r_{{(h+1)}:H}|b]-(1-\frac{1}{H})\cdot \E[r_{{(h+1)}:H}|s_{i}]\\
	=&(\frac{1}{2}+\tau)\frac{1}{H}\cdot 1+(\frac{1}{2}-\tau)\frac{1}{H}\cdot 0-\frac{1}{2H}\cdot 1+\frac{1}{2H}\cdot 0= \frac{\tau}{H}
	\end{align*}
	or one can compute any bandit state in latter half ($h\geq H$):
	\begin{align*}
	&(\frac{1}{2}+\tau)\cdot\frac{1}{H}\cdot \E[r_{h:h+1}|g]+(\frac{1}{2}-\tau)\cdot\frac{1}{H}\cdot \E[r_{h:h+1}|b]+(1-\frac{1}{H})\cdot \E[r_{h:h+1}|s_i]\\
	&-\frac{1}{2H}\cdot \E[r_{h:h+1}|g]-\frac{1}{2H}\cdot \E[r_{h:h+1}|g]-(1-\frac{1}{H})\cdot \E[r_{{h:h+1}}|s_{i}]\\
	=&(\frac{1}{2}+\tau)\frac{1}{H}\cdot 1+(\frac{1}{2}-\tau)\frac{1}{H}\cdot 0-\frac{1}{2H}\cdot 1+\frac{1}{2H}\cdot 0= \frac{\tau}{H},
	\end{align*}
	which yields the same result. Now by Lemma~\ref{lem:bandit}, unless $\frac{A}{72(\tau/H)^2}$ samples are collected from that bandit state, the learning algorithm fails to identify the optimal action $a^\star_{i}$ with probability at least $1/3$.
	
	After running any algorithm, let $C$ be the set of bandit states for which the algorithm identifies
	the correct action. Let $D$ be the set of bandit states for which the algorithm collects fewer than $\frac{A}{72(\tau/H)^2}$ samples. Then by Lemma~\ref{lem:bandit} we have
	\begin{align*}
	\E[|C|]&=\E\left[\sum_{i}\mathds{1}[a_{i}=a_{i}^\star]\right]\leq (S-2)-|D|+\E\left[\sum_{i\in D}\mathds{1}[a_{i}=a_{i}^\star]\right]\\
	&\leq ((S-2)-|D|)+\frac{2}{3}|D|=(S-2)-\frac{1}{3}|D|.
	\end{align*}
	
	If we have $n\leq \frac{(S-2)}{2}\times\frac{A}{72(\tau/H)^2}$, by pigeonhole principle the algorithm can collect $\frac{A}{72(\tau/H)^2}$ samples for at most half of the bandit problems, \textit{i.e.} $|D|\geq (S-2)/2$. Therefore we have
	\[
	\E[|C|]\leq (S-2)-\frac{1}{3}|D|\leq \frac{5}{6}(S-2).
	\]
	Then by Markov inequality 
	\[
	\P\left[|C|\geq \frac{11}{12}(S-2)\right]\leq \frac{5/6}{11/12}=\frac{10}{11}
	\]
	so the algorithm failed to identify the optimal action on 1/12 fraction of the bandit problems with probability at least $1/11$. Note for each failure in identification, the reward is differ by at least $\tau $ in terms of the value for $\hat{v}^\pi$ (see \eqref{eqn:diff_reward}), therefore under the event $\{|C'|\geq \frac{1}{12}(S-2)\}$, the suboptimality of the policy produced by the algorithm is
	\begin{equation}\label{eqn:sub_gap}
	\begin{aligned}
	\epsilon:&=v^\star-v^{\widehat{\pi}}=\P[\text{visit} \;C']\times \tau +\P[\text{visit} \;C]\times 0\geq \P[\bigcup_{i\in C'}\text{visit}(i)]\times\tau \\
	&=\sum_{i\in C'} \P[\text{visit}(i)]\times\tau =\sum_{i\in C'} \frac{1}{S}\tau=\frac{S-2}{S}\tau:=c_1\tau \\
	\end{aligned}
	\end{equation}
	where the third equal sign uses all best arm identification problems are independent. Now we set $\tau=\min(\sqrt{1/8},\epsilon/c_1)$ and under $n\leq cH^2SA/\epsilon^2$, we have
	\[
	n\leq cH^2SA/\epsilon^2\leq c'H^2SA/\tau^2=c'{ 72} S\cdot \frac{A}{72(\tau/H)^2}:=c{''}S\cdot \frac{A}{72(\tau/H)^2}\leq \frac{S-2}{2}\cdot \frac{A}{72(\tau/H)^2},
	\]
	the last inequality holds as long as $S\geq 2/(1-2c'')$. Therefore in this situation, with probability at least $1/11$, $v^\star-v^{\widehat{\pi}}\geq \epsilon$. Finally, we can use scaling to reduce the horizon from $2H$ to $H$.
	
\end{proof}

\begin{remark}
	The suboptimality gap calculation \eqref{eqn:sub_gap} does not use the construction that each $s_i$ has $1-\frac{1}{H}$ probability going back to itself so if we only need Theorem~\ref{thm:learn_low_bound_all} then one can assign all the probability to just $g$ or $b$, which reduces to the construction of Theorem~2 in \citet{dann2015sample}. However, our construction is essential for proving the following offline lower bound.
\end{remark}

\subsection{Information theoretical lower sample complexity bound over problems in $\mathcal{M}_{d_m}$ for identifying $\epsilon$-optimal policy.}  

For all $0<d_m\leq\frac{1}{SA}$, let the class of problems be

$$\mathcal{M}_{d_m} :=\big\{(\mu,M) \; \big| \;\min_{t,s_t,a_t} d_t^\mu(s_t,a_t) \newline \geq d_m\big\}.$$

\begin{theorem}[Restate Theorem~\ref{thm:learn_low_bound_s}]\label{thm:learn_low_bound}
	Under the condition of Theorem~\ref{thm:learn_low_bound_all}. In addition assume $0<d_m\leq\frac{1}{SA}$. There exists another universal constant $c$ such that when $n\leq cH^2/d_m\epsilon^2$, we always have 
	\[
	\inf_{{v}^{\pi_{alg}}}\sup_{(\mu,M)\in\mathcal{M}_{d_m}}\P_{\mu,M}\left(v^*-v^{\pi_{alg}}\geq \epsilon\right)\geq p.
	\]
\end{theorem}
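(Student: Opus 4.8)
The plan is to derive Theorem~\ref{thm:learn_low_bound} from Theorem~\ref{thm:learn_low_bound_all} by a black-box reduction: Theorem~\ref{thm:learn_low_bound_all} already supplies a family of stationary-transition hard instances forcing $n=\Omega(H^2SA/\epsilon^2)$, and I will ``dilute'' that family so it lives inside $\mathcal{M}_{d_m}$ while inflating the episode requirement by the factor $\tfrac{1}{SAd_m}\ge 1$, turning $H^2SA/\epsilon^2$ into $H^2/(d_m\epsilon^2)$. Concretely, starting from the MDP of Theorem~\ref{thm:learn_low_bound_all} (the $\Theta(S)$ bandit states $s_i$ with hidden best arms $a_i^\star$, the absorbing ``good/bad'' states $g,b$, horizon $2H$, stationary transitions, reward $1$ only at $g$ for $t\ge H$), I would add one fresh absorbing ``junk'' state $s_\circ$ with $r(s_\circ,\cdot)=0$ and $P(s_\circ\mid s_\circ,a)=1$ for all $a$, and replace the uniform initial law \eqref{eqn:initial_c_MDP} by one assigning probability $\Theta(Ad_m)$ to each of the $\Theta(S)$ states in $\{s_i\}\cup\{g,b\}$ and mass $1-\Theta(SAd_m)$ to $s_\circ$; the hidden constant is chosen so the junk mass lies in $[\tfrac12,1]$, which is possible because $SAd_m\le 1$. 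The behavior policy $\mu$ stays the uniform policy and everything else — transitions, rewards, the embedded bandit sub-structure at each $s_i$ — is copied verbatim. Call the resulting MDP $M'$ (one per assignment of $\{a_i^\star\}$).

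Next I would verify $(\mu,M')\in\mathcal{M}_{d_m}$, i.e.\ $d^\mu_t(s,a)\ge d_m$ for every $(t,s,a)$ of positive occupancy, $t\in[2H]$. For a bandit state, $s_i$ is only left and never re-entered, so $\mathbb{P}(s_t=s_i)\ge\mathbb{P}(s_1=s_i)(1-\tfrac1H)^{2H}\ge Ad_m\cdot\tfrac{e^{-2}}{2}$ once $H$ exceeds a constant, giving $d^\mu_t(s_i,a)=\tfrac1A\mathbb{P}(s_t=s_i)\ge d_m$ after a harmless constant inflation of the initial mass. For $g$ and $b$ the occupancy is nondecreasing in $t$, so $d^\mu_t(g,a)\ge\tfrac1A\mathbb{P}(s_1=g)\ge d_m$ and likewise for $b$. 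For $s_\circ$, $d^\mu_t(s_\circ,a)=\tfrac1A(1-\Theta(SAd_m))\ge\tfrac1{2A}\ge d_m$ since $d_m\le\tfrac1{SA}\le\tfrac1{2A}$ for $S\ge2$. Hence the whole diluted family sits inside $\mathcal{M}_{d_m}$.

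The counting argument of Theorem~\ref{thm:learn_low_bound_all} then goes through essentially verbatim, the only change being that an episode delivers an informative observation to a fixed bandit state $s_i$ with probability $\Theta(Ad_m)$ instead of $\tfrac1S$, while the per-observation identification gap at $s_i$ is unchanged (the conditional dynamics there are identical), so Lemma~\ref{lem:bandit} still demands $\Theta(H^2A/\epsilon^2)$ such observations per bandit. The total number of bandit-directed observations over $n$ episodes is $n\cdot\Theta(SAd_m)$, so by pigeonhole at most a constant fraction of the $\Theta(S)$ bandit problems can meet their quota unless $n\ge\tfrac{S\cdot\Theta(H^2A/\epsilon^2)}{\Theta(SAd_m)}=\Omega\!\big(H^2/(d_m\epsilon^2)\big)$; on the complementary constant fraction the returned policy is suboptimal at those $s_i$, and — after re-choosing the bias $\tau$ and the reward scale exactly as in the proof of Theorem~\ref{thm:learn_low_bound_all}, so that the value loss summed over the misidentified bandits (now weighted by the diluted visitation probabilities) is still at least $\epsilon$ — the Markov-inequality step that produced \eqref{eqn:sub_gap} yields $v^\star-v^{\pi_{\mathrm{alg}}}\ge\epsilon$ with probability $\ge p$. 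Taking $\sup$ over the family and rescaling the horizon from $2H$ to $H$ gives Theorem~\ref{thm:learn_low_bound}.

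The main obstacle is the quantitative bookkeeping hidden in the last two steps carried out at once: one must keep every marginal $\ge d_m$ at all $2H$ time steps — which is precisely why the $(1-1/H)$ self-loop at each $s_i$ and the horizon-$2H$ padding in Theorem~\ref{thm:learn_low_bound_all} are needed, so the bandit-state occupancy does not decay — and simultaneously balance the three coupled parameters (the bias $\tau$, the dilution factor $SAd_m$, and the reward normalization) so that the forced suboptimality stays $\ge\epsilon$, the per-bandit sample requirement stays $\Theta(H^2A/\epsilon^2)$, and the overall bound comes out as $\Omega(H^2/d_m\epsilon^2)$. This is the step that distinguishes the stationary construction from — and makes it more delicate than — the non-stationary lower bound of \citet{yin2020near}.
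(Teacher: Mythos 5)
Your reduction has a genuine flaw: you place the dilution in the \emph{initial state distribution}, which is part of the MDP and therefore affects the target policy at evaluation time exactly as much as it affects the behavior policy at data-collection time. In your family, every policy (including $\pi^\star$) lands in the absorbing zero-reward junk state $s_\circ$ with probability $1-\Theta(SAd_m)$, so the suboptimality $v^\star-v^{\pi_{alg}}$ can only accrue on the $\Theta(SAd_m)$ initial mass that reaches the informative states. Summing the per-bandit value loss over the misidentified bandits now gives $\Theta(SAd_m)\cdot\tau\cdot(\cdots)$ rather than $\Theta(1)\cdot\tau\cdot(\cdots)$, so to force an $\epsilon$ gap you must inflate $\tau$ by a factor $\Theta\bigl(\tfrac{1}{SAd_m}\bigr)$. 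This both breaks the construction for small $d_m$ (you need $\tau\le\sqrt{1/8}$ and $\tfrac12\pm\tau\in[0,1]$, which fails once $d_m\lesssim \epsilon/(SA)$) and, where it survives, shrinks the per-bandit sample requirement $\tfrac{A}{72(\tau/H)^2}$ by $(SAd_m)^2$, so the pigeonhole yields a lower bound that \emph{decreases} as $d_m\to 0$ rather than growing like $1/d_m$. A clean sanity check: for $d_m<\epsilon/(SAH)$ the total optimal value in your family is $v^\star\le \Theta(SAd_m)\cdot H<\epsilon$, so the data-free algorithm that outputs an arbitrary policy is already $\epsilon$-optimal and no lower bound whatsoever can be certified on your instances.

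The missing idea is that the dilution must be \emph{one-sided}: it has to thin out the behavior policy's coverage of the informative region without thinning out the coverage achieved by the optimal (and any competitive) policy. The paper does this by prepending a gateway state $s_0$ with two deterministic actions, $a_1\to s_{\text{yes}}$ (which feeds into the hard sub-MDP of Theorem~\ref{thm:learn_low_bound_all}) and $a_2\to s_{\text{no}}$ (absorbing, reward $0$), and setting $\mu(a_1|s_0)=\tfrac12 d_mSA$. The MDP's initial distribution and internal dynamics are untouched, so at evaluation time a good policy takes $a_1$ with probability one and the bandit states retain their $\Theta(1/S)$ visitation, keeping $\tau=\Theta(\epsilon)$ and the per-bandit requirement at $\Theta(H^2A/\epsilon^2)$; only the \emph{data} are diluted by $\tfrac12 d_mSA$, which is exactly what multiplies the episode count up to $\Omega(H^2/(d_m\epsilon^2))$. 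Your occupancy checks ($d^\mu_t\ge d_m$ via the $(1-1/H)$ self-loops, the absorbing states, and $d_m\le 1/(SA)$) are fine in spirit and carry over to the gateway construction, but the reduction itself needs to be restructured around a behavior-policy-level bottleneck rather than an initial-distribution-level one.
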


	\begin{figure}[H]
	\centering     
	\includegraphics[width=0.6\linewidth]{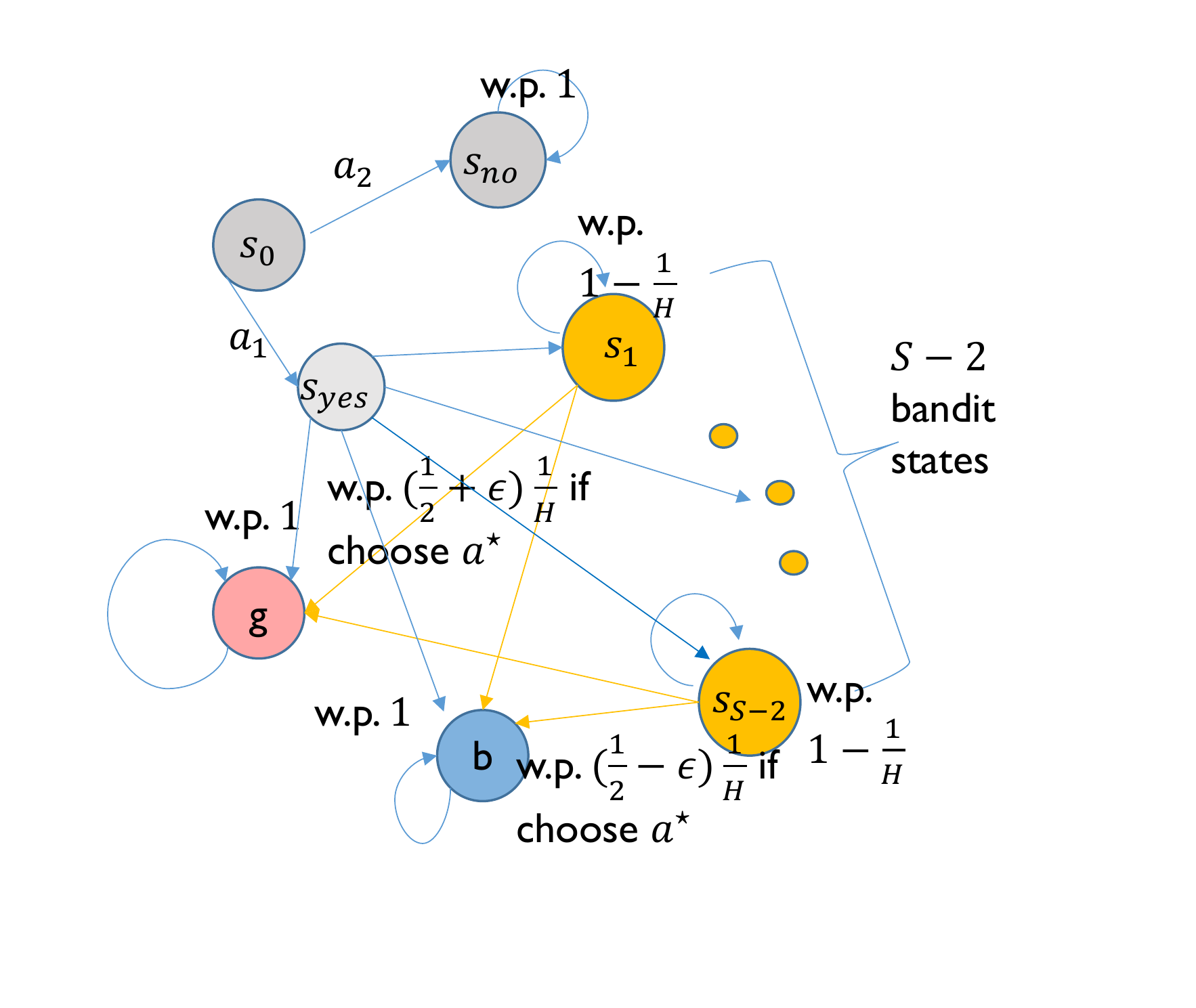}
	\caption{An illustration of transition diagram for Theorem~\ref{thm:learn_low_bound}}
	\label{fig:ill}
\end{figure}

\begin{proof} 
	
	The proof is mostly identical to \cite{yin2020near} except we concatenate all state together to ensure transition is stationary. The hard instances $(\mu,M)$ we used rely on Theorem~\ref{thm:learn_low_bound_all} as follow:
	
	\begin{itemize}
		
		\item for the MDP $M=(\mathcal{S}+3,\mathcal{A}, r,P,d_1,2H)$, 
		\begin{itemize}
			
			\item There are three extra states $s_0, s_\text{yes},s_\text{no}$ in addition to Theorem~\ref{thm:learn_low_bound_all}. Initial distribution $d_1$ will always enter state $s_0$, and there are two actions with action $a_1$ always transitions to $s_\text{yes}$ and action $a_2$ always transitions to $s_\text{no}$. The reward at the first time $r_1(s,a)=0$ for any $s,a$. 
			
			\item For state $s_\text{no}$, it will always transition back to itself regardless of the action and receive reward $0$, \emph{i.e.} 
			\[
			P_t(s_\text{no}|s_\text{no},a)=1,\;r_t(s_\text{no},a)=0,\;\forall t,\;\forall a.
			\] 
			\item For state $s_\text{yes}$, it will transition to the MDP construction in Theorem~\ref{thm:learn_low_bound_all} with horizon $2H$ and $s_\text{yes}$ always receives reward zero (see Figure~\ref{fig:ill}).
			\item For $t=1$, choose $\mu(a_1|s_0)=\frac{1}{2}d_mSA$ and $\mu(a_2|s_0)=1-\frac{1}{2}d_m SA$. For all other states, choose $\mu$ to be uniform policy, \emph{i.e.} $\mu(a_t|s_t)=1/A$.

		\end{itemize}
		
	\end{itemize}
	
	Based on this construction, the optimal policy has the form $\pi^\star=(a_1,\ldots)$ and therefore the MDP branch that enters $s_\text{no}$ is uninformative. Hence, data collected by that part is uninformed about the optimal policy and there is only $\frac{1}{2}d_m SA$ proportion of data from $s_\text{yes}$ are useful. Moreover, by Theorem~\ref{thm:learn_low_bound_all} the rest of Markov chain succeeded from $s_\text{yes}$ requires $\Omega(H^2SA/\epsilon^2)$ episodes (regardless of the exploration strategy/logging policy), so the actual data complexity needed for the whole construction $(\mu,M)$ is $\frac{\Omega(H^2SA/\epsilon^2)}{d_mSA}=\Omega(H^2/d_m\epsilon^2)$. 
	
	It remains to check this construction $\mu,M$ stays within $\mathcal{M}_{d_m}$. The checking is mostly the same as Theorem~G.2. in \citet{yin2020near} so we don't state here. We only highlight the checking for bandit state at different time steps. Indeed, for all $i\in[S-2]$,
		\begin{align*}
		d^\mu_{t+1}(s_{i})&\geq \P^\mu(\underbrace{s_{i},s_{i},\ldots,s_{i}}_{t \;\;times},s_\text{yes},s_0)= \left(\prod_{u=1}^{t}\P^\mu(s_{i}|s_i)\right)\P^\mu(s_{i}|s_\text{yes})\P^\mu(s_\text{yes}|s_0)\\
		&=(1-\frac{1}{H})^t \left(\frac{1}{S}\right)\left(\frac{1}{2}d_m SA\right)\geq c d_m A,\\
		\end{align*}
		now by $\mu$ is uniform we have $d^\mu_{t+1}(s_{t+1,i},a)\geq \Omega(d_mA)\cdot\frac{1}{A}=\Omega(d_m)$ for all $a$. So the condition is satisfied in the stationary transition case. This concludes the proof.

\end{proof}

\section{More details for Discussion Section~\ref{sec:discussion}}\label{sec:app_discussion}

\subsection{Proof of Lemma~\ref{thm:est_d_m}}\label{sec:data_adaptive}

\begin{proof}
  Note data $\mathcal{D}$ comes from the logging policy $\mu$, therefore we can use extra $n(\geq 1/d_m\cdot \log(HSA/\delta))$ episodes to construct direct on-policy estimator as: 
  \[
    \widehat{d}^\mu_t = n_{s_t,a_t}/n.
  \]
  Since $n_{s_t,a_t}$ is binomial, by the multiplicative Chernoff bound (Lemma~\ref{lem:chernoff_multiplicative}), we have 
  \[
    P\left[n_{s_t,a_t}<\frac{1}{2}d^\mu_t(s_t,a_t)n\right]\leq e^{-\frac{d^\mu_t(s_t, a_t) \cdot n}{8}}, \qquad P\left[n_{s_t,a_t}\geq \frac{3}{2}d^\mu_t(s_t,a_t)n\right]\leq e^{-\frac{d^\mu_t(s_t, a_t) \cdot n}{12}}.
  \]
  this implies that for any $(s_t, a_t)$ such that $d^\mu_t(s_t, a_t)>0$, when $n\geq 1/d_m\cdot \log(1/\delta) \ge 1/d^\mu_t(s_t, a_t) \cdot \log(1/\delta)$, we have with probability $1-\delta$ that
  \[
    \frac{1}{2}d^\mu_t(s_t,a_t)\leq \widehat{d}^\mu_t(s_t,a_t)\leq \frac{3}{2}d^\mu_t(s_t,a_t).
  \]
  Applying a union bound, we have the above is true for all $(t,s_t, a_t)$ when $n\geq 1/d_m\cdot \log(HSA/\delta)$. Finally, take $\widehat{d}_m\defeq \min_{(t,s_t, a_t):\widehat{d}^\mu_t(s_t, a_t)>0} \widehat{d}^\mu_t(s_t, a_t)$. On the above concentration event, we get
  \begin{align*}
    \frac{1}{2}d_m \le \widehat{d}_m \le \frac{3}{2}d_m,
  \end{align*}
 by taking $\min$ on all sides.
\end{proof}

\subsection{On relationship between $1/d_m$ and $\beta_\mu,C$}
\label{sec:comparison_dm}


In the function approximation regime, roughly speaking, the \emph{concentration coefficient} assumption requires \cite{munos2003error,le2019batch,chen2019information,xie2020q}
\[
\beta_\mu=\sup_{\pi\in\mathcal{F}}\norm{\frac{d^\pi(s,a)}{d^\mu(s,a)}}_\infty<\infty,
\]
where $\mathcal{F}$ is the policy class induced by approximation functions. In the tabular case, since we want to maximize over all policies, $\mathcal{F}=\{all \;\;policies\}$, therefore above should be interpreted as:
\[
\sup_{\pi \;\text{arbitrary}}\norm{\frac{d^\pi_t(s,a)}{d^\mu_t(s,a)}}_\infty <\infty\Rightarrow {||d^\mu_t(s,a)||_\infty}>0,
\] 

since $\mathcal{F}$ is the largest possible class, if the transition kernel $P(s'|s,a)$ is able to reach some $s'\in\mathcal{S}$ given $s,a$, then that implies $d^\pi_t(s')>0$. Next one can always pick $\pi_{t+1}(s')=a'$ such that $d^\pi_{t+1}(s',a')=d^\pi_t(s')>0$,  for all $a'\in\mathcal{A}$. This means $\mu$ has the chance to explore all states and actions whenever the transition $P$ can transition to all states (from some previous $s,a$).

On the other hand, our Assumption~\ref{assu2} only require $\mu$ to trace at least one optimal policy $\pi^\star$ and it is fine for $\mu$ to never visit certain state-action $s,a$ that is not related to $\mu$.

As a result, since $\beta_\mu$ or $C$ are explicitly incorporated, the upper bounds in \cite{le2019batch,chen2019information,xie2020q} may degenerate to $+\infty$ under our setting (Assumption~\ref{assu2}), regardless of the dependence on horizon.

Nevertheless, we point out that function approximation$+$concentrability assumption is a powerful framework for handling realizability/agnostic case and related concepts (\emph{e.g.} inherent Bellman error) and easier to scale the setting to general continuous case.

\subsection{Improved dependence on $(1-\gamma)^{-1}$ than prior work}
The sample complexity bound $\widetilde{O}((1-\gamma)^{-3}/d_m\epsilon^2)$ in Theorem~\ref{thm:main_infinite} can be compared with the line of recent works on offline RL with function approximation. For example,
\citet{le2019batch}~consider doing batch learning based on fitted Q-iteration with constraints and in their Theorem~4.3 the sample complexity should be translated as $\tilde{O}((1-\gamma)^{-6}\beta_\mu/\epsilon^2)$, where $\beta_\mu$ is the ``concentration factor'' similar to $1/d_m$, but with stronger assumption that $\mu$ explores all $s,a$ that can be visited by the function approximation class. \cite{chen2019information,xie2020q} also consider using FQI in different ways and prove $\epsilon V_{\max}$-optimal policy with sample complexity $\tilde{O}((1-\gamma)^{-4}C/\epsilon^2)$ and $\tilde{O}((1-\gamma)^{-2}C/\epsilon^2)$, where $C$ is again the ``concentration-type coefficient''. Their result should be translated as $\tilde{O}((1-\gamma)^{-6}C/\epsilon^2)$ and $\tilde{O}((1-\gamma)^{-4}C/\epsilon^2)$ for $\epsilon$-optimal policy.

\subsection{The doubling procedure overcomes the proofing defect in \cite{sidford2018near}}
\label{sec:sidford_proof}

\cite{sidford2018near}  first uses \emph{variance reduction} technique to provides provable guarantee for the $\epsilon$-optimal policy. However, their complexity may actually become suboptimal under their initialization. In fact, in their Proof of Proposition 5.4.1. 
(page $23$ of \href{https://arxiv.org/pdf/1806.01492.pdf}{https://arxiv.org/pdf/1806.01492.pdf}), they claim the inequality
\[
\left(\frac{(1-\gamma)^3u^2}{C^{\prime\prime}(1-\gamma)^{8/3}}\right)^{3/4}=C^{\prime\prime}(1-\gamma)^{1/4}u^{1/2}\cdot\leq\frac{u}{16},
\]
which is equivalent to $u\leq O(\sqrt{1/(1-\gamma)})$ (or $u\leq\sqrt{H}$) and based on their initialization $\bm{v}^{(0)}=\mathbf{0}$ they cannot guarantee $\norm{\bm{v}^\star}=\norm{\bm{v}^{(0)}-\bm{v}^\star}_\infty:=u\leq (1-\gamma)^{-1/2}$.
We fix this issue using the doubled Variance Reduction so that minimaxity is preserved for the offline learning with arbitrary initialization.


\section{Technical lemmas }

\begin{lemma}[Best arm identification lower bound \cite{krishnamurthy2016pac}]\label{lem:bandit}
	For any $A \geq 2$ and $\tau \leq \sqrt{1/8}$ and any best arm identification algorithm that produces an estimate $\hat{a}$, there exists a multi-arm bandit problem for which the best arm $a^\star$ is $\tau$  better than all others, but $\P[\hat{a}\neq a^\star]\geq 1/3$ unless the number of samples $T$ is at least $\frac{A}{72\tau^2}$ .
\end{lemma}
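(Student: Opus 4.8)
The plan is to prove this lower bound by the standard change-of-measure (Le Cam / two-point) method: embed the hardness into a family of nearly indistinguishable Bernoulli bandit instances and lower-bound the error \emph{averaged} over the family.

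\textbf{Construction.} Let the reference instance $\nu_0$ assign $\mathrm{Ber}(1/2)$ to every arm, and for each $i\in[A]$ let $\nu_i$ be identical to $\nu_0$ except that arm $i$ pays $\mathrm{Ber}(1/2+\tau)$. In $\nu_i$ the unique best arm is $i$, and it beats every other arm by exactly $\tau$, so each instance in the family satisfies the hypothesis of the lemma. Write $\P_{\nu}$ for the law of the full length-$T$ interaction transcript (actions, rewards, and the algorithm's internal randomness) under instance $\nu$, let $\hat a$ be the returned arm, and let $N_i$ be the number of pulls of arm $i$, so that $\sum_{i}\E_{\nu_0}[N_i]=T$. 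Since the maximum dominates the average, it suffices to show $\frac1A\sum_i \P_{\nu_i}[\hat a\ne i]>\tfrac13$ whenever $T<A/(72\tau^2)$, equivalently that $\frac1A\sum_i \P_{\nu_i}[\hat a=i]<\tfrac23$.

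\textbf{Change of measure and the TV bound.} First I move each term to the reference measure,
\[
\P_{\nu_i}[\hat a=i]\le \P_{\nu_0}[\hat a=i]+\TV(\P_{\nu_0},\P_{\nu_i}),
\]
and note that the events $\{\hat a=i\}$ are disjoint, so $\sum_i\P_{\nu_0}[\hat a=i]\le1$, contributing $1/A$ after averaging. For the divergence terms, Pinsker's inequality together with the KL divergence-decomposition for adaptively sampled bandits gives
\[
\TV(\P_{\nu_0},\P_{\nu_i})\le\sqrt{\tfrac12\,\mathrm{KL}(\P_{\nu_0}\Vert\P_{\nu_i})}=\sqrt{\tfrac12\,\E_{\nu_0}[N_i]\cdot\mathrm{KL}\!\big(\mathrm{Ber}(\tfrac12)\Vert\mathrm{Ber}(\tfrac12+\tau)\big)},
\]
since $\nu_0$ and $\nu_i$ differ only at arm $i$. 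A direct computation yields $\mathrm{KL}(\mathrm{Ber}(\tfrac12)\Vert\mathrm{Ber}(\tfrac12+\tau))=-\tfrac12\log(1-4\tau^2)\le 4\tau^2$, where the last step uses $4\tau^2\le\tfrac12$, i.e. $\tau\le\sqrt{1/8}$. Summing the per-arm bound and applying the Cauchy–Schwarz inequality to $\sum_i\sqrt{\E_{\nu_0}[N_i]}\le\sqrt{A\,T}$,
\[
\frac1A\sum_i\TV(\P_{\nu_0},\P_{\nu_i})\le\frac{\tau\sqrt2}{A}\sum_i\sqrt{\E_{\nu_0}[N_i]}\le\tau\sqrt{\frac{2T}{A}}.
\]

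\textbf{Conclusion.} Combining the two contributions gives $\frac1A\sum_i\P_{\nu_i}[\hat a=i]\le\frac1A+\tau\sqrt{2T/A}$. For $A\ge2$ we have $1/A\le1/2$, and for $T<A/(72\tau^2)$ we get $\tau\sqrt{2T/A}<\sqrt{2/72}=1/6$; hence the sum is strictly below $2/3$, so the averaged error exceeds $1/3$, and some instance $\nu_i$ forces $\P[\hat a\ne a^\star]>1/3$. Note the threshold constant $72$ emerges precisely from equating $\tau\sqrt{2T/A}$ with $1/6$. The step I expect to be the main obstacle is the KL divergence-decomposition identity $\mathrm{KL}(\P_{\nu_0}\Vert\P_{\nu_i})=\E_{\nu_0}[N_i]\,\mathrm{KL}(\mathrm{Ber}(\tfrac12)\Vert\mathrm{Ber}(\tfrac12+\tau))$: because $N_i$ is a random, adaptively chosen quantity, this requires unrolling the chain rule for KL along the natural filtration of the transcript (a Wald-type identity), checking that arms other than $i$ contribute nothing, and handling the algorithm's internal randomization correctly. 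The rest is routine constant bookkeeping.
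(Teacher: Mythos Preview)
The paper does not prove this lemma at all: it is listed in the ``Technical lemmas'' appendix with a citation to \cite{krishnamurthy2016pac} and is used as a black box in the construction of the learning lower bounds (Theorems~\ref{thm:learn_low_bound_all} and~\ref{thm:learn_low_bound}). So there is no ``paper's own proof'' to compare against.

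Your argument is a correct, self-contained proof of the stated bound via the standard averaging-over-instances/Le~Cam route: a family of Bernoulli bandits with one $\tau$-shifted arm, the inequality $\P_{\nu_i}[\hat a=i]\le \P_{\nu_0}[\hat a=i]+\TV$, Pinsker, the divergence decomposition $\mathrm{KL}(\P_{\nu_0}\Vert\P_{\nu_i})=\E_{\nu_0}[N_i]\cdot\mathrm{KL}(\mathrm{Ber}(\tfrac12)\Vert\mathrm{Ber}(\tfrac12+\tau))$, the elementary bound $-\tfrac12\log(1-4\tau^2)\le 4\tau^2$ for $\tau\le\sqrt{1/8}$, and Cauchy--Schwarz on $\sum_i\sqrt{\E_{\nu_0}[N_i]}$. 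The constants land exactly on $72$ via $\tau\sqrt{2T/A}<1/6$ and $1/A\le 1/2$. The step you flag as the main obstacle---the chain-rule identity for $\mathrm{KL}$ under adaptive sampling---is indeed the only nontrivial ingredient, but it is a well-known lemma in the bandit literature (often attributed to Lai--Robbins or stated as the ``divergence decomposition''), and your sketch of how to establish it (unroll the chain rule along the transcript filtration, note that non-$i$ arms contribute zero since their reward laws coincide, and that the algorithm's internal randomness cancels as it is shared) is accurate.
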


\begin{lemma}[Multiplicative Chernoff bound \cite{chernoff1952measure}]\label{lem:chernoff_multiplicative}
	Let $X$ follows Binomial distribution, {i.e.} $X\sim Binom(n,p)$. For any $1\geq\delta>0$, we have that 
	$$
	\P[X < (1-\delta)pn] <  e^{-\frac{\delta^2 pn}{2}}.\qquad \text{and}\qquad \P[X\geq (1+\delta)pn] <  e^{-\frac{\delta^2 pn}{3}}
	$$	
\end{lemma}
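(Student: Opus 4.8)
The plan is to prove both tail bounds via the standard Chernoff (exponential-moment) method, writing $X = \sum_{i=1}^n X_i$ as a sum of i.i.d.\ Bernoulli$(p)$ random variables and setting $\mu := \E[X] = np$. For the upper tail I would fix $t > 0$ and apply Markov's inequality to $e^{tX}$, giving $\P[X \geq (1+\delta)\mu] \leq \E[e^{tX}]/e^{t(1+\delta)\mu}$. By independence the moment generating function factorizes as $\E[e^{tX}] = (1 + p(e^t-1))^n$, and the elementary bound $1+x \leq e^x$ yields $\E[e^{tX}] \leq e^{\mu(e^t-1)}$. Optimizing the exponent $\mu(e^t - 1 - t(1+\delta))$ over $t$ by setting $e^t = 1+\delta$ (so $t = \ln(1+\delta) > 0$) produces the classical bound $\P[X \geq (1+\delta)\mu] \leq \left( e^\delta/(1+\delta)^{1+\delta} \right)^\mu$.

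For the lower tail I would run the symmetric argument with $e^{-tX}$, $t>0$: Markov gives $\P[X \leq (1-\delta)\mu] \leq \E[e^{-tX}]/e^{-t(1-\delta)\mu}$, the factorization together with $1+x \leq e^x$ gives $\E[e^{-tX}] \leq e^{\mu(e^{-t}-1)}$, and the optimal choice $e^{-t} = 1-\delta$ (valid since $\delta \in (0,1)$) yields $\P[X \leq (1-\delta)\mu] \leq \left( e^{-\delta}/(1-\delta)^{1-\delta} \right)^\mu$. These two ``raw'' Chernoff bounds are exact; the remaining work is purely to convert them to the stated Gaussian-type forms.

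It then remains to verify two elementary one-variable inequalities on $0 < \delta \leq 1$. For the upper tail I would establish $(1+\delta)\ln(1+\delta) - \delta \geq \delta^2/3$, which is equivalent to $e^\delta/(1+\delta)^{1+\delta} \leq e^{-\delta^2/3}$. Defining $f(\delta) = (1+\delta)\ln(1+\delta) - \delta - \delta^2/3$, one computes $f(0) = f'(0) = 0$ and $f''(\delta) = 1/(1+\delta) - 2/3$, so $f''$ is positive on $[0,1/2]$ and negative on $[1/2,1]$; since $f'(0)=0$ and $f'(1) = \ln 2 - 2/3 > 0$, it follows that $f' \geq 0$ on all of $[0,1]$, hence $f$ is nondecreasing and $f \geq 0$. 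For the lower tail I would establish $(1-\delta)\ln(1-\delta) + \delta - \delta^2/2 \geq 0$; writing $g(\delta)$ for this expression, $g(0) = g'(0) = 0$ and $g''(\delta) = \delta/(1-\delta) \geq 0$ on $[0,1)$, so $g'$ is nondecreasing from $0$ and $g \geq 0$ follows, giving $e^{-\delta}/(1-\delta)^{1-\delta} \leq e^{-\delta^2/2}$.

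I expect the main (though modest) obstacle to be the upper-tail inequality, whose auxiliary function $f$ has a sign-changing second derivative, so I cannot simply invoke convexity of $f$; instead I must argue from the boundary values $f'(0)=0$ and $f'(1)>0$, combined with the fact that $f'$ rises then falls, to conclude $f' \geq 0$ throughout the interval. The lower-tail inequality is cleaner because $g'' \geq 0$ gives monotonicity of $g'$ directly. Substituting $\mu = pn$ into the two final exponents then yields the bounds $\P[X \geq (1+\delta)pn] \leq e^{-\delta^2 pn/3}$ and $\P[X < (1-\delta)pn] \leq e^{-\delta^2 pn/2}$ exactly as stated, with the strict inequalities inherited from the strict version of $1+x < e^x$ for $x \neq 0$.
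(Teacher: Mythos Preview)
Your proof is correct and follows the standard Chernoff moment-generating-function argument. The paper does not actually prove this lemma; it is listed among the technical lemmas in the appendix with only a citation to \cite{chernoff1952measure} and no accompanying argument, so there is nothing to compare against beyond noting that your derivation is the textbook one.
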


\begin{lemma}[Bernstein’s Inequality]\label{lem:bernstein_ineq}
	Let $X_1,...,X_n$ be independent random variables such that $\E[X_i]=0$ and $|X_i|\leq C$. Let $\sigma^2 = \frac{1}{n}\sum_{i=1}^n \mathrm{Var}[X_i]$, then we have 
	$$
	\frac{1}{n}\sum_{i=1}^n X_i\leq \sqrt{\frac{2\sigma^2\cdot\log(1/\delta)}{n}}+\frac{2C}{3n}\log(1/\delta)
	$$
	holds with probability $1-\delta$.
\end{lemma}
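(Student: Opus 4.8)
The plan is to use the classical exponential-moment (Chernoff) method, which reduces the statement to a single estimate on the moment generating function of one centered, bounded random variable. Write $S_n := \sum_{i=1}^n X_i$ and $v := \sum_{i=1}^n \Var[X_i] = n\sigma^2$. For any $\lambda>0$, Markov's inequality applied to $e^{\lambda S_n}$ together with independence gives $\P(S_n \geq t) \leq e^{-\lambda t}\prod_{i=1}^n \E[e^{\lambda X_i}]$, so the whole argument hinges on a good per-coordinate bound for $\E[e^{\lambda X_i}]$, followed by optimization over $\lambda$ and an inversion to turn a tail probability into a high-probability bound.

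First I would establish the elementary MGF estimate: if $\E[X]=0$ and $|X|\leq C$, then for $0<\lambda<3/C$,
\[
\E[e^{\lambda X}] \leq \exp\!\left(\frac{\lambda^2 \Var[X]/2}{1-\lambda C/3}\right).
\]
This comes from the Taylor-tail bound $e^x - 1 - x \leq \frac{x^2/2}{1-|x|/3}$ valid for $|x|<3$ (using $k! \geq 2\cdot 3^{k-2}$ for $k\geq 2$ to sum the series, and noting the odd terms only help when $x<0$), applied with $x=\lambda X$: taking expectations kills the linear term since $\E[X]=0$, leaving $\E[e^{\lambda X}] \leq 1 + \frac{\lambda^2\E[X^2]/2}{1-\lambda C/3} \leq \exp\!\big(\frac{\lambda^2\Var[X]/2}{1-\lambda C/3}\big)$ via $1+y\leq e^y$. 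Multiplying across $i$ then yields $\P(S_n \geq t) \leq \exp\!\big(-\lambda t + \frac{\lambda^2 v/2}{1-\lambda C/3}\big)$.

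Next I would optimize the exponent. The (near-optimal) choice $\lambda = \frac{t}{v + Ct/3}$ automatically satisfies $\lambda<3/C$ whenever $v>0$, and substituting it collapses the bound to the standard Bernstein tail $\P(S_n \geq t) \leq \exp\!\big(-\frac{t^2/2}{v + Ct/3}\big)$. Setting the right-hand side equal to $\delta$ gives the quadratic relation $t^2/2 = (v + Ct/3)\log(1/\delta)$; solving it, or more cheaply using $\sqrt{a+b}\leq \sqrt a+\sqrt b$ to separate the variance and range contributions, produces $t \leq \sqrt{2v\log(1/\delta)} + \tfrac{2C}{3}\log(1/\delta)$. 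Dividing through by $n$ and recalling $v = n\sigma^2$ gives exactly the claimed inequality, with probability $1-\delta$.

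The only genuinely delicate point is the per-coordinate MGF bound — in particular tracking the constant $1/3$ through the Taylor tail estimate and keeping the range constraint $\lambda<3/C$ consistent with the eventual choice of $\lambda$. A secondary (minor) care is needed in the inversion step to decouple the quadratic cleanly, but the $\sqrt{a+b}\leq\sqrt a+\sqrt b$ trick — already used repeatedly elsewhere in this paper — makes that routine. Everything else is elementary algebra.
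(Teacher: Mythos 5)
Your proposal is correct: the per-coordinate MGF bound $\E[e^{\lambda X}]\leq \exp\big(\tfrac{\lambda^2\Var[X]/2}{1-\lambda C/3}\big)$, the choice $\lambda = t/(v+Ct/3)$ collapsing the exponent to $-\tfrac{t^2/2}{v+Ct/3}$, and the inversion $t\leq \sqrt{2v\log(1/\delta)}+\tfrac{2C}{3}\log(1/\delta)$ all check out, and dividing by $n$ with $v=n\sigma^2$ gives exactly the stated bound. The paper cites this as a standard technical lemma without proof, and your argument is precisely the canonical Chernoff-method derivation one would expect, so there is nothing further to compare.
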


\begin{lemma}[Freedman's inequality \cite{tropp2011freedman}]\label{lem:freedman}
	Let $X$ be the martingale associated with a filter $\mathcal{F}$ (\textit{i.e.} $X_i=\E[X|\mathcal{F}_i]$) satisfying $|X_i-X_{i-1}|\leq M$ for $i=1,...,n$. Denote $W:=\sum_{i=1}^n\Var(X_i|\mathcal{F}_{i-1})\leq\sigma^2$  then we have 
	\[
	\P(|X-\E[X]|\geq\epsilon)\leq 2 e^{-\frac{\epsilon^2}{2(\sigma^2+M\epsilon/3)}}.
	\]
	Or equivalently, with probability $1-\delta$,
	\[
	|X-\E[X]|\leq \sqrt{{8\sigma^2\cdot\log(1/\delta)}}+\frac{2M}{3}\cdot\log(1/\delta).
	\]
\end{lemma}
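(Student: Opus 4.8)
The plan is to prove Freedman's inequality by the standard exponential-supermartingale (Chernoff-type) method for martingale differences, and then invert the resulting tail bound to obtain the stated high-probability form. First I would pass from the Doob martingale $X_i = \E[X \mid \mathcal{F}_i]$ (with $X_n = X$ and $X_0 = \E[X]$ under the trivial $\mathcal{F}_0$) to its difference sequence $D_i := X_i - X_{i-1}$, noting that $\E[D_i \mid \mathcal{F}_{i-1}] = 0$, that $|D_i| \le M$ almost surely, and that $\Var(X_i \mid \mathcal{F}_{i-1}) = \E[D_i^2 \mid \mathcal{F}_{i-1}]$ because the conditional mean vanishes. Writing $S_n := X_n - X_0 = X - \E[X]$ and $V_k := \sum_{i=1}^k \E[D_i^2 \mid \mathcal{F}_{i-1}]$, the hypothesis becomes $V_n = W \le \sigma^2$ almost surely, a deterministic variance control that simplifies the argument.

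The core estimate is a conditional moment-generating-function bound: for each $i$ and each $\lambda > 0$, using the Taylor expansion of $e^{\lambda D_i}$ together with $\E[D_i \mid \mathcal{F}_{i-1}] = 0$ and $|D_i| \le M$, I would establish
\[
\E[e^{\lambda D_i} \mid \mathcal{F}_{i-1}] \le \exp\!\Big(\psi(\lambda)\,\E[D_i^2 \mid \mathcal{F}_{i-1}]\Big), \qquad \psi(\lambda) := \frac{e^{\lambda M} - 1 - \lambda M}{M^2}.
\]
The key step here is the bound $\sum_{k \ge 2} \frac{(\lambda x)^k}{k!} \le \frac{x^2}{M^2}(e^{\lambda M} - 1 - \lambda M)$ valid for $|x| \le M$, combined with $1 + u \le e^u$. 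With this in hand I would verify that $L_k := \exp(\lambda S_k - \psi(\lambda) V_k)$ is a supermartingale, since $\E[L_k \mid \mathcal{F}_{k-1}] = L_{k-1}\,\E[e^{\lambda D_k} \mid \mathcal{F}_{k-1}]\,e^{-\psi(\lambda)\E[D_k^2\mid\mathcal{F}_{k-1}]} \le L_{k-1}$, whence $\E[L_n] \le \E[L_0] = 1$ by iterated conditioning.

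I would then couple this supermartingale bound with the deterministic variance estimate. On the event $\{S_n \ge \epsilon\}$, since $V_n \le \sigma^2$ and $\psi(\lambda) > 0$, one has $L_n \ge \exp(\lambda\epsilon - \psi(\lambda)\sigma^2)$, so Markov's inequality gives $\P(S_n \ge \epsilon) \le \exp(-\lambda\epsilon + \psi(\lambda)\sigma^2)$. Optimizing over $\lambda$ yields the Bennett form $\exp(-\tfrac{\sigma^2}{M^2}h(\tfrac{M\epsilon}{\sigma^2}))$ with $h(u) = (1+u)\log(1+u) - u$, and relaxing via $h(u) \ge \tfrac{u^2}{2(1+u/3)}$ produces the Bernstein tail $\exp(-\tfrac{\epsilon^2}{2(\sigma^2 + M\epsilon/3)})$. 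Applying the same argument to $-S_n$ and union-bounding supplies the two-sided factor of $2$. To recover the $1-\delta$ statement I would set $t = \log(1/\delta)$ and check by direct expansion that $\epsilon = \sqrt{8\sigma^2 t} + \tfrac{2M}{3}t$ satisfies $\epsilon^2 \ge 2(\sigma^2 + M\epsilon/3)t$, which forces the tail probability below $\delta$. The one genuinely delicate point is the conditional MGF bound — ensuring the higher moment terms are dominated so that the exponent is exactly $\psi(\lambda)$ times the conditional second moment; the supermartingale verification, the Bennett-to-Bernstein relaxation, and the final algebraic inversion are all routine once that estimate is in place.
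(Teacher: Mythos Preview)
Your proof is correct and follows the standard exponential-supermartingale route to Freedman's inequality (conditional MGF bound, supermartingale $L_k$, Markov, Bennett-to-Bernstein relaxation, algebraic inversion). The paper does not prove this lemma at all; it simply states it as a technical tool and attributes it to \cite{tropp2011freedman}, so there is no ``paper's own proof'' to compare against. Your argument supplies exactly what such a citation points to.
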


\begin{lemma}\label{lem:H3}
	Let $r^{(1)}_t,s^{(1)}_t,a^{(1)}_t$ denotes random variables. Then the following decomposition holds:
	\begin{equation}\label{eqn:11}
	\begin{aligned}
	&\mathrm{Var}_\pi\left[\sum_{t=h}^H r^{(1)}_t\middle|s^{(1)}_h=s_h,a^{(1)}_h=a_h\right] = \sum_{t=h}^H \Big(\E_\pi\left[ \mathrm{Var}\left[r^{(1)}_t+v^\pi_{t+1}(s_{t+1}^{(1)}) \middle|s^{(1)}_t,a^{(1)}_t\right] \middle|s^{(1)}_h=s_h,a^{(1)}_h=a_h\right]\\
	&\quad +  \E_\pi\left[ \mathrm{Var}\left[  \E[r^{(1)}_t+v^\pi_{t+1}(s_{t+1}^{(1)}) | s^{(1)}_t, a^{(1)}_t]  \middle|s^{(1)}_t\right]\middle|s^{(1)}_h=s_h,a^{(1)}_h=a_h \right]\Big).
	\end{aligned}
	\end{equation}
\end{lemma}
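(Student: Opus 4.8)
\textbf{Proof plan for Lemma~\ref{lem:H3}.}

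The plan is to prove this by induction on the horizon, working backward from $t=H$ to $t=h$, using the law of total variance as the basic engine at each step. Write $G_t := \sum_{i=t}^H r^{(1)}_i$ for the tail return, so that $G_t = r^{(1)}_t + G_{t+1}$ and, by the Markov property together with the definition of the value function, $\E_\pi[G_{t+1}\mid s^{(1)}_{t+1}] = v^\pi_{t+1}(s^{(1)}_{t+1})$. First I would establish the one-step identity: conditioning on $(s^{(1)}_t, a^{(1)}_t)$ and applying the law of total variance to the pair $(s^{(1)}_{t+1}, G_{t+1})$,
\begin{align*}
\mathrm{Var}\!\left[G_t \mid s^{(1)}_t, a^{(1)}_t\right]
&= \E\!\left[\mathrm{Var}\!\left[r^{(1)}_t + G_{t+1}\mid s^{(1)}_{t+1}, s^{(1)}_t, a^{(1)}_t\right]\middle| s^{(1)}_t, a^{(1)}_t\right] \\
&\quad + \mathrm{Var}\!\left[r^{(1)}_t + v^\pi_{t+1}(s^{(1)}_{t+1}) \mid s^{(1)}_t, a^{(1)}_t\right].
\end{align*}
Here I use that $r^{(1)}_t$ is deterministic given $(s^{(1)}_t,a^{(1)}_t)$ and that $\E[r^{(1)}_t + G_{t+1}\mid s^{(1)}_{t+1}, s^{(1)}_t, a^{(1)}_t] = r^{(1)}_t + v^\pi_{t+1}(s^{(1)}_{t+1})$ by the Markov property. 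The first (inner) term is then exactly $\E[\mathrm{Var}[G_{t+1}\mid s^{(1)}_{t+1}]\mid s^{(1)}_t, a^{(1)}_t]$, again by the Markov property.

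Next I would apply the law of total variance a second time, now to split $\mathrm{Var}[r^{(1)}_t + v^\pi_{t+1}(s^{(1)}_{t+1})\mid s^{(1)}_t, a^{(1)}_t]$ one more level, conditioning on $s^{(1)}_t$ alone: this produces $\E_\pi[\mathrm{Var}[r^{(1)}_t + v^\pi_{t+1}(s^{(1)}_{t+1})\mid s^{(1)}_t, a^{(1)}_t]\mid s^{(1)}_t]$ plus $\mathrm{Var}[\E[r^{(1)}_t + v^\pi_{t+1}(s^{(1)}_{t+1})\mid s^{(1)}_t, a^{(1)}_t]\mid s^{(1)}_t]$ — these are precisely the two summands appearing in the claimed decomposition at index $t$. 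Combining with the first step gives the recursion
\[
\mathrm{Var}\!\left[G_t \mid s^{(1)}_t, a^{(1)}_t\right] = \E\!\left[\mathrm{Var}\!\left[G_{t+1}\mid s^{(1)}_{t+1}\right]\,\middle|\,s^{(1)}_t,a^{(1)}_t\right] + (\text{the two index-}t\text{ terms}).
\]
Then I would unroll this recursion from $t=h$ down to $t=H$, using the base case $G_{H+1}\equiv 0$ (so $\mathrm{Var}[G_{H+1}\mid\cdot]=0$), and take the outer conditional expectation $\E_\pi[\,\cdot\mid s^{(1)}_h=s_h, a^{(1)}_h=a_h]$ through the telescoping sum, invoking the tower property to collapse the nested conditional expectations $\E_\pi[\E[\,\cdot\mid s^{(1)}_t, a^{(1)}_t]\mid s_h,a_h]$ into the single conditioning on $(s_h,a_h)$ — this is where the Markov property (used to identify the intermediate conditional expectations with the right objects) is essential.

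The main obstacle is bookkeeping rather than conceptual: one must be careful that at the outermost step ($t=h$) the variance is conditioned on $(s_h,a_h)$ directly (not averaged over $s_h$), so the ``$\mathrm{Var}[\,\E[\cdot\mid s_h,a_h]\mid s_h]$''-type term at index $h$ vanishes, while for $t>h$ the extra layer of conditioning on $s^{(1)}_t$ is legitimate and produces exactly the stated two-term structure. A secondary subtlety is justifying the interchange of the outer $\E_\pi[\cdot\mid s_h,a_h]$ with the finite telescoping sum and the tower-property collapse of $\E_\pi\big[\E\big[\mathrm{Var}[\,\cdot\,\mid s^{(1)}_t,a^{(1)}_t]\,\big|\,s^{(1)}_t\big]\,\big|\,s_h,a_h\big] = \E_\pi\big[\mathrm{Var}[\,\cdot\,\mid s^{(1)}_t,a^{(1)}_t]\,\big|\,s_h,a_h\big]$, which is immediate since all quantities are bounded (rewards in $[0,1]$, horizon finite). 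Once the recursion and telescoping are set up correctly, the identity \eqref{eqn:11} follows by collecting terms.
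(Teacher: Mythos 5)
Your overall strategy---backward telescoping driven by the law of total (conditional) variance, with the Markov property identifying the intermediate conditional expectations---is exactly what the paper intends (the paper gives no written proof, only a pointer to Lemma~3.4 of \citet{yin2020asymptotically} plus the remark that the law of total variance is replaced by its conditional version). Your first step is correct: conditioning on $s^{(1)}_{t+1}$ gives
\begin{equation*}
\Var\bigl[G_t \mid s^{(1)}_t,a^{(1)}_t\bigr] = \E\bigl[\Var[G_{t+1}\mid s^{(1)}_{t+1}]\mid s^{(1)}_t,a^{(1)}_t\bigr] + \Var\bigl[r^{(1)}_t+v^\pi_{t+1}(s^{(1)}_{t+1})\mid s^{(1)}_t,a^{(1)}_t\bigr].
\end{equation*}

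Your second step, however, fails as written. The law of total variance \emph{refines} conditioning; it cannot ``split $\Var[r^{(1)}_t+v^\pi_{t+1}(s^{(1)}_{t+1})\mid s^{(1)}_t,a^{(1)}_t]$ by conditioning on $s^{(1)}_t$ alone.'' The two quantities you produce sum to $\Var[r^{(1)}_t+v^\pi_{t+1}(s^{(1)}_{t+1})\mid s^{(1)}_t]$, a different (on average strictly larger) object, so substituting them into your recursion overcounts by $\E[\Var[Q^\pi_t(s^{(1)}_t,a^{(1)}_t)\mid s^{(1)}_t]\mid s_h,a_h]$ at every level. The second application of the law of total variance belongs elsewhere: apply it to the state-only-conditioned variance $\Var[G_{t+1}\mid s^{(1)}_{t+1}]$ appearing in the \emph{first} term of your one-step identity, splitting over the action $a^{(1)}_{t+1}\sim\pi$. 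This yields $\E[\Var[G_{t+1}\mid s^{(1)}_{t+1},a^{(1)}_{t+1}]\mid s^{(1)}_{t+1}] + \Var[\E[r^{(1)}_{t+1}+v^\pi_{t+2}(s^{(1)}_{t+2})\mid s^{(1)}_{t+1},a^{(1)}_{t+1}]\mid s^{(1)}_{t+1}]$; it is this step that both generates the second summand---at index $t+1$, not $t$---and closes the recursion so it can be unrolled. It also explains, rather than merely asserts, your remark about the index-$h$ term: the second-type summand at $t=h$ never arises because $a^{(1)}_h$ is fixed, so that sum really runs over $t=h+1,\dots,H$. (Literally read, the paper's $t=h$ term equals $\Var_{a\sim\pi_h(\cdot\mid s_h)}[Q^\pi_h(s_h,a)]\neq 0$, so the displayed identity is itself slightly sloppy; since the lemma is only invoked to lower-bound the left side by the sum of first-type terms, this is immaterial for its use.) With the second application relocated as above, the rest of your plan---unrolling, tower property, base case $G_{H+1}\equiv 0$---goes through.
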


\begin{remark}
	This is a conditional version of Lemma~3.4 in \cite{yin2020asymptotically}. It can be proved using the identical trick as Lemma~3.4 in \cite{yin2020asymptotically} except the law of total variance is replaced by the law of total conditional variance.
\end{remark}

\end{document}